\renewcommand{\paragraph}{\vspace{0.2cm} \noindent \textbf}
\newtheorem{theorem}{Theorem}
\newtheorem{lemma}[theorem]{Lemma}
\newtheorem{definition}{Definition}
\newtheorem{claim}[theorem]{Claim}
\newtheorem{corollary}[theorem]{Corollary}
\newtheorem{example}{Example}
\newtheorem{assumption}{Assumption}
\renewcommand\thmcontinues[1]{Continued}
\newcommand{\scalingexp}{\nu}
\newcommand{\Nlead}{N_I}
\newcommand{\Nentr}{N_E}
\newcommand{\Nentropt}{N_E^*}
\newcommand{\Nentroptmodified}{\tilde{N}_E^{*}}
\newcommand{\constrlead}{\tau_I}
\newcommand{\constrentr}{\tau_E}
\newcommand{\param}{\beta}
\newcommand{\paramone}{\beta_1}
\newcommand{\paramtwo}{\beta_2}
\newcommand{\estparam}{\hat{\beta}}
\newcommand{\DF}{\mathcal{D}_F}
\newcommand{\DC}{\mathcal{D}_W}
\newcommand{\LossPerf}{L_1}
\newcommand{\LossAlign}{L_2}
\newcommand{\mix}{\alpha}
\newcommand{\reg}{\lambda}
\newcommand{\mixlead}{\alpha_I}
\newcommand{\reglead}{\lambda_I}
\newcommand{\mixentr}{\alpha_E}
\newcommand{\regentr}{\lambda_E}
\newcommand{\Matrixdiff}{B^{\texttt{df}}}
\newcommand{\Matrixhom}[1]{B^{\texttt{sn}}}
\newcommand{\Matrixmixed}[1]{B^{\texttt{mx}}}
\newcommand{\DegreesFreedomStandard}{Q}
\DeclareMathOperator{\Tr}{Tr}
\DeclareMathOperator{\argmin}{argmin}
\def\comments{1}
    \newcommand{\mynote}[2]{\marginpar{\color{#1}\sf \tiny #2}}
    \newcommand{\myinline}[2]{{\color{#1}\sf [{#2}]}}
    \newcommand{\mynote}[2]{}
    \newcommand{\myinline}[2]{}    
\title{Safety vs.~Performance: How Multi-Objective Learning Reduces Barriers to Market Entry}
\author{Meena Jagadeesan}
\author{Michael I. Jordan}
\author{Jacob Steinhardt}
\affil{\textit{University of California, Berkeley}}
\begin{document}

\maketitle

\begin{abstract}
Emerging marketplaces for large language models and other large-scale machine learning (ML) models appear to exhibit market concentration, which has raised concerns about whether there are insurmountable barriers to entry in such markets. In this work, we study this issue from both an economic and an algorithmic point of view, focusing on a phenomenon that \textit{reduces} barriers to entry.  Specifically, an incumbent company risks reputational damage unless its model is sufficiently aligned with safety objectives, whereas a new company can more easily avoid reputational damage. To study this issue formally, we define a multi-objective high-dimensional regression framework that captures reputational damage, and we characterize the number of data points that a new company needs to enter the market. Our results demonstrate how multi-objective considerations can fundamentally reduce barriers to entry---the required number of data points can be significantly smaller than the incumbent company's dataset size. En route to proving these results, we develop scaling laws for high-dimensional linear regression in multi-objective environments, showing that the scaling rate becomes slower when the dataset size is large, which could be of independent interest.
    
\end{abstract}

\section{Introduction}

Large language models and other large-scale machine learning (ML) models have led to an important shift in the information technology landscape, one which has significant economic consequences.  Whereas earlier generations of ML models provided the underpinnings for platforms and services, new models---such as language models---are themselves the service. This has led to new markets where companies offer language models as their service and compete for user usage. As in other markets, it is important to reason about market competitiveness: in particular, to what extent there are barriers to entry for new companies.

A widespread concern about these markets is that new companies face insurmountable \textit{barriers to entry} that drive  market concentration \citep{VK23}. The typical argument is that incumbent companies with high market share can purchase or capture significant amounts of data and compute,\footnote{Large companies can afford these resources since the marketplace is an economy of scale (i.e., fixed costs of training significantly exceed per-query inference costs). They also generate high volumes of data from user interactions.} and then invest these resources into the training of models that achieve even higher performance \citep{K20}. This suggests that the company's market share would further increase, and that the scale and scope of this phenomenon would place incumbent companies beyond the reach of new companies trying to enter the market. The scale is in fact massive---language assistants such as ChatGPT and Gemini each have hundreds of millions of users \citep{cook_2024_ai_tools}. In light of the concerns raised by policymakers \citep{VK23} and regulators \citep{whitehouse_2023_aG_Exec_order, eu_digital_markets_act_2022} regarding market concentration, it is important to investigate the underlying economic and algorithmic mechanisms at play.
 
While standard arguments assume that market share is determined by model performance, the reality is that the incumbent company risks reputational damage if their model violates safety-oriented objectives. For example, incumbent companies face public and regulatory scrutiny for their model's safety violations---such as threatening behavior \citep{timegpt}, jailbreaks \citep{WHS23}, and releasing dangerous information \citep{whitehouse_2023_aG_Exec_order}---even when the model performs well in terms of helpfulness and usefulness to users. In contrast, new companies face less regulatory scrutiny since compliance requirements often prioritize models trained with more resources \citep{whitehouse_2023_aG_Exec_order, CaliforniaLegislation2024}, and new companies also may face less public scrutiny given their smaller user bases. 

In this work, we use a multi-objective learning framework to show that the threat of reputational damage faced by the incumbent company can reduce barriers to entry. For the incumbent, the possibility of reputational damage creates pressure to align with safety objectives in addition to optimizing for performance. Safety and performance are not fully aligned, so improving safety can reduce performance as a side effect. Meanwhile, the new company faces less of a risk of reputational damage from safety violations. The new company can thus enter the marketplace with significantly less data than the incumbent company, a phenomenon that our model and results formalize. 

\paragraph{Model and results.}
We analyze a stylized marketplace based on multi-objective linear regression (Section \ref{sec:model}). The performance-optimal output and the safety-optimal output are specified by two different linear functions of the input $x$. The marketplace consists of two companies: an incumbent company and a new company attempting to enter the market. Each company receives their own unlabelled training dataset, decides what fraction of training data points to label according to the performance-optimal vs. safety-optimal outputs, and then runs ridge regression. The new company requires a less stringent level of safety to avoid reputational damage than the incumbent company. We characterize the \textit{market-entry threshold} $\Nentropt$ (Definition \ref{def:marketentrythreshold}) which captures how much data the new company needs to outperform the incumbent company.

First, as a warmup, we characterize $\Nentropt$ when the new company faces no safety constraint and the incumbent company has infinitely many data points (Section \ref{sec:warmup}). Our key finding is that the new company can enter the market with finite data, even when the incumbent company has infinite data (Theorem \ref{thm:tradeoffwarmup}; Figure \ref{fig:warmup}). Specifically, we show that the threshold $\Nentropt$ is finite; moreover, it is increasing in the correlation (i.e., the alignment) between performance and safety, and it is decreasing in a problem-specific scaling law exponent. 

Next, we turn to more general environments where the incumbent has finite data $\Nlead < \infty$ (Section \ref{subsec:finitedata}). We find that the threshold $\Nentropt$ scales sublinearly with the incumbent's dataset size $\Nlead$, as long as $\Nlead$ is sufficiently large. In fact, the threshold $\Nentropt$ scales at a slower rate as $\Nlead$ increases: that is, $\Nentropt = \Theta(\Nlead^c)$ where the exponent $c$ is decreasing in $\Nlead$ (Theorem \ref{thm:finitedata}; Figure \ref{fig:boundB}). For example, for concrete parameter settings motivated by language models \citep{H22}, the exponent $c$ decreases from $1$ to $0.75$ to $0$ as $\Nlead$ increases. In general, the exponent $c$ takes on up to three different values depending on $\Nlead$, and is strictly smaller than $1$ as long as $\Nlead$ is sufficiently large.

Finally, we turn to environments where the new company also faces a nontrivial safety constraint, assuming for simplicity that the incumbent company again has infinite data (Section \ref{subsec:alignment}). We find that $\Nentropt$ is finite as long as the new company faces a strictly weaker safety constraint than the incumbent. When the two safety thresholds are closer together, the new company needs more data and in fact needs to scale up their dataset size at a faster rate: that is, $\Nentropt = \Theta(D^{-c})$, where $D$ measures the difference between the safety thresholds and where the exponent $c$ increases as $D$ decreases 
(Theorem \ref{thm:alignment}; Figure \ref{fig:boundtau}). For the parameter settings in \citep{H22}, the exponent $c$ changes from $-2.94$ to $-3.94$ to an even larger value as $D$ decreases. In general, the exponent $c$ takes on up to three different values.

\paragraph{Technical tool: Scaling laws.} 
To prove our results, we derive scaling laws for \textit{multi-objective} high-dimensional linear regression, which could be of independent interest (Section \ref{subsec:technicaltool}; Figure \ref{fig:scalinglaw}). We study optimally-regularized ridge regression where some of the training data is labelled according to the primary linear objective (capturing performance) and the rest is labelled according to an alternate linear objective (capturing safety).

We characterize data-scaling laws for both the loss along the primary objective and the excess loss along the primary objective relative to an infinite-data ridgeless regression. Our scaling laws quantify the rate at which the loss (Theorem \ref{thm:scalinglawoptreginformal}; Figure \ref{fig:loss}) and the excess loss (Theorem \ref{thm:scalinglawoptregexcessinformal}; Figure \ref{fig:excessloss}) decay with the dataset size $N$, and how this rate is affected by the fraction of data labelled according to each objective and other problem-specific quantities. Our analysis improves upon recent works on scaling in multi-objective environments \citep[e.g.,][]{JMS24, SBS24} by allowing for non-identity covariances and problem-specific regularization, which leads to new insights about scaling laws as we describe below.  

Our results reveal that the scaling rate becomes slower as the dataset size increases, illustrating that multi-objective scaling laws behave qualitatively differently from classical single-objective environments. While a typical scaling exponent in a single-objective environment takes on a single value across all settings of $N$, the scaling exponent for multi-objective environments decreases as $N$ increases. In particular, the scaling exponent takes on \textit{three different values} depending on the size of $N$ relative to problem-specific parameters. 
The intuition is that the regularizer must be carefully tuned to $N$ in order to avoid overfitting to training data labelled according to the alternate objective, which in turn results in the scaling exponent being dependent on $N$ (Section \ref{sec:scalinglaws}).

\paragraph{Discussion.}
Altogether, our work highlights the importance of looking beyond model performance when evaluating market entry in machine learning marketplaces. Our results highlight a disconnect between market entry in single-objective environments versus more realistic multi-objective environments. More broadly, a company's susceptibility to reputational damage affects how they train their model to balance between different objectives. As we discuss in Section \ref{sec:discussion}, these insights have nuanced implications for regulators who wish to promote both market competitiveness and safety compliance, and also generalize beyond language models to online platforms.

\subsection{Related work}

Our work connects to research threads on \textit{competition between model providers} as well as \textit{scaling laws and high-dimensional linear regression}.

\paragraph{Competition between model providers.} Our work contributes to an emerging line of work studying how competing model providers strategically design their machine learning pipelines to attract users. Model-provider actions range from choosing a function from a model class \citep{BT17, BT19, JJSH23}, to selecting a regularization parameter \citep{IK22}, to choosing an error distribution over user losses \citep{FGHJN19}, to making data purchase decisions \citep{DEJKS19, KGZ22}, to  deciding whether to share data \citep{GT23}, to selecting a bandit algorithm \citep{AMSW20, JJH23}. While these works assume that model providers win users solely by maximizing (individual-level or population-level) accuracy, our framework incorporates the role of \textit{safety violations} in impacting user retention implicitly via reputational damage. Moreover, our focus is on quantifying the barriers to market entry, rather than analyzing user welfare or the equilibrium decisions of model providers.

Other related work includes the study of competition between algorithms \citep{IKLMPT11, KR21}, retraining dynamics under user participation decisions \citep{HSNL18, GZKZ21, DCRMF22, SD24, SuD24}, the bargaining game between a foundation model company and a specialist \citep{LKH24}, and the market power of an algorithmic platform to shape user populations \citep{PZMH20, HJM22, MCH2024}. 

Our work also relates to platform competition \citep{J21, C21}, the emerging area of competition policy and regulation of digital marketplaces \citep{stiger19, VK23, Cen2023, UK2023AIFoundationModels}, the study of how antitrust policy impacts innovation in classical markets \citep{B07, SW07}, and industrial organization more broadly \citep{T88}. For example, recent work examines how increased public scrutiny from inclusion in the S\&P 500 can harm firm performance \citep{Bennett2023}, how privacy regulation impacts firm competition \citep{GA20, FJMM24}, how regulatory inspections affect incentives to comply with safety constraints \citep{H88, FJ23},  and how data-driven network effects can reduce innovation \citep{PS21}. 

\paragraph{Scaling laws and high-dimensional linear regression.} Our work also contributes to an emerging line of work on scaling laws which study how model performance changes with training resources. Empirical studies have demonstrated that increases to scale often reliably improve model performance \citep[e.g.,][]{K20, hernandez2021scaling, H22}, but have also identified settings where scaling behavior is more nuanced \citep[e.g.,][]{MRBST23, GSH23}. We build on a recent mathematical characterization of scaling laws based on high-dimensional linear regression 
~\citep[e.g.,][]{HMRT19, BCP20, B21, CLKZ21, WHS22, bach, weithesis, PDT24, BAP24, MZFY24, LWKBL24, AZP24}. However, while these works focus on \textit{single-objective} environments where all of the training data is labelled with outputs from a single predictor, we consider \textit{multi-objective} environments where some fraction of the training data is labelled according to an alternate predictor. 

We note that a handful of recent works similarly move beyond single-objective environments and study scaling laws where the training data comes a mixture of different data sources. \citet{JMS24, SBS24} study high-dimensional ridge regression in a similar multi-objective environment to our setup. However, these results assume an \textit{identity covariance} and focus on fixed regularization or no regularization. In contrast, we allow for richer covariance matrices that satisfy natural power scaling (Section \ref{subsec:assumptions}), and we analyze optimally tuned regularization. Our analysis of these problem settings yields new insights about scaling behavior: for example, the scaling rate becomes slower with dataset size (Theorems \ref{thm:scalinglawoptreginformal}-\ref{thm:scalinglawoptregexcessinformal}). Other related works study scaling laws under mixtures of covariate distributions
\citep{H21}, under data-quality heterogeneity \citep{GMLRK24}, under data addition \citep{shen2024data}, under mixtures of AI-generated data and real data \citep{DFYCK24, GSD24}, and with respect to the contribution of individual data points \citep{CJHZ24}.

More broadly, our work relates to collaborative learning \citep{BHPQ17, MSS19, SKHL20, HJZ22}, federated learning~\citep[see][for a survey]{YLCT19}, optimizing data mixtures~\citep[e.g.,][]{RWRJ21, XY23}, and adversarial robustness~\citep[e.g.,][]{RXYDL20}. 
Finally, our work relates to non-monotone scaling laws in strategic environments \citep{JJH23, HM24}, where increases to scale can worsen equilibrium social welfare.

\section{Model}\label{sec:model}

We define our linear-regression-based marketplace (Section \ref{subsec:linearregression}),   justify the design choices of our model (Section \ref{subsec:discussion}), and then delineate our statistical assumptions (Section \ref{subsec:assumptions}).

\subsection{Linear regression-based marketplace}\label{subsec:linearregression}

We consider a marketplace where two companies fit linear regression models in a multi-objective environment. 

\paragraph{Linear regression model.} To formalize each company's machine learning pipeline, we consider the multi-objective, high-dimensional linear regression model described below. This multi-objective environment aims to capture how ML models are often trained to balance multiple objectives which are in tension with each other, and we consider linear regression since it has often accurately predicted scaling trends of large-scale machine learning models (see Section \ref{subsec:discussion} for additional discussion). 

More concretely, given an input $x \in \mathbb{R}^P$, let $\langle \beta_1, x\rangle$ be the output that targets performance maximization, and let $\langle \beta_2, x\rangle$ be the output that targets safety maximization. Given a linear predictor $\beta$, the performance loss is evaluated via a population loss, $\LossPerf(\param) = \mathbb{E}_{x \sim \DF}[(\langle \paramone, x\rangle - \langle \beta, x\rangle)^2]$, and the safety violation is captured by a loss $\LossAlign(\param) = \mathbb{E}_{x \sim \DF}[(\langle \paramtwo, x\rangle - \langle \beta, x\rangle)^2]$, where $\DF$ is the input distribution. 

The model provider implicitly determines how to balance $\beta_1$ and $\beta_2$ when determining how to label their training dataset. In particular, each model provider is given an unlabelled training dataset $X \in \mathbb{R}^{N \times P}$ with $N$ inputs drawn from $\DF$. To generate labels, they select the fraction $\alpha \in [0,1]$ of training data to label according to each objective. They then sample a fraction $\mix$ of the training data uniformly from $X$ and label it as $Y_i = \langle \beta_1, X_i\rangle$; the remaining $1 - \mix$ fraction is labelled as $Y_i = \langle \beta_2, X_i\rangle$. The model provider fits a ridge regression on the labelled training dataset with least-squares loss $\ell(y, y') = (y-y')^2$, and thus solves: $\estparam(\mix, \reg, X) = \argmin_{\beta} \left(\frac{1}{N}  \sum_{i=1}^N (Y_i - \langle \beta, X_i \rangle)^2 + \reg ||\beta||^2_2 \right)$. 

\paragraph{Marketplace.}
The marketplace contains two companies, an \textit{incumbent company} $I$ already in the market and a \textit{new (entrant) company} $E$ trying to enter the market. At a high level, each company $C \in \left\{I, E\right\}$ 
faces reputational damage if their safety violation exceeds their safety constraint $\tau_C$. Each company company $C$ is given $N_C$ unlabelled data points sampled from $\DF$, and selects a mixture parameter $\mix_C$ and regularizer $\reg_C$ to maximize their performance given their safety constraint $\tau_C$. We assume that the incumbent company $I$ faces a stricter safety constraint, $\constrlead < \constrentr$, due to increased public or regulatory scrutiny (see Section \ref{subsec:discussion} for additional discussion). 

When formalizing how the model providers choose hyperparameters, we make the following simplications. First, rather than work directly with the performance and safety losses of the ridge regression estimator, we assume for analytic tractability that they approximate these losses by $\LossPerf^* := \LossPerf^*(\beta_1, \beta_2, \DF, \reg, N, \alpha)$ and $\LossAlign^* := \LossAlign^*(\beta_1, \beta_2, \DF, \alpha)$ defined as follows. 
\begin{itemize}[leftmargin=*, nosep]
    \item \textit{Performance:} We define $\LossPerf^*$ to be a \textit{deterministic equivalent} $\LossPerf^{\texttt{det}}(\beta_1, \beta_2, \Sigma, \reg, N, \alpha)$ which we derive in Lemma \ref{lemma:sollichvariant}. The deterministic equivalent ~\citep[cf.][]{HLN07} is a tool from random matrix theory that is closely linked to the Marčenko-Pastur law \citep{marchenko}. Under standard random matrix assumptions (Assumption \ref{assumption:MP}), the deterministic equivalent asymptotically approximates the loss $L_1(\hat{\beta}(\alpha, \lambda, X))$ when $X$ is constructed from $N$ i.i.d.\ samples from $\DF$ (see Appendix \ref{appendix:machinery} for additional discussion). 
    \item \textit{Safety:} For analytic simplicity, in the main body of the paper, we define $\LossAlign^*$ to be the safety violation of the infinite-data ridgeless regression estimator with mixture parameter $\alpha$.\footnote{The infinite-data ridgeless regression estimator is 
    $\argmin_{\beta} \left(\alpha \cdot \mathbb{E}_{x \sim \DF}[\langle \beta - \beta_1, x\rangle^2] + (1-\alpha)\cdot  \mathbb{E}_{x \sim \DF}[\langle \beta - \beta_2, x\rangle^2] \right)$. For this specification, the dataset size $N$ and the regularization parameter $\reg$ only affect $\LossPerf^*$ and not $\LossAlign^*$, which simplifies our analysis in Sections \ref{sec:warmup}-\ref{sec:general} and enables us to obtain tight characterizations.} In Appendix \ref{appendix:extension}, we instead define $\LossAlign^*$ analogously to $\LossPerf^*$---i.e., as a deterministic equivalent $L_2^{\text{det}}(\beta_1, \beta_2, \DF, \lambda, N, \alpha)$---and extend our model and results to this more complex setting.\footnote{We directly extend our results in Section \ref{sec:warmup}, and we also show relaxed versions of our results in Section \ref{sec:general}.}
\end{itemize}
Second, we assume that $(\beta_1, \beta_2) \sim \DC$ for some joint distribution $\DC$ and that the model providers take expectations when choosing hyperparameters, since it will be easier to specify assumptions in Section \ref{subsec:alignment} over distributions of predictors. 

Within this setup, a company $C$ faces reputational damage if the safety violation exceeds a certain threshold: 
\[\mathbb{E}_{(\beta_1, \beta_2) \sim \DC} [L_2^*(\beta_1, \beta_2, \DF, \alpha_C)] > \tau_C.\]
We assume that the safety thresholds for the two companies satisfy the following inequalities: 
\begin{equation}
\label{eq:safetythreshold}
 \constrentr >_{(A)} \constrlead \ge_{(B)} \mathbb{E}_{(\beta_1, \beta_2) \sim \DC}[\LossAlign^*(\beta_1, \beta_2, \DF, 0.5)].
\end{equation}
Here, inequality (A) captures the notion that the incumbent needs to achieve higher safety to avoid reputational damage. Inequality (B) guarantees that both companies, $C \in \left\{I, E \right\}$, can set the mixture parameter $\mix_C \ge 0.5$ without facing reputational damage, and thus ensures that the safety constraint does not dominate the company's optimization task.\footnote{More specifically, inequality (B) ensures that the safety constraint still allows both companies to label 50\% of their training data according to the performance-optimal outputs.}

The company selects $\mix \in [0.5,1]$ and $\reg \in (0,1)$ to maximize their performance subject to their safety constraint, as formalized by the following optimization program:\footnote{Technically, the optimum might be achieved at $\reg = 0$ or $\reg = 1$, and the $\min$ should be replaced by a $\inf$.} 
\begin{equation*}
(\mix_C, \reg_C) = \argmin_{\mix \in [0.5, 1], \reg \in (0,1)} \mathbb{E}_{\DC}[\LossPerf^*(\beta_1, \beta_2, \DF, \reg, N_C, \alpha)] \text{ s.t. } \mathbb{E}_{\DC}[\LossAlign^*(\beta_1, \beta_2, \DF, \alpha)] \le \tau_C.
\end{equation*}

\paragraph{Market-entry threshold.} We define the market-entry threshold to capture the minimum number of data points $\Nentr$ that the new company needs to collect to achieve better performance than the incumbent company while avoiding reputational damage. 
\begin{definition}
\label{def:marketentrythreshold}
The \textit{market-entry threshold} $\Nentropt(\Nlead, \constrlead, \constrentr, \DC, \DF)$
is the \textit{minimum value} of $\Nentr \in \mathbb{Z}_{\ge 1}$ such that $\mathbb{E}_{\DC}[\LossPerf^*(\beta_1, \beta_2, \DF, \regentr, \Nentr, \mixentr)] \le \mathbb{E}_{\DC}[\LossPerf^*(\beta_1, \beta_2, \DF, \reglead, \Nlead, \mixlead)]$.  
\end{definition}

The goal of our work is to analyze the function $\Nentropt(\Nlead,\constrlead, \constrentr, \DC,  \DF)$.

\subsection{Model discussion}\label{subsec:discussion}

Now that we have formalized our statistical model, we discuss and justify our design choices in greater detail. We defer a discussion of limitations to Section \ref{sec:discussion}.

\paragraph{Presence of competing objectives.} Our multi-objective formulation is motivated by how  ML models are often trained to balance multiple objectives which are in tension with each other. In some cases, the pretraining objective is in tension with the finetuning objective \citep{WHS23}. For example, the fine-tuning of a language model to be more aligned with user intent can degrade performance---e.g., because the model hedges too much---which creates an ``alignment tax'' \citep{OJAWMZASR22}. In other cases,  fine-tuning approaches themselves balance multiple objectives such as helpfulness (which can be mapped to performance in our model) and harmlessness (which can be mapped to safety in our model) \citep{bairlhf2022}. These objectives can be in tension with one another, for example if the user asks for dangerous information. 

\paragraph{High-dimensional linear regression as a statistical model.} We focus on high-dimensional linear regression due to its ability to capture scaling trends observed in large-scale machine learning models such as language models, while still retaining analytic tractability. In particular, in single-objective environments, scaling trends for high-dimensional linear regression recover the empirically observed power-law scaling of the loss with respect to the dataset size \citep{K20, CLKZ21, WHS22}. Moreover, from an analytic perspective, the structural properties of high-dimensional linear regression make it possible to characterize the loss using random matrix machinery (see Appendix \ref{appendix:machinery}).

\paragraph{Impact of market position on model provider constraint $\tau$.} Our assumption that $\constrentr > \constrlead$ (inequality (A) in \eqref{eq:safetythreshold}) is motivated by how large companies face greater reputational damage from safety violations than smaller companies. One driver of this unevenness in reputational damage is \textit{regulation}: for example, recent regulation and policy \citep{whitehouse_2023_aG_Exec_order, CaliforniaLegislation2024} places stricter requirements on companies that use significant amounts of compute during training. In particular, these companies face more stringent compliance requirements in terms of safety assessments and post-deployment monitoring. Another driver of uneven reputational damage is \textit{public perception}: we expect that the public is more likely to uncover safety violations for large companies, due to the large volume of user queries to the model. In contrast, for small companies, safety violations may be undetected or subject to less public scrutiny.

\subsection{Assumptions on linear regression problem}\label{subsec:assumptions}

To simplify our characterization of scaling trends, we follow prior work on high-dimensional linear regression~\citep[see, e.g.,][]{CLKZ21, WHS22} and make the following empirically motivated power-law assumptions. Let $\Sigma = \mathbb{E}_{x \sim \DF}[xx^T]$ be the covariance matrix, and let $\lambda_i$ and $v_i$ be the eigenvalues and eigenvectors, respectively. We require the eigenvalues to decay with scaling exponent $\gamma > 0$ according to $\lambda_i = i^{-1-\gamma}$ for $1 \le i \le P$. 
For the alignment coefficients $\langle \beta_j, v_i\rangle$, it is cleaner to enforce power scaling assumptions in expectation, so that we can more easily define a correlation parameter. We require that for some $\delta > 0$, the alignment coefficients satisfy $\mathbb{E}_{\DC}[\langle \beta_j, v_i \rangle^2] = i^{-\delta}$, where $v_i$ is the $i$th eigenvector of $\Sigma$, for $j \in \left\{1,2\right\}$ and $1 \le i \le P$. We also introduce a similar condition on the joint alignment coefficients, requiring that for some $\rho \in [0,1)$, it holds that $\mathbb{E}_{\DC}[\langle \beta_1, v_i \rangle \langle \beta_2, v_i \rangle ] = \rho \cdot i^{-\delta}$. Finally, we assume an overparameterized limit where the number of parameters $P \rightarrow \infty$ approaches infinity. Below, we provide an example which satisfies these assumptions. 
\begin{example}
\label{example:generation}
Suppose that the covariance $\Sigma$ is a  diagonal matrix with diagonal given by $\lambda_i = i^{-1-\gamma}$. Let the joint distribution over $\beta_1$ and $\beta_2$ be a multivariate Gaussian such that:
\[
\mathbb{E}_{\DC}[ (\beta_{j_1})_{i_1} (\beta_{j_2})_{i_2}] = 
\begin{cases}
 0 & \text{ if }  1 \le j_1, j_2 \le 2, 1 \le i_1 \neq i_2 \le P \\
  i_1 ^{-\delta} & \text{ if } 1 \le j_1 = j_2 \le 2, 1 \le i_1 = i_2 \le P \\
\rho \cdot i_1 ^{-\delta} &\text{ if } 1 \le j_1 \neq j_2 \le 2, 1 \le i_1 = i_2 \le P.
\end{cases}
\]
This implies that $\mathbb{E}_{\DC}[\langle \beta_j, v_i \rangle^2] = i^{-\delta}$ and $\mathbb{E}_{\DC}[\langle \beta_1, v_i \rangle \langle \beta_2, v_i \rangle ] = \rho \cdot i^{-\delta}$. 
\end{example}

We adopt the random matrix theory assumptions on the covariance matrix and linear predictors from \citet{bach} (see Assumption \ref{assumption:MP} in Appendix \ref{appendix:machinery}), which guarantee that the Marčenko-Pastur law holds \citep{marchenko}. That is, the covariance $(\hat{\Sigma} + \lambda I)^{-1}$ of the samples can be approximated by a deterministic quantity (see Appendix \ref{appendix:MP} for a more detailed discussion). We leverage this Marčenko-Pastur law to derive a deterministic equivalent $L_1^{\texttt{det}}$ for the performance loss $L_1(\hat{\beta}(\alpha, \lambda, X))$ of the ridge regression estimator (Lemma \ref{lemma:sollichvariant}).

\begin{figure}[t!]
    \centering
    \begin{subfigure}[b]{0.48\textwidth}
        \centering
        \includegraphics[width=\textwidth]{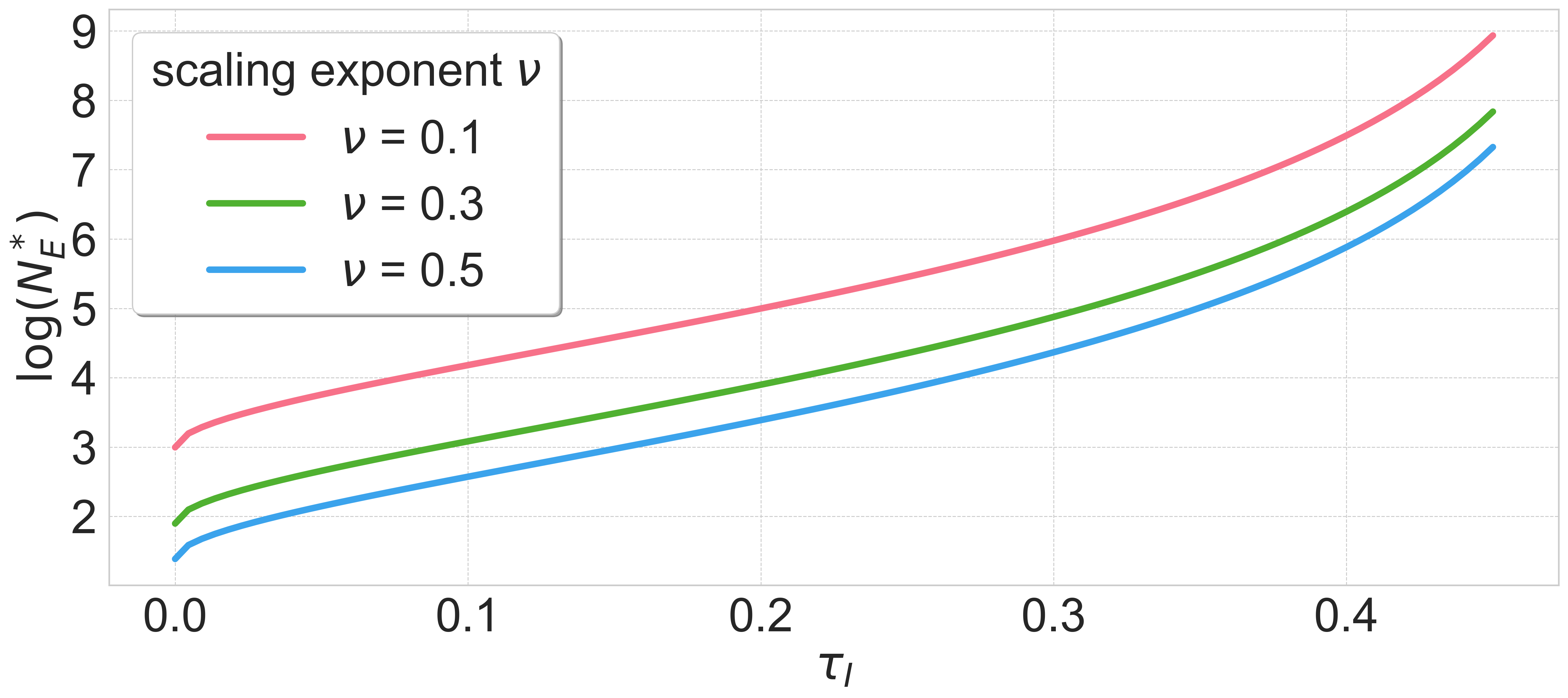}
    \end{subfigure}
    \hfill
    \begin{subfigure}[b]{0.48\textwidth}
        \centering
        \includegraphics[width=\textwidth]{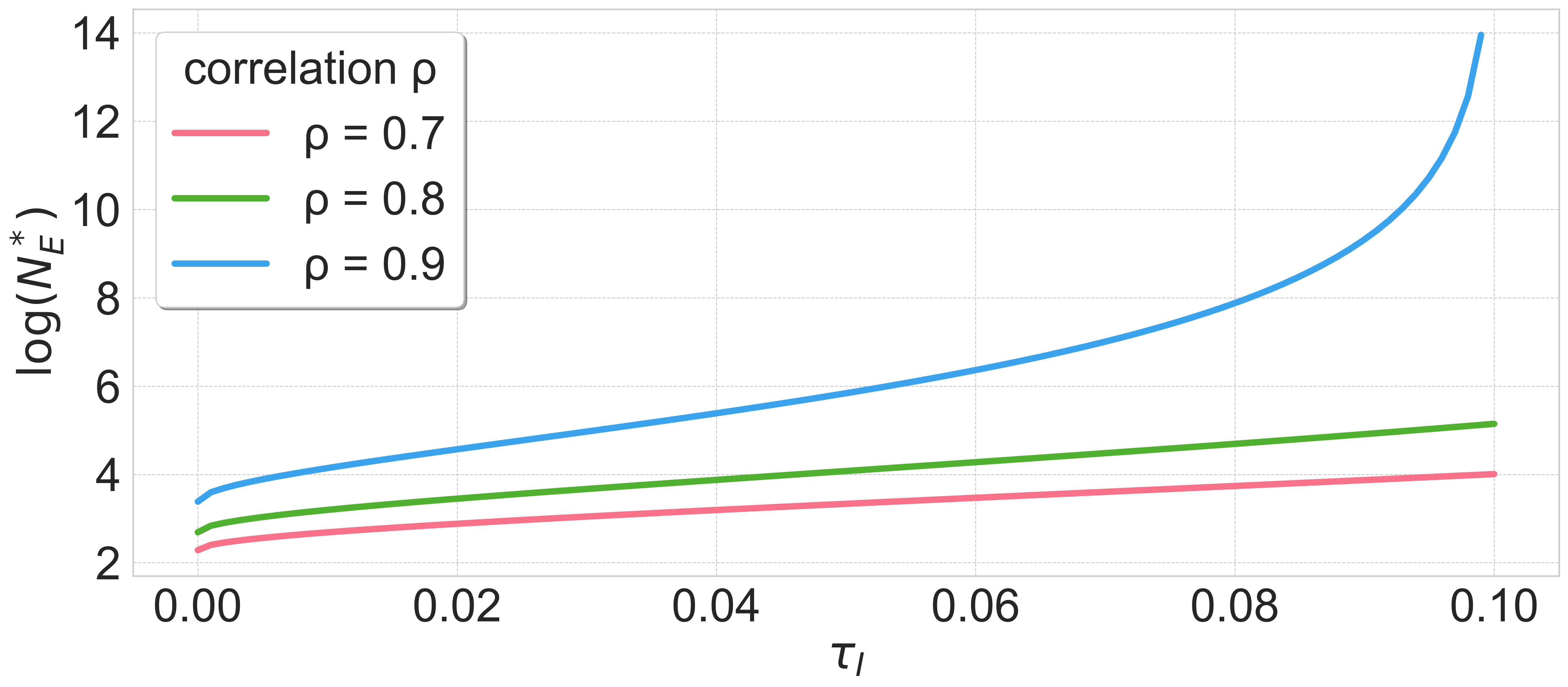}
    \end{subfigure}
    \caption{Market-entry threshold $\Nentropt$ as a function of the incumbent's safety constraint $\constrlead$, when the incumbent has infinite data and entrant has no safety constraint (Theorem \ref{thm:tradeoffwarmup}). The  plots show varying values of the scaling exponent $\scalingexp$ where the correlation parameter $\rho = 0.5$ is held fixed (left) and varying values of  $\rho$ where $\scalingexp = 0.34$ is held fixed (right). The market-entry threshold  $\Nentropt$ is finite. It is also higher when the constraint $\constrlead$ is weaker, when the correlation $\rho$ is stronger, and when the scaling exponent $\scalingexp$ is lower.  }
    \label{fig:warmup}
\end{figure}

\section{Warm Up: Infinite-Data Incumbent and Unconstrained Entrant}\label{sec:warmup}

As a warmup, we analyze the market entry $\Nentropt$ threshold in a simplified environment where the incumbent has infinite data and the new company faces no safety constraint. In this result, we place standard power-law scaling assumptions on the covariance and alignment coefficients (Section \ref{subsec:assumptions}) and we characterize the threshold $\Nentropt$ up to constants (Theorem \ref{thm:tradeoffwarmup}; Figure \ref{fig:warmup}). 
\begin{theorem}
\label{thm:tradeoffwarmup}
Suppose that power-law scaling holds for the eigenvalues and alignment coefficients, with scaling exponents $\gamma, \delta > 0$ and correlation coefficient $\rho \in [0,1)$, and suppose that $P = \infty$. 
Suppose that the incumbent company has infinite data (i.e., $\Nlead= \infty$), and that the entrant faces no constraint on their safety (i.e., $\constrentr = \infty$). Suppose that the safety constraint $\constrlead$ satisfies \eqref{eq:safetythreshold}. Then, it holds that:\footnote{Throughout the paper, we allow $\Theta()$ and $O()$ to hide implicit constant which depends on the scaling exponents $\gamma, \delta$.}
\[\Nentropt(\infty, \constrlead, \infty, \DC, \DF) = \Theta\left(\left(\sqrt{L^*(\rho)} - \sqrt{\min(\constrlead, L^*(\rho))} \right)^{-2/\scalingexp} \right),\]
where $L^*(\rho) = \mathbb{E}_{\DC}[(\beta_1 - \beta_2)^T \Sigma (\beta_1 - \beta_2)] = \Theta(1 - \rho)$, and where $\scalingexp := \min(2(1+\gamma), \delta + \gamma)$. 
\end{theorem}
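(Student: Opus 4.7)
Exploiting the extreme setup, the incumbent's problem collapses (since $\Nlead=\infty$) to a one-variable constrained optimization in $\mixlead$, and the entrant's problem collapses (since $\constrentr=\infty$) to a single-objective noiseless ridge regression. I would compute the incumbent's loss in closed form, apply the paper's scaling law to the entrant, and invert to extract $\Nentropt$.

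\textbf{Step 1: The incumbent's closed form.} Since $\Nlead=\infty$, the deterministic equivalent $\LossPerf^{\texttt{det}}(\cdot,\reglead,\Nlead,\mixlead)$ converges, as $\reglead\downarrow 0$, to the population loss of the infinite-data ridgeless minimizer $\estparamlead = \mixlead\paramone + (1-\mixlead)\paramtwo$. Substituting yields $\mathbb{E}_{\DC}[\LossPerf^*] = (1-\mixlead)^2 L^*(\rho)$ and $\mathbb{E}_{\DC}[\LossAlign^*] = \mixlead^2 L^*(\rho)$, while a short power-law sum gives $L^*(\rho) = 2(1-\rho)\sum_{i\ge 1} i^{-(1+\gamma+\delta)} = \Theta(1-\rho)$ (the series converges since $\gamma,\delta>0$). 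The safety constraint $\mixlead^2 L^*(\rho)\le\constrlead$ then forces $\mixlead = \min(1,\sqrt{\constrlead/L^*(\rho)})$; the side constraint $\mixlead\ge 0.5$ is automatic from assumption (B) of~\eqref{eq:safetythreshold}, which supplies $\constrlead \ge L^*(\rho)/4$. The resulting incumbent loss equals $\Delta^2 := \bigl(\sqrt{L^*(\rho)} - \sqrt{\min(\constrlead, L^*(\rho))}\bigr)^2$.

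\textbf{Step 2: The entrant's scaling and inversion.} With $\constrentr=\infty$ the entrant's constraint is vacuous, so I would next argue that $\mixentr=1$ is optimal: direct inspection of $\LossPerf^{\texttt{det}}$ (Lemma~\ref{lemma:sollichvariant}) shows that any $\paramtwo$-labelled sample only biases the estimator away from $\paramone$ and inflates the performance loss at every $\regentr$. The entrant therefore faces single-objective noiseless ridge regression on $\Nentr$ samples, and specializing Theorem~\ref{thm:scalinglawoptreginformal} to $\mix=1$ gives $\mathbb{E}_{\DC}[\LossPerf^*]=\Theta(\Nentr^{-\scalingexp})$ at the optimal $\regentr$. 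Solving $\Nentr^{-\scalingexp}\lesssim \Delta^2$ for the smallest admissible $\Nentr$ yields $\Nentropt = \Theta(\Delta^{-2/\scalingexp})$, as claimed.

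\textbf{Hardest step.} The heavy lifting sits inside Step 2, in justifying the scaling exponent $\scalingexp = \min(2(1+\gamma),\delta+\gamma)$. The two-regime structure reflects the fact that the bias integral $\sum_i (\regentr/(\regentr+\lambda_i))^2\lambda_i\mathbb{E}[\langle\paramone,v_i\rangle^2]$ is dominated by the head of the spectrum when $\delta<\gamma+2$ but by the entire tail when $\delta>\gamma+2$, giving bias $\sim\regentr^{\scalingexp/(1+\gamma)}$. Balancing this bias against the overfitting contribution from the effective degrees of freedom $\mathrm{df}(\regentr)\sim\regentr^{-1/(1+\gamma)}$ pins the optimal regularizer at $\regentr^*\sim\Nentr^{-(1+\gamma)}$ and produces the $\Nentr^{-\scalingexp}$ rate. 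A minor auxiliary task is to verify that the $\Nlead=\infty$ deterministic equivalent indeed collapses to the population loss uniformly in $\reglead$ (used in Step 1), which follows from inspecting the $N\to\infty$ limit of the closed form in Lemma~\ref{lemma:sollichvariant}.
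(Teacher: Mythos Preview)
Your approach mirrors the paper's proof: compute the incumbent's loss via the infinite-data ridgeless estimator (the paper's Lemma~\ref{lemma:ridgelessoptimalinfinitedata}), invoke the single-objective scaling law for the entrant (Corollary~\ref{cor:scalinglawoptreg} at $\mix=1$), and invert. The identification of $\scalingexp$ and the inversion are correct.

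There is one real gap in Step~1. You take $\reglead\downarrow 0$ and report the resulting loss as ``the incumbent's loss equals $\Delta^2$,'' but the incumbent \emph{optimizes} over $\reglead\in(0,1)$, and the safety constraint $\LossAlign^*$ depends only on $\mixlead$, not on $\reglead$. So nothing you have written rules out the possibility that some $\reglead>0$ achieves strictly lower performance loss than $\Delta^2$; if that happened, the entrant would need more data and your upper bound on $\Nentropt$ would fail. This is not automatic: the infinite-data ridge estimator $\Sigma(\Sigma+\reglead I)^{-1}(\mixlead\paramone+(1-\mixlead)\paramtwo)$ has bias toward zero from shrinkage and bias toward $\paramtwo$ from the mixture, and one must check these do not partially cancel. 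The paper handles this via Lemma~\ref{lemma:popparetofrontier}, which expands $\mathbb{E}_{\DC}[L_1(\beta(\alpha,\lambda))]$ and $\mathbb{E}_{\DC}[L_2(\beta(\alpha,\lambda))]$ under the power-law assumptions, shows their ratio is at least $(1-\alpha)^2/\alpha^2$, and combines this with the fact that the weighted sum $\alpha L_1+(1-\alpha)L_2$ is minimized at $\lambda=0$ to conclude $L_1(\beta(\alpha,\lambda))\ge(1-\alpha)^2 L^*(\rho)$ for all $\lambda$. You should add this argument (or an equivalent one) to close the gap.
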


The intuition is as follows. The safety constraint $\constrlead$ forces the incumbent company to partially align their predictor with the safety objective $\beta_2$. Since $\beta_1$ and $\beta_2$ point in different directions, this reduces the performance of the incumbent along $\beta_1$ as a side effect, resulting in strictly positive loss with respect to performance. On the other hand, since the new company faces no safety constraint, the new company can optimize entirely for performance along $\beta_1$. This means that the new company can enter the market as long as their finite data error is bounded by the incumbent's performance loss. We formalize this intuition in the following proof sketch.

\begin{proof}[Proof sketch of Theorem \ref{thm:tradeoffwarmup}]
The incumbent chooses the \textit{infinite-data} ridgeless estimator $\beta(\alpha, 0)$ with mixture parameter $\alpha \in [0,1]$ tuned so the safety violation is $\constrlead$ (Lemma \ref{lemma:popparetofrontier}). The resulting performance loss is $\sqrt{L^*(\rho)} - \sqrt{\min(\constrlead, L^*(\rho))}$. Since the new company has no safety constraint, they choose the \textit{single-objective} ridge regression estimator where $\alpha = 1$ and where $\reg$ is chosen optimally.\footnote{We formally rule out the possibility that $\alpha \neq 1$ using our multi-objective scaling law in Theorem \ref{thm:scalinglawoptreginformal}.} Theorem \ref{thm:scalinglawoptreginformal} (or alternatively, 
existing analyses of high-dimensional linear regression~\citep[e.g.,][]{CLKZ21, WHS22}) demonstrate the loss follows a scaling law of the form $\inf_{\reg > 0} L_1(\hat{\beta}(1, \reg, X)) = \Theta\left(
N^{-\scalingexp} \right)$ where $\scalingexp :=\min(2(1+\gamma), \delta +\gamma)$. The full proof is in Appendix \ref{appendix:proofssec3}. 
\end{proof}

Theorem \ref{thm:tradeoffwarmup} reveals that the market-entry threshold is \textit{finite} as long as (1) the safety constraint $\constrlead$ places nontrivial restrictions on the incumbent company and (2) the safety and performance objectives are not perfectly correlated. This result captures the notion that the new company can enter the market even after the incumbent company has accumulated an infinite amount of data.

Theorem \ref{thm:tradeoffwarmup} further illustrates how the market-entry threshold changes with other parameters (Figure \ref{fig:warmup}). When safety and performance objectives are more correlated (i.e., when $\rho$ is higher), the market-entry threshold increases, which increases barriers to entry. When the safety constraint for the incumbent is weaker (i.e., when $\constrlead$ is higher), the market-entry threshold also increases. Finally, when the power scaling parameters of the covariance and alignment coefficients increase, which increases the scaling law exponent $\scalingexp$, the market-entry threshold decreases.

 \section{Generalized Analysis of the Market-entry Threshold}\label{sec:general}

To obtain a more fine-grained characterization of the market-entry threshold, we now consider more general environments. Our key technical tool is \textit{multi-objective scaling laws}, which capture the performance of ridge regression in high-dimensional, multi-objective environments with finite data  (Section \ref{subsec:technicaltool}). Using these scaling laws, we characterize the market-entry threshold when the incumbent has finite data (Section \ref{subsec:finitedata}) and when the new company has a safety constraint (Section \ref{subsec:alignment}).

Our results in this section uncover the following conceptual insights about market entry. First, our main finding from Section \ref{sec:warmup}---that the new company can enter the market with significantly less data than the incumbent---applies to these generalized environments.  Moreover, our characterizations of $\Nentropt$ exhibit a \textit{power-law-like dependence} with respect to the incumbent's dataset size (Theorem \ref{thm:finitedata}) and the difference in safety requirement for the two companies (Theorem \ref{thm:alignment}). Interestingly, the scaling exponent $c$ is not a constant across the full regime and instead takes on up to three different values. As a consequence, the new company can afford to scale up their dataset at a slower rate as the incumbent's dataset size increases, but needs to scale up their dataset at a faster rate as the two safety constraints become closer together. Proofs are deferred to Appendix \ref{appendix:proofssecgeneral}.

\subsection{Technical tool: Scaling laws in multi-objective environments}\label{subsec:technicaltool}

In this section, we give an overview of multi-objective scaling laws (see Section \ref{sec:scalinglaws} for a more formal treatment and derivations). Our scaling laws capture how the ridge regression loss $L_1(\hat{\beta}(\alpha, \lambda, X))$ along the primary objective $\beta_1$ scales with the dataset size $N$, when the regularizer $\reg$ is optimally tuned to both $N$ and problem-specified parameters. We show scaling laws for both the loss $\inf_{\reg \in (0, 1)} \mathbb{E}[L_1(\hat{\beta}(\alpha, \lambda, X))]$ and the excess loss  $\inf_{\reg \in (0, 1)} (\mathbb{E}[L_1(\hat{\beta}(\alpha, \lambda, X)) - L_1(\beta(\alpha, 0))])$ where $\beta(\alpha, 0)$ is the infinite-data ridgeless regression estimator.

\begin{figure}[t!]
    \centering
    \begin{subfigure}[b]{0.48\textwidth}
        \centering
        \includegraphics[width=\textwidth]{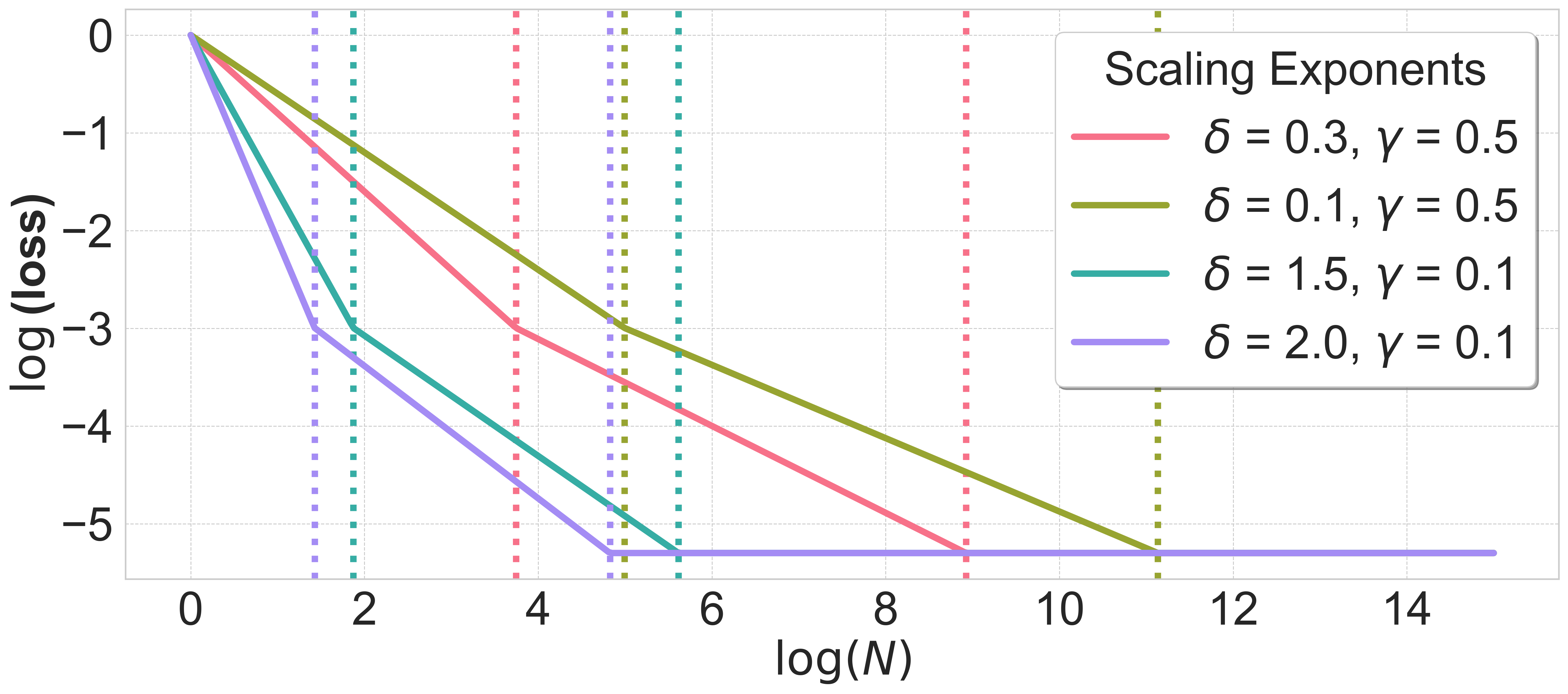}
        \caption{Scaling law for loss (up to constants)}
        \label{fig:loss}
    \end{subfigure}
    \hfill
    \begin{subfigure}[b]{0.48\textwidth}
        \centering
        \includegraphics[width=\textwidth]{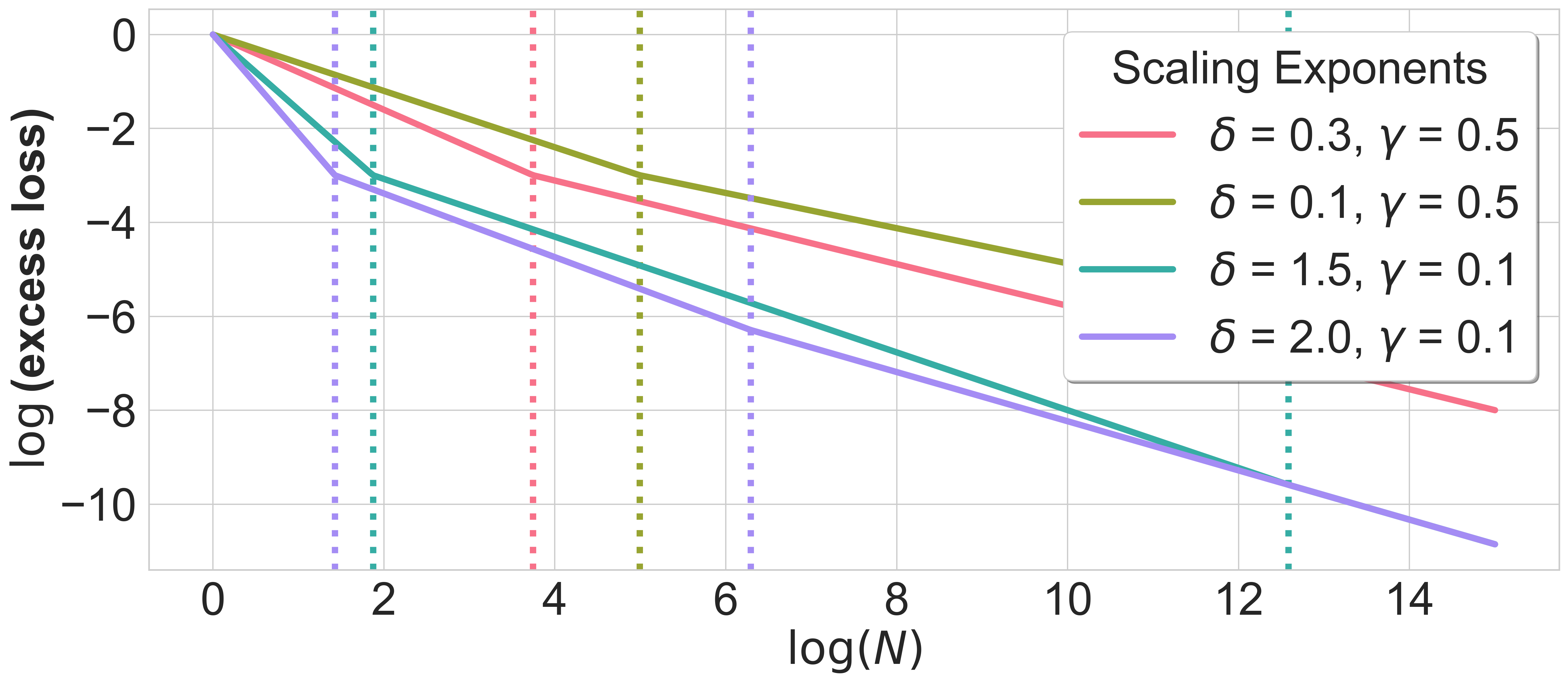}
       \caption{Scaling law for excess loss (up to constants)}
        \label{fig:excessloss}
    \end{subfigure}
    \caption{Data scaling laws for multi-objective environments where a fraction $\alpha = 0.9$ of the data is labelled according to the primary objective and a fraction $1-\alpha = 0.1$ is labelled according to the secondary objective. The plots show, up to constants, the loss $\Theta(\inf_{\reg \in (0, 1)} \mathbb{E}[L_1(\hat{\beta}(\alpha, \lambda, X))])$  (left, Theorem \ref{thm:scalinglawoptreginformal}) and excess loss $\Theta(\inf_{\reg \in (0, 1)} (\mathbb{E}[L_1(\hat{\beta}(\alpha, \lambda, X)) - L_1(\beta(\alpha, 0))]))$ (right, Theorem \ref{thm:scalinglawoptregexcessinformal}) as a function of the total number of training data points $N$. The loss and excess loss both take the form $N^{-c}$, but where the scaling exponent $c$ takes on \textit{multiple (two or three) different values} depending on the size of $N$ relative to other parameters. The scaling exponent is smaller when $N$ is larger, thus demonstrating that the scaling rate becomes slower as the dataset size $N$ increases.}
    \label{fig:scalinglaw}
\end{figure}

\paragraph{Scaling law for the loss.}
We first describe the scaling law for $\inf_{\reg \in (0, 1)} \mathbb{E}[L_1(\hat{\beta}(\alpha, \lambda, X))]$ (Theorem \ref{thm:scalinglawoptreginformal}; Figure \ref{fig:loss}). 
\begin{theorem}[Informal Version of Corollary \ref{cor:scalinglawoptreg}]
\label{thm:scalinglawoptreginformal}
Suppose that the power-law scaling assumptions from Section \ref{subsec:assumptions} hold with exponents $\gamma, \delta > 0$ and correlation coefficient $\rho \in [0,1)$. Suppose also that $P = \infty$ and $\alpha \ge 0.5$. Then, a deterministic equivalent for the expected loss  under optimal regularization  $\inf_{\reg \in (0, 1)} \mathbb{E}[L_1(\hat{\beta}(\alpha, \lambda, X))]$ scales according to $N^{-\scalingexp^*}$,  where the scaling exponent $\scalingexp^*$ is defined to be:   
\[ \scalingexp^* =
\begin{cases}
\scalingexp &\text{ if } N \le (1-\alpha)^{-\frac{1}{\scalingexp}}(1-\rho)^{-\frac{1}{\scalingexp}} \\
 \frac{\scalingexp}{\scalingexp + 1} &\text{ if } (1-\alpha)^{-\frac{1}{\scalingexp}}(1-\rho)^{-\frac{1}{\scalingexp}} 
 \le N \le  (1-\alpha)^{-\frac{2+\scalingexp}{\scalingexp}} (1-\rho)^{-\frac{1}{\scalingexp}}
\\
0 &\text{ if } N \ge (1-\alpha)^{-\frac{2+\scalingexp}{\scalingexp}} (1-\rho)^{-\frac{1}{\scalingexp}},
\end{cases}
\]
for $\scalingexp := \min(2(1+\gamma), \delta+\gamma)$. 
\end{theorem}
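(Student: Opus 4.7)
The plan is to reduce the theorem to a $\lambda$-optimization problem on the deterministic equivalent $L_1^{\texttt{det}}$ provided by Lemma \ref{lemma:sollichvariant}. After taking the expectation over $(\beta_1,\beta_2)\sim\DC$ and expanding in the eigenbasis of $\Sigma$, the power-law assumptions $\lambda_i = i^{-1-\gamma}$, $\mathbb{E}_{\DC}[\langle \beta_j, v_i\rangle^2] = i^{-\delta}$, and $\mathbb{E}_{\DC}[\langle \beta_1, v_i\rangle\langle \beta_2, v_i\rangle] = \rho\, i^{-\delta}$ let me write $\mathbb{E}_{\DC}[L_1^{\texttt{det}}]$ as an explicit sum of rational functions in $\lambda_i$ and an effective regularization $\lambda_{\text{eff}}$, defined implicitly by the Marčenko--Pastur fixed-point equation.

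Next I would decompose this sum into three interpretable pieces: a \emph{floor} $F = \Theta((1-\alpha)^2(1-\rho))$, independent of $\lambda$ and $N$ and equal to the infinite-data ridgeless loss $\mathbb{E}_{\DC}[L_1(\beta(\alpha,0))] = (1-\alpha)^2\mathbb{E}_{\DC}[\|\beta_1-\beta_2\|_\Sigma^2]$ (with the $(1-\rho)$ factor emerging from the cross term); a \emph{regularization bias} $B(\lambda_{\text{eff}}) = \Theta(\lambda_{\text{eff}}^{\nu/(1+\gamma)})$, whose exponent $\nu/(1+\gamma) = \min(2,(\gamma+\delta)/(1+\gamma))$ is obtained by splitting the sum over $i$ at $i_* = \lambda_{\text{eff}}^{-1/(1+\gamma)}$ as in the standard single-objective analysis; and a \emph{mixing-induced variance} $V(\lambda_{\text{eff}},N) = \Theta((1-\alpha)(1-\rho)\, \lambda_{\text{eff}}^{-1/(1+\gamma)}/N)$, whose prefactor reflects the effective label-noise $\alpha(1-\alpha)\|\beta_1-\beta_2\|_\Sigma^2$ produced by mixing (using $\alpha\ge 1/2$).

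With these estimates in hand, minimizing $B+V$ over $\lambda_{\text{eff}}$ yields $\lambda_{\text{eff}}^*\propto((1-\alpha)(1-\rho)/N)^{(1+\gamma)/(\nu+1)}$ and optimized excess-loss $\Theta(((1-\alpha)(1-\rho)/N)^{\nu/(\nu+1)})$ whenever the mixing variance dominates. The three regimes emerge by comparing this to both the floor $F$ and to the single-objective scaling $\Theta(N^{-\nu})$ that governs when mixing is subdominant. In regime 1 ($N\le((1-\alpha)(1-\rho))^{-1/\nu}$), mixing is subdominant and the loss reduces to $\Theta(N^{-\nu})$, matching Theorem \ref{thm:tradeoffwarmup}. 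In regime 2 the mixing variance dominates and the loss is $\Theta(N^{-\nu/(\nu+1)})$. Setting this equal to $F$ yields the regime-3 threshold $N\sim(1-\alpha)^{-(2+\nu)/\nu}(1-\rho)^{-1/\nu}$, beyond which $F$ dominates and the loss is $\Theta(1)$ in $N$.

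The main obstacle will be carefully tracking coefficients in the multi-objective deterministic equivalent, in particular handling the two subcases of $\nu=\min(2(1+\gamma),\delta+\gamma)$---the source-limited regime $\delta<2+\gamma$ versus the capacity-limited regime $\delta\ge2+\gamma$---each of which needs its own eigenvalue-sum estimate. A natural sanity check, which I have verified, is continuity of the total loss across both interior boundaries: at $N\sim((1-\alpha)(1-\rho))^{-1/\nu}$, regimes 1 and 2 both give $\Theta((1-\alpha)(1-\rho))$, and at $N\sim(1-\alpha)^{-(2+\nu)/\nu}(1-\rho)^{-1/\nu}$, regimes 2 and 3 both give $\Theta((1-\alpha)^2(1-\rho))$, as required.
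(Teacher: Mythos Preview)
Your proposal is correct and follows essentially the same route as the paper. Your three-term decomposition (floor, regularization bias, mixing-induced variance) coincides with the paper's ``mixture error,'' ``finite data error,'' and ``overfitting error'' in Theorem~\ref{thm:scalinglaw}, your $\lambda_{\text{eff}}$ is the paper's effective regularizer $\kappa=\Theta(\max(\lambda,N^{-1-\gamma}))$, and your optimization and regime comparison reproduce the case analysis in the proof of Corollary~\ref{cor:scalinglawoptreg}.
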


Theorem \ref{thm:scalinglawoptreginformal} (Figure \ref{fig:loss}) illustrates that the scaling rate becomes slower as the dataset size $N$ increases. In particular, while the scaling exponent in single-objective environments is captured by a \textit{single} value, Theorem \ref{thm:scalinglawoptreginformal} illustrates that the scaling exponent  $\scalingexp^*$ in multi-objective environments takes on \textit{three different values}, depending on the size of $N$ relative to other parameters. 
When $N$ is small (the first regime), the scaling exponent $\scalingexp^* = \scalingexp$ is identical to that of the single-objective environment given by $\beta_1$. When $N$ is a bit larger (the second regime), the scaling exponent \textit{reduces} to $\scalingexp^* = \scalingexp/(\scalingexp + 1) < \scalingexp$. To make this concrete, if we take $\scalingexp = 0.34$ to be an empirically estimated scaling law exponent for language models \citep{H22}, this would mean that $\scalingexp^* \approx 0.34$ in the first regime and $\scalingexp^* \approx 0.25$ in the second regime. Finally, when $N$ is sufficiently large (the third regime), the scaling exponent reduces all the way to $\scalingexp^* = 0$ and the only benefit of additional data is to improve constants on the loss. 

\paragraph{Scaling law for the excess loss.}
We next turn to the excess loss, $\inf_{\reg \in (0, 1)} (\mathbb{E}[L_1(\hat{\beta}(\alpha, \lambda, X)) - L_1(\beta(\alpha, 0))])$, which is normalized by the loss of the infinite-data ridgeless predictor $\beta(\alpha, 0)$. We show that the excess loss exhibits the same scaling behavior as the loss when $N$ is sufficiently small, but exhibits different behavior when $N$ is sufficiently large (Theorem \ref{thm:scalinglawoptregexcessinformal}; Figure \ref{fig:excessloss}). 
\begin{theorem}[Informal Version of Corollary \ref{cor:scalinglawoptregexcess}]
\label{thm:scalinglawoptregexcessinformal}
Suppose that the power-law scaling assumptions from Section \ref{subsec:assumptions} hold with exponents $\gamma, \delta > 0$ and correlation coefficient $\rho \in [0, 1)$. Suppose also that $P = \infty$ and $\alpha \ge 0.75$. Then, a deterministic equivalent for the expected loss  under optimal regularization  $\inf_{\reg \in (0, 1)} (\mathbb{E}[L_1(\hat{\beta}(\alpha, \lambda, X)) - L_1(\beta(\alpha, 0))])$ scales according to $N^{-\scalingexp^*}$,  where the scaling exponent $\scalingexp^*$ is defined to be:   
\[ \scalingexp^* =
\begin{cases}
\scalingexp &\text{ if } N \le (1-\alpha)^{-\frac{1}{\scalingexp}}(1-\rho)^{-\frac{1}{\scalingexp}} \\
 \frac{\scalingexp}{\scalingexp + 1} &\text{ if } (1-\alpha)^{-\frac{1}{\scalingexp}}(1-\rho)^{-\frac{1}{\scalingexp}} \le N \le  (1-\alpha)^{-\frac{\scalingexp'+1}{\scalingexp - \scalingexp'}}(1-\rho)^{-\frac{\scalingexp'+1}{\scalingexp - \scalingexp'}} 
\\
\frac{\scalingexp'}{\scalingexp' + 1} &\text{ if } N \ge  (1-\alpha)^{-\frac{\scalingexp'+1}{\scalingexp - \scalingexp'}}(1-\rho)^{-\frac{\scalingexp'+1}{\scalingexp - \scalingexp'}},
\end{cases}
\]
for $\scalingexp := \min(2(1+\gamma), \delta+\gamma)$ and $\scalingexp' := \min(1+\gamma, \delta+\gamma)$.
\end{theorem}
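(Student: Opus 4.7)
The plan is to start from the deterministic equivalent $L_1^{\texttt{det}}(\alpha, \lambda, N)$ for the raw loss of the ridge estimator (Lemma \ref{lemma:sollichvariant}) and to form the corresponding deterministic equivalent for the excess loss, namely $L_1^{\texttt{det}}(\alpha, \lambda, N) - L_1(\beta(\alpha, 0))$. Because the infinite-data ridgeless estimator is the convex combination $\beta(\alpha, 0) = \alpha\beta_1 + (1-\alpha)\beta_2$, its loss equals $(1-\alpha)^2 \mathbb{E}_{\DC}[(\beta_2-\beta_1)^T \Sigma (\beta_2-\beta_1)]$---precisely the constant, $N$-independent and $\lambda$-independent misspecification piece sitting inside $L_1^{\texttt{det}}$. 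Subtracting it cancels this leading-order misspecification and exposes a first-order-in-$\lambda$ correction that was previously subdominant; this cancellation is the entire reason the excess loss behaves qualitatively differently from the loss at large $N$.

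Under the power-law assumptions I would decompose the resulting deterministic excess loss into three scaling pieces: (i) a bias-like term scaling as $\lambda^{\scalingexp/(1+\gamma)}$ with $\scalingexp = \min(2(1+\gamma), \delta+\gamma)$, as in the single-objective analysis; (ii) a variance-like term scaling as $N^{-1}\lambda^{-1/(1+\gamma)}$, whose prefactor picks up a factor of $(1-\alpha)(1-\rho)$ from the mis-labeled fraction of training data; and (iii) the new first-order misspecification term $(1-\alpha)^2(M(\lambda)-M(0))$, where a direct Taylor expansion of the spectral integral $M(\lambda) = \sum_i \lambda_i f_i(\lambda) \mathbb{E}_{\DC}[\langle \beta_1-\beta_2, v_i\rangle^2]$ around $\lambda=0$ yields scaling $\lambda^{\scalingexp'/(1+\gamma)}$ with $\scalingexp' = \min(1+\gamma, \delta+\gamma)$. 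The exponent $\scalingexp'$ differs from $\scalingexp$ because the integrand of the first-order correction carries one fewer factor of $\lambda/(\lambda_i+\lambda)$ than the squared-bias integrand, pushing the saturation exponent from $2$ down to $1$.

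The third step is to optimize $\lambda$ against $N$ by identifying which pair of terms binds at the minimum in each regime. In Regime 1 ($N$ small), the misspecification pieces are dominated by the standard single-objective bias/variance balance, giving excess loss $\Theta(N^{-\scalingexp})$. In Regime 2 ($N$ moderate), the $(1-\alpha)(1-\rho)$-weighted variance dominates the standard variance; balancing it against the squared-bias term forces $\lambda^*$ to grow with $N$ and yields $\Theta(N^{-\scalingexp/(\scalingexp+1)})$, matching Theorem \ref{thm:scalinglawoptreginformal}. In Regime 3 ($N$ large), the cancellation of the constant misspecification leaves the $\lambda^{\scalingexp'/(1+\gamma)}$ term as the binding bias contribution; balancing it against the $N^{-1}\lambda^{-1/(1+\gamma)}$ variance gives excess loss $\Theta(N^{-\scalingexp'/(\scalingexp'+1)})$. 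The regime boundaries in the theorem statement are exactly where the adjacent balances agree, and they can be read off by equating the optimal-loss expressions in neighboring regimes.

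The main obstacle is Regime 3: I need to confirm that the claimed first-order expansion of $M(\lambda)$ actually governs the excess loss at the optimum, and that no other subdominant piece of $L_1^{\texttt{det}}$ competes with it. This reduces to establishing a non-degenerate $\lambda^{\scalingexp'/(1+\gamma)}$ lower bound on $M(\lambda)-M(0)$ under the joint power-law $\mathbb{E}_{\DC}[\langle \beta_1-\beta_2, v_i\rangle^2] = 2(1-\rho)i^{-\delta}$, and to ruling out accidental cancellations between pieces of $L_1^{\texttt{det}}$ that share this exponent. The strengthened assumption $\alpha \ge 0.75$ (versus $\alpha \ge 0.5$ in Theorem \ref{thm:scalinglawoptreginformal}) appears naturally here: it keeps the prefactor $(1-\alpha)^2$ of the first-order misspecification sufficiently small relative to the other bias contributions that the optimal $\lambda^*$ in Regime 3 sits inside the range where the Taylor expansion is valid. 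Matching upper and lower bounds on each regime then yields the $\Theta(\cdot)$ characterization.
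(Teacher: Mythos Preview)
Your three-step architecture---form the deterministic equivalent for the excess loss, identify three competing scaling terms, then optimize $\lambda$ in three regimes---matches the paper exactly, and your reasoning for Regimes 1 and 2 is correct. The gap is in the mechanism behind term (iii) and, relatedly, the role of $\alpha \ge 0.75$.

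The $\kappa^{\scalingexp'/(1+\gamma)}$ contribution does \emph{not} come from Taylor-expanding the $(1-\alpha)^2$ misspecification piece $T_2$. If you carry out that expansion you get a \emph{negative} first-order correction, $T_2 - (1-\alpha)^2 L^*(\rho) \approx -4(1-\alpha)^2(1-\rho)\,\kappa\sum_i \tfrac{i^{-\delta-2(1+\gamma)}}{(i^{-1-\gamma}+\kappa)^2}$. The dominant positive contribution at this order is the cross term $T_3 = 2(1-\alpha)\kappa\,\Tr(\Sigma_\kappa^{-2}\Sigma^2 \Matrixmixed{1})$, which is linear in $(1-\alpha)$ and yields $+2(1-\alpha)(1-\rho)\,\kappa\sum_i \tfrac{i^{-\delta-2(1+\gamma)}}{(i^{-1-\gamma}+\kappa)^2}$. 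The two combine to give a net prefactor $2(1-\alpha)(1-\rho)\bigl(1-2(1-\alpha)\bigr)$ on the $\kappa^{\scalingexp'/(1+\gamma)}$ term (cf.\ Lemma~\ref{lemma:sollichmoreprecise}). This is why the paper's ``mixture finite data error'' carries $(1-\alpha)(1-\rho)$, not $(1-\alpha)^2(1-\rho)$---and with your prefactor you would derive the wrong Regime~2/3 boundary.

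This also corrects your account of $\alpha \ge 0.75$: it is not about keeping $\lambda^*$ inside a Taylor-validity window, but about guaranteeing $1-2(1-\alpha) \ge 1/2$ so that the net coefficient above (and the analogous $(1-2(1-\alpha))$ factor on the overfitting term in Lemma~\ref{lemma:sollichmoreprecise}) is bounded away from zero. When $\alpha \to 0.5$ this coefficient vanishes and the $\Theta(\cdot)$ control fails; that is the genuine obstruction, and your anticipated ``accidental cancellations'' are exactly this sign competition between $T_3$ and the first-order piece of $T_2$.
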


 Theorem \ref{thm:scalinglawoptregexcessinformal} (Figure \ref{fig:excessloss}) again shows that the scaling rate can become slower as the dataset size $N$ increases, and again reveals three regimes of scaling behavior. While the first two regimes of Theorem \ref{thm:scalinglawoptregexcessinformal} resemble the first two regimes of Theorem \ref{thm:scalinglawoptreginformal}, the third regime of Theorem \ref{thm:scalinglawoptregexcessinformal} (where $N \ge  (1-\alpha)^{-\frac{\scalingexp'+1}{\scalingexp - \scalingexp'}}(1-\rho)^{-\frac{\scalingexp'+1}{\scalingexp - \scalingexp'}}$) behaves differently. In this regime, the scaling exponent for the excess loss is $\frac{\scalingexp'}{\scalingexp' + 1}$, rather than zero---this captures the fact that additional data can nontrivially improve the excess loss even in this regime, even though it only improves the loss up to constants. In terms of the magnitude of the scaling exponent $\frac{\scalingexp'}{\scalingexp' + 1}$, it is \textit{strictly smaller} than the scaling exponent $\frac{\nu}{\nu +1}$ when $\delta > 1$ and \textit{equal} to the scaling exponent  $\frac{\nu}{\nu +1}$ when $\delta \le 1$.

\subsection{Finite data for the incumbent}\label{subsec:finitedata}

We compute $\Nentropt$ when the incumbent has finite data and the new company has no safety constraint (Theorem \ref{thm:finitedata}; Figure \ref{fig:boundB}). The market-entry threshold $\Nentropt$ depends on the incumbent's dataset size $\Nlead$, the incumbent's performance loss $G_I$ if they were to have infinite data but face the same safety constraint, the scaling exponents $\gamma, \delta$, and the correlation coefficient $\rho$.  
\begin{theorem}
\label{thm:finitedata}
Suppose that the power-law scaling holds for the eigenvalues and alignment coefficients with scaling exponents $\gamma, \delta > 0$ and correlation coefficient $\rho \in [0, 1)$, and suppose that $P = \infty$. Assume that $\constrentr = \infty$. Suppose that the safety constraint $\constrlead$ satisfies \eqref{eq:safetythreshold}. Then we have that $\Nentropt = \Nentropt(\Nlead, \constrlead, \infty, \DC, \DF)$ satisfies: 
\[
\Nentropt := 
\begin{cases}
\Theta\big(\Nlead\big) &\text{ if }  \Nlead \le G_I^{-\frac{1}{2 \scalingexp}} (1-\rho)^{-\frac{1}{2\scalingexp}}  \\
\Theta\bigg(\Nlead^{\frac{1}{\scalingexp+1}} \cdot G_I^{-\frac{1}{2(\scalingexp+1)}} (1-\rho)^{-\frac{1}{2(\scalingexp+1)}}\bigg)  &\text{ if } G_I^{-\frac{1}{2 \scalingexp}} (1-\rho)^{-\frac{1}{2\scalingexp}} \le \Nlead  \le G_I^{-\frac{1}{2} - \frac{1}{\scalingexp}}(1-\rho)^{\frac{1}{2}}  \\
\Theta\bigg(G_I^{-\frac{1}{\scalingexp}}\bigg)  &\text{ if }  \Nlead \ge G_I^{-\frac{1}{2} - \frac{1}{\scalingexp}}(1-\rho)^{\frac{1}{2}},
\end{cases}
\]
where $L^*(\rho) = \mathbb{E}_{\DC}[(\beta_1 - \beta_2)^T \Sigma (\beta_1 - \beta_2)] = \Theta(1 - \rho)$, where $G_I := (\sqrt{L^*(\rho)} - \sqrt{\min(\constrlead, L^*(\rho))})^2$, and where $\scalingexp = \min(2(1+\gamma), \delta + \gamma)$.
\end{theorem}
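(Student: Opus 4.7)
The plan is to invert the incumbent's finite-data loss as a function of $\Nlead$ and then read off the smallest $\Nentr$ for which the entrant's loss drops below the incumbent's. Concretely, I would (a) pin down the incumbent's optimal mixture $\alpha_I^*$ given the binding safety constraint $\constrlead$, (b) apply the three-regime multi-objective scaling law of Theorem \ref{thm:scalinglawoptreginformal} to obtain $L_I(\Nlead) := \mathbb{E}_{\DC}[\LossPerf^*(\beta_1, \beta_2, \DF, \reglead, \Nlead, \alpha_I^*)]$, (c) use the single-objective specialization ($\alpha = 1$) of the same scaling law to get $L_E(\Nentr) = \Theta(\Nentr^{-\scalingexp})$, and finally (d) solve $L_E(\Nentropt) = L_I(\Nlead)$ for $\Nentropt$.

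For step (a), the infinite-data ridgeless safety violation is $\LossAlign^*(\alpha) = \alpha^2 L^*(\rho)$, so the constraint forces $\alpha \le \alpha_I^* := \min(1, \sqrt{\constrlead/L^*(\rho)})$; since $\inf_{\reg} \mathbb{E}_{\DC}[\LossPerf^*]$ is monotonically decreasing in $\alpha$ on $[0,1]$ (more $\beta_1$-labeled data never hurts when $\reg$ is tuned), the incumbent picks $\alpha_I^*$. Consistent with the warmup (Theorem \ref{thm:tradeoffwarmup}), $(1-\alpha_I^*)^2 L^*(\rho) = G_I$, so $(1-\alpha_I^*) = \Theta(\sqrt{G_I/(1-\rho)})$ using $L^*(\rho) = \Theta(1-\rho)$. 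For step (b), rewriting the two regime boundaries of Theorem \ref{thm:scalinglawoptreginformal} under this substitution yields
\[
N_{12} = (1-\alpha_I^*)^{-1/\scalingexp}(1-\rho)^{-1/\scalingexp} = \Theta\bigl(G_I^{-1/(2\scalingexp)}(1-\rho)^{-1/(2\scalingexp)}\bigr),
\]
\[
N_{23} = (1-\alpha_I^*)^{-(2+\scalingexp)/\scalingexp}(1-\rho)^{-1/\scalingexp} = \Theta\bigl(G_I^{-1/\scalingexp - 1/2}(1-\rho)^{1/2}\bigr),
\]
matching the boundaries in the theorem statement. Piecing together the three regimes, $L_I(\Nlead) = \Theta(\Nlead^{-\scalingexp})$ when $\Nlead \le N_{12}$; equals $\Theta\bigl((G_I(1-\rho))^{\scalingexp/(2(\scalingexp+1))}\,\Nlead^{-\scalingexp/(\scalingexp+1)}\bigr)$ when $N_{12} \le \Nlead \le N_{23}$, with the prefactor pinned down by continuity at either endpoint; and saturates at the infinite-data floor $\Theta(G_I)$ when $\Nlead \ge N_{23}$.

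For step (c), the entrant is unconstrained, and sending $\alpha \to 1$ in Theorem \ref{thm:scalinglawoptreginformal} pushes both regime boundaries to infinity, collapsing the law to the single scaling $L_E(\Nentr) = \Theta(\Nentr^{-\scalingexp})$. For step (d), the market-entry threshold is then $\Nentropt = \Theta(L_I(\Nlead)^{-1/\scalingexp})$; substituting the three regime-specific expressions for $L_I$ recovers $\Theta(\Nlead)$, $\Theta\bigl(\Nlead^{1/(\scalingexp+1)} G_I^{-1/(2(\scalingexp+1))}(1-\rho)^{-1/(2(\scalingexp+1))}\bigr)$, and $\Theta(G_I^{-1/\scalingexp})$ respectively, with boundaries inherited from $N_{12},N_{23}$. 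The main obstacle is step (a): establishing that the $\LossPerf$-optimal $\alpha$ equals $\alpha_I^*$ for every $\Nlead$ requires inspecting monotonicity of the deterministic equivalent $L_1^{\texttt{det}}$ in $\alpha$ under optimal $\reg$, rather than just at the infinite-data limit. A secondary hurdle is tracking prefactors carefully through the scaling law so that the two-sided $\Theta$-bounds match across regime boundaries.
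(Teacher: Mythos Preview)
Your proposal is correct and follows essentially the same route as the paper: determine the incumbent's binding mixture $\alpha^* = \sqrt{\min(\constrlead,L^*(\rho))/L^*(\rho)}$, apply Corollary~\ref{cor:scalinglawoptreg} at $\alpha^*$ to obtain the three-regime expression for the incumbent's loss, specialize to $\alpha=1$ for the entrant to get $\Theta(\Nentr^{-\scalingexp})$, and invert. Regarding your main stated obstacle in step~(a): you do not need exact monotonicity of $L_1^{\texttt{det}}$ in $\alpha$; since the explicit $\Theta$-expression in Corollary~\ref{cor:scalinglawoptreg} is visibly nonincreasing in $(1-\alpha)$ for $\alpha\in[0.5,1]$, one gets $\inf_{\alpha\in[0.5,\alpha^*]}\inf_{\reg}\mathbb{E}_{\DC}[L_1^{\texttt{det}}] = \Theta\bigl(\inf_{\reg}\mathbb{E}_{\DC}[L_1^{\texttt{det}}]\big|_{\alpha=\alpha^*}\bigr)$ immediately, which is exactly what the paper does.
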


The market-entry threshold in Theorem \ref{thm:finitedata} exhibits three regimes of behavior depending on $\Nlead$. In particular, the market-entry threshold takes the form $\Nentropt = \Theta(\Nlead^c)$ where $c$ decreases from $1$ (in the first regime) to $\frac{1}{\nu + 1}$ (in the second regime) to $0$ (in the third regime) as $\Nlead$ increases. To connect this to large-language-model marketplaces, we directly set $\scalingexp = 0.34$ to be the empirically estimated scaling law exponent for language models \citep{H22}; in this case, the scaling exponent $c$ ranges from $1$ to $0.75$ to $0$. The fact that there are three regimes come from the scaling law derived in Theorem \ref{thm:scalinglawoptreginformal}, as the following proof sketch illustrates. 

\begin{proof}[Proof sketch]
The key technical tool is the scaling law for the loss $\inf_{\reg \in (0, 1)} \mathbb{E}[L_1(\hat{\beta}(\alpha, \lambda, X))]$ (Theorem \ref{thm:scalinglawoptreginformal}), which has three regimes of scaling behavior for different values of $N$. We apply the scaling law to analyze the performance of the incumbent, who faces a safety constraint and has finite data. Analyzing the performance of the new company---who faces no safety constraint---is more straightforward, given that the new company can set $\mixentr = 1$. We compute $\Nentropt$ as the number of data points needed to match the incumbent's performance level.  The full proof is deferred to Appendix \ref{appendix:proofssubsec41}. 
\end{proof}

\begin{figure}[t!]
    \centering
    \begin{subfigure}[b]{0.48\textwidth}
        \centering
        \includegraphics[width=\textwidth]{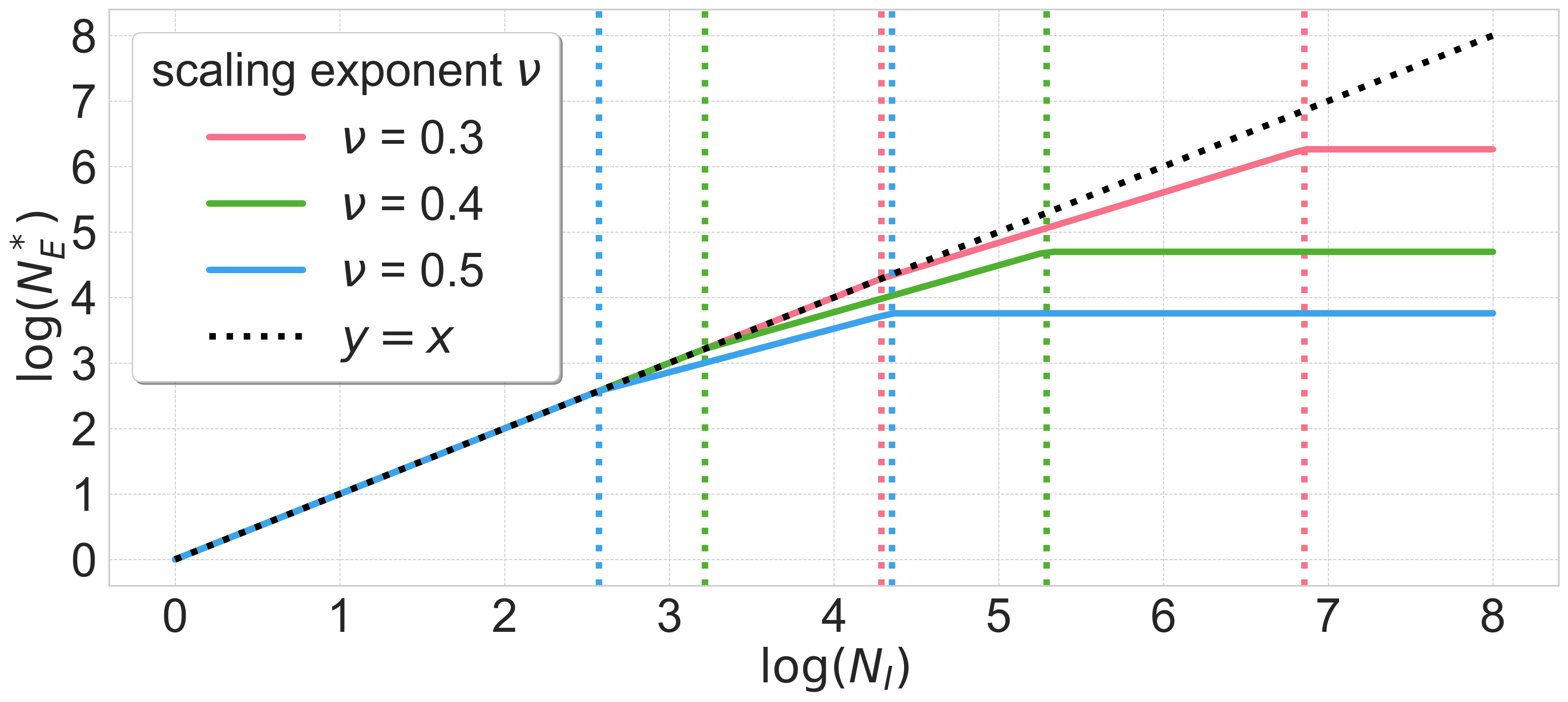}
    \end{subfigure}
    \hfill
    \begin{subfigure}[b]{0.48\textwidth}
        \centering
        \includegraphics[width=\textwidth]{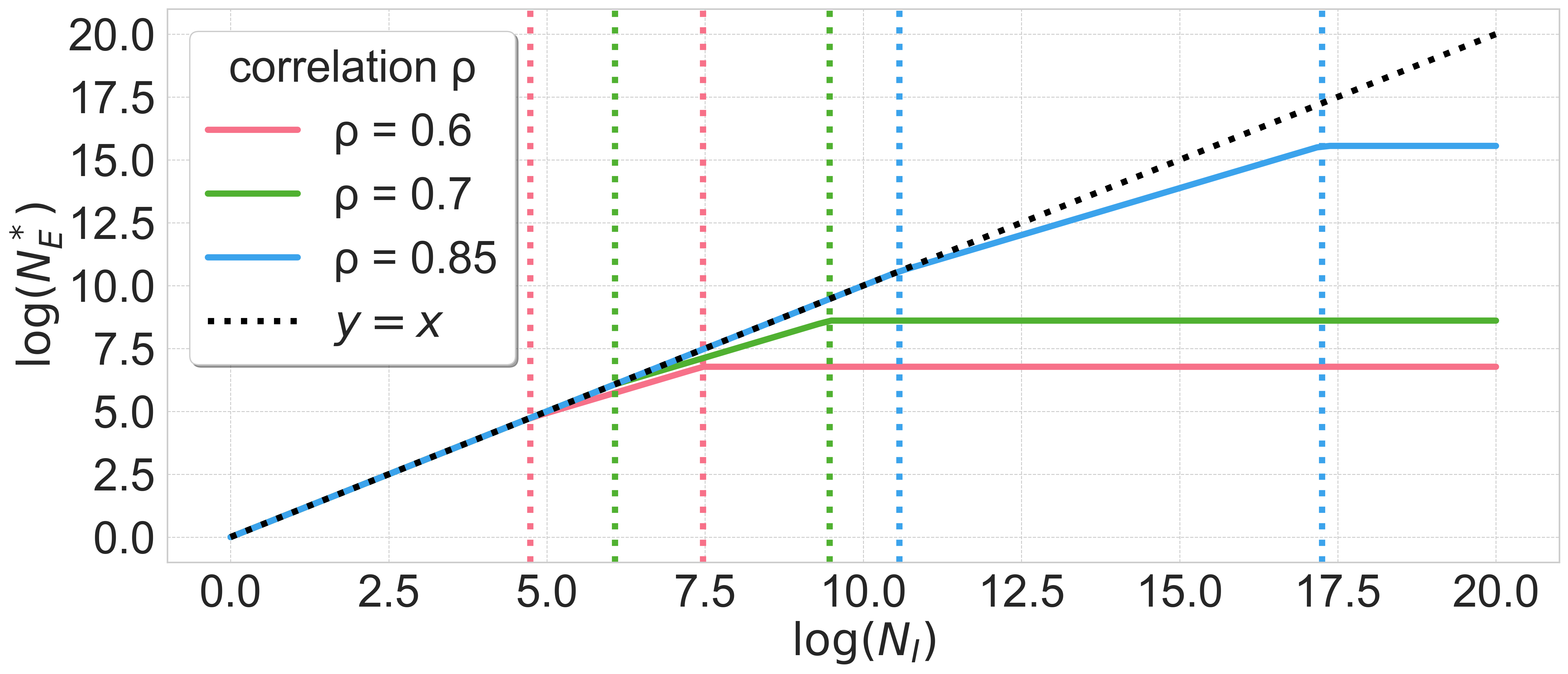}
    \end{subfigure}
    \caption{The market-entry threshold $\Nentropt$  as a function of the incumbent dataset size $\Nlead$, when the new company has no safety constraint (Theorem \ref{thm:finitedata}). The  plots show varying values of the scaling exponent $\scalingexp$ where the correlation parameter $\rho = 0.5$ is held fixed (left) and varying values of  $\rho$ where $\scalingexp = 0.34$ is held fixed (right). When $\Nlead$ is sufficiently large, the market-entry threshold $\Nentropt$ is asymptotically less than $\Nlead$ (i.e., below the dotted black line). Each curve is the union of three line segments with slope decreasing in $\Nlead$, demonstrating that the new company can afford to scale up their dataset at a slower rate as $\Nlead$ increases.}
    \label{fig:boundB}
\end{figure}

Theorem \ref{thm:finitedata} reveals that the new company can enter the market with $\Nentropt = o(\Nlead)$ data, as long as the incumbent's dataset is sufficiently large (i.e., $\Nlead \ge G_I^{-\frac{1}{2 \scalingexp}} (1-\rho)^{-\frac{1}{2\scalingexp}}$). The intuition is when there is sufficient data, the multi-objective scaling exponent is worse than the single-objective scaling exponent (Theorem \ref{thm:scalinglawoptreginformal}).
The incumbent thus faces a worse scaling exponent than the new company, so the new company can enter the market with asymptotically less data. 

The three regimes in Theorem \ref{thm:finitedata} further reveal that the market-entry threshold $\Nentropt$ scales at a slower rate as the incumbent's dataset size $\Nlead$ increases (Figure \ref{fig:boundB}). The intuition is that the multi-objective scaling exponent $\scalingexp^*$ faced by the incumbent decreases as dataset size increases, while the single-objective scaling exponent faced by the new company is constant in dataset size (Theorem \ref{thm:scalinglawoptreginformal}). The incumbent thus becomes less efficient at using additional data to improve performance, while the new company's efficiency in using additional data remains unchanged. 

Theorem \ref{thm:finitedata} also offers finer-grained insight into the market-entry threshold in each regime. In the first regime, where the incumbent's dataset is small, the threshold $\Nentropt$  matches the incumbent dataset size---the new company does not benefit from having a less stringent safety constraint.  In the second (intermediate) regime, the new company can enter with a dataset size proportional to $\Nlead^{1/(\scalingexp+1)}$. This \textit{polynomial speedup} illustrates that the new company can more efficiently use additional data to improve performance than the incumbent company.  A caveat is that this regime is somewhat restricted in that the ratio of the upper and lower boundaries is bounded. In the third regime, where the incumbent's dataset size is large, the market-entry threshold $\Nentropt$ matches the market-entry threshold from Theorem \ref{thm:tradeoffwarmup} where the incumbent has \textit{infinite} data.

\subsection{Safety constraint for the new company}\label{subsec:alignment}

We compute $\Nentropt$ when the new company has a nontrivial safety constraint and the incumbent has infinite data. For this result, we strengthen the conditions on $\constrentr$ and $\constrlead$ from \eqref{eq:safetythreshold}, instead requiring: 
\begin{equation}
\label{eq:safetythresholdnew}
 \constrentr >_{(A)} \constrlead \ge_{(B)} \mathbb{E}_{(\beta_1, \beta_2) \sim \DC}[\LossAlign^*(\beta_1, \beta_2, \DF, 0.75)], 
\end{equation}
where \eqref{eq:safetythresholdnew} replaces the $0.5$ with a $0.75$ in the right-most quantity.\footnote{Inequality (B) in \eqref{eq:safetythresholdnew} requires that the safety constraint still allows both company to label 75\% of their training data according to performance-optimal outputs. We make this modification, since our analysis of multi-objective scaling laws for the \textit{excess} loss assumes $\alpha \ge 0.75$ (see Section \ref{subsec:scalingexcess}).}

We state the result below (Theorem \ref{thm:alignment}; Figure \ref{fig:boundtau}). The market-entry threshold $\Nentropt$ depends on the incumbent's safety constraint $\constrlead$, the performance loss $G_I$ (resp. $G_E$) if the incumbent (resp. new company) had infinite data and faced the same safety constraint, the difference $D = G_I - G_E$ in infinite-data performance loss achievable by the incumbent and new company, the scaling exponents $\gamma, \delta$, and the correlation coefficient $\rho$.

\begin{figure}[t!]
    \centering
    \begin{subfigure}[b]{0.48\textwidth}
        \centering
        \includegraphics[width=\textwidth]{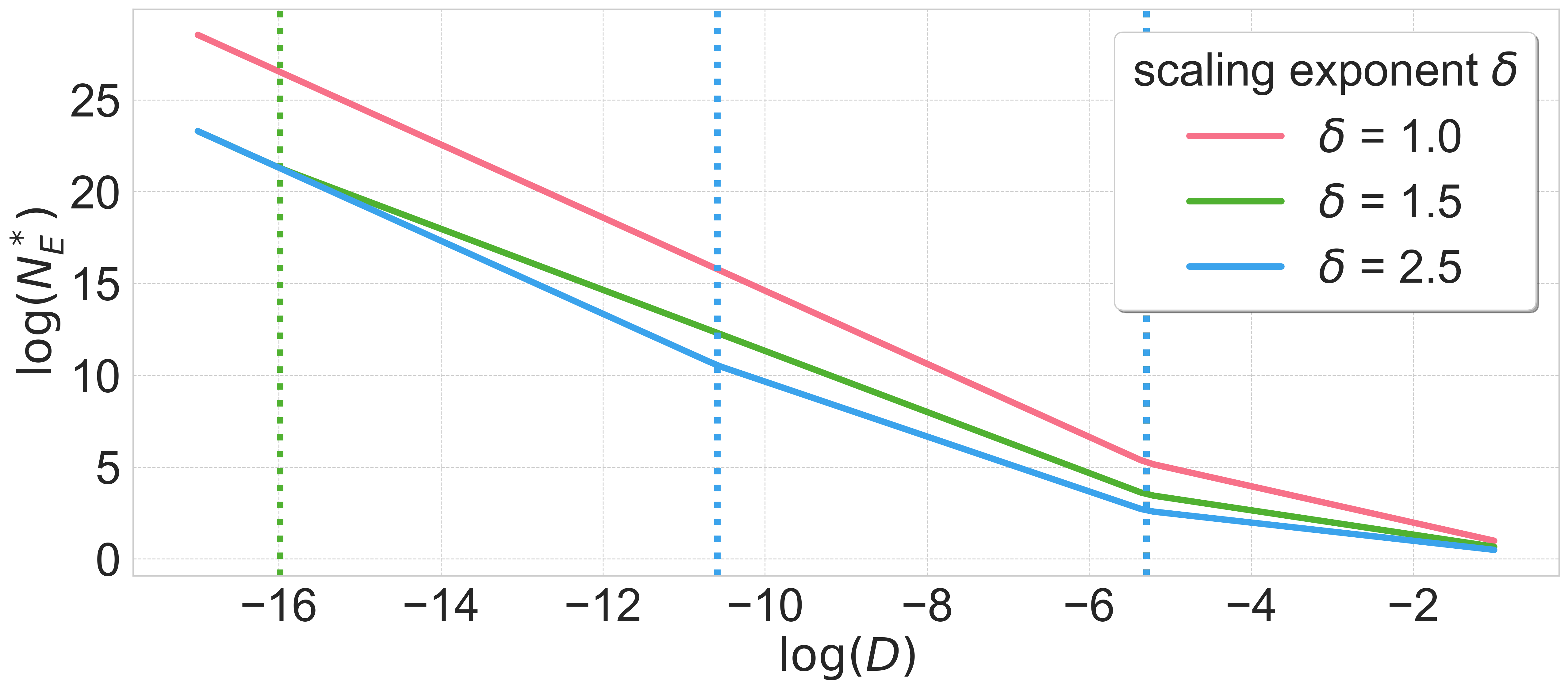}
        \label{fig:nu1}
    \end{subfigure}
    \hfill
    \begin{subfigure}[b]{0.48\textwidth}
        \centering
        \includegraphics[width=\textwidth]{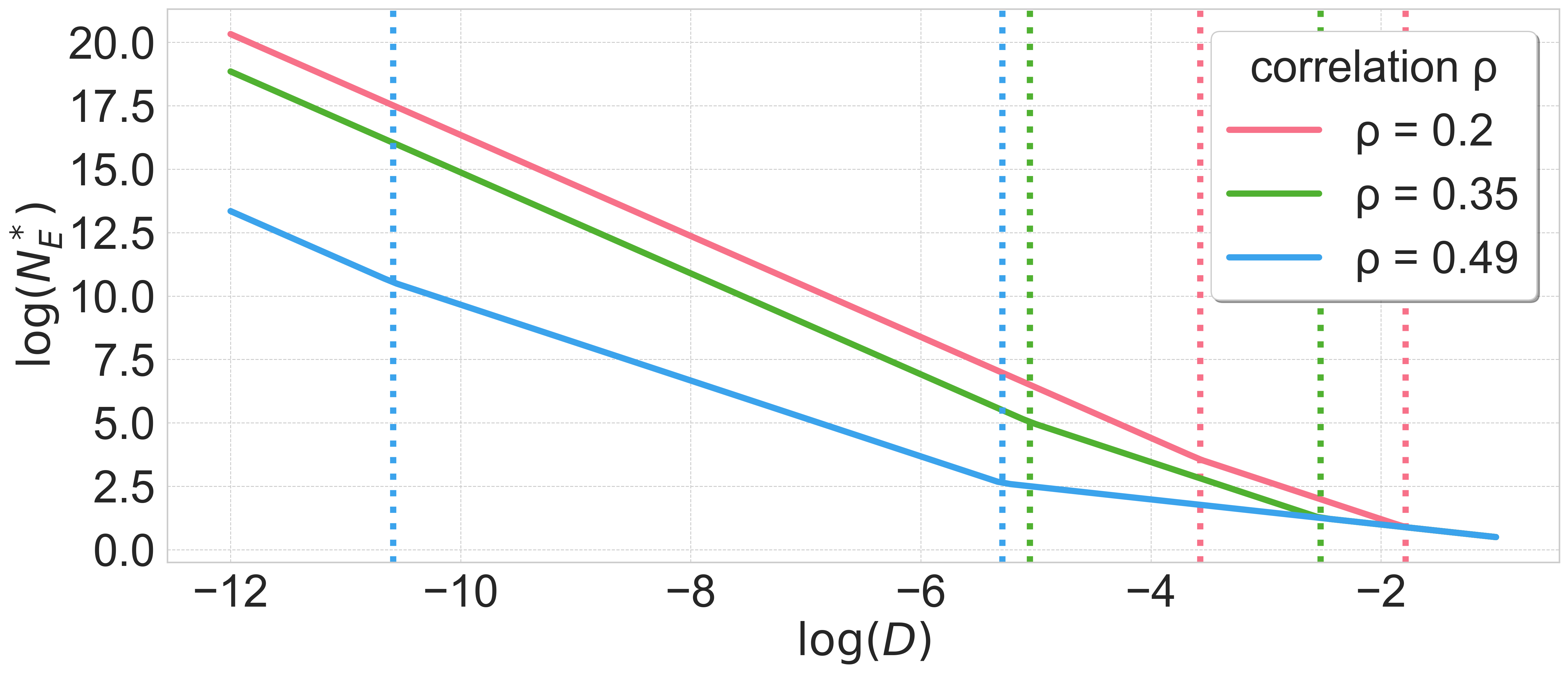}
        \label{fig:rho}
    \end{subfigure}
    \caption{The market-entry threshold $\Nentropt$  as a function of the difference $D$ between the infinite-data performance loss of the incumbent and new company, when the incumbent has infinite data (Theorem \ref{thm:alignment}). 
    The  plots show varying values of the scaling exponent $\delta$ where the correlation parameter $\rho = 0.49$ is held fixed (left) and varying values of  $\rho$ where $\delta = 2.5$ is held fixed (right). The plots are shown in log space. The market-entry threshold is finite in all cases. Each curve is the union of multiple line segments with slope increasing in magnitude as $\log D$ decreases, demonstrating that the new company needs to scale up their dataset at a faster rate as $D$ decreases.  }
    \label{fig:boundtau}
\end{figure}

\begin{theorem}
\label{thm:alignment}
Suppose that the power-law scaling holds for the eigenvalues and alignment coefficients with scaling exponents $\gamma, \delta > 0$ and correlation coefficient $\rho \in [0, 1)$, and suppose that $P = \infty$. Suppose that the safety constraints $\constrlead$ and $\constrentr$ satisfy \eqref{eq:safetythresholdnew}.
Then it holds that $\Nentropt = \Nentropt(\infty, \constrlead, \constrentr, \DC, \DF)$ satisfies: 
\[
\Nentropt := 
\begin{cases}
\Theta(D^{-\frac{1}{\scalingexp}} ) &\text{ if } D \ge G_E^{\frac{1}{2}} (1-\rho)^{\frac{1}{2}} \\
\Theta\bigg(D^{-\frac{\scalingexp+1} {\scalingexp}} G_E^{\frac{1}{2}} (1-\rho)^{\frac{1}{2}} \bigg) &\text{ if } 
G_E^{\frac{\scalingexp}{2(\scalingexp - \scalingexp')}}  (1-\rho)^{\frac{\scalingexp}{2(\scalingexp - \scalingexp')}}  \le D \le  G_E^{\frac{1}{2}} (1-\rho)^{\frac{1}{2}} 
\\
\Theta\bigg(\big(D \cdot G_E^{-\frac{1}{2}} (1-\rho)^{-\frac{1}{2}} \big)^{-\frac{\scalingexp'+1}{\scalingexp'}} \bigg) &\text{ if } D \le G_E^{\frac{\scalingexp}{2(\scalingexp - \scalingexp')}}  (1-\rho)^{\frac{\scalingexp}{2(\scalingexp - \scalingexp')}},
\end{cases}
\]
where $L^*(\rho) = \mathbb{E}_{\DC}[(\beta_1 - \beta_2)^T \Sigma (\beta_1 - \beta)] = \Theta(1 - \rho)$, where $\scalingexp = \min(2(1+\gamma), \delta + \gamma)$ and $\scalingexp' = \min(1+\gamma, \delta + \gamma)$, where 
$G_I := \left(\sqrt{L^*(\rho)} - \sqrt{\min(\constrlead, L^*(\rho))}\right)^2$ and $G_E := \left(\sqrt{L^*(\rho)} - \sqrt{\min(\constrentr, L^*(\rho))}\right)^2$, and where $D := G_I - G_E$. 
\end{theorem}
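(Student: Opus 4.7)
The plan is to reduce the computation of $\Nentropt$ to inverting the excess-loss scaling law from Theorem~\ref{thm:scalinglawoptregexcessinformal}. Since the incumbent has infinite data, the argument in the proof sketch of Theorem~\ref{thm:tradeoffwarmup} gives that its performance loss equals $G_I$. Therefore $\Nentropt$ is the smallest $N$ such that the new company, with $N$ data points and obeying its safety constraint $\constrentr$, can achieve performance loss at most $G_I$---equivalently, the smallest $N$ for which the new company's finite-sample excess loss over its own infinite-data ridgeless estimator is at most $D = G_I - G_E$.

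The next step is to pin down the new company's optimal mixture parameter $\mixentr$. By the Pareto frontier analysis of the infinite-data ridgeless estimator (Lemma~\ref{lemma:popparetofrontier}), $\LossAlign^*(\alpha) = \alpha^2 L^*(\rho)$, so the safety constraint $\LossAlign^* \le \constrentr$ is equivalent to $\alpha \le \alpha_E := \sqrt{\constrentr/L^*(\rho)}$. Since the finite-sample performance loss (or rather its deterministic equivalent) is non-increasing in $\alpha$ on the feasible interval $[0.5, \alpha_E]$---larger $\alpha$ places more labels on the primary objective, shrinking bias along $\beta_1$---the new company's optimum saturates the safety constraint at $\alpha_E$. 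A direct computation then gives $1 - \alpha_E = \sqrt{G_E/L^*(\rho)} = \Theta\!\bigl(\sqrt{G_E/(1-\rho)}\bigr)$, using $L^*(\rho) = \Theta(1-\rho)$.

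With $\mixentr = \alpha_E$ fixed, I would apply Theorem~\ref{thm:scalinglawoptregexcessinformal} (applicable because \eqref{eq:safetythresholdnew} forces $\alpha_E \ge 0.75$) to express the excess loss as a piecewise function of $N$ with three regimes. The implicit multiplicative constants in each regime are determined by continuity at the regime boundaries $N \asymp (1-\alpha_E)^{-1/\scalingexp}(1-\rho)^{-1/\scalingexp}$ and $N \asymp (1-\alpha_E)^{-(\scalingexp'+1)/(\scalingexp-\scalingexp')}(1-\rho)^{-(\scalingexp'+1)/(\scalingexp-\scalingexp')}$; this pinning gives multiplicative factors $\asymp 1$, $((1-\alpha_E)(1-\rho))^{\scalingexp/(\scalingexp+1)}$, and $(1-\alpha_E)(1-\rho)$ in the three regimes respectively. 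Setting the excess loss equal to $D$ and solving for $N$ in each regime, then substituting $(1-\alpha_E) \asymp \sqrt{G_E/(1-\rho)}$, yields the three piecewise expressions in the theorem. The regime boundaries transform in parallel: e.g., $(1-\alpha_E)(1-\rho) \asymp \sqrt{G_E(1-\rho)}$ gives $D \asymp G_E^{1/2}(1-\rho)^{1/2}$ as the boundary between the first two regimes, and an analogous computation using the $\scalingexp, \scalingexp'$ exponents recovers the second boundary at $D \asymp (G_E(1-\rho))^{\scalingexp/(2(\scalingexp - \scalingexp'))}$.

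The main obstacle will be the bookkeeping around these multiplicative constants: Theorem~\ref{thm:scalinglawoptregexcessinformal} states only the scaling exponent $\scalingexp^*$ in each regime, so extracting the constants cleanly requires revisiting its derivation and matching the piecewise pieces at the regime boundaries. A secondary concern is the monotonicity claim that forces $\mixentr = \alpha_E$---one needs to verify that at fixed $N$ and optimally tuned $\lambda$, the deterministic-equivalent performance loss is non-increasing in $\alpha$ over $[0.5, \alpha_E]$. This is also the reason the theorem requires the strengthened condition~\eqref{eq:safetythresholdnew} with threshold $0.75$ rather than $0.5$: it guarantees the feasible $\alpha_E$ lies in the regime where Theorem~\ref{thm:scalinglawoptregexcessinformal} applies.
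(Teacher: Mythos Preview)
Your proposal is correct and follows essentially the same route as the paper: compute the incumbent's infinite-data loss $G_I$ via Lemma~\ref{lemma:ridgelessoptimalinfinitedata}, saturate the entrant's safety constraint at $\alpha^* = \sqrt{\min(\constrentr, L^*(\rho))/L^*(\rho)}$, apply the excess-loss scaling law at that $\alpha^*$, and invert to solve for $N$ such that the excess loss equals $D$. Your concern about extracting the multiplicative constants from Theorem~\ref{thm:scalinglawoptregexcessinformal} is not actually an obstacle---the paper's formal statement (Corollary~\ref{cor:scalinglawoptregexcess}) already records the full piecewise expressions with the $(1-\alpha)(1-\rho)$ prefactors, so no boundary-matching is needed; and the monotonicity in $\alpha$ follows directly from those expressions, since both the excess-loss bound and $(1-\alpha)^2 L^*(\rho)$ are nonincreasing in $\alpha$.
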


The market-entry threshold in Theorem \ref{thm:alignment} also exhibits three regimes of behavior depending on the difference $D$ in the infinite-data performance loss achievable by the incumbent and new company. In particular, the market-entry threshold takes the form $\Nentropt = \Theta(D^{-c})$ where $c$ increases from $\frac{1}{\scalingexp}$ to $\frac{\scalingexp+1}{\scalingexp}$ to $\frac{\scalingexp'+1}{\scalingexp'}$ as $D$ decreases. (The third regime only exists when $\delta > 1$.) To connect this to large-language-model marketplaces, if we take $\scalingexp = 0.34$ to be the empirically estimated scaling law exponent for language models \citep{H22}, then $c$ would range from $2.94$ to $3.94$ to potentially even larger. The fact that there are three regimes come from the scaling law derived in Theorem \ref{thm:scalinglawoptregexcessinformal}, as the following proof sketch illustrates.

\begin{proof}[Proof sketch]
The key technical tool is the scaling law for the \textit{excess loss} $\inf_{\reg \in (0, 1)} (\mathbb{E}[L_1(\hat{\beta}(\alpha, \lambda, X)) - L_1(\beta(\alpha, 0))])$ (Theorem \ref{thm:scalinglawoptregexcessinformal}),  which has three regimes of scaling behavior for different values of $N$. We apply the scaling law to analyze the performance of the new company, who faces a safety constraint and has finite data. Analyzing the performance of the incumbent---who has infinite data---is more straightforward, and the incumbent's performance loss is $G_I = D + G_E$. We compute the number of data points $\Nentropt$ needed for the new company to achieve an excess loss of $D$. The full proof is deferred to Appendix \ref{appendix:proofssubsec42}. 
\end{proof}

Theorem \ref{thm:alignment} illustrates that the new company can enter the market with finite data $\Nentropt$, as long as the safety constraint $\constrentr$ placed on the new company is strictly weaker than the constraint $\constrlead$ placed on the incumbent company (inequality (A) in \eqref{eq:safetythresholdnew}). This translates to the difference $D$ being strictly positive. The intuition is that when the new company faces a weaker safety constraint, it can train on a greater number of data points labelled with the performance objective $\beta_1$, which improves performance.

The three regimes in Theorem \ref{thm:alignment} further reveal that the market-entry threshold $\Nentropt$ scales at a faster rate as the difference $D$ between the two safety constraints decreases (Figure \ref{fig:boundB}). The intuition is  since the new company needs to achieve an excess loss of at most $D$, the new company faces a smaller multi-objective scaling exponent $\scalingexp^*$ as $D$ decreases (Theorem \ref{thm:scalinglawoptregexcessinformal}). The new company thus becomes less efficient at  using additional data to improve performance.

\section{Deriving Scaling Laws for Multi-Objective Environments}\label{sec:scalinglaws} 

We formalize and derive our multi-objective scaling laws for the loss (Theorem \ref{thm:scalinglawoptreginformal}) and excess loss (Theorem \ref{thm:scalinglawoptregexcessinformal}). Recall that the problem setting is high-dimensional ridge regression when a fraction $\alpha$ of the training data is labelled according to $\beta_1$ and the rest is labelled according to an alternate objective $\beta_2$. First, following the style of analysis of single-objective ridge regression~\citep[e.g.,][]{CLKZ21, WHS22}, we first compute a \textit{deterministic equivalent} of the loss (Section \ref{subsec:deterministic}). Then we derive the scaling law under the power scaling assumptions on the eigenvalues and alignment coefficients in Section \ref{subsec:assumptions}, both for the loss (Section \ref{subsec:scalinglaw}) and for the excess loss (Section \ref{subsec:scalingexcess}). Proofs are deferred to Appendix \ref{appendix:proofsmultiobjective}.

\subsection{Deterministic equivalent}\label{subsec:deterministic}

We show that the loss of the ridge regression estimator can be approximated as a deterministic quantity. This analysis builds on the random matrix tools in \citet{bach} (see Appendix \ref{appendix:machinery}). Note that our derivation of the deterministic equivalent does \textit{not} place the power scaling assumptions on the eigenvalues or alignment coefficients; in fact, it holds for any linear regression setup which satisfies a standard random matrix theory assumption (Assumption \ref{assumption:MP}). 

We compute the following deterministic equivalent (proof deferred to Appendix \ref{appendix:lemmasollichvariant}).\footnote{Following \citet{bach}, the asymptotic equivalence notation $u \sim v$ means that $u/v$ tends to $1$ as $N$ and $P$ go to $\infty$.}  
\begin{lemma}
\label{lemma:sollichvariant}
Suppose that $N \ge 1$, $P \ge 1$, $\DF$, $\beta_1$, and $\beta_2$ satisfy Assumption \ref{assumption:MP}. Let $\Sigma$ be the covariance matrix of $\DF$, and let $\mix \in [0,1]$ and $\reg \in (0,1)$ be general parameters. Let $\Sigma_{c} = (\Sigma + c I)$ for $c \ge 0$, let $\Matrixhom{1} = \beta_1 \beta_1^T$, let $\Matrixdiff = (\beta_1 - \beta_2)(\beta_1 - \beta_2)^T$, and let $\Matrixmixed{1} = (\beta_1 - \beta_2) \beta_1^T$. Let $\kappa = \kappa(\lambda, N, \Sigma)$ from Definition \ref{def:effectiveregularizer}. Then, it holds that 
\[L_1(\hat{\beta}(\alpha, \lambda, X)) \sim L_1^{\texttt{det}}(\beta_1, \beta_2, \DF, \reg, N, \alpha) =: \frac{T_1 + T_2 + T_3 + T_4 + T_5}{\DegreesFreedomStandard }, \]
where:
\[ T_1 := \kappa^2 \cdot \Tr(\Sigma \Sigma_{\kappa}^{-2} \Matrixhom{1}) , \;\;\; T_2 := (1-\alpha)^2 \left( \Tr\left(\Sigma_{\kappa}^{-2} \Sigma^3 \Matrixdiff \right)\right)  \;\;\; \]
\[T_3 := 2 (1-\alpha) \kappa \cdot \Tr\left(\Sigma_{\kappa}^{-2}  \Sigma^2 \Matrixmixed{1} \right) , \;\;\; T_4 := -2 (1-\alpha)  \kappa \frac{1}{N} \Tr(\Sigma^2 \Sigma_{\kappa} ^{-2}) \cdot  \Tr\left(\Sigma_{\kappa}^{-1} \Sigma \Matrixmixed{1} \right) , \]
\[ T_5 := (1-\alpha) \frac{1}{N} \Tr(\Sigma^2 \Sigma_{\kappa} ^{-2}) \cdot  \left(\Tr\left(\Sigma \Matrixdiff  \right)- 2 (1-\alpha) \Tr\left( \Sigma_{\kappa}^{-1} \Sigma^2 \Matrixdiff  \right) \right),   \;\;\; \DegreesFreedomStandard := 1 - \frac{1}{N} \Tr(\Sigma^2 \Sigma_{\kappa} ^{-2}).\]
\end{lemma}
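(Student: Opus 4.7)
The plan is to reduce $L_1(\hat\beta)$ to three quadratic forms in the resolvent $A := (\hat{\Sigma} + \lambda I)^{-1}$ and the ``mismatch'' sample covariance $\hat{\Sigma}_{\text{entr}} := (1/N)\sum_{i \in I_{\text{entr}}} X_i X_i^\top$ (where $I_{\text{entr}} \subseteq [N]$ indexes the samples labelled by $\beta_2$), and then apply Mar\v{c}enko--Pastur-style deterministic-equivalent machinery term by term.

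I will start by deriving a clean closed form for the residual. Writing $\hat{\Sigma}_{\text{lead}} := \hat{\Sigma} - \hat{\Sigma}_{\text{entr}}$, the label structure gives $X^\top Y = N(\hat{\Sigma}_{\text{lead}}\beta_1 + \hat{\Sigma}_{\text{entr}}\beta_2)$, so the estimator is $\hat{\beta}(\alpha,\lambda,X) = A[\hat{\Sigma}\beta_2 + \hat{\Sigma}_{\text{lead}}(\beta_1 - \beta_2)]$. Substituting $A\hat{\Sigma} = I - \lambda A$ and $\hat{\Sigma}_{\text{lead}} = \hat{\Sigma} - \hat{\Sigma}_{\text{entr}}$ telescopes this to the pivotal identity
\[
\beta_1 - \hat{\beta}(\alpha,\lambda,X) = \lambda A \beta_1 + A \hat{\Sigma}_{\text{entr}}(\beta_1 - \beta_2),
\]
which splits the residual into a standard ``single-objective'' bias part and a multi-objective correction. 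Squaring against $\Sigma$ then yields
\[
L_1(\hat\beta) = \lambda^2 \Tr(\Sigma A \Matrixhom{1} A) + 2\lambda \Tr(\Sigma A \hat{\Sigma}_{\text{entr}} A \Matrixmixed{1}) + \Tr(\Sigma A \hat{\Sigma}_{\text{entr}} A \hat{\Sigma}_{\text{entr}} \Matrixdiff).
\]

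Next I will apply the resolvent-equivalent machinery of Appendix~\ref{appendix:machinery} to each of these three terms. The two workhorse identities, valid under Assumption~\ref{assumption:MP}, are $\lambda \Tr(M A) \sim \kappa \Tr(M \Sigma_\kappa^{-1})$ for a fixed matrix $M$, and $\lambda^2 \Tr(M A M' A) \sim \kappa^2 \Tr(M \Sigma_\kappa^{-1} M' \Sigma_\kappa^{-1}) / Q$ for fixed $M, M'$, where $\Sigma_\kappa = \Sigma + \kappa I$ and $Q$ arises from the fixed-point equation for $\kappa$. The bias term is standard and produces $T_1/Q$. For the cross and variance terms, I will exploit that by exchangeability of the i.i.d.\ rows of $X$, the matrices $\hat{\Sigma}_{\text{lead}}$ and $\hat{\Sigma}_{\text{entr}}$ behave as independent sample covariances built from $\alpha N$ and $(1-\alpha) N$ rows respectively. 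Replacing $\hat{\Sigma}_{\text{entr}} \to (1-\alpha)\Sigma$ in the cross and variance traces (the ``mean-field'' contribution) produces $T_3$ and $T_2$, each picking up the $1/Q$ factor from the double-$A$ sandwich. The fluctuation $\hat{\Sigma}_{\text{entr}} - (1-\alpha)\Sigma$ produces the subleading correction pieces $T_4$ and $T_5$, whose signature is precisely the degrees-of-freedom factor $(1/N)\Tr(\Sigma^2 \Sigma_\kappa^{-2})$ arising from the variance of normalized quadratic forms in the data.

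The hard part will be rigorously justifying the deterministic equivalent for the variance term $\Tr(\Sigma A \hat{\Sigma}_{\text{entr}} A \hat{\Sigma}_{\text{entr}}\Matrixdiff)$, in which $A$ and $\hat{\Sigma}_{\text{entr}}$ are strongly coupled because $\hat{\Sigma}_{\text{entr}}$ sits inside $A$. A naive mean-field replacement would drop exactly the fluctuation contributions that produce $T_4$ and $T_5$, so I must proceed more carefully. I plan to handle this via a leave-one-out / Sherman--Morrison expansion $A x_i = A^{(-i)} x_i /(1 + (1/N) x_i^\top A^{(-i)} x_i)$, paired with the concentration of $(1/N) x_i^\top A^{(-i)} x_i$ around $\kappa/\lambda - 1$ and the independence of $x_i$ from $A^{(-i)}$; resumming these corrections over $i \in I_{\text{entr}}$ is what generates the correct $Q$-denominator and the coefficients $-2(1-\alpha)$ and $(1-\alpha)$ appearing in $T_4$ and $T_5$. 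The remaining steps will be algebraic rearrangements to collect the five contributions into the form of $L_1^{\texttt{det}}$ stated in the lemma.
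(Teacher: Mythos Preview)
Your residual identity $\beta_1-\hat\beta=\lambda A\beta_1+A\hat\Sigma_{\text{entr}}(\beta_1-\beta_2)$ and the resulting three-term splitting of $L_1(\hat\beta)$ are exactly the paper's Claim~\ref{claim:finitesample} (your $\hat\Sigma_{\text{entr}}$ is the paper's $(1-\alpha)\hat\Sigma_2$), and your treatment of the bias term matches Lemma~\ref{lemma:mainterm1}. Where you diverge is in the cross and variance terms: the paper does \emph{not} use a row-level leave-one-out/Sherman--Morrison expansion. Instead it rewrites the resolvent as $(\hat\Sigma+\lambda I)^{-1}=(1-\alpha)^{-1}(\hat\Sigma_2+Z_1)^{-1}$ with $Z_1=\tfrac{\alpha}{1-\alpha}\hat\Sigma_1+\tfrac{\lambda}{1-\alpha}I$ (Claim~\ref{claim:z1rearrange}), conditions on $\hat\Sigma_1$ so that $Z_1$ is frozen and $\hat\Sigma_2$ is independent of it, and then applies the \citet{bach} deterministic-equivalent lemmas \emph{as a black box} to $\hat\Sigma_2$ with the fixed ``regularizer'' $Z_1$ (Lemmas~\ref{lemma:doublequadraticformMP} and~\ref{lemma:singlequadraticformMP}). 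This introduces a random effective regularizer $\kappa_1=\kappa(1,N(1-\alpha),Z_1^{-1/2}\Sigma Z_1^{-1/2})$, and a short suite of sublemmas (Lemmas~\ref{lemma:equivalencelinear}--\ref{lemma:kappaequal}) shows $\lambda\kappa_1\sim\kappa$ and collapses all traces involving $(\Sigma+\kappa_1 Z_1)^{-1}$ to ones involving $\Sigma_\kappa^{-1}$. The paper's ``unwrap the randomness in layers'' strategy thus bypasses any row-level bookkeeping and instead makes two nested calls to the existing equivalents. Your route should also work, but the bookkeeping for the double-$\hat\Sigma_{\text{entr}}$ term is heavier than you indicate: the diagonal and off-diagonal summands behave differently, and the off-diagonal piece itself picks up a correction from shared dependence on $A^{(-i,-j)}$, which is where the specific coefficients in $T_5$ come from. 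One minor inaccuracy worth flagging: your stated workhorse $\lambda^2\Tr(MAM'A)\sim\kappa^2\Tr(M\Sigma_\kappa^{-1}M'\Sigma_\kappa^{-1})/Q$ is only the special case where one of $M,M'$ equals $\Sigma$; the general form in Lemma~\ref{lemma:bach} carries an additive correction proportional to $\Tr(M\Sigma\Sigma_\kappa^{-2})$, and that correction is precisely what generates $T_4$ in the cross term.
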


Lemma \ref{lemma:sollichvariant} shows that the loss can be approximated by a deterministic quantity $L_1^{\texttt{det}}(\beta_1, \beta_2, \DF, \reg, N, \alpha)$ which is sum of five terms, normalized  by the standard degrees of freedom correction $\DegreesFreedomStandard^{-1}$ \citep{bach}. The sum $T_1 + T_2 +T_3$ is the loss of infinite-data ridge regression with regularizer $\kappa$. Terms $T_4$ and $T_5$ capture additional error terms. 

In more detail, term $T_1 / \DegreesFreedomStandard$ captures the standard single-objective environment error for $N$ data points \citep{bach}: i.e., the population error of the single-objective linear regression problem with regularizer $\lambda$ where all of the $N$ training data points are labelled with $\beta_i$. Term $T_2$ is similar to the infinite-data ridgeless regression error but is slightly smaller due to regularization. Term $T_3$
is a cross term which is upper bounded by the geometric mean of term $T_1$ and term $T_2$. Term $T_4$ is another cross term which is subsumed by the other terms. Term $T_5$ captures an overfitting error which increases with the regularizer $\kappa$ and decreases with the amount of data $N$. 

\paragraph{From deterministic equivalents to scaling laws.} In the following two subsections, using the deterministic equivalent from Lemma \ref{lemma:sollichvariant}, we derive \textit{scaling laws}. We make use of the the power scaling assumptions on the covariance and alignment coefficients described in Section \ref{subsec:scalinglaw}, under which the deterministic equivalent takes a cleaner form (Lemma \ref{lemma:sollichmoreprecise} in Appendix \ref{appendix:proofsmultiobjective}). 
We note that strictly speaking, deriving scaling laws requires controlling the error of the deterministic equivalent relative to the actual loss; for simplicity, we do not control errors and instead directly analyze the deterministic equivalent.

\subsection{Scaling law for the loss}\label{subsec:scalinglaw}

We derive scaling laws for the loss $L_1^{\texttt{det}} := L_1^{\texttt{det}}(\beta_1, \beta_2, \DF, \reg, N, \alpha)$. We first prove the following scaling law for a general regularizer $\reg$ (proof deferred to Appendix \ref{appendix:proofscaling}). 
\begin{theorem}
\label{thm:scalinglaw}
Suppose that the power-law scaling assumption holds for the eigenvalues and alignment coefficients with scaling exponents $\gamma, \delta > 0$ and correlation coefficient $\rho \in [0, 1)$, suppose that $P = \infty$. Assume that $\alpha \ge 0.5$ and $\lambda \in (0,1)$. 
Let $L_1^{\texttt{det}} := L_1^{\texttt{det}}(\beta_1, \beta_2, \DF, \reg, N, \alpha)$ be the deterministic equivalent from Lemma \ref{lemma:sollichvariant}. Let $\scalingexp := \min(2(1+\gamma), \delta+\gamma)$. 
Then, the expected loss satisfies: 
\[\mathbb{E}_{\DC}[L_1^{\texttt{det}}] =\Theta\left( \underbrace{\max(\lambda^{\frac{\scalingexp}{1+\gamma}}, N^{-\scalingexp})}_{\text{finite data error}} + \underbrace{(1-\alpha)^2 \cdot (1 - \rho)}_{\text{mixture error}} +   \underbrace{(1-\alpha) \left(
 \frac{\min(\lambda^{-\frac{1}{1+\gamma}}, N)}{N}
\right) 
(1-\rho)
}_{\text{overfitting error}}\right).\]
\end{theorem}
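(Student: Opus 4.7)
\textbf{Proof plan for Theorem \ref{thm:scalinglaw}.}

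The plan is to start from the deterministic equivalent $L_1^{\texttt{det}} = (T_1 + T_2 + T_3 + T_4 + T_5)/Q$ supplied by Lemma \ref{lemma:sollichvariant}, take the $\DC$-expectation term-by-term using the power-law assumptions on the alignment coefficients, and then reduce each trace to an eigenvalue sum that can be estimated via the standard integral approximation for $\lambda_i = i^{-(1+\gamma)}$. Before touching the $T_j$'s, I would establish two background estimates: (i) the effective regularizer $\kappa = \kappa(\lambda, N, \Sigma)$ behaves as $\kappa \asymp \max(\lambda,\, N^{-(1+\gamma)})$, which follows from inverting the fixed-point equation defining $\kappa$ against the power-law spectrum; and (ii) the degrees-of-freedom correction satisfies $Q = 1 - \frac{1}{N}\Tr(\Sigma^2 \Sigma_\kappa^{-2}) = \Theta(1)$ in the regime where $\kappa \gtrsim N^{-(1+\gamma)}$, so that dividing by $Q$ changes nothing up to constants. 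These two facts are essentially the same ones used in the single-objective analyses of \citet{CLKZ21, bach, WHS22}, so I would just cite/adapt that machinery.

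Next, I would take $\mathbb{E}_{\DC}$ of each trace. Under the power-law assumptions of Section \ref{subsec:assumptions}, $\mathbb{E}[(\beta_1)_i^2] = i^{-\delta}$, $\mathbb{E}[((\beta_1)_i-(\beta_2)_i)^2] = 2(1-\rho)i^{-\delta}$, and $\mathbb{E}[((\beta_1)_i-(\beta_2)_i)(\beta_1)_i] = (1-\rho)i^{-\delta}$. This converts $T_1$, $T_2$, $T_3$ into eigenvalue sums of the schematic form $\sum_i \frac{\lambda_i^k}{(\lambda_i + \kappa)^2}\, i^{-\delta}$, whose scale is controlled by splitting at the critical index $i^* \asymp \kappa^{-1/(1+\gamma)}$ and bounding the head and tail separately. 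The outputs are the standard ones: $\mathbb{E}[T_1] = \Theta(\kappa^{\nu/(1+\gamma)})$ with $\nu = \min(2(1+\gamma), \delta+\gamma)$, which after combining with the bias-variance trade-off $\kappa \asymp \max(\lambda, N^{-(1+\gamma)})$ yields the finite-data error $\max(\lambda^{\nu/(1+\gamma)}, N^{-\nu})$; $\mathbb{E}[T_2] = \Theta((1-\alpha)^2(1-\rho))$, coming from the sum $\sum_i \lambda_i \cdot i^{-\delta}$ which converges to a constant under $\delta + \gamma > 1$ (and otherwise is dominated by the other error terms after a small additional argument); and $\mathbb{E}[T_3]$ is a cross term that AM-GM bounds by $\sqrt{T_1 \cdot T_2}$, placing it inside the envelope of the other terms.

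The technical heart, and the step I expect to be the main obstacle, is handling $T_4$ and $T_5$ and confirming that their net contribution is exactly the ``overfitting error'' $(1-\alpha)(1-\rho)\cdot \min(\lambda^{-1/(1+\gamma)}, N)/N$. The key observation is that the prefactor $\frac{1}{N}\Tr(\Sigma^2 \Sigma_\kappa^{-2})$ appearing in both $T_4$ and $T_5$ scales as $\frac{\min(\lambda^{-1/(1+\gamma)}, N)}{N}$ under the power-law spectrum. After applying this and taking expectations, $T_4$ becomes a cross term that, like $T_3$, is controlled by AM-GM against $T_1$ and $T_5$. For $T_5$ the two factors $\Tr(\Sigma \Matrixdiff)$ and $(1-\alpha)\Tr(\Sigma_\kappa^{-1} \Sigma^2 \Matrixdiff)$ in the parenthesis must be compared: the first is $\Theta(1-\rho)$ and the second is a smaller-order correction under $\alpha \ge 0.5$, so the dominant piece of $T_5/Q$ is exactly $(1-\alpha)(1-\rho)\cdot\min(\lambda^{-1/(1+\gamma)}, N)/N$. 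Care is needed here because one must check that the two pieces of $T_5$ do not cancel to leading order and that the subtracted term does not flip the sign of the dominant contribution; this is where the condition $\alpha \ge 0.5$ gets used.

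Finally, I would assemble the three envelopes into the claimed $\Theta$ bound. The upper bound is immediate from the term-by-term estimates above. For the matching lower bound, I would observe that $T_1$, $T_2$, and $T_5$ are individually nonnegative up to lower-order cross-term corrections (which have already been absorbed), so each of the three named error terms is realized by one of the $T_j$'s, giving $\mathbb{E}_{\DC}[L_1^{\texttt{det}}] = \Omega$ of their sum. Combined with $Q = \Theta(1)$, this completes the $\Theta$ statement.
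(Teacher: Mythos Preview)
Your overall plan matches the paper's: take $\mathbb{E}_{\DC}$ of the deterministic equivalent, reduce each trace to a power-law sum, split at the critical index $\kappa^{-1/(1+\gamma)}$, and use $\kappa \asymp \max(\lambda, N^{-(1+\gamma)})$ together with $Q = \Theta(1)$. Your treatment of $T_1$, $T_2$, and $T_3$ is essentially what the paper does.

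There is, however, a real gap in your handling of $T_5$ (and, relatedly, the lower bound). You assert that in $T_5$ the subtracted piece $2(1-\alpha)\Tr(\Sigma_\kappa^{-1}\Sigma^2\Matrixdiff)$ is a ``smaller-order correction under $\alpha \ge 0.5$,'' but this is false near $\alpha = 0.5$. Writing $A = \Tr(\Sigma\Matrixdiff)$ and $B = \Tr(\Sigma_\kappa^{-1}\Sigma^2\Matrixdiff)$, one has $B \le A$ and both are $\Theta(1-\rho)$, so $A - 2(1-\alpha)B = (2\alpha - 1)B + (A-B)$; at $\alpha = 0.5$ the first summand vanishes and $A - B = \kappa\,\Tr(\Sigma_\kappa^{-1}\Sigma\Matrixdiff) = o(1-\rho)$. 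Hence $T_5$ does \emph{not} realize the overfitting error when $\alpha$ is close to $0.5$, and your lower-bound claim that ``each of the three named error terms is realized by one of the $T_j$'s'' fails there. The paper fixes this in two moves you do not have: first, an exact algebraic regrouping (Lemma~\ref{lemma:sollichmoreprecise}) that rewrites $Q\cdot\mathbb{E}_{\DC}[L_1^{\texttt{det}}]$ as $(1-\alpha)^2 L^*(\rho)$ plus three nonnegative terms, the last two carrying an explicit factor $(1-2(1-\alpha))$; second, a case split on $\alpha$. For $\alpha \ge 0.75$ the factor $(1-2(1-\alpha))$ is $\Theta(1)$ and the overfitting term stands on its own. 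For $\alpha \in [0.5,0.75]$ one has $(1-\alpha) = \Theta(1)$ and $\kappa^{-1/(1+\gamma)}/N \le 1$, so the overfitting error is already $O((1-\alpha)^2(1-\rho))$ and is absorbed by the mixture error; thus the claimed $\Theta$ bound still holds even though the overfitting-shaped term in the decomposition may vanish. Without this regrouping and case split, your direct $T_j$-by-$T_j$ argument also has to contend with the negative $T_4$, which you propose to control by AM--GM against $T_1$ and $T_5$; but $T_5$ can be small in exactly the regime where you would need it.
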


Theorem \ref{thm:scalinglaw} illustrates that the loss is the sum of a \textit{finite data error}, an \textit{overfitting error}, and a \textit{mixture error}. The finite data error for $L_1^{\texttt{det}}$ matches the loss in the single-objective environment for $N$ data points labelled with objective $\beta_1$.  The mixture error equals the loss of the infinite-data ridgeless regression predictor $\beta(\alpha, 0)$. The overfitting error for $L_1^{\texttt{det}}$ equals the error incurred when the regularizer $\reg$ is too small. This term is always at most $(1-\alpha)^{-1}$ times larger than the mixture error, and it is smaller than the mixture error when $\reg$ is sufficiently large relative to $N$.

Due to the overfitting error, the optimal loss is \textit{not} necessarily achieved by taking $\reg \rightarrow 0$ for multi-objective linear regression. In fact, if the regularizer decays too quickly as a function of $N$ (i.e., if $\reg = O(N^{-1-\gamma})$), then the error would converge to $(1-\alpha)(1-\rho)$, which is a factor of $(1-\alpha)^{-1}$ higher than the error of the infinite-data ridgeless predictor $\beta(\alpha, 0)$. 
The fact that $\reg \rightarrow 0$ is suboptimal reveals a sharp disconnect between the multi-objective setting and the single-objective setting where no explicit regularization is necessary to achieve the optimal loss~\citep[see, e.g.,][]{CLKZ21, WHS22}.\footnote{Tempered overfitting \citep{MSAPBN22} can similarly occur in single-objective settings with \textit{noisy observations}. In this sense, labelling some of the data with the alternate objective $\beta_2$ behaves qualitatively similarly to noisy observations. }

In the next result, we compute the optimal regularizer and derive a scaling law under optimal regularization as a corollary of Theorem \ref{thm:scalinglaw}.
\begin{corollary}[Formal version of Theorem \ref{thm:scalinglawoptreginformal}]
\label{cor:scalinglawoptreg}
Consider the setup of Theorem \ref{thm:scalinglaw}. 
Then, the loss under optimal regularization can be expressed as:
\[
\inf_{\reg \in (0,1)} \mathbb{E}_{\DC}[L_1^{\texttt{det}}]  = \begin{cases}
 \Theta\left(N^{-\scalingexp}\right) &\text{ if } N \le (1-\alpha)^{-\frac{1}{\scalingexp}}(1-\rho)^{-\frac{1}{\scalingexp}} \\
 \Theta\left(\left(\frac{N}{(1-\alpha)(1-\rho)}\right)^{-\frac{\scalingexp}{\scalingexp + 1}}\right) &\text{ if } (1-\alpha)^{-\frac{1}{\scalingexp}}(1-\rho)^{-\frac{1}{\scalingexp}} 
 \le N \le  (1-\alpha)^{-\frac{2+\scalingexp}{\scalingexp}} (1-\rho)^{-\frac{1}{\scalingexp}}
\\
\Theta((1-\alpha)^2(1-\rho)) &\text{ if } N \ge (1-\alpha)^{-\frac{2+\scalingexp}{\scalingexp}} (1-\rho)^{-\frac{1}{\scalingexp}},
\end{cases}
\]
where $\scalingexp := \min(2(1+\gamma), \delta+\gamma)$. 
\end{corollary}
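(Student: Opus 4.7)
The plan is to minimize the three-term decomposition from Theorem \ref{thm:scalinglaw} over $\reg \in (0,1)$ and then identify which of the three values stated in the corollary is realized as a function of $N$. Writing the expression as $A(\reg) + B + C(\reg)$ with the \emph{finite data error} $A(\reg) := \max(\reg^{\scalingexp/(1+\gamma)}, N^{-\scalingexp})$, the $\reg$-independent \emph{mixture error} $B := (1-\alpha)^2(1-\rho)$, and the \emph{overfitting error} $C(\reg) := (1-\alpha)(1-\rho)\cdot \min(\reg^{-1/(1+\gamma)}, N)/N$, I would first observe that because $B$ does not depend on $\reg$, the optimization reduces to minimizing $A(\reg)+C(\reg)$ and comparing the result against $B$.

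Next I would split the interval $(0,1)$ at the threshold $\reg = N^{-(1+\gamma)}$, where the inner $\max$ and $\min$ switch branches. On the lower piece $\reg \le N^{-(1+\gamma)}$, both $A$ and $C$ saturate at $N^{-\scalingexp}$ and $(1-\alpha)(1-\rho)$ respectively, so $A+C$ is constant in $\reg$. On the upper piece $\reg \in [N^{-(1+\gamma)},1)$, I have $A(\reg) = \reg^{\scalingexp/(1+\gamma)}$ (strictly increasing) and $C(\reg) = (1-\alpha)(1-\rho)\reg^{-1/(1+\gamma)}/N$ (strictly decreasing), so the minimum is attained at the unique balance point, which a short calculation identifies as $\reg_* = \Theta\bigl(((1-\alpha)(1-\rho)/N)^{(1+\gamma)/(\scalingexp+1)}\bigr)$, at which both terms equal $\Theta\bigl(((1-\alpha)(1-\rho)/N)^{\scalingexp/(\scalingexp+1)}\bigr)$.

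Then I would perform a case analysis on $N$ based on two thresholds. The first threshold comes from requiring $\reg_* \ge N^{-(1+\gamma)}$, which simplifies to $N \ge ((1-\alpha)(1-\rho))^{-1/\scalingexp}$; below this threshold the balance point falls outside the valid range and the minimum of $A + C$ is $\Theta(N^{-\scalingexp})$ (dominating $B$ in this regime), giving the first case. Above this threshold the minimum of $A + C$ equals $\Theta\bigl(((1-\alpha)(1-\rho)/N)^{\scalingexp/(\scalingexp+1)}\bigr)$, so the overall minimum becomes $\Theta(B)$ exactly when this quantity drops below $(1-\alpha)^2(1-\rho)$. Solving this inequality yields the second threshold $N \asymp (1-\alpha)^{-(2+\scalingexp)/\scalingexp}(1-\rho)^{-1/\scalingexp}$, producing the second (balanced) case in between and the third (plateau) case beyond it.

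The main obstacle will be bookkeeping: tracking that the two thresholds on $N$ that emerge from the analysis match the boundaries quoted in the corollary, and verifying the side conditions, e.g.\ that $\reg_* \in (0,1)$ (which reduces to the automatic condition $N \gtrsim (1-\alpha)(1-\rho)$ in the regimes of interest) and that the subregime-1 sum $N^{-\scalingexp} + (1-\alpha)(1-\rho)$ is genuinely dominated by $N^{-\scalingexp}$ when $N$ is below the first threshold. Beyond this bookkeeping, the argument is elementary power-function balancing, so I do not expect further difficulty.
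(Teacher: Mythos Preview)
Your proposal is correct and follows essentially the same approach as the paper: both arguments start from the three-term decomposition of Theorem~\ref{thm:scalinglaw}, balance the monotone-in-$\reg$ finite-data and overfitting terms to locate the optimal regularizer, and then determine which regime applies by comparing against the $\reg$-independent mixture error. The only organizational difference is that you first compute $\min_\reg(A+C)$ and then compare to $B$, whereas the paper splits into the three $N$-regimes up front and verifies dominance within each; the resulting thresholds and optimal $\reg$ values coincide.
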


The scaling law exponent $\scalingexp^*$ ranges from $\scalingexp$, to $\scalingexp/(\scalingexp+1)$, to $0$ (Figure \ref{fig:loss}). To better understand each regime, we provide intuition for when error term from Theorem \ref{thm:scalinglaw} dominates, the form of the optimal regularizer, and the behavior of the loss. 
\begin{itemize}[leftmargin=*]
    \item \textit{Regime 1: $N \le (1-\alpha)^{-\frac{1}{\scalingexp}}(1-\rho)^{-\frac{1}{\scalingexp}}$.} Since $N$ is small, the finite data error dominates regardless of $\reg$. As a result, like in a single-objective environment, taking $\reg = O(N^{-1-\gamma})$ recovers the optimal loss up to constants. Note that the loss thus behaves as if all $N$ data points were labelled according to $\beta_i$: the learner benefits from \textit{all} of the data, not just the data is labelled according to $\beta_i$. 
    \item \textit{Regime 2: $(1-\alpha)^{-\frac{1}{\scalingexp}}(1-\rho)^{-\frac{1}{\scalingexp}} 
 \le N \le  (1-\alpha)^{-\frac{2+\scalingexp}{\scalingexp}} (1-\rho)^{-\frac{1}{\scalingexp}}$.} In this regime, the finite error term and overfitting error dominate.  Taking $\reg = \Theta\left(\left(\frac{(1-\alpha) (1-\rho)}{N} \right)^{\frac{1+\gamma}{\scalingexp + 1}}\right)$, which equalizes the two error terms, recovers the optimal loss up to constants. The loss in this regime improves with $N$, but at a slower rate than in a single-objective environment. 
    \item \textit{Regime 3: $N \ge (1-\alpha)^{-\frac{2+\scalingexp}{\scalingexp}} (1-\rho)^{-\frac{1}{\scalingexp}}$.}
   Since $N$ is large, the mixture and the overfitting error terms dominate. Taking $\reg = \Theta((N(1-\alpha))^{-1-\gamma})$, which equalizes the two error terms, recovers the optimal loss up to constants. The loss behaves (up to the constants) as if there were \textit{infinitely many data points} from the mixture distribution with weight $\alpha$. This is the minimal possible loss and there is thus no additional benefit for data beyond improving constants.
\end{itemize}
The full proof of Corollary \ref{cor:scalinglawoptreg} is deferred to Appendix \ref{appendix:proofcor}. 

\subsection{Scaling law for the excess loss}\label{subsec:scalingexcess}

Now, we turn to scaling laws for the excess loss $\mathbb{E}_{\DC}[L_1^{\texttt{det}}(\beta_1, \beta_2, \DF, \reg, N, \alpha) - L_1(\beta(\alpha, 0))] $, , which is normalized by the loss of the infinite-data ridgeless predictor $\beta(\alpha, 0)$. We first prove the following scaling law for a general regularizer $\reg$, assuming that $\alpha \ge 0.75$ (proof deferred to Appendix \ref{appendix:proofscalingexcess}).\footnote{The assumption that $\alpha \ge 0.75$ simplifies the closed-form expression for the deterministic equivalent of the excess loss in Lemma \ref{lemma:sollichmoreprecise}. We defer a broader characterization of scaling laws for the excess loss to future work.} 
\begin{theorem}
\label{thm:scalinglawexcess}
Suppose that the power-law scaling assumption holds for the eigenvalues and alignment coefficients with scaling exponents $\gamma, \delta > 0$ and correlation coefficient $\rho \in [0, 1)$, suppose that $P = \infty$. Assume that $\alpha \ge 0.75$ and $\lambda \in (0,1)$. 
Let $L_1^{\texttt{det}} := L_1^{\texttt{det}}(\beta_1, \beta_2, \DF, \reg, N, \alpha)$ be the deterministic equivalent from Lemma \ref{lemma:sollichvariant}. Let $\scalingexp := \min(2(1+\gamma), \delta+\gamma)$ and let $\scalingexp' = \min(1+\gamma, \delta+\gamma)$ 
Then, the expected loss satisfies: 
\begin{align*}
 &\mathbb{E}_{\DC}[L_1^{\texttt{det}} - L_1(\beta(\alpha, 0))]  \\
 &=  \Theta\left( \underbrace{\max(\lambda^{\frac{\scalingexp}{1+\gamma}}, N^{-\scalingexp})}_{\text{finite data error}} + \underbrace{(1-\rho) (1-\alpha) \max(\lambda^{\frac{\scalingexp'}{1+\gamma}}, N^{-\scalingexp'})}_{\text{mixture finite data error}} +   \underbrace{(1-\alpha) \left(
 \frac{\min(\lambda^{-\frac{1}{1+\gamma}}, N)}{N}
\right) 
(1-\rho)
}_{\text{overfitting error}}\right).
\end{align*}
\end{theorem}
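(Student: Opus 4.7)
My plan is to recycle the proof of Theorem~\ref{thm:scalinglaw} as much as possible, subtracting $L_1(\beta(\alpha,0))$ from the deterministic equivalent of Lemma~\ref{lemma:sollichvariant} and tracking only the pieces that do not cancel. The ``finite data error'' $\max(\lambda^{\scalingexp/(1+\gamma)}, N^{-\scalingexp})$ should come from $T_1/\DegreesFreedomStandard$ exactly as in Theorem~\ref{thm:scalinglaw}, the ``overfitting error'' from $T_5/\DegreesFreedomStandard$ (together with absorbing $T_4$ and the $(1-\DegreesFreedomStandard)/\DegreesFreedomStandard$ correction applied to $L_1(\beta(\alpha,0))$), and the only genuinely new piece is the ``mixture finite data error'', which I would extract from $(T_2+T_3)/\DegreesFreedomStandard - L_1(\beta(\alpha,0))$.

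\paragraph{Key computation.} Sending $\kappa\to 0$ in the deterministic equivalent gives $L_1(\beta(\alpha,0)) = (1-\alpha)^2\,\mathbb{E}_{\DC}[\Tr(\Sigma \Matrixdiff)] = \Theta((1-\alpha)^2(1-\rho))$, using that under Example~\ref{example:generation} the matrix $\mathbb{E}_{\DC}[\Matrixdiff]$ is diagonal in the eigenbasis of $\Sigma$ with entries $2(1-\rho)\,i^{-\delta}$. Using this eigenbasis expansion, I would then compute
\begin{align*}
\mathbb{E}_{\DC}[T_2 - L_1(\beta(\alpha,0))] &= -2(1-\alpha)^2(1-\rho)\sum_i \lambda_i\,\frac{2\kappa\lambda_i + \kappa^2}{(\lambda_i+\kappa)^2}\,i^{-\delta}, \\
\mathbb{E}_{\DC}[T_3] &= 2(1-\alpha)(1-\rho)\,\kappa \sum_i \frac{\lambda_i^2}{(\lambda_i+\kappa)^2}\, i^{-\delta}.
\end{align*}
For $\alpha \ge 0.75$, the coefficient $1-2(1-\alpha)\ge 1/2$ ensures that the $T_3$-contribution dominates the leading $T_2$-correction, so their sum has order $\Theta\bigl((1-\alpha)(1-\rho)\,\kappa\sum_i \lambda_i^2/(\lambda_i+\kappa)^2\, i^{-\delta}\bigr)$ without cancellation.

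\paragraph{Evaluating the sum.} Substituting $\lambda_i = i^{-(1+\gamma)}$ and splitting at the threshold $i_0 = \kappa^{-1/(1+\gamma)}$, I would show that $\sum_i i^{-\delta}/(1+\kappa\, i^{1+\gamma})^2$ equals $\Theta(i_0^{1-\delta})$ when $\delta < 1$ (dominated by the head) and $\Theta(1)$ when $\delta > 1$, so that
\[\kappa \sum_i \lambda_i^2/(\lambda_i+\kappa)^2\, i^{-\delta} = \Theta(\kappa^{\scalingexp'/(1+\gamma)}), \qquad \scalingexp' := \min(1+\gamma, \delta+\gamma).\]
Combined with the Marčenko--Pastur identification $\kappa \asymp \max(\lambda, N^{-(1+\gamma)})$, this gives exactly the mixture finite data error $(1-\rho)(1-\alpha)\max(\lambda^{\scalingexp'/(1+\gamma)}, N^{-\scalingexp'})$. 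I would then verify that $T_4/\DegreesFreedomStandard$ and $L_1(\beta(\alpha,0))(1-\DegreesFreedomStandard)/\DegreesFreedomStandard$ are both subsumed by the overfitting error $(1-\alpha)(1-\rho)\min(\lambda^{-1/(1+\gamma)},N)/N$: the former because of the extra factor of $\kappa$ in $T_4$, the latter because its $(1-\alpha)^2$ prefactor is smaller than the overfitting error's $(1-\alpha)$ prefactor by a factor of $1-\alpha$.

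\paragraph{Main obstacle.} The delicate step is the lack of cancellation between $T_2 - L_1(\beta(\alpha,0))$ and $T_3$: without the hypothesis $\alpha \ge 0.75$, the two leading contributions could cancel, destroying the $(1-\alpha)(1-\rho)$ prefactor in the mixture finite data error; this is precisely why the theorem restricts to $\alpha \ge 0.75$. A secondary subtlety is that the new exponent $\scalingexp' = \min(1+\gamma, \delta+\gamma)$, distinct from $\scalingexp = \min(2(1+\gamma), \delta+\gamma)$, arises from the transition at $\delta = 1$ in the integrability of $u^{-\delta}$ near $u=0$ in the continuum approximation of the sum. This transition does not occur in the $T_1$ analysis, because there the integrand carries a higher power of $\lambda_i$ that suppresses the small-$i$ contributions, which is why the single-objective exponent $\scalingexp$ governs the finite data error but the weaker exponent $\scalingexp'$ governs the mixture finite data error.
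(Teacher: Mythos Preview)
Your proposal is correct and follows essentially the same route as the paper's proof. The paper starts from the regrouped form of the deterministic equivalent (Lemma~\ref{lemma:sollichmoreprecise}) and subtracts $(1-\alpha)^2 L^*(\rho)$, then applies the same power-law sum estimates (Lemma~\ref{lemma:splitintegralbounds}) and the identification $\kappa \asymp \max(\lambda, N^{-(1+\gamma)})$; your decomposition via $T_1,\dots,T_5$ reproduces this regrouping inline, and you correctly pinpoint that the $(1-2(1-\alpha))$ coefficient---arising when combining the $T_2$-correction with $T_3$---is what forces the hypothesis $\alpha\ge 0.75$ (the same coefficient also appears in $T_5$, where it is similarly needed to keep the overfitting term at order $\Theta(1)$).
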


Theorem \ref{thm:scalinglawexcess} illustrates that the loss is the sum of a \textit{finite data error}, an \textit{overfitting error}, and a \textit{mixture finite data error}. In comparison with Theorem \ref{thm:scalinglaw}, the difference is that the mixture error is replaced by the mixture finite data error. Interestingly, the mixture finite data error exhibits a different asymptotic dependence with respect to $\lambda$ and $N$ than the finite data error: the asymptotic rate of decay scales with $\scalingexp'$ rather than $\scalingexp$. In fact, the rate is \textit{slower} for the mixture finite data error than the finite data error as long as $\delta > 1$ (since this means that $\scalingexp' < \scalingexp$). 

Since the optimal excess loss is also not necessarily achieved by taking $\reg \rightarrow 0$, we compute the optimal regularizer for the excess loss and derive a scaling law under optimal regularization as a corollary of Theorem \ref{thm:scalinglawexcess}.
\begin{corollary}[Formal version of Theorem \ref{thm:scalinglawoptregexcessinformal}]
\label{cor:scalinglawoptregexcess}
Consider the setup of Theorem \ref{thm:scalinglawexcess}. 
The excess loss under optimal regularization can be expressed as:
\begin{align*}
&\inf_{\reg \in (0,1)} (\mathbb{E}_{\DC}[L_1^{\texttt{det}} - L_1(\beta(\alpha, 0))]) \\
&= \begin{cases}
 \Theta\left(N^{-\scalingexp}\right) &\text{ if } N \le (1-\alpha)^{-\frac{1}{\scalingexp}}(1-\rho)^{-\frac{1}{\scalingexp}} \\
 \Theta\left(\left(\frac{N}{(1-\alpha)(1-\rho)}\right)^{-\frac{\scalingexp}{\scalingexp + 1}}\right) &\text{ if } 
(1-\alpha)^{-\frac{1}{\scalingexp}}(1-\rho)^{-\frac{1}{\scalingexp}} \le N \le  (1-\alpha)^{-\frac{\scalingexp'+1}{\scalingexp - \scalingexp'}}(1-\rho)^{-\frac{\scalingexp'+1}{\scalingexp - \scalingexp'}}
\\
\Theta\left((1-\alpha) (1-\rho) N^{-\frac{\scalingexp'}{\scalingexp'+1}} \right) &\text{ if } N \ge (1-\alpha)^{-\frac{\scalingexp'+1}{\scalingexp - \scalingexp'}}(1-\rho)^{-\frac{\scalingexp'+1}{\scalingexp - \scalingexp'}},
\end{cases}
\end{align*}
where $\scalingexp := \min(2(1+\gamma), \delta+\gamma)$ and $\scalingexp' = \min(1+\gamma, \delta + \gamma)$. 
\end{corollary}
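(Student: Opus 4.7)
The plan is to minimize the three-term bound of Theorem \ref{thm:scalinglawexcess} over $\reg \in (0,1)$, following the same template as Corollary \ref{cor:scalinglawoptreg} but now tracking an additional \emph{mixture finite data error} that decays with $N$ (rather than a mixture error that was constant in $N$). Writing $a := (1-\mix)(1-\corr)$ for brevity, the three error terms are $E_1(\reg) = \max(\reg^{\scalingexp/(1+\gamma)}, N^{-\scalingexp})$, $E_2(\reg) = a \cdot \max(\reg^{\scalingexp'/(1+\gamma)}, N^{-\scalingexp'})$, and $E_3(\reg) = a \cdot \min(\reg^{-1/(1+\gamma)}, N)/N$. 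Both $E_1$ and $E_2$ are non-decreasing in $\reg$ while $E_3$ is non-increasing, so the optimum is attained at a balance point between one of the increasing terms and $E_3$.

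First, I would identify three candidate values of $\reg$: (i) $\reg \asymp N^{-1-\gamma}$, which saturates the inner $\max$/$\min$ branches and gives $E_1 \asymp N^{-\scalingexp}$, $E_2 \asymp a N^{-\scalingexp'}$, $E_3 \asymp a$; (ii) $\reg \asymp (a/N)^{(1+\gamma)/(\scalingexp+1)}$, which balances $E_1$ against $E_3$ at the common value $(a/N)^{\scalingexp/(\scalingexp+1)}$; and (iii) $\reg \asymp N^{-(1+\gamma)/(\scalingexp'+1)}$, which balances $E_2$ against $E_3$ at the common value $a \cdot N^{-\scalingexp'/(\scalingexp'+1)}$.

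Second, I would determine the range of $N$ on which each candidate is optimal. Candidate (i) wins when $N^{-\scalingexp} \gtrsim a$, i.e., $N \le a^{-1/\scalingexp}$, producing regime 1. As $N$ grows beyond this threshold, candidate (ii) takes over, and remains optimal until $E_2$ overtakes $E_1$ at the balancing $\reg$; equating candidate (ii)'s loss $(a/N)^{\scalingexp/(\scalingexp+1)}$ with candidate (iii)'s loss $a \cdot N^{-\scalingexp'/(\scalingexp'+1)}$ yields the transition $N \asymp a^{-(\scalingexp'+1)/(\scalingexp-\scalingexp')}$, after which candidate (iii) becomes optimal. Substituting $a = (1-\mix)(1-\corr)$ back into these boundaries and into the values of the loss recovers exactly the three cases stated in the corollary.

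Third, I would discharge the verification obligations: for each candidate, check that $\reg \in (0,1)$ on its claimed range, that the chosen branch of each $\max$ and $\min$ is consistent with the value of $\reg$ (e.g., $\reg^{-1/(1+\gamma)} \le N$ is equivalent to $\reg \ge N^{-1-\gamma}$), and that the subdominant term does not exceed the dominant balanced pair up to constants. The main obstacle is the somewhat intricate case analysis needed to handle the degenerate regime $\delta \le 1$, in which $\scalingexp = \scalingexp'$ and the boundary between regimes 2 and 3 diverges to infinity; here the third case is vacuously empty, but candidate (iii)'s loss coincides with the natural extrapolation of candidate (ii)'s, so the three-case formula remains consistent. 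The rest is essentially substitution and a monotonicity check.
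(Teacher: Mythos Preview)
Your proposal is correct and follows essentially the same approach as the paper's proof. The paper also starts from Theorem \ref{thm:scalinglawexcess}, splits into the three regimes on $N$, and in each regime balances exactly the pair of terms you identify (finite data error alone in regime 1, $E_1$ against $E_3$ in regime 2 via the substitution $\tilde{N} = \min(\lambda^{-1/(1+\gamma)}, N)$ and a derivative computation, and $E_2$ against $E_3$ in regime 3), then verifies the remaining term is dominated and that the chosen $\lambda$ is feasible. Your organization by candidate $\lambda$ rather than by regime is a cosmetic difference; the transition computations and verification steps line up with the paper's.
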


The scaling law exponent $\scalingexp^*$ ranges from $\scalingexp$, to $\scalingexp/(\scalingexp+1)$, to $\scalingexp'/(\scalingexp'+1)$ (Figure \ref{fig:excessloss}). The first two regimes behave similarly to  Corollary \ref{cor:scalinglawoptreg}, and the key difference arises in the third regime (when $N$ is large). In the third regime ($N \ge (1-\alpha)^{-\frac{\scalingexp'+1}{\scalingexp - \scalingexp'}}(1-\rho)^{-\frac{\scalingexp'+1}{\scalingexp - \scalingexp'}}$), the mixture finite data error and the overfitting error terms dominate. Taking $\reg = \Theta\left(N^{-\frac{1+\gamma}{\scalingexp'+1}} \right)$---which equalizes these two error terms---recovers the optimal loss up to constants. The resulting scaling behavior captures that in this regime, additional data meaningfully improves the \textit{excess loss}, even though additional data only improves the loss in terms of constants. The full proof of Corollary \ref{cor:scalinglawoptregexcess} is deferred to Appendix \ref{appendix:proofscalingoptregexcess}. 

\section{Discussion}\label{sec:discussion}

We studied market entry in marketplaces for machine learning models, showing that pressure to satisfy safety constraints can reduce barriers to entry for new companies. We modelled the marketplace using a high-dimensional multi-objective linear regression model. Our key finding was that a new company can consistently enter the marketplace with significantly less data than the incumbent. En route to proving these results, we derive scaling laws for multi-objective regression, showing that the scaling rate becomes slower when the dataset size is large.

\paragraph{Potential implications for regulation.} Our results have nuanced design consequences for regulators, who implicitly influence the level of safety that each company needs to achieve to avoid reputational damage. On one hand, our results suggest that placing greater scrutiny on dominant companies can encourage market entry and create a more competitive marketplace of model providers. On the other hand, market entry does come at a cost to the safety objective: the smaller companies exploit that they can incur more safety violations while maintaining their reputation, which leads to a race to the bottom for safety. Examining the tradeoffs between market competitiveness and safety compliance is an important direction for future work.

\paragraph{Barriers to market entry for online platforms.} While we focused on language models, we expect that our conceptual findings about market entry also extend to recommendation and social media platforms. 

In particular, our motivation and modeling assumptions capture key aspects of these online platforms. Policymakers have raised concerns have been raised about barriers to entry for social media platforms \citep{stiger19}, motivated by the fact that social media platforms such as X and Facebook each have over a half billion users \citep{statista_facebook_users_2024, ingram2024fewer}. Incumbent companies risk reputational damage if their model violates safety-oriented objectives---many recommendation platforms have faced scrutiny for promoting hate speech \citep{DigitalServicesAct2022}, divisive content \citep{RBL21}, and excessive use by users \citep{hasan2018excessive}, even when recommendations perform well in terms of generating user engagement. This means that incumbent platforms must balance optimizing engagement with controlling negative societal impacts \citep{BST22}. Moreover, new companies face less regulatory scrutiny, given that some regulations explicitly place more stringent requirements on companies with large user bases: for example the Digital Services Act \citep{DigitalServicesAct2022} places a greater responsibility on Very Large Online Platforms (with over 45 million users per month) to identify and remove illegal or harmful content. 

Given that incumbent platforms similarly face more pressure to satisfy safety-oriented objectives, our results suggest that multi-objective learning can also reduce barriers to entry for new online platforms. 

\paragraph{Limitations.} Our model for interactions between companies and users makes several simplifying assumptions. For example, we focused entirely whether the new company can enter the market, which leaves open the question of whether the new company can survive in the long run. Moreover, we assumed that all users choose the model with the highest overall performance. However, different users often care about performance on different queries; this could create an incentive for specialization, which could also reduce barriers to entry and market concentration. Finally, we focused on direct interactions between model providers and users, but in reality, downstream providers sometimes build services on top of a foundation model. Understanding how these market complexities affect market entry as well as long-term concentration is an interesting direction for future work. 

Furthermore, our model also made the simplifying assumption that performance and safety trade off according to a multi-objective regression problem. 
However, not all safety objectives fit the mold of linear coefficients within linear regression. For some safety objectives such as privacy, we still expect that placing greater scrutiny on dominant companies could similarly reduce barriers to entry. Nonetheless, for other safety or societal considerations, we do expect that the implications for market entry might be fundamentally different. For example, if the safety objective is a multi-group performance criteria, and there is a single predictor that achieves zero accuracy on all distributions, then a dominant company with infinite data would be able to retain all users even if the company faces greater scrutiny. Extending our model to capture a broader scope of safety objectives is a natural direction for future work.

\section{Acknowledgments}
We thank Alireza Fallah, Jiahai Feng, Nika Haghtalab, Andy Haupt, Erik Jones, Jon Kleinberg, Ben Laufer, Neil Mallinar, Judy Shen, Alex Wei, Xuelin Yang, and Eric Zhao for useful feedback on this project. This work was partially supported by an Open Philanthropy AI fellowship and partially supported by the European Union (ERC-2022-SYG-OCEAN-101071601).

\bibliographystyle{plainnat}
\bibliography{ref.bib}
\appendix

\newpage

\section{Proofs for Section \ref{sec:warmup}}\label{appendix:proofssec3}

In this section, we prove Theorem \ref{thm:tradeoffwarmup}. First, we state relevant facts (Appendix \ref{appendix:facts}) and prove intermediate lemmas (Appendix \ref{appendix:lemmas}), and then we use these ingredients to prove Theorem \ref{thm:tradeoffwarmup} (Appendix \ref{appendix:mainproof}). Throughout this section, we let 
\[L^*(\rho) = \mathbb{E}_{\DC}[(\beta_1 - \beta_2) \Sigma (\beta_1 - \beta_2)^T].\]
Moreover, let  
\[\beta(\alpha, \lambda) = \argmin_{\beta} \left(\alpha \cdot \mathbb{E}_{X \sim \DF}[(\langle \beta - \beta_1, X \rangle)^2] + (1-\alpha) \cdot \mathbb{E}_{X \sim \DF}[(\langle \beta - \beta_2, X \rangle)^2] + \lambda \|\beta\|_2^2 \right)\]  be the infinite-data ridge regression predictor.

\subsection{Facts}\label{appendix:facts}

We can explicitly solve for the infinite-data ridge regression predictor 
\begin{align*}
\beta(\alpha, \lambda) &=  \argmin_{\beta} \left(\alpha \cdot \mathbb{E}_{x \sim \DF}[\langle \beta - \beta_1, x\rangle^2] + (1-\alpha)\cdot  \mathbb{E}_{x \sim \DF}[\langle \beta - \beta_2, x\rangle^2] + \reg ||\beta||^2_2 \right) \\
&= \Sigma (\Sigma + \lambda I)^{-1} (\alpha \beta_1 + (1-\alpha) \beta_2).
\end{align*}
A simple calculation shows that $\mathbb{E}_{\DC}[L_1(\beta(\alpha, 0))]  = (1-\alpha)^2 L^*(\rho)$ and $\mathbb{E}_{\DC}[L_2(\beta(\alpha, 0))]  = \alpha^2 L^*(\rho)$. Thus, it holds that: 
\[\alpha \mathbb{E}_{\DC}[L_1(\beta(\alpha, 0))] + (1-\alpha) \mathbb{E}_{\DC}[L_2(\beta(\alpha, 0))] = \alpha (1-\alpha) L^*(\rho).\] Moreover, by the definition of the ridge regression objective, we see that:
\[
 \alpha \mathbb{E}_{\DC}[L_1(\beta(\alpha, \lambda))] + (1-\alpha) \mathbb{E}_{\DC}[L_2(\beta(\alpha, \lambda))] \ge \alpha \mathbb{E}_{\DC}[L_1(\beta(\alpha, 0))] + (1-\alpha) \mathbb{E}_{\DC}[L_2(\beta(\alpha, 0))].
\]

\subsection{Lemmas}\label{appendix:lemmas}

The first lemma upper bounds the performance loss when there is regularization.  
\begin{lemma}
\label{lemma:popparetofrontier}
Suppose that power-law scaling holds for the eigenvalues and alignment coefficients with scaling exponents $\gamma, \delta > 0$ and correlation coefficient $\rho \in [0, 1)$ and suppose that $P = \infty$. Let $L^*(\rho) = (\beta_1 - \beta_2)^T \Sigma (\beta_1 - \beta_2)^T$.  Let 
\[\beta(\alpha, \lambda) = \argmin_{\beta} \left(\alpha \cdot \mathbb{E}_{X \sim \DF}[(\langle \beta - \beta_1, X \rangle)^2] + (1-\alpha) \cdot \mathbb{E}_{X \sim \DF}[(\langle \beta - \beta_2, X \rangle)^2] + \lambda \|\beta\|_2^2\right)\]  be the infinite-data ridge regression predictor. Assume that $\alpha \ge 1/2$. Then it holds that
\[\mathbb{E}_{\DC}[L_1(\beta(\alpha, \lambda))] \ge (1-\alpha)^2 L^*(\rho) \]
and 
\[\frac{\mathbb{E}_{\DC}[L_1(\beta(\alpha, \lambda))]}{\mathbb{E}_{\DC}[L_2(\beta(\alpha, \lambda))]} \ge  \frac{(1-\alpha)^2}{\alpha^2}. \]
\end{lemma}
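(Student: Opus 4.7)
\textbf{Proof plan for Lemma \ref{lemma:popparetofrontier}.}

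My plan is to carry out a direct computation of both losses in the eigenbasis of $\Sigma$ and then read off the two inequalities from what amounts to a $(2\alpha-1)$ factorization. First, I would use the closed form $\beta(\alpha,\lambda) = \Sigma(\Sigma+\lambda I)^{-1}(\alpha\beta_1 + (1-\alpha)\beta_2)$ and write
\[
\beta(\alpha,\lambda) - \beta_1 = -(1-\alpha)\Sigma(\Sigma+\lambda I)^{-1}(\beta_1-\beta_2) - \lambda(\Sigma+\lambda I)^{-1}\beta_1,
\]
and the analogous expression for $\beta(\alpha,\lambda)-\beta_2$ (obtained by swapping $\alpha\leftrightarrow 1-\alpha$ and $\beta_1\leftrightarrow\beta_2$, with an accompanying sign change on the $(\beta_1-\beta_2)$ term). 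Working in the eigenbasis $\{v_i\}$ of $\Sigma$ with eigenvalues $\lambda_i$, the $i$-th component of $\Sigma^{1/2}(\beta(\alpha,\lambda)-\beta_1)$ becomes $(\lambda_i^{1/2}/(\lambda_i+\lambda))\cdot[(1-\alpha)\lambda_i((\beta_1)_i-(\beta_2)_i) + \lambda(\beta_1)_i]$, so that squaring and summing gives a clean diagonal expression for $L_1(\beta(\alpha,\lambda))$.

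Next, I would take the expectation over $\DC$ using the three identities $\mathbb{E}[(\beta_1)_i^2]=\mathbb{E}[(\beta_2)_i^2]=i^{-\delta}$, $\mathbb{E}[(\beta_1)_i(\beta_2)_i]=\rho\cdot i^{-\delta}$, which yield $\mathbb{E}[((\beta_1)_i-(\beta_2)_i)(\beta_1)_i] = (1-\rho)i^{-\delta}$ and $\mathbb{E}[((\beta_1)_i-(\beta_2)_i)(\beta_2)_i] = -(1-\rho)i^{-\delta}$. This produces
\[
\mathbb{E}_{\DC}[L_1(\beta(\alpha,\lambda))] = \sum_i \frac{\lambda_i\, i^{-\delta}}{(\lambda_i+\lambda)^2}\Bigl[2(1-\alpha)^2\lambda_i^2(1-\rho) + 2(1-\alpha)\lambda_i\lambda(1-\rho) + \lambda^2\Bigr],
\]
and the analogous formula for $\mathbb{E}_{\DC}[L_2(\beta(\alpha,\lambda))]$ with $\alpha$ replaced by $1-\alpha$ throughout. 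In particular, setting $\lambda=0$ recovers $\mathbb{E}_{\DC}[L_1(\beta(\alpha,0))] = (1-\alpha)^2 L^*(\rho)$ via $L^*(\rho) = 2(1-\rho)\sum_i \lambda_i i^{-\delta}$, providing the baseline to compare against.

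For the first inequality, I would subtract $(1-\alpha)^2 L^*(\rho)$ term-by-term. After clearing $(\lambda_i+\lambda)^2$, the $i$-th numerator reduces to $2(1-\alpha)(2\alpha-1)\lambda_i\lambda(1-\rho) + \lambda^2[1 - 2(1-\alpha)^2(1-\rho)]$. The first summand is nonnegative since $\alpha\ge 1/2$, and the second is strictly positive since $2(1-\alpha)^2(1-\rho) \le 1/2 < 1$; summing over $i$ gives the first claim. For the second inequality, I would compute $\alpha^2\mathbb{E}_{\DC}[L_1]-(1-\alpha)^2\mathbb{E}_{\DC}[L_2]$ directly: the $\lambda_i^2$ cross terms cancel, the $\lambda_i\lambda$ terms combine to $2\alpha(1-\alpha)(2\alpha-1)\lambda_i\lambda(1-\rho)$, and the $\lambda^2$ terms combine to $(2\alpha-1)\lambda^2$. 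Factoring out $(2\alpha-1)\ge 0$ and noting that the remaining sum is manifestly nonnegative yields $\alpha^2\mathbb{E}_{\DC}[L_1]\ge (1-\alpha)^2\mathbb{E}_{\DC}[L_2]$, which is the claimed ratio bound.

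The computation is essentially bookkeeping, so there is no deep obstacle; the only mildly delicate point is keeping track of the sign of the cross term $\mathbb{E}[((\beta_1)_i-(\beta_2)_i)(\beta_2)_i]$ (which is negative), since getting this wrong would break the $(2\alpha-1)$ factorization that drives both inequalities. I would guard against this by deriving the $L_2$ expression through the explicit symmetry substitution rather than recomputing from scratch.
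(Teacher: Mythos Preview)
Your proposal is correct and follows essentially the same approach as the paper: both compute $\mathbb{E}_{\DC}[L_1]$ and $\mathbb{E}_{\DC}[L_2]$ explicitly in the eigenbasis of $\Sigma$ and read off the ratio inequality from the resulting $A + (1-\alpha)^2 B + (1-\alpha) C$ versus $A + \alpha^2 B + \alpha C$ structure. The one minor difference is that for the first inequality the paper combines the ratio bound with the ridge-objective minimality fact $\alpha\,\mathbb{E}_{\DC}[L_1] + (1-\alpha)\,\mathbb{E}_{\DC}[L_2] \ge \alpha(1-\alpha) L^*(\rho)$ from Appendix~\ref{appendix:facts}, whereas you subtract $(1-\alpha)^2 L^*(\rho)$ term-by-term and check signs directly---both routes work.
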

\begin{proof}
We define the quantities: 
\begin{align*}
  A &:= \lambda^2 \sum_{i=1}^P \frac{\lambda_i}{(\lambda_i + \lambda)^2} i^{-\delta} \\
  B &:= (1-\alpha)^2 (1-\rho)^2 \sum_{i=1}^P \frac{\lambda^3_i}{(\lambda_i + \lambda)^2} i^{-\delta} \\
  C &:= \lambda (1-\rho) \sum_{i=1}^P \frac{\lambda^2_i}{(\lambda_i + \lambda)^2} i^{-\delta}.
\end{align*}
We compute the performance loss as follows: 
\begin{align*}
&\mathbb{E}_{\DC}[L_1(\beta(\alpha, \lambda))] \\
&= \mathbb{E}_{\DC}[\Tr(\Sigma (\beta_1 - \beta(\alpha, \lambda)) (\beta_1 - \beta(\alpha, \lambda))^T) ] \\
&= \mathbb{E}_{\DC}\left[\Tr\left((\Sigma + \lambda I)^{-2} \Sigma \left( \lambda \beta_1 + \Sigma \cdot (1-\alpha) (\beta_1 - \beta_{2}) \right) \left( \lambda \beta_1 + \Sigma \cdot (1-\alpha)(\beta_1 - \beta_{2}) \right)^T\right)\right]  \\
&= \mathbb{E}_{\DC}\left[\Tr\left((\Sigma + \lambda I)^{-2} \Sigma \cdot \left( \lambda \beta_1 + \Sigma \cdot (1-\alpha) (\beta_1 - \beta_{2}) \right) \left( \lambda \beta_1 + \Sigma \cdot (1-\alpha) (\beta_1 - \beta_{2}) \right)^T\right)\right] \\
&= \lambda^2 \mathbb{E}_{\DC}\left[\Tr\left((\Sigma + \lambda I)^{-2} \Sigma \cdot \beta_1 \beta_1^T\right)\right] + (1-\alpha)^2 \mathbb{E}_{\DC}\left[\Tr\left((\Sigma + \lambda I)^{-2} \Sigma^3 \cdot (\beta_1 - \beta_{2}) (\beta_1 - \beta_{2})^T\right) \right] \\
&\ + \lambda (1-\alpha) \mathbb{E}_{\DC}\left[\Tr\left((\Sigma + \lambda I)^{-2} \Sigma^2 \cdot \beta_1 (\beta_1 - \beta_{2})^T \right)\right] \\
&= \lambda^2 \sum_{i=1}^P \frac{\lambda_i}{(\lambda_i + \lambda)^2} \mathbb{E}_{\DC}[\langle \beta_1, v_i \rangle^2] + (1-\alpha)^2 \sum_{i=1}^P \frac{\lambda^3_i}{(\lambda_i + \lambda)^2} \mathbb{E}_{\DC}[\langle \beta_1 - \beta_2, v_i \rangle^2]\\
&\ + \lambda (1-\alpha) \sum_{i=1}^P \frac{\lambda^2_i}{(\lambda_i + \lambda)^2} \mathbb{E}_{\DC}\left[\langle \beta_1, v_i \rangle \langle \beta_1 - \beta_{2}, v_i\rangle \right] \\
&= \lambda^2 \sum_{i=1}^P \frac{\lambda_i}{(\lambda_i + \lambda)^2} i^{-\delta} + (1-\alpha)^2 (1-\rho)^2 \sum_{i=1}^P \frac{\lambda^3_i}{(\lambda_i + \lambda)^2} i^{-\delta} + \lambda (1-\alpha) (1-\rho) \sum_{i=1}^P \frac{\lambda^2_i}{(\lambda_i + \lambda)^2} i^{-\delta} \\
&= A + (1-\alpha)^2 B + (1-\alpha) C.
\end{align*}
An analogous calculation shows that the safety violation can be written as:
\[ \mathbb{E}_{\DC}[L_2(\beta(\alpha, \lambda))] = A + \alpha^2 B + \alpha C\]
Since $\alpha \ge 1/2$, then it holds that:
\[\frac{\mathbb{E}_{\DC}[L_1(\beta(\alpha, \lambda))]}{\mathbb{E}_{\DC}[L_2(\beta(\alpha, \lambda))]} =  \frac{A + (1-\alpha)^2 B + (1-\alpha) C}{A + \alpha B + \alpha C} \ge  \frac{(1-\alpha)^2}{\alpha^2}. \]
Combining this with the facts from Appendix \ref{appendix:facts}---which imply that $\alpha \mathbb{E}_{\DC}[L_1(\beta(\alpha, \lambda))] + (1-\alpha) \mathbb{E}_{\DC}[L_2(\beta(\alpha, \lambda))] \ge \alpha (1-\alpha) L^*(\rho)$---we have that $\mathbb{E}_{\DC}[L_1(\beta(\alpha, \lambda))] \ge (1-\alpha)^2 L^*(\rho)$ as desired.  
\end{proof}

The following lemma computes the optimal values of $\mix$ and $\reg$ for the incumbent. 
\begin{lemma}
\label{lemma:ridgelessoptimalinfinitedata}
Suppose that power-law scaling holds for the eigenvalues and alignment coefficients with scaling exponents $\gamma, \delta > 0$ and correlation coefficient $\rho \in [0, 1)$ and suppose that $P = \infty$. Let $L^*(\rho) = \mathbb{E}_{\DC}[(\beta_1 - \beta_2)^T \Sigma (\beta_1 - \beta_2)^T]$. Suppose that $\Nlead = \infty$, and suppose that the safety constraint $\constrlead$ satisfies \eqref{eq:safetythreshold}. Then it holds that $\alpha_I =  \sqrt{\frac{\min(\constrlead, L^*(\rho))}{L^*(\rho)}}$, and $\reg_I = 0$ is optimal for the incumbent. Moreover, it holds that:
\[\mathbb{E}_{\DC}[L^*_1(\beta_1, \beta_2, \DF, \reg_I, \infty, \alpha_O)] = (\sqrt{L^*(\rho)} - \sqrt{\min(\constrlead, L^*(\rho)})^2. \]
\end{lemma}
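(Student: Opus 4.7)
The statement asserts two facts: (i) the closed-form optimum $(\sqrt{L^*(\rho)} - \sqrt{\min(\constrlead, L^*(\rho))})^2$ for the incumbent's performance loss, and (ii) that the minimizer is $(\alpha_I, \reg_I) = \bigl(\sqrt{\min(\constrlead, L^*(\rho))/L^*(\rho)},\, 0\bigr)$. The plan is to establish matching upper and lower bounds on the infimum.

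\textbf{Achievability.} Substituting $\reg = 0$ into the infinite-data ridge regression predictor $\beta(\alpha, \reg) = \Sigma(\Sigma + \reg I)^{-1}(\alpha \beta_1 + (1-\alpha)\beta_2)$ recovers (on the range of $\Sigma$) the convex combination $\alpha \beta_1 + (1-\alpha)\beta_2$. The elementary identities from Appendix~\ref{appendix:facts} then give $\mathbb{E}_{\DC}[L_1(\beta(\alpha, 0))] = (1-\alpha)^2 L^*(\rho)$ and $\mathbb{E}_{\DC}[L_2(\beta(\alpha, 0))] = \alpha^2 L^*(\rho)$. Plugging in $\alpha = \alpha_I$, the safety constraint binds (when $\constrlead \le L^*(\rho)$) or is trivially satisfied with $\alpha_I = 1$ (when $\constrlead \ge L^*(\rho)$), and in either case the performance loss equals the claimed closed form. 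The feasibility requirement $\alpha_I \ge 1/2$ follows from inequality (B) of \eqref{eq:safetythreshold}, which forces $\constrlead \ge 0.25\,L^*(\rho)$ and hence $\alpha_I \ge 1/2$.

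\textbf{Optimality.} The cleanest route is a single application of the $L^2(\DC)$ triangle inequality. Equipping the space of $\DC$-measurable $\mathbb{R}^P$-valued maps with the (semi-)inner product $\langle f, g\rangle := \mathbb{E}_{\DC}[f^T \Sigma g]$, any predictor $\beta = \beta(\alpha, \reg)$ satisfies
\begin{equation*}
\sqrt{L^*(\rho)} \;=\; \|\beta_1 - \beta_2\| \;\le\; \|\beta_1 - \beta\| + \|\beta - \beta_2\| \;=\; \sqrt{\mathbb{E}_{\DC}[L_1(\beta)]} + \sqrt{\mathbb{E}_{\DC}[L_2(\beta)]}.
\end{equation*}
Combining this with the feasibility constraint $\mathbb{E}_{\DC}[L_2(\beta)] \le \constrlead$ (and the trivial $\mathbb{E}_{\DC}[L_1] \ge 0$ used when $\constrlead \ge L^*(\rho)$) yields $\sqrt{\mathbb{E}_{\DC}[L_1(\beta)]} \ge \sqrt{L^*(\rho)} - \sqrt{\min(\constrlead, L^*(\rho))}$, which squares to the desired lower bound.

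\textbf{Main conceptual point.} The substantive insight is that the achievable Pareto frontier in $(\mathbb{E}[L_1], \mathbb{E}[L_2])$-space is characterized by the curve $\sqrt{\mathbb{E}[L_1]} + \sqrt{\mathbb{E}[L_2]} \ge \sqrt{L^*(\rho)}$, and that ridgeless predictors saturate it because $\beta(\alpha,0) - \beta_1 = -(1-\alpha)(\beta_1 - \beta_2)$ and $\beta(\alpha,0) - \beta_2 = \alpha(\beta_1 - \beta_2)$ are positive scalar multiples of the common vector $\beta_1 - \beta_2$, giving equality in the triangle inequality. Any $\reg > 0$ moves $\beta$ off the affine line through $\beta_1$ and $\beta_2$ (and, in expectation, off the direction $\beta_1 - \beta_2$ in $L^2(\DC)$), which breaks collinearity and strictly loosens the inequality. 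Thus no calculus in $(\alpha, \reg)$ is required; the triangle inequality reduces the two-parameter optimization to a one-parameter problem on the ridgeless frontier, and the main obstacle is simply identifying the right norm in which to apply it.
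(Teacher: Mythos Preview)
Your optimality argument has a genuine gap: you misidentify the feasibility constraint. In the paper's simplified model (which is what this lemma concerns), the safety surrogate $L_2^*$ is defined to be the safety loss of the infinite-data \emph{ridgeless} estimator $\beta(\alpha,0)$; it equals $\alpha^2 L^*(\rho)$ and depends only on $\alpha$, not on $\lambda$. Thus the constraint the incumbent faces is simply $\alpha \le \alpha^*$, not $\mathbb{E}_{\DC}[L_2(\beta(\alpha,\lambda))]\le \constrlead$. Your triangle inequality bounds $\sqrt{\mathbb{E}_{\DC}[L_1(\beta(\alpha,\lambda))]}$ below by $\sqrt{L^*(\rho)}-\sqrt{\mathbb{E}_{\DC}[L_2(\beta(\alpha,\lambda))]}$, but nothing in the actual constraint controls this second term: for $\lambda>0$ one can have $\alpha\le\alpha^*$ yet $\mathbb{E}_{\DC}[L_2(\beta(\alpha,\lambda))]>\constrlead$ (shrinking toward zero can push both losses up). So the step ``combining this with the feasibility constraint $\mathbb{E}_{\DC}[L_2(\beta)]\le\constrlead$'' does not go through.

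The paper closes this by proving directly (Lemma~\ref{lemma:popparetofrontier}) that $\mathbb{E}_{\DC}[L_1(\beta(\alpha,\lambda))]\ge (1-\alpha)^2 L^*(\rho)$ for every $\lambda\ge 0$ when $\alpha\ge 1/2$: writing $L_1=A+(1-\alpha)^2B+(1-\alpha)C$ and $L_2=A+\alpha^2B+\alpha C$ with $A,B,C\ge 0$ gives $L_1/L_2\ge (1-\alpha)^2/\alpha^2$, and combining with $\alpha L_1+(1-\alpha)L_2\ge \alpha(1-\alpha)L^*(\rho)$ yields the bound. This depends only on $\alpha$ and therefore matches the constraint $\alpha\le\alpha^*$. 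Your triangle-inequality route is, amusingly, exactly what the paper deploys for the \emph{extended} model in Appendix~\ref{appendix:extension} (Lemma~\ref{lemma:paretofrontiernew}), where the safety constraint really is placed on $L_2(\beta(\alpha,\lambda))$; it just does not fit the simplified model that Lemma~\ref{lemma:ridgelessoptimalinfinitedata} is about.
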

\begin{proof}
First, we apply Lemma \ref{lemma:sollichmoreprecise} with $N = \infty$ to see that:
\[\mathbb{E}_{\DC}[L^*_1(\beta_1, \beta_2, \DF, \reg, \infty, \alpha)] = \mathbb{E}_{\DC}[L_1(\beta(\alpha, \lambda))] \]
and apply the definition of $L_2^*$ to see that:
\[\mathbb{E}_{\DC}[L^*_2(\beta_1, \beta_2, \DF, \alpha)] = \mathbb{E}_{\DC}[L_2(\beta(\alpha, 0))]. \]

Let $\alpha^* =\sqrt{\frac{\min(\constrlead, L^*(\rho))}{L^*(\rho)}}$. By the assumption in the lemma statement, we know that:
\[\alpha^* \ge \sqrt{\frac{\mathbb{E}_{\DC}[\LossAlign^*(\beta_1, \beta_2, \DF, 0.5)]}{L^*(\rho)}} = 0.5. \]

We show that $(\alpha_I, \reg_I) = (\alpha^*, 0)$. Assume for sake of contradiction that $(\alpha, \reg) \neq (\alpha^*, 0)$ satisfies the safety constraint $\mathbb{E}_{\DC}[\LossAlign^*(\beta_1, \beta_2, \DF, \alpha)] \le \constrlead$ and achieves strictly better performance loss:
\[\mathbb{E}_{\DC}[\LossPerf^*(\beta_1, \beta_2, \DF, \reg, \infty, \mix)] <  \mathbb{E}_{\DC}[\LossPerf^*(\beta_1, \beta_2, \DF, 0, \infty, \mix^*)].\]
We split into two cases: $\alpha^* = \alpha, \reg \neq 0$ and $\alpha^* \neq \alpha$. 

\paragraph{Case 1: $\alpha^* = \alpha$, $\reg \neq 0$.} 
By Lemma \ref{lemma:popparetofrontier}, we know that 
\[\mathbb{E}[L^*_1(\beta_1, \beta_2, \DF, \reg, \infty, \alpha^*)] = \mathbb{E}_{\DC}[L_1(\beta(\alpha^*, \lambda))] \ge (1-\alpha^*)^2 L^*(\rho).\]
Equality is obtained at $\reg = 0$, which is a contradiction. 

\paragraph{Case 2: $\alpha \neq \alpha^*$.} 
By Lemma \ref{lemma:popparetofrontier}, it must hold that $\alpha > \alpha^*$ in order for the performance to beat that of $(\alpha^*, 0)$. However, this means that the safety constraint 
\[\mathbb{E}_{\DC}[\LossAlign^*(\beta_1, \beta_2, \DF, \alpha)] = \alpha^2 L^*(\rho) > (\alpha^*)^2 L^*(\rho) = \constrlead \]
is violated, which is a contradiction. 

\paragraph{Concluding the statement.} This means that$(\alpha_I, \reg_I) = (\alpha^*, 0)$, which also means that:
\begin{align*}
    \mathbb{E}_{\DC}[L^*_1(\beta_1, \beta_2, \DF, \reg_I, \infty, \alpha_I)] &= \mathbb{E}_{\DC}[L_1(\beta(\alpha_I, \lambda_I))]\\
    &= (1-\alpha_I)^2  \mathbb{E}_{\DC}[(\beta_1 - \beta_2)^T \Sigma (\beta_1 - \beta_2)] \\
    &= \left(\sqrt{L^*(\rho)} - \sqrt{\min(\constrlead, L^*(\rho)}\right)^2.
\end{align*}
    
\end{proof}

The following claim calculates $\mathbb{E}_{\DC}[(\beta_1 - \beta_2)^T \Sigma (\beta_1 - \beta_2)]$. 
\begin{claim}
\label{claim:boundLstar}
Suppose that the power-law scaling assumption holds for the eigenvalues and alignment coefficients with scaling exponents $\gamma, \delta > 0$ and correlation coefficient $\rho \in [0, 1)$, suppose that $P = \infty$. Then it holds that: 
\[\mathbb{E}_{\DC}[(\beta_1 - \beta_2)^T \Sigma (\beta_1 - \beta_2)]  = 2 (1-\rho) \left(\sum_{i=1}^P i^{-\delta-1-\gamma}\right) = \Theta(1-\rho). \]
\end{claim}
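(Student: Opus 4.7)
The plan is to diagonalize in the eigenbasis of $\Sigma$ and then apply the power-law assumptions on the eigenvalues and alignment coefficients term by term. Since $\Sigma = \sum_i \lambda_i v_i v_i^T$, I would first write
\[
(\beta_1 - \beta_2)^T \Sigma (\beta_1 - \beta_2) = \sum_{i=1}^{P} \lambda_i \, \langle \beta_1 - \beta_2, v_i\rangle^2,
\]
expand the inner product as $\langle \beta_1, v_i\rangle^2 - 2\langle \beta_1, v_i\rangle\langle \beta_2, v_i\rangle + \langle \beta_2, v_i\rangle^2$, and take the expectation over $\DC$ using linearity.

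Next I would invoke the three power-law identities from Section \ref{subsec:assumptions}: $\mathbb{E}_{\DC}[\langle \beta_j, v_i\rangle^2] = i^{-\delta}$ for $j \in \{1,2\}$ and $\mathbb{E}_{\DC}[\langle \beta_1, v_i\rangle\langle \beta_2, v_i\rangle] = \rho \cdot i^{-\delta}$. Combining these gives $\mathbb{E}_{\DC}[\langle \beta_1 - \beta_2, v_i\rangle^2] = 2(1-\rho)\, i^{-\delta}$. Plugging in $\lambda_i = i^{-1-\gamma}$ then yields
\[
\mathbb{E}_{\DC}[(\beta_1 - \beta_2)^T \Sigma (\beta_1 - \beta_2)] = 2(1-\rho)\sum_{i=1}^{P} i^{-\delta - 1 - \gamma},
\]
which is the stated equality.

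Finally, to establish the $\Theta(1-\rho)$ asymptotic, I would note that since $\gamma, \delta > 0$, the exponent $\delta + 1 + \gamma > 1$, so the sum $\sum_{i=1}^{\infty} i^{-\delta - 1 - \gamma}$ converges to a finite positive constant (depending only on $\gamma$ and $\delta$). In particular, for $P = \infty$ the prefactor $2\sum_i i^{-\delta-1-\gamma}$ is a fixed positive constant, so the full expression is $\Theta(1-\rho)$ with constants absorbed into the $\Theta$-notation (consistent with the convention in the paper that $\Theta$ hides constants depending on $\gamma, \delta$).

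There is no real obstacle here: the claim is essentially a direct bookkeeping exercise once the power-law assumptions are substituted. The only subtle point is being explicit that the implicit constant in $\Theta(1-\rho)$ depends on $\gamma$ and $\delta$ through the convergent $p$-series $\sum_i i^{-\delta-1-\gamma}$, which I would state cleanly rather than leaving implicit.
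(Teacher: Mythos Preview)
Your proposal is correct and follows essentially the same approach as the paper: diagonalize in the eigenbasis of $\Sigma$, compute $\mathbb{E}_{\DC}[\langle \beta_1-\beta_2,v_i\rangle^2]=2(1-\rho)i^{-\delta}$ via the power-law assumptions, substitute $\lambda_i=i^{-1-\gamma}$, and sum. Your treatment is slightly more explicit than the paper's in justifying the $\Theta(1-\rho)$ step via convergence of the $p$-series, which is a nice touch.
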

\begin{proof}
Let $\Sigma = V \Lambda V^T$ be the eigendecomposition of $\Sigma$, where $\Lambda$ is a diagonal matrix consisting of the eigenvalues. We observe that 
\[\mathbb{E}_{\DC}[\langle \beta_1 - \beta_2, v_i\rangle^2] = \mathbb{E}_{\DC}[\langle \beta_1 , v_i\rangle^2] + \mathbb{E}_{\DC}[\langle \beta_2, v_i\rangle^2] - 2 \mathbb{E}_{\DC}[\langle \beta_1 , v_i\rangle \langle \beta_2, v_i\rangle]  = i^{-\delta} + i^{-\delta} - 2 \rho i^{-\delta} = 2(1-\rho) i^{-\delta}.\]

This means that: 
\begin{align*}
 \mathbb{E}_{\DC}[(\beta_1 - \beta_2)^T \Sigma (\beta_1 - \beta_2)] &= \Tr(\Sigma \mathbb{E}_{\DC}[(\beta_1 - \beta_2) (\beta_1 - \beta_2)^T]) \\
 &= \Tr(\Lambda\mathbb{E}_{\DC}[V^T (\beta_1 - \beta_2) (\beta_1 - \beta_2)^T V] ) \\
 &= \sum_{i=1}^P  i^{-1-\gamma} \mathbb{E}_{\DC}[\langle \beta_1 - \beta_2, v_i\rangle^2]\\
 &= 2 (1 - \rho) \sum_{i=1}^P i^{-\delta-1-\gamma} \\
 &= \Theta(1-\rho).
\end{align*}
\end{proof}

\subsection{Proof of Theorem \ref{thm:tradeoffwarmup}}\label{appendix:mainproof}

We prove Theorem \ref{thm:tradeoffwarmup} using the above lemmas along with Corollary \ref{cor:scalinglawoptreg} (the proof of which we defer to Appendix \ref{appendix:proofsmultiobjective}). 

\begin{proof}[Proof of Theorem \ref{thm:tradeoffwarmup}]

We analyze $(\alpha_C, \reg_C)$ first for the incumbent $C = I$ and then for the entrant $C = E$. 

\paragraph{Analysis of the incumbent $C = I$.}
To compute $\alpha_I$ and $\reg_I$, we apply Lemma \ref{lemma:ridgelessoptimalinfinitedata}. 
By Lemma \ref{lemma:ridgelessoptimalinfinitedata}, we see that:
\[\mathbb{E}_{\DC}[L^*_1(\beta_1, \beta_2, \DF, \reg_I, \infty, \alpha_I)] = \left(\sqrt{L^*(\rho)} - \sqrt{\min(\constrlead, L^*(\rho)}\right)^2. \]

\paragraph{Analysis of the entrant $C = E$.} Since the entrant faces no safety constraint, the entrant can choose any $\alpha \in [0.5, 1]$. We apply Corollary \ref{cor:scalinglawoptreg} to see that:
\[ \mathbb{E}_{\DC}[L^*_1(\beta_1, \beta_2, \DF, \reg_E, N, \alpha_E)] = \inf_{\alpha \in [0.5, 1]} \inf_{\reg > 0} \mathbb{E}_{\DC}[L^*_1(\beta_1, \beta_2, \DF, \reg, N, \alpha)] = \Theta\left(
N^{-\scalingexp} \right), \]
which means that:
\[ \Nentr^*(\infty, \constrlead, \infty, \DC, \DF) = \Theta \left(\left(\sqrt{L^*(\rho)} - \sqrt{\min(\constrlead, L^*(\rho)}\right)^{-2/\scalingexp} \right)\]
as desired. 
We can further apply Claim \ref{claim:boundLstar} to see that $L^*(\rho) = \Theta(1-\rho)$. 
\end{proof}

\section{Proofs for Section \ref{sec:general}}\label{appendix:proofssecgeneral}

\subsection{Proofs for Section \ref{subsec:finitedata}}\label{appendix:proofssubsec41}

We prove Theorem \ref{thm:finitedata}. The main technical tool is Theorem \ref{thm:scalinglaw}, the proof of which we defer to Appendix \ref{appendix:proofsmultiobjective}. 

\begin{proof}[Proof of Theorem \ref{thm:finitedata}]

We analyze $(\alpha_C, \reg_C)$ first for the incumbent $C = I$ and then for the entrant $C = E$. Like in the theorem statement, let $L^*(\rho) = \mathbb{E}_{\DC}[(\beta_1 - \beta_2)^T \Sigma (\beta_1 - \beta_2)] = \Theta(1 - \rho)$ (Claim \ref{claim:boundLstar}) and $G_I := (\sqrt{L^*(\rho)} - \sqrt{\min(\constrlead, L^*(\rho))})^2$, and $\scalingexp = \min(2(1+\gamma), \delta + \gamma)$.

\paragraph{Analysis of the incumbent $C = I$.} Recall from the facts in Appendix \ref{appendix:facts} that:
\[L_1^*(\beta_1, \beta_2, \DF, \alpha) = \alpha^2 L^*(\rho). \]
This means that the safety constraint is satisfied if and only if $\mix_I \le \sqrt{\frac{\min(\constrlead, L^*(\rho))}{L^*(\rho)}} =: \alpha^*$. The bound in Corollary \ref{cor:scalinglawoptreg} implies that:
\begin{align*}
  &\mathbb{E}_{\DC}[L^*_1(\beta_1, \beta_2, \DF, \reg_I, \Nlead, \alpha_I)] \\
  &= \inf_{\alpha \in \left[0.5, \alpha^* \right]} \inf_{\reg > 0} \mathbb{E}_{\DC}[L^*_1(\beta_1, \beta_2, \DF, \reg,  \Nlead, \alpha)]  \\
  &= \Theta\left(
\inf_{\reg > 0} \mathbb{E}_{\DC}\left[L^*_1\left(\beta_1, \beta_2, \Sigma, \reg,  \Nlead, \alpha^* \right)\right] \right) \\
&= \begin{cases}
 \Theta\left(\Nlead^{-\scalingexp}\right) &\text{ if } \Nlead \le (1-\alpha^* )^{-\frac{1}{\scalingexp}}(1-\rho)^{-\frac{1}{\scalingexp}} \\
 \Theta\left(\left(\frac{\Nlead}{(1-\alpha^* )(1-\rho)}\right)^{-\frac{\scalingexp}{\scalingexp + 1}}\right) &\text{ if } (1-\alpha^* )^{-\frac{1}{\scalingexp}}(1-\rho)^{-\frac{1}{\scalingexp}} 
 \le \Nlead \le  (1-\alpha^* )^{-\frac{2+\scalingexp}{\scalingexp}} (1-\rho)^{-\frac{1}{\scalingexp}}
\\
\Theta((1-\alpha^* )^2(1-\rho)) &\text{ if } \Nlead \ge (1-\alpha^* )^{-\frac{2+\scalingexp}{\scalingexp}} (1-\rho)^{-\frac{1}{\scalingexp}},
\end{cases}\\
&= \begin{cases}
\Theta\left( \Nlead^{-\scalingexp} \right) &\text{ if }  \Nlead \le G_I^{-\frac{1}{2\scalingexp}} (1-\rho)^{-\frac{1}{2\scalingexp}}  \\
\Theta\left(\Nlead^{-\frac{\scalingexp}{\scalingexp+1}} \cdot G_I^{\frac{\scalingexp}{2(\scalingexp+1)}} (1-\rho)^{\frac{\scalingexp}{2(\scalingexp+1)}}\right)  &\text{ if } G_I^{-\frac{1}{2\scalingexp}} (1-\rho)^{-\frac{1}{2\scalingexp}} \le \Nlead  \le G_I^{-\frac{1}{2} - \frac{1}{\scalingexp}}(1-\rho)^{\frac{1}{2}}  \\
\Theta\left(G_I\right)  &\text{ if }  \Nlead \ge G_I^{-\frac{1}{2} - \frac{1}{\scalingexp}}(1-\rho)^{\frac{1}{2}}
\end{cases}.
\end{align*}

\paragraph{Analysis of the entrant $C = E$.} Since the entrant faces no safety constraint, the entrant can choose any $\alpha \in [0.5, 1]$.  We apply Corollary \ref{thm:scalinglaw} to see that:
\[\mathbb{E}_{\DC}[L^*_1(\beta_1, \beta_2, \DF, \reg_E, N, \alpha_E)] = \inf_{\alpha \in [0.5, 1]} \inf_{\reg > 0} \mathbb{E}_{\DC}[L^*_1(\beta_1, \beta_2, \DF, \reg, N, \alpha)] = \Theta\left(
N^{-\scalingexp} \right), \]
which means that:
\[ \Nentr^*(\Nlead, \constrlead, \infty, \DC, \DF) = \begin{cases}
\Theta\left(\Nlead\right) &\text{ if }  \Nlead \le G_I^{-\frac{1}{2\scalingexp}} (1-\rho)^{-\frac{1}{2\scalingexp}}  \\
\Theta\left(\Nlead^{\frac{1}{\scalingexp+1}} \cdot G_I^{-\frac{1}{2(\scalingexp+1)}} (1-\rho)^{-\frac{1}{2(\scalingexp+1)}}\right)  &\text{ if } G_I^{-\frac{1}{2\scalingexp}} (1-\rho)^{-\frac{1}{2\scalingexp}} \le \Nlead  \le G_I^{-\frac{1}{2} - \frac{1}{\scalingexp}}(1-\rho)^{\frac{1}{2}}  \\
\Theta\left(G_I^{-\frac{1}{\scalingexp}}\right)  &\text{ if }  \Nlead \ge G_I^{-\frac{1}{2} - \frac{1}{\scalingexp}}(1-\rho)^{\frac{1}{2}}.
\end{cases}\]
as desired.

\end{proof}

\subsection{Proofs for Section \ref{subsec:alignment}}\label{appendix:proofssubsec42}

We prove Theorem \ref{thm:alignment}. When the the safety constraints of the two firms are sufficiently close, it no longer suffices to analyze the loss up to constants for the entrant, and we require a more fine-grained analysis of the error terms than is provided in the scaling laws in Corollary \ref{cor:scalinglawoptreg}. In this case, we turn to scaling laws for the \textit{excess loss} as given by Corollary \ref{cor:scalinglawoptregexcess}.

\begin{proof}[Proof of Theorem \ref{thm:alignment}]

We analyze $(\alpha_C, \reg_C)$ first for the incumbent $C = I$ and then for the entrant $C = E$. Like in the theorem statement, let $L^*(\rho) = \mathbb{E}_{\DC}[(\beta_1 - \beta_2)^T \Sigma (\beta_1 - \beta_2)] = \Theta(1 - \rho)$ (Claim \ref{claim:boundLstar}), $G_I = (\sqrt{L^*(\rho)} - \sqrt{\min(\constrlead, L^*(\rho))})^2$, $G_E = (\sqrt{L^*(\rho)} - \sqrt{\min(\constrentr, L^*(\rho))})^2$,
$D = G_I - G_E$, and $\scalingexp = \min(2(1+\gamma), \delta + \gamma)$.

\paragraph{Analysis of the incumbent $C = I$.} Since the incumbent has infinite data, we apply Lemma \ref{lemma:ridgelessoptimalinfinitedata} to see that: 
\begin{align*}
\mathbb{E}_{\DC}[L^*_1(\beta_1, \beta_2, \DF, \reg_I, \infty, \mix_I)] &= \left(\sqrt{L^*(\rho)} - \sqrt{\min(\constrlead, L^*(\rho))} \right)^2 \\
&= D + G_E.
\end{align*}

\paragraph{Analysis of the entrant $C = E$.} Recall from the facts in Appendix \ref{appendix:facts} that:
\[L_1^*(\beta_1, \beta_2, \DF, \alpha) = \alpha^2 L^*(\rho). \]
This means that the safety constraint is satisfied if and only if $\mixentr \le \sqrt{\frac{\min(\constrentr, L^*(\rho))}{L^*(\rho)}} =: \alpha^*$. The bound in Corollary \ref{cor:scalinglawoptregexcess} implies that:
\begin{align*}
  &\mathbb{E}_{\DC}[L^*_1(\beta_1, \beta_2, \DF, \reg_E, N, \alpha_E)] \\
  &= \inf_{\alpha \in \left[0.5, \alpha^* \right]} \inf_{\reg > 0} \mathbb{E}_{\DC}[L^*_1(\beta_1, \beta_2, \DF, \reg,  N, \alpha)]  \\
    &= \inf_{\alpha \in \left[0.5, \alpha^* \right]} \left(\inf_{\reg > 0} \left(\mathbb{E}_{\DC}[L^*_1(\beta_1, \beta_2, \DF, \reg,  N, \alpha) - L_1(\beta(\alpha, 0))]\right) + \mathbb{E}_{\DC}[L_1(\beta(\alpha, 0))] \right) \\
    &= \inf_{\alpha \in \left[0.5, \alpha^* \right]} \left(\inf_{\reg > 0} \left(\mathbb{E}_{\DC}[L^*_1(\beta_1, \beta_2, \DF, \reg,  N, \alpha) - L_1(\beta(\alpha, 0))]\right) + (1-\alpha)^2 L^*(\rho) \right) \\
  &= \Theta\left(
\inf_{\reg > 0} \left(\mathbb{E}_{\DC}[L^*_1(\beta_1, \beta_2, \DF, \reg,  N, \alpha) - L_1(\beta(\alpha^*, 0))]\right) \right) +  (1-\alpha^*)^2 L^*(\rho)  \\
&= \begin{cases}
 (1-\alpha^*)^2 L^*(\rho)  + \Theta\left(N^{-\scalingexp}\right) &\text{ if } N \le (1-\alpha^*)^{-\frac{1}{\scalingexp}}(1-\rho)^{-\frac{1}{\scalingexp}} \\
 (1-\alpha^*)^2 L^*(\rho)  + \Theta\left(\left(\frac{N}{(1-\alpha^*)(1-\rho)}\right)^{-\frac{\scalingexp}{\scalingexp + 1}}\right) &\text{ if } 
(1-\alpha^*)^{-\frac{1}{\scalingexp}}(1-\rho)^{-\frac{1}{\scalingexp}} \le N \le  (1-\alpha^*)^{-\frac{\scalingexp'+1}{\scalingexp - \scalingexp'}}(1-\rho)^{-\frac{\scalingexp'+1}{\scalingexp - \scalingexp'}}
\\
(1-\alpha^*)^2 L^*(\rho)  + \Theta\left((1-\alpha^*) (1-\rho) N^{-\frac{\scalingexp'}{\scalingexp'+1}} \right) &\text{ if } N \ge (1-\alpha^*)^{-\frac{\scalingexp'+1}{\scalingexp - \scalingexp'}}(1-\rho)^{-\frac{\scalingexp'+1}{\scalingexp - \scalingexp'}},
\end{cases} \\
&= \begin{cases}
G_E + \Theta\left(N^{-\scalingexp}\right) &\text{ if } N \le (1-\alpha^*)^{-\frac{1}{\scalingexp}}(1-\rho)^{-\frac{1}{\scalingexp}} \\
G_E  + \Theta\left(\left(\frac{N}{(1-\alpha^*)(1-\rho)}\right)^{-\frac{\scalingexp}{\scalingexp + 1}}\right) &\text{ if } 
(1-\alpha^*)^{-\frac{1}{\scalingexp}}(1-\rho)^{-\frac{1}{\scalingexp}} \le N \le  (1-\alpha^*)^{-\frac{\scalingexp'+1}{\scalingexp - \scalingexp'}}(1-\rho)^{-\frac{\scalingexp'+1}{\scalingexp - \scalingexp'}}
\\
G_E + \Theta\left((1-\alpha^*) (1-\rho) N^{-\frac{\scalingexp'}{\scalingexp'+1}} \right) &\text{ if } N \ge (1-\alpha^*)^{-\frac{\scalingexp'+1}{\scalingexp - \scalingexp'}}(1-\rho)^{-\frac{\scalingexp'+1}{\scalingexp - \scalingexp'}},
\end{cases}.
\end{align*}

Using this, we can compute the market-entry threshold as follows: 
\begin{align*}
&\Nentropt(\infty, \constrlead, \constrentr, \DC, \DF) \\
&= \begin{cases}
\Theta(D^{-\frac{1}{\scalingexp}} ) &\text{ if } D \ge (1-\alpha^*) (1-\rho) \\
\Theta\left(D^{-\frac{\scalingexp+1} {\scalingexp}} (1-\alpha^*) (1-\rho) \right) &\text{ if } 
(1-\alpha^*)^{\frac{\scalingexp}{\scalingexp - \scalingexp'}}(1-\rho)^{\frac{\scalingexp}{\scalingexp - \scalingexp'}} \le D \le  (1-\alpha^*) (1-\rho)  
\\
\Theta\left(\left(\frac{D}{(1 - \alpha^*)(1-\rho)} \right)^{-\frac{\scalingexp'+1}{\scalingexp'}} \right) &\text{ if } D \le (1-\alpha^*)^{\frac{\scalingexp}{\scalingexp - \scalingexp'}}(1-\rho)^{\frac{\scalingexp}{\scalingexp - \scalingexp'}} 
\end{cases} \\
&= \begin{cases}
\Theta(D^{-\frac{1}{\scalingexp}} ) &\text{ if } D \ge G_E^{\frac{1}{2}} (1-\rho)^{\frac{1}{2}} \\
\Theta\left(D^{-\frac{\scalingexp+1} {\scalingexp}} G_E^{\frac{1}{2}} (1-\rho)^{\frac{1}{2}}  \right) &\text{ if } 
G_E^{\frac{\scalingexp}{2(\scalingexp - \scalingexp')}}  (1-\rho)^{\frac{\scalingexp}{2(\scalingexp - \scalingexp')}}  \le D \le  G_E^{\frac{1}{2}} (1-\rho)^{\frac{1}{2}} 
\\
\Theta\left(\left(\frac{D}{G_E^{\frac{1}{2}} (1-\rho)^{\frac{1}{2}}} \right)^{-\frac{\scalingexp'+1}{\scalingexp'}} \right) &\text{ if } D \le G_E^{\frac{\scalingexp}{2(\scalingexp - \scalingexp')}}  (1-\rho)^{\frac{\scalingexp}{2(\scalingexp - \scalingexp')}} 
\end{cases}
\end{align*} 
\end{proof}

\section{Proofs for Section \ref{sec:scalinglaws}}\label{appendix:proofsmultiobjective}

In this section, we derive a deterministic equivalent and scaling laws for high-dimensional multi-objective linear regression. Before diving into this, we introduce notation, derive a basic decomposition, and give an outline for the remainder of the section.

\paragraph{Notation.} Recall that $(X_i, Y_i)$ denotes the labelled training dataset. Let the sample covariance be: 
\[\hat{\Sigma} = \frac{1}{N} \sum_{i=1}^N X_i X_i^T.\] 
We also consider the following reparameterization where we group together inputs according to how they are labelled. For $j \in \left\{1,2\right\}$, we let $X_{1,j}, \ldots, X_{N_j,j}$ be the inputs labelled by $\param_j$. 
We let 
\begin{align*}
 \hat{\Sigma}_1 &= \frac{1}{N_1} \sum_{i=1}^{N_1} X_{i, 1} X_{i,1}^T \\
 \hat{\Sigma}_2 &= \frac{1}{N_2} \sum_{i=1}^{N_2} X_{i, 2} X_{i,2}^T. 
\end{align*}
It is easy to see that $\Sigma = \alpha \hat{\Sigma}_1 + (1-\alpha) \hat{\Sigma}_2$. Moreover, $\mathbb{E}[\hat{\Sigma}] = \mathbb{E}[\hat{\Sigma}_1] = \mathbb{E}[\hat{\Sigma}_2] = \Sigma$. Furthermore, $\hat{\Sigma}_1$ and $\hat{\Sigma}_2$ are fully independent. We let $\sim$ denote asymptotic equivalence following \citet{bach}. 

\paragraph{Basic decomposition.} A simple calculation shows that the solution and population-level loss of ridge regression takes the following form.  
\begin{claim}
\label{claim:finitesample}
Assume the notation above. Let $\Matrixhom{1} = \beta_1 \beta_1^T$, let $\Matrixdiff = (\beta_1 - \beta_2)(\beta_1 - \beta_2)^T$, and let $\Matrixmixed{1} = (\beta_1 - \beta_2) \beta_1^T$. 
The learned predictor takes the form:
\[\hat{\beta}(\alpha, \lambda, X) = (\hat{\Sigma} + \lambda I)^{-1} (\alpha \hat{\Sigma}_1 \beta_1 + (1-\alpha) \hat{\Sigma}_2 \beta_2). \]
Moreover, it holds that: 
\begin{align*}
L_1(\hat{\beta}(\alpha, \lambda, X)) &= \underbrace{\lambda^2 \Tr((\hat{\Sigma} + \lambda I)^{-1} \Sigma (\hat{\Sigma} + \lambda I)^{-1} \Matrixhom{1})}_{(T1)} + \underbrace{(1-\alpha)^2 \Tr(\hat{\Sigma}_2 (\hat{\Sigma} + \lambda I)^{-1} \Sigma (\hat{\Sigma} + \lambda I)^{-1} \hat{\Sigma}_2 \Matrixdiff)}_{(T2)} \\
&+ \underbrace{2 \lambda (1-\alpha) \cdot \Tr((\hat{\Sigma} + \lambda I)^{-1} \Sigma (\hat{\Sigma} + \lambda I)^{-1}  \hat{\Sigma}_2 \Matrixmixed{1})}_{(T3)}.
\end{align*}
\end{claim}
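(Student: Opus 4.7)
The claim is essentially two bookkeeping computations---solving the ridge regression first-order conditions, then expanding a quadratic form---so the plan is to carry both out cleanly rather than invent new machinery.

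For the formula for $\hat{\beta}(\alpha, \lambda, X)$, I would split the empirical risk according to how each training point is labelled:
\begin{equation*}
\frac{1}{N}\sum_{i=1}^N (Y_i - \langle \beta, X_i\rangle)^2 = \frac{N_1}{N}\cdot \frac{1}{N_1}\sum_{i=1}^{N_1}(\langle \beta_1 - \beta, X_{i,1}\rangle)^2 + \frac{N_2}{N}\cdot \frac{1}{N_2}\sum_{i=1}^{N_2}(\langle \beta_2 - \beta, X_{i,2}\rangle)^2.
\end{equation*}
Using $N_1 / N = \alpha$, $N_2/N = 1-\alpha$, and the definitions of $\hat{\Sigma}_1, \hat{\Sigma}_2$, setting the gradient to zero yields
\[\alpha \hat{\Sigma}_1(\beta - \beta_1) + (1-\alpha)\hat{\Sigma}_2(\beta - \beta_2) + \lambda \beta = 0,\]
and since $\hat{\Sigma} = \alpha \hat{\Sigma}_1 + (1-\alpha)\hat{\Sigma}_2$, inverting the operator $\hat{\Sigma} + \lambda I$ gives the stated formula.

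For the loss decomposition, the key step is to rewrite the residual as
\[\beta_1 - \hat{\beta}(\alpha, \lambda, X) = (\hat{\Sigma} + \lambda I)^{-1}\bigl[\lambda \beta_1 + (1-\alpha)\hat{\Sigma}_2 (\beta_1 - \beta_2)\bigr],\]
which follows from substituting $(\hat{\Sigma} + \lambda I)\beta_1 = \alpha\hat{\Sigma}_1 \beta_1 + (1-\alpha)\hat{\Sigma}_2 \beta_1 + \lambda \beta_1$ into the formula for $\hat{\beta}$ and cancelling the $\alpha \hat{\Sigma}_1 \beta_1$ terms. Writing $A := (\hat{\Sigma} + \lambda I)^{-1}$ for brevity, the population loss becomes
\[L_1(\hat{\beta}) = (\beta_1 - \hat{\beta})^T \Sigma (\beta_1 - \hat{\beta}) = \bigl[\lambda \beta_1 + (1-\alpha)\hat{\Sigma}_2(\beta_1 - \beta_2)\bigr]^T A\Sigma A \bigl[\lambda \beta_1 + (1-\alpha)\hat{\Sigma}_2(\beta_1 - \beta_2)\bigr].\]
Expanding this quadratic form yields three terms whose coefficients are $\lambda^2$, $(1-\alpha)^2$, and $2\lambda(1-\alpha)$.

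The final step is cosmetic: apply the cyclic property of trace to rewrite each of the three scalar quadratic forms as the trace of a product involving $\Matrixhom{1} = \beta_1 \beta_1^T$, $\Matrixdiff = (\beta_1 - \beta_2)(\beta_1 - \beta_2)^T$, and $\Matrixmixed{1} = (\beta_1 - \beta_2)\beta_1^T$, which matches $(T1)$, $(T2)$, and $(T3)$ exactly. There is no real obstacle here---the only place where one needs to be careful is making sure the identity used to rewrite $\beta_1 - \hat{\beta}$ is applied correctly, since that is what makes the $\beta_1$ contribution collapse into the clean $\lambda \beta_1$ term and isolates the multi-objective contribution through $\hat{\Sigma}_2$.
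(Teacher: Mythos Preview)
Your proposal is correct and follows essentially the same approach as the paper: derive the closed-form ridge solution from the normal equations, rewrite the residual as $(\hat{\Sigma}+\lambda I)^{-1}[\lambda\beta_1 + (1-\alpha)\hat{\Sigma}_2(\beta_1-\beta_2)]$, expand the quadratic form, and convert each term to a trace. The only cosmetic difference is that the paper starts from the explicit least-squares solution $\hat{\beta} = (\hat{\Sigma}+\lambda I)^{-1}\frac{1}{N}\sum X_iY_i$ rather than setting a gradient to zero, but this is the same computation.
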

\begin{proof}
For $1 \le i \le N$, let $Y_i$ be the label for input $X_i$ in the training dataset. For $i \in \left\{1, 2\right\}$ and $1 \le i \le N_i$, let $Y_{i,j} := \langle \beta_i, X_{i, j}\rangle$ be the label for the input $X_{i, j}$ according to $\beta_i$. 

For the first part, it follows from standard analyses of ridge regression that the learned predictor takes the form:
\begin{align*}
\hat{\beta}(\alpha, \lambda, X) &= (\hat{\Sigma} + \lambda I)^{-1} \left(\frac{1}{N} \sum_{i=1}^N X_i Y_i \right) \\
&= (\hat{\Sigma} + \lambda I)^{-1} \left(\frac{1}{N}  \sum_{i=1}^N X_{i, 1} Y_{i,1} + \frac{1}{N}  \sum_{i=1}^N X_{i, 2} Y_{i,2} \right) \\ 
&= (\hat{\Sigma} + \lambda I)^{-1} \left(\alpha \hat{\Sigma}_1 \beta_1 + (1-\alpha)\hat{\Sigma}_2 \beta_2\right) 
\end{align*}
as desired. 

For the second part, we first observe that the difference $\beta_1 - \hat{\beta}(\alpha, \lambda, X)$ takes the form:
\begin{align*}
 \beta_1 - \hat{\beta}(\alpha, \lambda, X) &= \beta_1 -  (\hat{\Sigma} + \lambda I)^{-1} \left(\alpha \hat{\Sigma}_1 \beta_1 + (1-\alpha)\hat{\Sigma}_2 \beta_2\right)  \\
 &= (\hat{\Sigma} + \lambda I)^{-1} \left(\lambda \beta_1 + (1-\alpha) \hat{\Sigma}_2 (\beta_1 - \beta_2) \right). 
\end{align*}
This means that:
\begin{align*}
&L_1(\hat{\beta}(\alpha, \lambda, X))\\
&= (\beta_1 - \hat{\beta}(\alpha, \lambda, X)^T \Sigma  (\beta_1 - \hat{\beta}(\alpha, \lambda, X) \\
&=  \left(\lambda \beta_1 + (1-\alpha) \hat{\Sigma}_2 (\beta_1 - \beta_2) \right)^T (\hat{\Sigma} + \lambda I)^{-1} \Sigma (\hat{\Sigma} + \lambda I)^{-1} \left(\lambda \beta_1 + (1-\alpha) \hat{\Sigma}_2 (\beta_1 - \beta_2) \right) \\
&= \lambda^2 \cdot \beta_1^T \hat{\Sigma} + \lambda I)^{-1} \Sigma (\hat{\Sigma} + \lambda I)^{-1}\beta_1 + (1-\alpha)^2 \cdot (\beta_1 - \beta_2)^T \hat{\Sigma}_2 \hat{\Sigma} + \lambda I)^{-1} \Sigma (\hat{\Sigma} + \lambda I)^{-1}  \hat{\Sigma}_2 (\beta_1 - \beta_{2})  \\
&\ + 2 \lambda (1-\alpha) \cdot \beta_1^T \hat{\Sigma} + \lambda I)^{-1} \Sigma (\hat{\Sigma} + \lambda I)^{-1} \hat{\Sigma}_2 (\beta_1 - \beta_2) \\
&= \lambda^2 \Tr((\hat{\Sigma} + \lambda I)^{-1} \Sigma (\hat{\Sigma} + \lambda I)^{-1} \Matrixhom{1})+ (1-\alpha)^2 \Tr(\hat{\Sigma}_2 (\hat{\Sigma} + \lambda I)^{-1} \Sigma (\hat{\Sigma} + \lambda I)^{-1} \hat{\Sigma}_2 \Matrixdiff) \\
&\ + 2 \lambda (1-\alpha) \cdot \Tr((\hat{\Sigma} + \lambda I)^{-1} \Sigma (\hat{\Sigma} + \lambda I)^{-1}  \hat{\Sigma}_2 \Matrixmixed{1}).
\end{align*}
as desired. 
\end{proof}

\paragraph{Outline for the rest of this Appendix.} The bulk of our analysis in this section boils down to analyzing Term 1 (T1), Term 2 (T2), and Term 3 (T3) in Claim \ref{claim:finitesample}. Our main technical tool is the random matrix machinery from Appendix \ref{appendix:machinery}. In Appendix \ref{appendix:usefulsublemmas}, we provide useful sublemmas about intermediate deterministic equivalents that we apply to analyze Terms 2 and 3. We then analyze Term 1 (Appendix \ref{appendix:term1}), Term 2 (Appendix \ref{appendix:term2}), and Term 3 (Appendix \ref{appendix:term3}), and use this to prove Lemma \ref{lemma:sollichvariant} (Appendix \ref{appendix:lemmasollichvariant}). 

We apply the power scaling assumptions to derive a simpler expression for the deterministic equivalent  (Lemma \ref{lemma:sollichmoreprecise} in Appendix \ref{appendix:analysisterms}). We then apply Lemma \ref{lemma:sollichmoreprecise} to prove Theorem \ref{thm:scalinglaw} (Appendix \ref{appendix:proofscaling}), and we prove Corollary \ref{cor:scalinglawoptreg} (Appendix \ref{appendix:proofcor}). We also apply Lemma \ref{lemma:sollichmoreprecise} to prove Theorem \ref{thm:scalinglawexcess} (Appendix \ref{appendix:proofscalingexcess}), and we prove Corollary \ref{cor:scalinglawoptregexcess} (Appendix \ref{appendix:proofscalingoptregexcess}). We defer auxiliary calculations to Appendix \ref{appendix:auxiliarycalculations}.

\subsection{Useful lemmas about intermediate deterministic equivalents}\label{appendix:usefulsublemmas}

The results in this section consider $Z_1 := \frac{\alpha}{1-\alpha} \hat{\Sigma}_1 + \frac{\reg}{1-\alpha} I$, which we introduce when conditioning on the randomness of $\hat{\Sigma}_1$ when analyzing (T2) and (T3). We derive several properties of $Z_1$ and the effective regularizer $\kappa_1 = \kappa(1, N(1-\alpha), Z_1^{-1/2} \Sigma Z_1^{-1/2})$ below.

The first set of lemmas relate the trace of various matrices involving $\kappa_1$ and $Z_1$ to deterministic quantities. A subtlety is that $\kappa_1$ and $Z_1$ are correlated, so we cannot directly apply Marčenko-Pastur, and instead we must indirectly analyze this quantity. 
\begin{lemma}
\label{lemma:equivalencelinear}
Consider the setup of Lemma \ref{lemma:sollichvariant}, and assume the notation above. Assume $\alpha < 1$. Let $Z_1 = \frac{\alpha}{1-\alpha} \hat{\Sigma}_1 + \frac{\reg}{1-\alpha} I$, and let $\kappa_1 = \kappa(1, N(1-\alpha), Z_1^{-1/2} \Sigma Z_1^{-1/2})$. Suppose that  $B$ has bounded operator norm. 
\[\kappa_1 \Tr\left((\Sigma + \kappa_1 Z_1)^{-1}  B   \right) \sim \frac{(1-\alpha)  \kappa}{\lambda} \Tr\left((\Sigma + \kappa I)^{-1} B \right) \]  
\end{lemma}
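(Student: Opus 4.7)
I would combine the fixed-point characterization of $\kappa_1$ with the Marčenko-Pastur deterministic equivalent for $\hat{\Sigma}_1$ (viewed as a sample covariance from $N\alpha$ samples of $\Sigma$ under Assumption \ref{assumption:MP}), proceeding in three steps.

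\emph{Step 1: Fixed point for $\kappa_1$.} Unwinding the definition $\kappa_1 = \kappa(1, N(1-\alpha), Z_1^{-1/2}\Sigma Z_1^{-1/2})$ via Definition \ref{def:effectiveregularizer} and the conjugation identity $(Z_1^{-1/2}\Sigma Z_1^{-1/2} + \kappa_1 I)^{-1} = Z_1^{1/2}(\Sigma + \kappa_1 Z_1)^{-1} Z_1^{1/2}$, plus cyclicity of trace, I obtain the scalar fixed-point equation
\[ \frac{1}{\kappa_1} \;=\; 1 \;-\; \frac{1}{N(1-\alpha)}\,\Tr\bigl(\Sigma(\Sigma + \kappa_1 Z_1)^{-1}\bigr), \]
which is deterministic conditional on $\hat{\Sigma}_1$ (since $Z_1 = (\alpha \hat{\Sigma}_1 + \lambda I)/(1-\alpha)$).

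\emph{Steps 2--3: MP equivalent and self-consistency.} I make the ansatz $\kappa_1 \sim \kappa/\lambda$. Factoring
\[ \Sigma + \kappa_1 Z_1 \;=\; \tfrac{\kappa_1\alpha}{1-\alpha}\Bigl(\hat{\Sigma}_1 \;+\; \tfrac{\lambda}{\alpha} I \;+\; \tfrac{(1-\alpha)\Sigma}{\kappa_1 \alpha}\Bigr) \]
and invoking the anisotropic MP equivalent for $\hat{\Sigma}_1$, I may replace $\hat{\Sigma}_1$ by a deterministic surrogate proportional to $\Sigma$; this yields a deterministic equivalent for $(\Sigma + \kappa_1 Z_1)^{-1}$ of the form $c_1(\Sigma + c_2 I)^{-1}$, with $c_1, c_2$ depending on $\kappa_1$. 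Plugging this surrogate back into the Step 1 equation and matching against the fixed-point equation for $\kappa$ itself (namely $\lambda = \kappa(1 - \tfrac{1}{N}\Tr(\Sigma(\Sigma+\kappa I)^{-1}))$), I read off the unique consistent values $\kappa_1 = \kappa/\lambda$, $c_2 = \kappa$, and $c_1 = 1-\alpha$. A quick sanity check with $B = \Sigma$: the LHS equals $\kappa_1 \cdot N(1-\alpha)(1-1/\kappa_1) = N(1-\alpha)(\kappa_1-1)$ by Step 1, while the RHS equals $\tfrac{(1-\alpha)\kappa}{\lambda} \cdot N(\kappa-\lambda)/\kappa = N(1-\alpha)(\kappa-\lambda)/\lambda$ by the $\kappa$-fixed-point, and the two match exactly when $\kappa_1 = \kappa/\lambda$. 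Substituting the determined $c_1, c_2$ and taking the trace against general bounded $B$ then yields the claim.

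\emph{Main obstacle.} The key technical hurdle is the anisotropic MP step: standard MP produces a deterministic equivalent for $(\hat{\Sigma}_1 + c I)^{-1}$, but here the regularizer is the matrix $\tfrac{\lambda}{\alpha}I + \tfrac{(1-\alpha)\Sigma}{\kappa_1 \alpha}$, which contains a scaled $\Sigma$-term and moreover self-references $\kappa_1$ (itself a function of $\hat{\Sigma}_1$). Untangling this self-reference requires either a contractive fixed-point iteration on $\kappa_1$ or a direct appeal to a more general MP result, available under Assumption \ref{assumption:MP} of \citet{bach}, that handles matrix-valued regularization in one shot; care is also needed to ensure that the single consistent root of the combined system is the one identified by the ansatz.
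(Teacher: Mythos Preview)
Your plan is workable in spirit but takes a genuinely harder route than the paper, and the obstacle you flag is one the paper sidesteps entirely by a different choice of which MP to apply.

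The paper never applies Mar\v{c}enko--Pastur to $\hat{\Sigma}_1$. Instead it introduces the \emph{random} bridge quantity $(1-\alpha)\Tr\bigl((\hat{\Sigma}+\lambda I)^{-1}B\bigr)$ and computes two asymptotic equivalents for it. First, the algebraic identity $(\hat{\Sigma}+\lambda I)^{-1}=(1-\alpha)^{-1}(\hat{\Sigma}_2+Z_1)^{-1}$ (Claim~\ref{claim:z1rearrange}) rewrites the bridge as $\Tr\bigl((\hat{\Sigma}_2+Z_1)^{-1}B\bigr)$; now apply MP to $\hat{\Sigma}_2$ \emph{conditional on $\hat{\Sigma}_1$}. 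Since $\hat{\Sigma}_2$ is independent of $Z_1$, the matrix $Z_1$ is a bona fide fixed regularizer in this conditional application, and Lemma~\ref{lemma:usefulmatrixproperties} gives the equivalent $\kappa_1\Tr\bigl((\Sigma+\kappa_1 Z_1)^{-1}B\bigr)$ with no self-reference at all. Second, apply MP directly to the full $\hat{\Sigma}$ with scalar regularizer $\lambda I$: the same bridge is $\sim \tfrac{(1-\alpha)\kappa}{\lambda}\Tr\bigl((\Sigma+\kappa I)^{-1}B\bigr)$. Transitivity of $\sim$ finishes the proof in two lines.

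Your route factors $\Sigma+\kappa_1 Z_1$ to expose $\hat{\Sigma}_1$ and then tries to apply MP to $\hat{\Sigma}_1$. As you correctly diagnose, the regularizer you obtain carries $\kappa_1$, which is itself a functional of $\hat{\Sigma}_1$, so the standard MP hypothesis (deterministic regularizer independent of the sample covariance) fails. Resolving this would require either an a priori concentration bound $\kappa_1\sim\kappa/\lambda$ proved by other means, or a nested fixed-point system whose unique solution you must identify; either way it is substantially more work. The paper's trick---use the \emph{other} sample covariance $\hat{\Sigma}_2$, exploiting its independence from $Z_1$---eliminates the obstacle outright. Your sanity check at $B=\Sigma$ is correct and in fact recovers Lemma~\ref{lemma:kappaequal}, but note the paper's logical order is the reverse: it proves the present lemma first (for general $B$, via the bridge argument) and then specializes to $B=\Sigma$ to deduce $\lambda\kappa_1\sim\kappa$.
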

\begin{proof} 
By Claim \ref{claim:z1rearrange}, we know that: 
\begin{align*}
(1-\alpha) \Tr\left(\left(\hat{\Sigma} + \lambda I\right)^{-1} B \right) &= \Tr\left(\left(\hat{\Sigma}_2  + Z_1 \right)^{-1} B \right) \\
&\sim_{(A)} \kappa_1 \Tr\left(\left(\Sigma  + \kappa_1 I \right)^{-1} B \right).
\end{align*}
where (A) applies Lemma \ref{lemma:bach} and Claim \ref{claim:boundedoperatornorm}. 

Furthermore, by Lemma \ref{lemma:bach}, it holds that: 
 \[
\lambda\Tr\left(\left(\hat{\Sigma} + \lambda I\right)^{-1} B \right) \sim \kappa \Tr\left(\left(\Sigma + \kappa I \right)^{-1} B \right) . 
 \]

 Putting this all together yields the desired result.
\end{proof}

\begin{lemma}
\label{lemma:equivalencequadratic}
Consider the setup of Lemma \ref{lemma:sollichvariant}, and assume the notation above.  Assume $\alpha < 1$. Let $Z_1 = \frac{\alpha}{1-\alpha} \hat{\Sigma}_1 + \frac{\reg}{1-\alpha} I$, and let $\kappa_1 = \kappa(1, N(1-\alpha), Z_1^{-1/2} \Sigma Z_1^{-1/2})$. Suppose that $A$ and $B$ have bounded operator norm. Then it holds that:
 \begin{align*}
 &(\kappa_1)^2 \left(\Tr\left(\left(\Sigma + \kappa_1 Z_1 \right)^{-1} A \left(\Sigma + \kappa_1 Z_1\right)^{-1} B \right) + E_1  \right)
 &\sim  \frac{(1-\alpha)^2 \kappa^2}{ \lambda^2}   \left(\Tr\left(\left(\Sigma + \kappa I \right)^{-1} A \left(\Sigma + \lambda I \right)^{-1} B \right) + E_2 \right)   
 \end{align*}
 where 
 \begin{align*}
 \kappa &= \kappa(\lambda, N, \Sigma) \\
  E_1 &=    \frac{\frac{1}{N (1-\alpha)} \Tr(A  (\Sigma + \kappa_1 Z_1)^{-1} \Sigma (\Sigma + \kappa_1 Z_1)^{-1})}{1 - \frac{1}{N (1-\alpha)} \Tr( (\Sigma + \kappa_1 Z_1)^{-1}) \Sigma (\Sigma + \kappa_1 Z_1)^{-1} \Sigma} \cdot  \Tr\left((\Sigma + \kappa_1Z_1)^{-1} \Sigma (\Sigma + \kappa_1Z_1)^{-1}  B \right) \\
  E_2 &=  \frac{\frac{1}{N} \Tr(A \Sigma (\Sigma + \kappa I)^{-2})}{1 - \frac{1}{N} \Tr(\Sigma^2 (\Sigma + \kappa I)^{-2})} \cdot  \Tr\left((\Sigma + \kappa I)^{-1} \Sigma (\Sigma + \kappa I)^{-1}  B \right) 
 \end{align*}
\end{lemma}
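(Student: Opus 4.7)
The plan is to prove the lemma via two separate applications of a Marčenko--Pastur quadratic-form equivalence, glued together by an exact algebraic identity. The starting point is Claim \ref{claim:z1rearrange}, which gives the exact matrix identity $(\hat{\Sigma}_2 + Z_1)^{-1} = (1-\alpha)(\hat{\Sigma}+\lambda I)^{-1}$. Sandwiching with $A$ and $B$ and taking traces,
\[
  \Tr\bigl((\hat{\Sigma}_2 + Z_1)^{-1} A (\hat{\Sigma}_2 + Z_1)^{-1} B\bigr) = (1-\alpha)^2\,\Tr\bigl((\hat{\Sigma}+\lambda I)^{-1} A (\hat{\Sigma}+\lambda I)^{-1} B\bigr).
\]
Thus it suffices to show that the LHS of the stated equivalence is asymptotically equivalent to the left trace above, and that $\tfrac{(1-\alpha)^2 \kappa^2}{\lambda^2}$ times the ``$\Tr + E_2$'' factor is asymptotically equivalent to $(1-\alpha)^2$ times the right trace above.

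For the $\hat{\Sigma}$ side, I would apply the Marčenko--Pastur quadratic-form identity directly (a two-resolvent extension of Lemma \ref{lemma:bach} in the style of the \citet{bach} calculus), yielding
\[
  \Tr\bigl((\hat{\Sigma}+\lambda I)^{-1} A (\hat{\Sigma}+\lambda I)^{-1} B\bigr) \sim \frac{\kappa^2}{\lambda^2}\Bigl(\Tr\bigl((\Sigma+\kappa I)^{-1} A (\Sigma+\kappa I)^{-1} B\bigr) + E_2\Bigr).
\]
The factor $\tfrac{1}{N}\Tr(A\Sigma(\Sigma+\kappa I)^{-2})/(1-\tfrac{1}{N}\Tr(\Sigma^2(\Sigma+\kappa I)^{-2}))$ inside $E_2$ is the standard degrees-of-freedom correction that appears precisely because the resolvent is present twice; this is the same mechanism behind the terms $T_4, T_5$ and the denominator $\DegreesFreedomStandard$ in Lemma \ref{lemma:sollichvariant}.

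For the $\hat{\Sigma}_2$ side, I would condition on $\hat{\Sigma}_1$, so that both $Z_1$ and $\kappa_1$ become deterministic, and then apply the same quadratic-form identity to $\hat{\Sigma}_2$, which is independent of $\hat{\Sigma}_1$, with effective sample size $N(1-\alpha)$ and matrix-valued ``regularizer'' $Z_1$. This gives
\[
  \Tr\bigl((\hat{\Sigma}_2 + Z_1)^{-1} A (\hat{\Sigma}_2 + Z_1)^{-1} B\bigr) \sim (\kappa_1)^2\Bigl(\Tr\bigl((\Sigma+\kappa_1 Z_1)^{-1} A (\Sigma+\kappa_1 Z_1)^{-1} B\bigr) + E_1\Bigr),
\]
where $E_1$ is exactly the analogue of $E_2$ under the substitutions $\Sigma+\kappa I \mapsto \Sigma+\kappa_1 Z_1$ and $N \mapsto N(1-\alpha)$. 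Chaining the three displays delivers the claimed equivalence.

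The main obstacle is that $\kappa_1$ and $Z_1$ are random, so the MP identity on the $\hat{\Sigma}_2$ side is initially only a \emph{conditional} statement; I would need to upgrade it to an unconditional asymptotic equivalence in the $\sim$ sense. The standard route is to argue that under Assumption \ref{assumption:MP} the spectrum of $Z_1$ concentrates inside a deterministic compact interval bounded away from $0$ and $\infty$ with probability tending to $1$, so that the conditional equivalence holds uniformly on that high-probability event and the complementary event contributes negligibly. A secondary, more routine, technicality is to verify that the test matrices appearing inside the $Z_1$-shifted $E_1$ have bounded operator norm, which follows by combining Claim \ref{claim:boundedoperatornorm} with the spectral control on $Z_1$ just noted.
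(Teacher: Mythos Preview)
Your proposal is correct and follows essentially the same approach as the paper: use Claim \ref{claim:z1rearrange} to equate $(1-\alpha)^2\Tr((\hat{\Sigma}+\lambda I)^{-1}A(\hat{\Sigma}+\lambda I)^{-1}B)$ with $\Tr((\hat{\Sigma}_2+Z_1)^{-1}A(\hat{\Sigma}_2+Z_1)^{-1}B)$, then apply the Mar\v{c}enko--Pastur quadratic-form equivalence separately to each side (the $\hat{\Sigma}_2$ side via conditioning on $\hat{\Sigma}_1$ and invoking Claim \ref{claim:boundedoperatornorm}, the $\hat{\Sigma}$ side directly via Lemma \ref{lemma:bach}), and chain the three displays. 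Your discussion of upgrading the conditional equivalence to an unconditional one is in fact more careful than the paper, which simply cites Lemma \ref{lemma:bach} and Claim \ref{claim:boundedoperatornorm} without dwelling on this point.
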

\begin{proof}

By Claim \ref{claim:z1rearrange}, we know that: 
\begin{align*}
(1-\alpha)^2 \Tr\left(\left(\hat{\Sigma} + \lambda I\right)^{-1} A \left(\hat{\Sigma} + \lambda I\right)^{-1} B \right) &= \Tr\left(\left(\hat{\Sigma}_2  + Z_1 \right)^{-1} A \left(\hat{\Sigma}_2  + Z_1 \right)^{-1} B \right) \\
&\sim_{(A)} \kappa_1^2 \left( \Tr\left(\left(\Sigma  + \kappa_1 Z_1 \right)^{-1} A \left(\Sigma  + \kappa_1 Z_1  \right)^{-1} B \right) + E_1\right).
\end{align*}
where (A) applies Lemma \ref{lemma:bach} and Claim \ref{claim:boundedoperatornorm}.  

Furthermore, by Lemma \ref{lemma:bach}, it holds that: 
 \[
\lambda^2\Tr\left(\left(\hat{\Sigma} + \lambda I\right)^{-1} A \left(\hat{\Sigma} + \lambda I\right)^{-1} B \right) \sim \kappa^2 \left(\Tr\left(\left(\Sigma + \kappa I \right)^{-1} A \left(\Sigma + \kappa I \right)^{-1} B \right) + E_2 \right). 
 \]

 Putting this all together yields the desired result.
    
\end{proof}

\begin{lemma}
\label{lemma:equivalenceswitchtraceorder}
Consider the setup of Lemma \ref{lemma:sollichvariant}, and assume the notation above.  Assume $\alpha < 1$. Let $Z_1 = \frac{\alpha}{1-\alpha} \hat{\Sigma}_1 + \frac{\reg}{1-\alpha} I$, and let $\kappa_1 = \kappa(1, N(1-\alpha), Z_1^{-1/2} \Sigma Z_1^{-1/2})$. Then it holds that: 
\[\kappa_1^2 \frac{\Tr((\Sigma + \kappa_1 Z_1)^{-1} \Sigma (\Sigma + \kappa_1 Z_1)^{-1} \Sigma)}{1 - \frac{1}{N(1-\alpha)} \Tr((\Sigma + \kappa_1 Z_1)^{-1} \Sigma (\Sigma + \kappa_1 Z_1)^{-1} \Sigma)} \sim \frac{(1-\alpha)^2 \kappa^2}{\lambda^2 } \frac{\Tr(\Sigma^2 (\Sigma + \kappa I)^{-2})}{1 - \frac{1}{N} \Tr(\Sigma^2 (\Sigma + \kappa I)^{-2})}  \]  
\end{lemma}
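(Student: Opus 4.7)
The plan is to apply Lemma \ref{lemma:equivalencequadratic} with the specific choice $A = B = \Sigma$ and then simplify the resulting correction terms algebraically. The hypothesis that $A$ and $B$ have bounded operator norm is satisfied because the eigenvalues $\lambda_i = i^{-1-\gamma}$ of $\Sigma$ are at most $1$, so $\Sigma$ has operator norm $1$.

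Concretely, define $T_1 := \Tr((\Sigma + \kappa_1 Z_1)^{-1} \Sigma (\Sigma + \kappa_1 Z_1)^{-1} \Sigma)$ and $T_2 := \Tr(\Sigma^2 (\Sigma + \kappa I)^{-2})$. Applying Lemma \ref{lemma:equivalencequadratic} with $A = B = \Sigma$ yields
\[
\kappa_1^2 (T_1 + E_1) \;\sim\; \frac{(1-\alpha)^2 \kappa^2}{\lambda^2} (T_2 + E_2),
\]
where $E_1, E_2$ are the correction terms from that lemma. The key observation is that when $A = B = \Sigma$, cyclicity of the trace makes the two trace factors appearing in $E_1$ both equal to $T_1$ (one divided by $N(1-\alpha)$), and analogously for $E_2$ and $T_2$. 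This is the only place where the specific choice $A = B = \Sigma$ is essential: for general $A, B$ the two trace factors in $E_1$ are distinct quantities and no such collapse is available.

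Next I would combine over a common denominator to obtain the identities
\[
T_1 + E_1 \;=\; T_1 + \frac{T_1^2/(N(1-\alpha))}{1 - T_1/(N(1-\alpha))} \;=\; \frac{T_1}{1 - T_1/(N(1-\alpha))},
\]
and analogously $T_2 + E_2 = T_2/(1 - T_2/N)$. Substituting these two closed-form expressions back into the equivalence above yields exactly the statement of the lemma.

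I expect no substantive obstacle: the proof is essentially a one-line consequence of Lemma \ref{lemma:equivalencequadratic}, and the only real work is the trace-algebraic bookkeeping to recognize that the specific choice $A = B = \Sigma$ makes the $E_1$ and $E_2$ corrections telescope with $T_1$ and $T_2$ into the clean $1/(1 - T_1/(N(1-\alpha)))$ and $1/(1 - T_2/N)$ denominators that appear on the two sides of the asymptotic equivalence.
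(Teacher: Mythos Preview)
Your proposal is correct. The algebra collapses exactly as you describe: with $A=B=\Sigma$, cyclicity of the trace makes both trace factors appearing in $E_1$ equal to $T_1$, so $T_1+E_1=T_1/(1-T_1/(N(1-\alpha)))$, and similarly on the $\kappa$-side.

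One organizational remark: the paper does \emph{not} invoke Lemma~\ref{lemma:equivalencequadratic} here. Instead it re-runs the same underlying two-step argument from scratch---apply Claim~\ref{claim:z1rearrange} to write $(1-\alpha)^2\Tr((\hat\Sigma+\lambda I)^{-1}\Sigma(\hat\Sigma+\lambda I)^{-1}\Sigma)$ in terms of $(\hat\Sigma_2+Z_1)^{-1}$, then apply Lemma~\ref{lemma:bach} once conditioning on $Z_1$ and once unconditionally, and equate the two deterministic equivalents of the same random quantity. Your route is more economical because Lemma~\ref{lemma:equivalencequadratic} already packages exactly that two-step argument; the paper effectively re-derives a special case of it. The mathematical content is identical.

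A minor point: your justification that $\Sigma$ has bounded operator norm via the power-law eigenvalue decay $\lambda_i=i^{-1-\gamma}$ is unnecessary (and slightly misplaced, since this lemma lives before the power-law assumptions are imposed). The boundedness of $\Sigma$ is already part of Assumption~\ref{assumption:MP}, which is inherited from the setup of Lemma~\ref{lemma:sollichvariant}.
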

\begin{proof}
  
By Claim \ref{claim:z1rearrange}, we know that: 
\begin{align*}
&(1-\alpha)^2 \Tr\left(\left(\hat{\Sigma} + \lambda I\right)^{-1} \Sigma \left(\hat{\Sigma} + \lambda I\right)^{-1} \Sigma \right) \\
&=  \Tr\left(\left(\hat{\Sigma}_2  + Z_1 \right)^{-1} \Sigma \left(\hat{\Sigma}_2  + Z_1 \right)^{-1} \Sigma \right) \\
&\sim_{(A)} \kappa_1^2 \left(1 + \frac{\frac{1}{N(1-\alpha)} \Tr((\Sigma + \kappa_1 Z_1)^{-1} \Sigma (\Sigma + \kappa_1 Z_1)^{-1} \Sigma)}{1-\frac{1}{N(1-\alpha)} \Tr((\Sigma + \kappa_1 Z_1)^{-1} \Sigma (\Sigma + \kappa_1 Z_1)^{-1} \Sigma)} \right) \Tr\left(\left(\Sigma  + \kappa_1 Z_1 \right)^{-1} \Sigma \left(\Sigma  + \kappa_1 Z_1  \right)^{-1} \Sigma \right) \\
&= \kappa_1^2 \frac{ \Tr((\Sigma + \kappa_1 Z_1)^{-1} \Sigma (\Sigma + \kappa_1 Z_1)^{-1} \Sigma)}{1 - \frac{1}{N(1-\alpha)} \Tr((\Sigma + \kappa_1 Z_1)^{-1} \Sigma (\Sigma + \kappa_1 Z_1)^{-1} \Sigma)} 
\end{align*}
where (A) applies Lemma \ref{lemma:bach} and Claim \ref{claim:boundedoperatornorm}.

Furthermore, by Lemma \ref{lemma:bach}, it holds that: 
\begin{align*}
 &\lambda^2 \Tr\left(\left(\hat{\Sigma} + \lambda I\right)^{-1} \Sigma \left(\hat{\Sigma} + \lambda I\right)^{-1} \Sigma \right) \\
 &\sim_{(A)} \kappa^2 \left(1 + \frac{\frac{1}{N} \Tr(\Sigma^2(\Sigma + \kappa I)^{-2})}{1-\frac{1}{N} \Tr(\Sigma^2(\Sigma + \kappa I)^{-2}} \right) \Tr\left(\left(\Sigma + \kappa I \right)^{-1} \Sigma \left(\Sigma + \kappa I \right)^{-1} \Sigma \right) \\
 &=  \kappa^2 \left(\frac{\Tr\left(\Sigma^2 \left(\Sigma + \kappa I \right)^{-2}  \right)}{1-\frac{1}{N} \Tr(\Sigma^2(\Sigma + \kappa I)^{-2}} \right) .   
\end{align*}
where (A) applies Lemma \ref{lemma:bach}. 

 Putting this all together yields the desired result.  
\end{proof}

Next, we relate the random effective regularizer $\kappa_1$ to the deterministic effective regularizer $\kappa(\reg, N, \Sigma)$. 
\begin{lemma}
\label{lemma:kappaequal}
 Consider the setup of Lemma \ref{lemma:sollichvariant}, and assume the notation above.  Assume $\alpha < 1$. Let $Z_1 = \frac{\alpha}{1-\alpha} \hat{\Sigma}_1 + \frac{\reg}{1-\alpha} I$, and let $\kappa_1 = \kappa(1, N(1-\alpha), Z_1^{-1/2} \Sigma Z_1^{-1/2})$. Let $\kappa = \kappa(\reg, N, \Sigma)$. Then, it holds that $\reg \kappa_1 \sim \kappa$.   
\end{lemma}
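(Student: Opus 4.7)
The plan is to leverage Lemma \ref{lemma:equivalencelinear} with the test matrix $B = \Sigma$ to convert the self-consistency equation defining $\kappa_1$ into the self-consistency equation defining $\kappa$ (with $\kappa$ replaced by $\lambda \kappa_1$). Uniqueness of the positive fixed point of this scalar equation then yields $\lambda \kappa_1 \sim \kappa$.

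The execution proceeds in three steps. First, unfold Definition \ref{def:effectiveregularizer} applied to $\kappa_1 = \kappa(1, N(1-\alpha), Z_1^{-1/2}\Sigma Z_1^{-1/2})$, and simplify the appearing trace by the cyclic identity
\[
\Tr\bigl(Z_1^{-1/2}\Sigma Z_1^{-1/2}(Z_1^{-1/2}\Sigma Z_1^{-1/2} + \kappa_1 I)^{-1}\bigr) = \Tr\bigl(\Sigma(\Sigma + \kappa_1 Z_1)^{-1}\bigr),
\]
to obtain the conditional-on-$Z_1$ self-consistency equation
\[
1 = \kappa_1 \Bigl(1 - \tfrac{1}{N(1-\alpha)}\Tr(\Sigma(\Sigma + \kappa_1 Z_1)^{-1})\Bigr).
\]
Second, apply Lemma \ref{lemma:equivalencelinear} with $B = \Sigma$ (which has bounded operator norm since the eigenvalues $\lambda_i = i^{-1-\gamma}$ are bounded) and divide by $\kappa_1$ to get $\Tr(\Sigma(\Sigma + \kappa_1 Z_1)^{-1}) \sim \tfrac{(1-\alpha)\kappa}{\lambda \kappa_1}\Tr(\Sigma(\Sigma + \kappa I)^{-1})$. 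Substituting this into the equation above and multiplying through by $\lambda$ yields
\[
\lambda \sim \lambda \kappa_1 - \tfrac{\kappa}{N}\Tr(\Sigma(\Sigma + \kappa I)^{-1}).
\]
Third, compare with the unconditional self-consistency equation for $\kappa$, namely $\lambda = \kappa - \tfrac{\kappa}{N}\Tr(\Sigma(\Sigma + \kappa I)^{-1})$; subtracting immediately gives $\lambda \kappa_1 \sim \kappa$.

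The main subtlety is justifying the substitution of an asymptotic trace equivalence into a scalar self-consistency equation. This is routine but requires checking that all quantities involved are of comparable asymptotic order, which follows from the power-scaling assumptions ($\gamma, \delta > 0$) that keep $\tfrac{1}{N}\Tr(\Sigma(\Sigma + c I)^{-1})$ bounded in the $N, P \to \infty$ limit. A secondary point worth flagging is that the proof of Lemma \ref{lemma:equivalencelinear} already applies Lemma \ref{lemma:bach} twice, once to $\hat{\Sigma}_2$ conditional on $\hat{\Sigma}_1$ and once to $\hat{\Sigma}$ directly, so it does not circularly depend on the present lemma; this makes it legitimate to use it as a black box here, sidestepping what would otherwise be the delicate issue of handling the dependence of $\kappa_1$ on the random matrix $\hat{\Sigma}_1$ through $Z_1$.
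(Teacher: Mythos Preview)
Your proposal is correct and follows essentially the same route as the paper: both start from the self-consistency equation for $\kappa_1$, rewrite the trace via the cyclic identity, apply Lemma~\ref{lemma:equivalencelinear} with $B=\Sigma$, and then invoke the self-consistency equation for $\kappa$ to conclude $\lambda\kappa_1\sim\kappa$. The only difference is cosmetic algebra---the paper substitutes and factors out $\kappa/\lambda$ directly, whereas you multiply through by $\lambda$ and subtract---and your observation that Lemma~\ref{lemma:equivalencelinear} is proved independently (so no circularity) is also correct.
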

\begin{proof}

Recall that $\kappa_1 = \kappa(1,  N (1-\alpha), Z_1^{-1/2} \Sigma Z_1^{-1/2})$ is the unique value such that:
\[\frac{1}{\kappa_1} +  \frac{1}{ N (1-\alpha)} \Tr((Z_1^{-1/2} \Sigma Z_1^{-1/2} + \kappa_1 I)^{-1} Z_1^{-1/2} \Sigma Z_1^{-1/2}) = 1.\]
We can write this as:
\[1 +  \frac{\kappa_1}{ N (1-\alpha)} \Tr((\Sigma + \kappa_1 Z_1)^{-1} \Sigma) = \kappa_1.\]
Now we apply Lemma \ref{lemma:equivalencelinear} to see that:
\[\kappa_1 = 1 +  \frac{\kappa_1}{ N (1-\alpha)} \Tr((\Sigma + \kappa_1 Z_1)^{-1} \Sigma) \sim 1 +  \frac{1}{ N (1-\alpha)} \frac{(1-\alpha) \kappa}{\lambda} \Tr((\Sigma + \kappa I)^{-1} \Sigma).\]
We can write this to see that:
\[ \kappa_1 \sim \frac{\kappa}{\lambda} \left(\frac{\lambda}{\kappa} + \frac{1}{N} \Tr((\Sigma + \kappa I)^{-1} \Sigma) \right) = \frac{\kappa}{\lambda}. \]
This implies that $\lambda \kappa_1 \sim \kappa$ as desired. 

\end{proof}

The proofs of these results relied on the following facts. 
\begin{claim}
\label{claim:z1rearrange}
Consider the setup of Lemma \ref{lemma:sollichvariant}, and assume the notation above.  Assume $\alpha < 1$. Let $Z_1 = \frac{\alpha}{1-\alpha} \hat{\Sigma}_1 + \frac{\reg}{1-\alpha} I$. Then it holds that:
\[ (\hat{\Sigma} + \lambda I)^{-1} = (1-\alpha)^{-1} (\hat{\Sigma}_2 + Z_1)^{-1}.\]
\end{claim}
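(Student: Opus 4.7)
The plan is to establish the claim by a direct algebraic manipulation of the definitions, with no random matrix machinery required. The key facts are the identity $\hat{\Sigma} = \alpha \hat{\Sigma}_1 + (1-\alpha) \hat{\Sigma}_2$ recorded at the beginning of this appendix, together with the definition $Z_1 = \tfrac{\alpha}{1-\alpha}\hat{\Sigma}_1 + \tfrac{\lambda}{1-\alpha} I$.

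Concretely, I would first multiply the definition of $Z_1$ through by $1-\alpha$ to obtain the clean identity $(1-\alpha) Z_1 = \alpha \hat{\Sigma}_1 + \lambda I$. Then I would add $(1-\alpha)\hat{\Sigma}_2$ to both sides, yielding
\[
(1-\alpha)(\hat{\Sigma}_2 + Z_1) \;=\; (1-\alpha)\hat{\Sigma}_2 + \alpha \hat{\Sigma}_1 + \lambda I \;=\; \hat{\Sigma} + \lambda I,
\]
where the second equality uses the decomposition of $\hat{\Sigma}$. Since $\alpha < 1$ by assumption, $(1-\alpha) \neq 0$, and since $\hat{\Sigma}_2$ is PSD and $Z_1 \succ 0$ (as $\lambda > 0$ and $\hat{\Sigma}_1$ is PSD), the matrix $\hat{\Sigma}_2 + Z_1$ is strictly positive definite and hence invertible; likewise $\hat{\Sigma} + \lambda I$ is invertible. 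Inverting both sides of the displayed equation then gives $(\hat{\Sigma} + \lambda I)^{-1} = (1-\alpha)^{-1}(\hat{\Sigma}_2 + Z_1)^{-1}$, as desired.

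There is no real obstacle here — the claim is just a convenient rewriting of $\hat{\Sigma} + \lambda I$ that isolates the $\hat{\Sigma}_2$-dependence, which is what the subsequent lemmas exploit to condition on $\hat{\Sigma}_1$ and apply the Marčenko–Pastur equivalents to the remaining $\hat{\Sigma}_2$ randomness. The only thing to be careful about is tracking the factor $(1-\alpha)^{-1}$, which explains the appearance of the $(1-\alpha)^{-1}$ and $(1-\alpha)^{-2}$ prefactors when this identity is used inside Lemmas \ref{lemma:equivalencelinear}, \ref{lemma:equivalencequadratic}, and \ref{lemma:equivalenceswitchtraceorder}.
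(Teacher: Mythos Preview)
Your proof is correct and is essentially the same direct algebraic manipulation as the paper's: both establish $(1-\alpha)(\hat{\Sigma}_2 + Z_1) = \hat{\Sigma} + \lambda I$ from the decomposition $\hat{\Sigma} = \alpha\hat{\Sigma}_1 + (1-\alpha)\hat{\Sigma}_2$ and then invert. The paper writes it by factoring $(1-\alpha)^{-1}$ out of the inverse in one line, while you first verify the pre-inverse identity and then invert; these are the same argument.
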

\begin{proof}
We observe that:
\begin{align*}
(1-\alpha)  (\hat{\Sigma} + \reg I)^{-1} &= (1-\alpha) (\alpha \hat{\Sigma}_1 + (1-\alpha) \hat{\Sigma}_2  + \reg I)^{-1} \\
&= (1-\alpha) (1-\alpha)^{-1} \left(\hat{\Sigma}_2  + \frac{\alpha}{1-\alpha} \hat{\Sigma}_1 + \frac{\reg}{1-\alpha} I\right)^{-1} \\
&= \left(\hat{\Sigma}_2  + Z_1 \right)^{-1},
\end{align*}
where $Z_1 = \frac{\alpha}{1-\alpha} \hat{\Sigma}_1 + \frac{\reg}{1-\alpha} I$.    
\end{proof}
\begin{claim}
\label{claim:boundedoperatornorm}
Consider the setup of Lemma \ref{lemma:sollichvariant}, and assume the notation above.  Assume $\alpha < 1$. Let $Z_1 = \frac{\alpha}{1-\alpha} \hat{\Sigma}_1 + \frac{\reg}{1-\alpha} I$. Then it holds that $Z_1$ and $Z_1^{-1}$ both have bounded operator norm. 
\end{claim}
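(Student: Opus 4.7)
The plan is to bound the two quantities separately, using only elementary spectral reasoning for $Z_1^{-1}$ and appealing to the random matrix assumption (Assumption \ref{assumption:MP}) for $Z_1$.

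First I would handle $Z_1^{-1}$. Since $\hat{\Sigma}_1 = \frac{1}{N_1} \sum_i X_{i,1} X_{i,1}^T \succeq 0$, the additive decomposition $Z_1 = \frac{\alpha}{1-\alpha} \hat{\Sigma}_1 + \frac{\lambda}{1-\alpha} I$ immediately yields $Z_1 \succeq \frac{\lambda}{1-\alpha} I$. Therefore $\|Z_1^{-1}\|_{\mathrm{op}} \le \frac{1-\alpha}{\lambda}$, which is a finite constant under the standing assumptions $\alpha < 1$ and $\lambda \in (0,1)$. This step requires no probabilistic input.

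Next I would handle $Z_1$ itself. By the triangle inequality for the operator norm, $\|Z_1\|_{\mathrm{op}} \le \frac{\alpha}{1-\alpha} \|\hat{\Sigma}_1\|_{\mathrm{op}} + \frac{\lambda}{1-\alpha}$, so it suffices to show that $\|\hat{\Sigma}_1\|_{\mathrm{op}}$ is bounded. This is exactly the role of Assumption \ref{assumption:MP}: under the random matrix hypotheses imported from \citet{bach}, the empirical covariance of an i.i.d.\ sample from $\DF$ has bounded operator norm (asymptotically almost surely, in the sense in which all of the $\sim$ statements in Section \ref{sec:scalinglaws} are to be interpreted), because the population covariance $\Sigma$ has bounded operator norm and Marčenko--Pastur type concentration controls the top eigenvalue. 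Combining the two bounds gives the claim.

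The only nontrivial point is justifying the bound on $\|\hat{\Sigma}_1\|_{\mathrm{op}}$; this is standard under Assumption \ref{assumption:MP} (see Appendix \ref{appendix:machinery}), so there is no real obstacle beyond citing the appropriate hypothesis. I would conclude by noting that the constants in both bounds depend only on $\alpha$, $\lambda$, and $\|\Sigma\|_{\mathrm{op}}$, which is all that is needed for the applications of Lemma \ref{lemma:bach} in Lemmas~\ref{lemma:equivalencelinear}--\ref{lemma:equivalenceswitchtraceorder}.
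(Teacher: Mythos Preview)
Your proposal is correct and matches the paper's approach essentially line for line: bound $\|Z_1\|_{\mathrm{op}}$ via the triangle inequality and Assumption~\ref{assumption:MP} (boundedness of $\|\hat{\Sigma}_1\|_{\mathrm{op}}$), and bound $\|Z_1^{-1}\|_{\mathrm{op}}$ via $Z_1 \succeq \frac{\lambda}{1-\alpha} I$. In fact your write-up is slightly cleaner, since the paper's displayed inequality for $\|Z_1^{-1}\|_{\mathrm{op}}$ has the sign reversed (it writes $\ge$ where $\le$ is intended).
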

\begin{proof}
 Since $\hat{\Sigma}_1$ is PSD, we observe that: 
\[ \|Z_1\|_{op} = \frac{\alpha}{1-\alpha} \|\hat{\Sigma}_1\|_{op} + \frac{\reg}{1-\alpha}. \]
The fact that $\|\hat{\Sigma}_1\|_{op}$ is bounded follows from the boundedness requirements from Assumption \ref{assumption:MP}. This proves that $\|Z_1\|_{op}$ is bounded. 

To see that $\|Z_1^{-1}\|$ is also bounded, note that:
\[\|Z_1^{-1}\|_{op} \ge \frac{1-\alpha}{\reg}\]
\end{proof}

\subsection{Analysis of Term 1 (T1)}\label{appendix:term1}

We show the following deterministic equivalent for term 1. This analysis is identical to the analysis of the deterministic equivalent for single-objective linear regression \citep{bach,WHS22}, and we include it for completeness. 
\begin{lemma}
\label{lemma:mainterm1}
Consider the setup of Lemma \ref{lemma:sollichvariant}, and assume the notation above. Then it holds that: 
 \begin{align*}
 \lambda^2 \Tr((\hat{\Sigma} + \reg I)^{-1} \Sigma (\hat{\Sigma} + \reg I)^{-1} \Matrixhom{1})  &\sim \frac{\kappa^2}{1 - \frac{1}{N} \Tr(\Sigma^2 (\Sigma + \kappa I)^{-2})} \cdot \Tr(\Sigma (\Sigma + \kappa I)^{-2} \Matrixhom{1})
 \end{align*} 
\end{lemma}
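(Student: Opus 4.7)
The plan is to recognize that this is a direct application of the standard single-objective deterministic equivalent machinery (Lemma \ref{lemma:bach}), specialized to the quadratic trace form appearing in term (T1). Since $\Matrixhom{1} = \beta_1 \beta_1^T$ does not interact with the secondary objective at all, term (T1) depends on the data only through $\hat{\Sigma}$ and is identical in structure to the standard ridge regression bias term, so no multi-objective reasoning is needed here.

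Concretely, I would instantiate Lemma \ref{lemma:bach} with $A = \Sigma$ and $B = \Matrixhom{1}$. This yields the asymptotic equivalence
\[
\lambda^2 \Tr\!\left((\hat{\Sigma} + \lambda I)^{-1} \Sigma (\hat{\Sigma} + \lambda I)^{-1} \Matrixhom{1}\right) \sim \kappa^2 \left( \Tr\!\left((\Sigma + \kappa I)^{-1} \Sigma (\Sigma + \kappa I)^{-1} \Matrixhom{1}\right) + E \right),
\]
where the correction term takes the form
\[
E = \frac{\tfrac{1}{N} \Tr\!\left(\Sigma \cdot \Sigma (\Sigma + \kappa I)^{-2}\right)}{1 - \tfrac{1}{N} \Tr\!\left(\Sigma^2 (\Sigma + \kappa I)^{-2}\right)} \cdot \Tr\!\left((\Sigma + \kappa I)^{-1} \Sigma (\Sigma + \kappa I)^{-1} \Matrixhom{1}\right).
\]
This matches the structure seen in Lemma \ref{lemma:equivalencequadratic} (its deterministic side), so the existence of this correction term and its precise form is exactly what the Marčenko--Pastur machinery gives us.

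Next, I would collect the main term and the correction into a single rational expression. Factoring $\Tr((\Sigma + \kappa I)^{-1} \Sigma (\Sigma + \kappa I)^{-1} \Matrixhom{1})$ out of both contributions, the prefactor $1 + \tfrac{\frac{1}{N}\Tr(\Sigma^2(\Sigma+\kappa I)^{-2})}{1 - \frac{1}{N}\Tr(\Sigma^2(\Sigma+\kappa I)^{-2})}$ simplifies to $\frac{1}{1 - \frac{1}{N}\Tr(\Sigma^2(\Sigma+\kappa I)^{-2})}$. Finally, since $\Sigma$ and $(\Sigma + \kappa I)^{-1}$ commute, one can rewrite $(\Sigma + \kappa I)^{-1} \Sigma (\Sigma + \kappa I)^{-1} = \Sigma (\Sigma + \kappa I)^{-2}$, and therefore
\[
\Tr\!\left((\Sigma + \kappa I)^{-1} \Sigma (\Sigma + \kappa I)^{-1} \Matrixhom{1}\right) = \Tr\!\left(\Sigma (\Sigma + \kappa I)^{-2} \Matrixhom{1}\right),
\]
giving precisely the stated right-hand side.

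There is no substantive obstacle here beyond carefully invoking Lemma \ref{lemma:bach} and verifying that $A = \Sigma$ and $B = \Matrixhom{1}$ satisfy its hypotheses (bounded operator norm for $\Sigma$, and for $\Matrixhom{1}$ we note that $\beta_1$ has bounded norm almost surely under Assumption \ref{assumption:MP}, or one uses the expectation form as in the companion sublemmas). The arithmetic collapsing of the correction factor into $1/\DegreesFreedomStandard$ is the one spot to be careful, but it is entirely routine.
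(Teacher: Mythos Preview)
Your proposal is correct and follows essentially the same approach as the paper: apply Lemma~\ref{lemma:bach} with $A=\Sigma$ and $B=\Matrixhom{1}$, then collapse the main term and the degrees-of-freedom correction into the single factor $\frac{\kappa^2}{1-\tfrac{1}{N}\Tr(\Sigma^2(\Sigma+\kappa I)^{-2})}$. Your write-up is in fact slightly more explicit about the algebra (the factoring and the commutation step) than the paper's own one-line simplification.
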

\begin{proof}
We apply Lemma \ref{lemma:bach} to see that:
\begin{align*}
 &\lambda^2 \Tr((\hat{\Sigma} + \reg I)^{-1} \Sigma (\hat{\Sigma} + \reg I)^{-1} \Matrixhom{1}) \\
 &\sim \kappa^2 \Tr((\Sigma + \kappa I)^{-1} \Sigma (\Sigma  + \kappa I)^{-1} \Matrixhom{1}) + \frac{\frac{1}{N} \Tr(\Sigma^2 (\Sigma + \kappa I)^{-2})}{1 - \frac{1}{N} \Tr(\Sigma^2 (\Sigma + \kappa I)^{-2})} \\
 &= \frac{\kappa^2}{1 - \frac{1}{N} \Tr(\Sigma^2 (\Sigma + \kappa I)^{-2})} \cdot \Tr(\Sigma (\Sigma + \kappa I)^{-2} \Matrixhom{1}),
\end{align*}
as desired. 
\end{proof}

\subsection{Analysis of Term 2 (T2)}\label{appendix:term2}

We show the following deterministic equivalent for term 2. 
\begin{lemma}
\label{lemma:mainterm2}
Consider the setup of Lemma \ref{lemma:sollichvariant}, and assume the notation above. Then it holds that: 
 \begin{align*}
 &(1-\alpha)^2 \Tr\left(\hat{\Sigma}_2 (\hat{\Sigma} + \reg I)^{-1} \Sigma (\hat{\Sigma} + \reg I)^{-1} \hat{\Sigma}_2 \Matrixdiff \right)  \\
 &\sim \frac{(1-\alpha)^2}{1 - \frac{1}{N} \Tr(\Sigma^2 (\Sigma + \kappa I)^{-2})} \left( \Tr\left(\left( \Sigma + \kappa I \right)^{-1} \Sigma \left( \Sigma + \kappa I  \right)^{-1} \Sigma \Matrixdiff \Sigma \right)\right) \\
 &+ \frac{(1-\alpha)\frac{1}{N} \Tr(\Sigma^2 (\Sigma + \kappa I)^{-2})}{1 - \frac{1}{N} \Tr(\Sigma^2 (\Sigma + \kappa I)^{-2})}  \cdot  \left(\Tr\left(\Sigma \Matrixdiff  \right)- 2 (1-\alpha) \Tr\left( (\Sigma+ \kappa I)^{-1} \Sigma \Matrixdiff \Sigma \right)\right)
 \end{align*} 
\end{lemma}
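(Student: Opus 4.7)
The strategy is to condition on $\hat{\Sigma}_1$ so that $Z_1 := \tfrac{\alpha}{1-\alpha}\hat{\Sigma}_1 + \tfrac{\reg}{1-\alpha}I$ is treated as fixed, and then apply the Marčenko--Pastur-style deterministic equivalents for $R := (\hat{\Sigma}_2 + Z_1)^{-1}$ packaged in the sublemmas of Appendix \ref{appendix:usefulsublemmas}. By Claim \ref{claim:z1rearrange}, the $(1-\alpha)^2$ prefactor on the left combines with $((1-\alpha)(\hat{\Sigma}+\reg I))^{-1}$ to produce $R$, reducing the quantity to $\Tr(\hat{\Sigma}_2 R\Sigma R\hat{\Sigma}_2\Matrixdiff)$. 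I would then use the algebraic identity $\hat{\Sigma}_2 R = I - Z_1 R$ (and its transpose $R\hat{\Sigma}_2 = I - R Z_1$) to expand this as $\Tr(\Sigma\Matrixdiff) - \Tr(\Sigma R Z_1\Matrixdiff) - \Tr(Z_1 R\Sigma\Matrixdiff) + \Tr(Z_1 R\Sigma R Z_1\Matrixdiff)$, yielding a deterministic constant, two linear-in-$R$ traces that coincide by symmetry of $R,\Sigma,Z_1,\Matrixdiff$ plus cyclicity, and one quadratic-in-$R$ trace.

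For each linear trace $\Tr(R\Sigma\Matrixdiff Z_1)$, the MP equivalence $\Tr(R C) \sim \kappa_1\Tr((\Sigma + \kappa_1 Z_1)^{-1} C)$ (which appears inside the proof of Lemma \ref{lemma:equivalencelinear}) together with cyclicity and the algebraic identity $\kappa_1 Z_1(\Sigma+\kappa_1 Z_1)^{-1} = I - \Sigma(\Sigma+\kappa_1 Z_1)^{-1}$---obtained by expanding $(\Sigma+\kappa_1 Z_1)(\Sigma+\kappa_1 Z_1)^{-1}=I$---collapses the trailing $Z_1$ and reduces it to $\Tr(\Sigma\Matrixdiff) - \Tr((\Sigma + \kappa_1 Z_1)^{-1}\Sigma\Matrixdiff\Sigma)$. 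Lemma \ref{lemma:equivalencelinear} combined with $\reg\kappa_1 \sim \kappa$ from Lemma \ref{lemma:kappaequal} then converts the residual trace into $(1-\alpha)\Tr((\Sigma+\kappa I)^{-1}\Sigma\Matrixdiff\Sigma)$.

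The harder piece is the quadratic trace $\Tr(R\Sigma R Z_1\Matrixdiff Z_1)$. Applying Lemma \ref{lemma:equivalencequadratic} with $A=\Sigma$ and $B=Z_1\Matrixdiff Z_1$ expresses it in terms of $\kappa_1^2\,\Tr((\Sigma+\kappa_1 Z_1)^{-1}\Sigma (\Sigma+\kappa_1 Z_1)^{-1} Z_1\Matrixdiff Z_1)$ plus an $E_1$-correction, and Lemma \ref{lemma:equivalenceswitchtraceorder} shows that the resulting $(1-u)^{-1}$ factor on the $Z_1$ side matches $(1-\tfrac{1}{N}\Tr(\Sigma^2(\Sigma + \kappa I)^{-2}))^{-1}$---precisely the degrees-of-freedom denominator appearing in the claim. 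Using the $Z_1$-collapsing identity twice, once at each outer $Z_1$ via cyclicity, decomposes $\kappa_1^2(\Sigma+\kappa_1 Z_1)^{-1}\Sigma(\Sigma+\kappa_1 Z_1)^{-1}Z_1\Matrixdiff Z_1$ into a pure quadratic piece that Lemma \ref{lemma:equivalencequadratic} converts to $\Tr((\Sigma+\kappa I)^{-1}\Sigma(\Sigma+\kappa I)^{-1}\Sigma\Matrixdiff\Sigma)$, together with cross terms linear in $(\Sigma+\kappa_1 Z_1)^{-1}$ that are handled by Lemma \ref{lemma:equivalencelinear}; the $\kappa_1^{-1}$ factors produced by the identity interact cleanly with the $\kappa_1^2$ prefactor from MP so that only $\kappa$ remains after applying Lemma \ref{lemma:kappaequal}.

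Summing the constant, linear, and quadratic contributions yields the claimed three-term equivalent: the coefficient $\tfrac{(1-\alpha)^2}{1-\tfrac{1}{N}\Tr(\Sigma^2(\Sigma+\kappa I)^{-2})}$ on the pure quadratic trace comes directly from Step 4, while the coefficient in front of $\bigl(\Tr(\Sigma\Matrixdiff) - 2(1-\alpha)\Tr((\Sigma+\kappa I)^{-1}\Sigma\Matrixdiff\Sigma)\bigr)$ emerges when the linear-trace contributions are collected together with the analogous cross-term pieces extracted from the quadratic analysis. The main obstacle is the bookkeeping in the quadratic step: each invocation of the $Z_1$-collapsing identity generates multiple subterms that must be fed into different sublemmas, and the coefficients of $\Tr(\Sigma\Matrixdiff)$ and $\Tr((\Sigma + \kappa I)^{-1}\Sigma\Matrixdiff\Sigma)$ across all contributions must be carefully tracked so that they reorganize into the single factor $\tfrac{(1-\alpha)\cdot \tfrac{1}{N}\Tr(\Sigma^2(\Sigma + \kappa I)^{-2})}{1-\tfrac{1}{N}\Tr(\Sigma^2(\Sigma + \kappa I)^{-2})}$ stated in the lemma.
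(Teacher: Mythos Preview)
Your proposal is correct and follows essentially the same route as the paper. The only organizational difference is that the paper packages the first stage (expand $\hat{\Sigma}_2 R = I - Z_1 R$, apply the Mar\v{c}enko--Pastur step to each piece, then re-collapse via the $\kappa_1 Z_1$ identity) into the general Lemma~\ref{lemma:doublequadraticformMP} and its specialization Lemma~\ref{lemma:deterministicapprox}, obtaining the full $\kappa_1,Z_1$ expression first and only then invoking Lemmas~\ref{lemma:equivalencequadratic}, \ref{lemma:equivalencelinear}, \ref{lemma:equivalenceswitchtraceorder}, \ref{lemma:kappaequal} to pass to $\kappa,\Sigma$; you instead do this expansion inline and interleave the two stages, but the computations and bookkeeping are identical.
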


The key idea of the proof is to unwrap the randomness in layers. First, we condition on $\hat{\Sigma}_1$ and replace the randomness $\hat{\Sigma}_2$ with a deterministic equivalent where the effective regularizer $\kappa_1$ depends on $\hat{\Sigma}_1$ (Lemma \ref{lemma:deterministicapprox}). At this stage, we unfortunately cannot directly deal with the randomness $\hat{\Sigma}_1$ with deterministic equivalence due to the presence of terms $\kappa_1$ which depend on $\hat{\Sigma}_1$, and we instead apply the sublemmas from the previous section. 

The following lemma replaces the randomness $\hat{\Sigma}_2$ with a deterministic equivalent. 
\begin{lemma}
\label{lemma:deterministicapprox}
Consider the setup of Lemma \ref{lemma:sollichvariant}, and assume the notation above. Assume that $\mix < 1$. Let $Z_1 = \frac{\alpha}{1-\alpha} \hat{\Sigma}_1 + \frac{\reg}{1-\alpha} I$, and let $\kappa_1 = \kappa(1, N(1-\alpha), Z_1^{-1/2} \Sigma Z_1^{-1/2})$. Then it holds that:
 \begin{align*}
 &(1-\alpha)^2 \Tr\left( \hat{\Sigma}_2 (\hat{\Sigma} + \reg I)^{-1} \Sigma (\hat{\Sigma} + \reg I)^{-1} \hat{\Sigma}_2 \Matrixdiff \right) \\
 &\sim \frac{\Tr\left(\left( \Sigma + \kappa_1 Z_1 \right)^{-1} \Sigma \left( \Sigma + \kappa_1 Z_1  \right)^{-1} \Sigma \Matrixdiff \Sigma \right)}{1 - \frac{1}{N (1-\alpha)}\Tr((\Sigma + \kappa_1 Z_1)^{-1} \Sigma (\Sigma + \kappa_1 Z_1)^{-1} \Sigma)} \\
 &+ \frac{\frac{1}{N (1-\alpha)}\Tr((\Sigma + \kappa_1 Z_1)^{-1} \Sigma (\Sigma + \kappa_1 Z_1)^{-1} \Sigma)}{1 - \frac{1}{N (1-\alpha)}\Tr((\Sigma + \kappa_1 Z_1)^{-1} \Sigma (\Sigma + \kappa_1 Z_1)^{-1} \Sigma)} \cdot  \left(\Tr\left(\Sigma \Matrixdiff\right) - 2 \Tr\left(\Sigma (\Sigma + \kappa_1 Z_1)^{-1} \Sigma \Matrixdiff \right) \right).
 \end{align*}
\end{lemma}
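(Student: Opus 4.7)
The plan is to first use Claim \ref{claim:z1rearrange} to absorb $(1-\alpha)^2$ and rewrite the LHS as $\Tr(\hat{\Sigma}_2 R \Sigma R \hat{\Sigma}_2 \Matrixdiff)$ with $R := (\hat{\Sigma}_2 + Z_1)^{-1}$, then condition on $\hat{\Sigma}_1$ (equivalently on $Z_1$) and apply Marchenko--Pastur to the remaining randomness in $\hat{\Sigma}_2$, which is independent of $\hat{\Sigma}_1$. Under the change of variables $\hat{\Sigma}_2' := Z_1^{-1/2}\hat{\Sigma}_2 Z_1^{-1/2}$, conditionally on $Z_1$, $\hat{\Sigma}_2'$ is a sample covariance with $N(1-\alpha)$ samples, population covariance $\tilde{\Sigma} = Z_1^{-1/2}\Sigma Z_1^{-1/2}$, regularizer $1$, and effective regularizer precisely $\kappa_1$. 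Boundedness of $Z_1$ and $Z_1^{-1}$ (Claim \ref{claim:boundedoperatornorm}) preserves Assumption \ref{assumption:MP}, so the machinery of Lemma \ref{lemma:bach} applies.

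\textbf{Decomposition and MP.} The identity $\hat{\Sigma}_2 R = I - Z_1 R$ (with transpose $R\hat{\Sigma}_2 = I - R Z_1$) gives
\[\hat{\Sigma}_2 R \Sigma R \hat{\Sigma}_2 = \Sigma - Z_1 R \Sigma - \Sigma R Z_1 + Z_1 R \Sigma R Z_1,\]
splitting the target trace into a deterministic piece $\Tr(\Sigma\Matrixdiff)$, two equal linear-in-$R$ pieces, and one quadratic-in-$R$ piece $\Tr(R \Sigma R \cdot Z_1 \Matrixdiff Z_1)$. I would handle the linear pieces via the first-order MP equivalent $\Tr(RB) \sim \kappa_1 \Tr((\Sigma+\kappa_1 Z_1)^{-1}B)$ followed by the algebraic identity $\kappa_1 Z_1(\Sigma+\kappa_1 Z_1)^{-1} = I - \Sigma(\Sigma+\kappa_1 Z_1)^{-1}$, which converts each into $\Tr(\Sigma\Matrixdiff) - \Tr(\Sigma(\Sigma+\kappa_1 Z_1)^{-1}\Sigma\Matrixdiff)$. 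For the quadratic piece I would invoke the quadratic MP equivalent (of the type underlying Lemma \ref{lemma:equivalencequadratic}), which yields a main piece plus a degrees-of-freedom correction that compound into the overall prefactor $1/(1-\theta)$ with $\theta := \frac{1}{N(1-\alpha)}\Tr((\Sigma+\kappa_1 Z_1)^{-1}\Sigma(\Sigma+\kappa_1 Z_1)^{-1}\Sigma)$. Applying the same identity twice to reduce the $\kappa_1^2 Z_1 \Matrixdiff Z_1$ factor then produces $\Tr((\Sigma+\kappa_1 Z_1)^{-1}\Sigma(\Sigma+\kappa_1 Z_1)^{-1}\Sigma\Matrixdiff\Sigma)$ together with the lower-order combination $\Tr(\Sigma\Matrixdiff) - 2\Tr(\Sigma(\Sigma+\kappa_1 Z_1)^{-1}\Sigma\Matrixdiff)$.

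\textbf{Assembly and main obstacle.} Collecting the three contributions and separating $\theta$-independent from $\theta$-dependent parts, the $\theta$-independent pieces (the deterministic $\Tr(\Sigma\Matrixdiff)$, the linear contributions $-2[\Tr(\Sigma\Matrixdiff) - \Tr(\Sigma(\Sigma+\kappa_1 Z_1)^{-1}\Sigma\Matrixdiff)]$, and the lower-order part of the expanded quadratic piece) cancel exactly in a short line of algebra, leaving precisely the two-term expression in the lemma: the main term $\Tr((\Sigma+\kappa_1 Z_1)^{-1}\Sigma(\Sigma+\kappa_1 Z_1)^{-1}\Sigma\Matrixdiff\Sigma)/(1-\theta)$ plus the correction $\frac{\theta}{1-\theta}[\Tr(\Sigma\Matrixdiff) - 2\Tr(\Sigma(\Sigma+\kappa_1 Z_1)^{-1}\Sigma\Matrixdiff)]$. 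The main obstacle is establishing the quadratic MP equivalent in the setting where the ``regularizer'' is the random matrix $Z_1$ rather than a scalar; this requires care in tracking how the degrees-of-freedom factor transforms under the variable change $\hat{\Sigma}_2 \mapsto Z_1^{-1/2}\hat{\Sigma}_2 Z_1^{-1/2}$ and in checking bounded operator norms of the probe matrices (e.g., $Z_1 \Matrixdiff Z_1$), which follows from Claim \ref{claim:boundedoperatornorm} combined with the boundedness of $\Matrixdiff$ implicit in Assumption \ref{assumption:MP}.
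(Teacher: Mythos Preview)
Your proposal is correct and follows essentially the same route as the paper. The paper simply packages your decomposition-plus-MP step into the pre-proved Lemma \ref{lemma:doublequadraticformMP} (whose proof in Appendix \ref{appendix:RMTextended} performs exactly your $\hat{\Sigma}_2 R = I - Z_1 R$ expansion, then applies the linear and quadratic MP equivalents Lemma \ref{lemma:usefulmatrixproperties} and Lemma \ref{lemma:quadraticformbasic}), and then expands the resulting error term via the same identity $\kappa_1 Z_1(\Sigma+\kappa_1 Z_1)^{-1}=I-\Sigma(\Sigma+\kappa_1 Z_1)^{-1}$ you use; your inline version and the paper's packaged version are the same computation. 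One small pointer: the quadratic MP equivalent you invoke is Lemma \ref{lemma:quadraticformbasic}, not Lemma \ref{lemma:equivalencequadratic} (the latter is a different lemma used later to pass from $\kappa_1 Z_1$ to $\kappa I$ in the proof of Lemma \ref{lemma:mainterm2}).
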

\begin{proof}
By Claim \ref{claim:z1rearrange} we have that:
\begin{align*}
(1-\alpha)^2 \Tr\left( \hat{\Sigma}_2 (\hat{\Sigma} + \reg I)^{-1} \Sigma (\hat{\Sigma} + \reg I)^{-1} \hat{\Sigma}_2 \Matrixdiff \right) &= \Tr\left(\hat{\Sigma}_2 \left(\hat{\Sigma}_2  + Z_1 \right)^{-1} \Sigma   \left(\hat{\Sigma}_2  + Z_1 \right)^{-1} \hat{\Sigma}_2 \Matrixdiff \right) \\
&\sim_{(A)} \Tr\left(\Sigma (\Sigma + \kappa_1 Z_1)^{-1}  \Sigma (\Sigma + \kappa_1 Z_1)^{-1} \Sigma \Matrixdiff \right) + E\\
&= \Tr\left((\Sigma + \kappa_1 Z_1)^{-1}  \Sigma (\Sigma + \kappa_1 Z_1)^{-1} \Sigma \Matrixdiff \Sigma\right) + E
\end{align*}
where (A) follows from Lemma \ref{lemma:doublequadraticformMP} and Claim \ref{claim:boundedoperatornorm}, and $E$ is defined such that 
\begin{align*}
E &:=\frac{\frac{1}{N (1-\alpha)} \Tr((\Sigma + \kappa_1 Z_1)^{-1} \Sigma (\Sigma + \kappa_1 Z_1)^{-1} \Sigma)}{1 - \frac{1}{N (1-\alpha)} \Tr(\Sigma + \kappa_1 Z_1)^{-1} \Sigma (\Sigma + \kappa_1 Z_1)^{-1} \Sigma)} \cdot (\kappa_1)^2 \Tr\left(Z_1 \left( \Sigma  + \kappa_1 Z_1 \right)^{-1} \Sigma \left( \Sigma + \kappa_1 Z_1 \right)^{-1} Z_1 \Matrixdiff \right).
\end{align*}
and $\kappa_1 = \kappa(\lambda, N(1-\alpha), Z_1^{-1/2} \Sigma Z_1^{-1/2})$.

Note that:
\begin{align*}
 &(\kappa_1)^2 \Tr\left(Z_1 \left( \Sigma  + \kappa_1 Z_1 \right)^{-1} \Sigma \left( \Sigma + \kappa_1 Z_1 \right)^{-1} Z_1 \Matrixdiff \right) \\
 &= \Tr\left((\kappa_1 Z_1) \left( \Sigma  + \kappa_1 Z_1 \right)^{-1} \Sigma \left( \Sigma + \kappa_1 Z_1 \right)^{-1} (\kappa_1 Z_1) \Matrixdiff \right) \\
 &=  \Tr\left(\left( I - \Sigma (\Sigma + \kappa_1 Z_1)^{-1} \right) \Sigma \left( I - \Sigma  (\Sigma + \kappa_1 Z_1)^{-1} \right)^T \Matrixdiff \right) \\
 &= \Tr\left(\Sigma \Matrixdiff\right) - 2 \Tr\left((\Sigma + \kappa_1 Z_1)^{-1} \Sigma \Matrixdiff \Sigma\right) + \Tr\left((\Sigma + \kappa_1Z_1)^{-1} \Sigma (\Sigma + \kappa_1Z_1)^{-1} \Sigma \Matrixdiff \Sigma \right).
\end{align*}

Note that:
\begin{align*}
&\Tr\left((\Sigma + \kappa_1Z_1)^{-1} \Sigma (\Sigma + \kappa_1Z_1)^{-1} \Sigma \Matrixdiff \Sigma \right) \\
&+ \Tr\left((\Sigma + \kappa_1Z_1)^{-1} \Sigma (\Sigma + \kappa_1Z_1)^{-1} \Sigma \Matrixdiff \Sigma \right) \cdot \frac{\frac{1}{N (1-\alpha)} \Tr((\Sigma + \kappa_1 Z_1)^{-1} \Sigma (\Sigma + \kappa_1 Z_1)^{-1} \Sigma)}{1 - \frac{1}{N (1-\alpha)} \Tr(\Sigma + \kappa_1 Z_1)^{-1} \Sigma (\Sigma + \kappa_1 Z_1)^{-1} \Sigma)} \\
&=   \frac{\Tr\left((\Sigma + \kappa_1Z_1)^{-1} \Sigma (\Sigma + \kappa_1Z_1)^{-1} \Sigma \Matrixdiff \Sigma \right)}{1 - \frac{1}{N (1-\alpha)} \Tr(\Sigma + \kappa_1 Z_1)^{-1} \Sigma (\Sigma + \kappa_1 Z_1)^{-1} \Sigma)} 
\end{align*}
\end{proof}

Now we are ready to prove Lemma \ref{lemma:mainterm2}.
\begin{proof}[Proof of Lemma \ref{lemma:mainterm2}]
The statement follows trivially if $\alpha = 1$. By Lemma \ref{lemma:deterministicapprox}, it holds that:
 \begin{align*}
 &(1-\alpha)^2 \Tr\left( \hat{\Sigma}_2 (\hat{\Sigma} + \reg I)^{-1} \Sigma (\hat{\Sigma} + \reg I)^{-1} \hat{\Sigma}_2 \Matrixdiff \right) \\
 &\sim \frac{\Tr\left(\left( \Sigma + \kappa_1 Z_1 \right)^{-1} \Sigma \left( \Sigma + \kappa_1 Z_1  \right)^{-1} \Sigma \Matrixdiff \Sigma \right)}{1 - \frac{1}{N (1-\alpha)} \Tr((\Sigma + \kappa_1 Z_1)^{-1} \Sigma (\Sigma + \kappa_1 Z_1)^{-1} \Sigma )}  \\
 &+ \frac{\frac{1}{N (1-\alpha)} \Tr((\Sigma + \kappa_1 Z_1)^{-1} \Sigma (\Sigma + \kappa_1 Z_1)^{-1} \Sigma )}{1 - \frac{1}{N (1-\alpha)} \Tr((\Sigma + \kappa_1 Z_1)^{-1} \Sigma (\Sigma + \kappa_1 Z_1)^{-1} \Sigma )} \cdot  \left(\Tr\left(\Sigma \Matrixdiff  \right)- 2 \Tr\left((\Sigma + \kappa_1 Z_1)^{-1} \Sigma \Matrixdiff  \Sigma  \right)\right) \\
 &\sim_{(A)} (1-\alpha)^2 \left( \Tr\left(\left( \Sigma + \kappa I \right)^{-1} \Sigma \left( \Sigma + \kappa I  \right)^{-1} \Sigma \Matrixdiff \Sigma \right)\right) \\
 &+  \frac{\frac{1}{N} \Tr(\Sigma^2 (\Sigma + \kappa I)^{-2})}{1 - \frac{1}{N} \Tr(\Sigma^2 (\Sigma + \kappa I)^{-2})}  \cdot (1-\alpha)^2 \cdot  \Tr\left((\Sigma + \kappa I)^{-1} \Sigma (\Sigma + \kappa I)^{-1}  \Sigma \Matrixdiff \Sigma \right) \\
 &+ \frac{\frac{1}{N (1-\alpha)} \Tr((\Sigma + \kappa_1 Z_1)^{-1} \Sigma (\Sigma + \kappa_1 Z_1)^{-1} \Sigma )}{1 - \frac{1}{N (1-\alpha)} \Tr((\Sigma + \kappa_1 Z_1)^{-1} \Sigma (\Sigma + \kappa_1 Z_1)^{-1} \Sigma )} \cdot  \left(\Tr\left(\Sigma \Matrixdiff  \right)- 2(1-\alpha) \Tr\left( (\Sigma+ \kappa I)^{-1} \Sigma \Matrixdiff \Sigma \right)\right) \\
   &= \frac{(1-\alpha)^2}{1 - \frac{1}{N} \Tr(\Sigma^2 (\Sigma + \kappa I)^{-2})} \left( \Tr\left(\left( \Sigma + \kappa I \right)^{-1} \Sigma \left( \Sigma + \kappa I  \right)^{-1} \Sigma \Matrixdiff \Sigma \right)\right) \\
 &+ \frac{\frac{1}{N (1-\alpha)} \Tr(\Sigma^2 (\Sigma + \kappa_1 Z_1)^{-2})}{1 - \frac{1}{N (1-\alpha)} \Tr(\Sigma^2 (\Sigma + \kappa_1 Z_1)^{-2})}  \cdot  \left(\Tr\left(\Sigma \Matrixdiff  \right)- 2(1-\alpha) \Tr\left( (\Sigma+ \kappa I)^{-1} \Sigma \Matrixdiff \Sigma \right)\right) \\
    &\sim _{(B)} \frac{(1-\alpha)^2}{1 - \frac{1}{N} \Tr(\Sigma^2 (\Sigma + \kappa I)^{-2})} \left( \Tr\left(\left( \Sigma + \kappa I \right)^{-1} \Sigma \left( \Sigma + \kappa I  \right)^{-1} \Sigma \Matrixdiff \Sigma \right)\right) \\
 &+ (1-\alpha) \frac{\frac{1}{N} \Tr(\Sigma^2 (\Sigma + \kappa I)^{-2})}{1 - \frac{1}{N} \Tr(\Sigma^2 (\Sigma + \kappa I)^{-2})}  \cdot  \left(\Tr\left(\Sigma \Matrixdiff  \right)- 2(1-\alpha) \Tr\left( (\Sigma+ \kappa I)^{-1} \Sigma \Matrixdiff \Sigma \right)\right)
 \end{align*} 
 where (A) applies Lemma \ref{lemma:equivalencequadratic}, Lemma \ref{lemma:equivalencelinear}, and (B) uses Lemma \ref{lemma:equivalenceswitchtraceorder} and Lemma \ref{lemma:kappaequal}. 

\end{proof}

\subsection{Analysis of Term 3 (T3)}\label{appendix:term3}

We show the following deterministic equivalent for term 3. 

\begin{lemma}
\label{lemma:mainterm3}
Consider the setup of Lemma \ref{lemma:sollichvariant} and assume the notation above. Let $\Matrixmixed{1} =  (\beta_1 - \beta_2) \beta_1^T$, and let $\kappa = \kappa(\lambda, N, \Sigma)$. Then it holds that: 
 \begin{align*}
 &2 \lambda (1-\alpha) \Tr\left( (\hat{\Sigma} + \reg I)^{-1} \Sigma (\hat{\Sigma} + \reg I)^{-1} \hat{\Sigma}_2\Matrixmixed{1} \right)  \\
 &\sim \frac{2 (1-\alpha) \kappa}{1 - \frac{1}{N} \Tr(\Sigma^2 (\Sigma + \kappa I)^{-2})} \Tr\left((\Sigma + \kappa I)^{-1}  \Sigma (\Sigma + \kappa I)^{-1} \Sigma  \Matrixmixed{1} \right) \\
 &- 2 \frac{(1-\alpha) \frac{1}{N} \Tr(\Sigma^2 (\Sigma + \kappa I)^{-2})}{1 - \frac{1}{N} \Tr(\Sigma^2 (\Sigma + \kappa I)^{-2})} \cdot  \kappa \Tr\left(  (\Sigma + \kappa I)^{-1} \Sigma \Matrixmixed{1} \right) 
 \end{align*} 
\end{lemma}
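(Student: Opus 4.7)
My plan is to mirror the proof of Lemma \ref{lemma:mainterm2}, handling the randomness in two stages: first condition on $\hat{\Sigma}_1$ and apply the Marčenko-Pastur machinery to the $\hat{\Sigma}_2$ randomness, then convert the resulting expression (which involves the $\hat{\Sigma}_1$-dependent quantities $\kappa_1$ and $Z_1$) into the stated form using the sublemmas in Appendix \ref{appendix:usefulsublemmas}. The only structural difference from the proof of Lemma \ref{lemma:mainterm2} is that here the inner matrix is linear in $\hat{\Sigma}_2$ rather than quadratic, which makes the decomposition step slightly simpler but the algebraic simplification at the end a bit more delicate.

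The first step is to invoke Claim \ref{claim:z1rearrange} to write the quantity as
\begin{equation*}
2\lambda(1-\alpha)\Tr\!\left((\hat{\Sigma}+\lambda I)^{-1}\Sigma(\hat{\Sigma}+\lambda I)^{-1}\hat{\Sigma}_2\Matrixmixed{1}\right) = \frac{2\lambda}{1-\alpha}\Tr\!\left((\hat{\Sigma}_2+Z_1)^{-1}\Sigma(\hat{\Sigma}_2+Z_1)^{-1}\hat{\Sigma}_2\Matrixmixed{1}\right).
\end{equation*}
I then use the identity $\hat{\Sigma}_2 = (\hat{\Sigma}_2+Z_1) - Z_1$ to split this into the sum of a single-resolvent trace $\Tr((\hat{\Sigma}_2+Z_1)^{-1}\Sigma\Matrixmixed{1})$ and a double-resolvent trace $-\Tr((\hat{\Sigma}_2+Z_1)^{-1}\Sigma(\hat{\Sigma}_2+Z_1)^{-1} Z_1 \Matrixmixed{1})$. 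Conditional on $\hat{\Sigma}_1$ (so $Z_1$ is deterministic), I apply the single-resolvent Marčenko-Pastur estimate (Lemma \ref{lemma:bach}) to the first piece and the double-resolvent version (Lemma \ref{lemma:doublequadraticformMP}) to the second, yielding a deterministic equivalent in $(\kappa_1, Z_1)$ that includes a correction term structurally analogous to the one appearing in Lemma \ref{lemma:deterministicapprox}.

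Next, the algebraic identity $\kappa_1 Z_1 (\Sigma + \kappa_1 Z_1)^{-1} = I - \Sigma(\Sigma+\kappa_1 Z_1)^{-1}$ makes the two ``main'' pieces telescope: $\kappa_1\Tr((\Sigma + \kappa_1 Z_1)^{-1}\Sigma\Matrixmixed{1}) - \kappa_1^2\Tr((\Sigma + \kappa_1 Z_1)^{-1}\Sigma(\Sigma+\kappa_1 Z_1)^{-1} Z_1 \Matrixmixed{1})$ collapses to $\kappa_1\Tr((\Sigma+\kappa_1 Z_1)^{-1}\Sigma(\Sigma+\kappa_1 Z_1)^{-1}\Sigma \Matrixmixed{1})$. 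From there, Lemma \ref{lemma:equivalencequadratic} converts this into the $\kappa,\Sigma$ form (the main term in the lemma statement, once Lemma \ref{lemma:kappaequal} is used to turn the prefactor $\frac{2\lambda}{1-\alpha}\kappa_1$ into $2\kappa$), while Lemma \ref{lemma:equivalenceswitchtraceorder} combined with Lemma \ref{lemma:equivalencelinear} handles the leftover cross term, producing the second (subtracted) term.

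The main obstacle will be bookkeeping in this final algebraic step. The cross-term correction from the double-resolvent deterministic equivalent carries the normalization $\frac{1}{N}\Tr(\Sigma^2(\Sigma+\kappa I)^{-2})/(1-\tfrac{1}{N}\Tr(\Sigma^2(\Sigma+\kappa I)^{-2}))$, but I need to check that the correction contributed by $A = \Sigma$ and $B = Z_1\Matrixmixed{1}$ (after applying Lemma \ref{lemma:equivalencequadratic}) cleanly reduces to the quantity $\kappa\Tr((\Sigma+\kappa I)^{-1}\Sigma\Matrixmixed{1})$ that appears in the stated result. This reduction again relies on applying $\kappa_1 Z_1(\Sigma+\kappa_1 Z_1)^{-1} = I - \Sigma(\Sigma+\kappa_1 Z_1)^{-1}$ inside the correction trace, followed by another invocation of Lemma \ref{lemma:equivalencelinear} and Lemma \ref{lemma:kappaequal}; verifying that the signs and prefactors line up to give exactly the claimed two-term expression is the most error-prone part of the argument.
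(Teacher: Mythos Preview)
Your approach is essentially the same as the paper's: the paper packages your ``split $\hat{\Sigma}_2 = (\hat{\Sigma}_2+Z_1)-Z_1$ and apply resolvent estimates'' step into Lemma~\ref{lemma:singlequadraticformMP} (invoked inside Lemma~\ref{lemma:deterministicapproxterm3}), then carries out the identical conversion to $(\kappa,\Sigma)$ form via Lemmas~\ref{lemma:equivalencequadratic}, \ref{lemma:equivalencelinear}, \ref{lemma:equivalenceswitchtraceorder}, and~\ref{lemma:kappaequal}. One minor correction: your second piece $\Tr\!\left((\hat{\Sigma}_2+Z_1)^{-1}\Sigma(\hat{\Sigma}_2+Z_1)^{-1} Z_1 \Matrixmixed{1}\right)$ has no outer $\hat{\Sigma}_2$ factors, so the relevant double-resolvent estimate is Lemma~\ref{lemma:quadraticformbasic}, not Lemma~\ref{lemma:doublequadraticformMP}.
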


The analysis follows a similar structure to the analysis of (T2); we similarly unwrap the randomness in layers.  

\begin{lemma}
\label{lemma:deterministicapproxterm3}
Consider the setup of Lemma \ref{lemma:sollichvariant} and assume the notation above. Assume $\mix < 1$. Let $Z_1 = \frac{\alpha}{1-\alpha} \hat{\Sigma}_1 + \frac{\reg}{1-\alpha} I$, and let $\kappa_1 = \kappa(1, N(1-\alpha), Z_1^{-1/2} \Sigma Z_1^{-1/2})$. Then it holds that:
 \begin{align*}
 &2 \lambda (1-\alpha)^2 \Tr\left( (\hat{\Sigma} + \reg I)^{-1} \Sigma (\hat{\Sigma} + \reg I)^{-1} \hat{\Sigma}_2 \Matrixmixed{1} \right) \\
 &\sim 2 \frac{\lambda \kappa_1}{(1-\alpha)} \frac{\Tr\left((\Sigma + \kappa_1 Z_1)^{-1}  \Sigma (\Sigma + \kappa_1 Z_1)^{-1}  \Sigma \Matrixmixed{1} \right)}{1 - \frac{1}{N (1-\alpha)} \Tr((\Sigma + \kappa_1 Z_1)^{-1} \Sigma(\Sigma + \kappa_1 Z_1)^{-1} \Sigma )} \\
 &- 2 \frac{\lambda \kappa_1}{(1-\alpha)} 
 \cdot \frac{\frac{1}{N (1-\alpha)} \Tr(\Sigma^2 (\Sigma + \kappa_1 Z_1)^{-2})}{1 - \frac{1}{N (1-\alpha)} \Tr((\Sigma + \kappa_1 Z_1)^{-1} \Sigma(\Sigma + \kappa_1 Z_1)^{-1} \Sigma )} \cdot  \Tr\left(\Sigma(\Sigma + \kappa_1 Z_1)^{-1} \Sigma \Matrixmixed{1} \right).
 \end{align*}
\end{lemma}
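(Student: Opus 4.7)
The plan is to follow the strategy used for the analogous statement Lemma \ref{lemma:deterministicapprox} (for term (T2)), adapted to the fact that only \emph{one} factor of $\hat{\Sigma}_2$ sits between the two resolvents here. First, I would apply Claim \ref{claim:z1rearrange} to replace $(\hat{\Sigma}+\lambda I)^{-1}$ with $(1-\alpha)^{-1}(\hat{\Sigma}_2+Z_1)^{-1}$, which folds the $\hat{\Sigma}_1$-randomness into the (conditionally fixed) anisotropic regularizer $Z_1$ and reduces the left-hand side to a scalar multiple of $\Tr\bigl((\hat{\Sigma}_2+Z_1)^{-1}\Sigma(\hat{\Sigma}_2+Z_1)^{-1}\hat{\Sigma}_2 \Matrixmixed{1}\bigr)$.

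Next, I would use the algebraic splitting $\hat{\Sigma}_2 = (\hat{\Sigma}_2+Z_1) - Z_1$ to break this trace into a linear-in-resolvent piece $\Tr\bigl((\hat{\Sigma}_2+Z_1)^{-1}\Sigma\Matrixmixed{1}\bigr)$ and a quadratic-in-resolvent piece $\Tr\bigl((\hat{\Sigma}_2+Z_1)^{-1}\Sigma(\hat{\Sigma}_2+Z_1)^{-1}Z_1\Matrixmixed{1}\bigr)$. Conditional on $\hat{\Sigma}_1$, both admit deterministic equivalents via the Marčenko--Pastur machinery in Appendix \ref{appendix:machinery}, applied after the change of variables $\tilde{\Sigma}_2 := Z_1^{-1/2}\hat{\Sigma}_2 Z_1^{-1/2}$ under which $\kappa_1$ is precisely the effective regularizer. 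The linear trace yields $\kappa_1\Tr\bigl((\Sigma+\kappa_1 Z_1)^{-1}\Sigma\Matrixmixed{1}\bigr)$ with no degrees-of-freedom correction, while the quadratic trace yields a leading term $\kappa_1^2\Tr\bigl((\Sigma+\kappa_1 Z_1)^{-1}\Sigma(\Sigma+\kappa_1 Z_1)^{-1}Z_1\Matrixmixed{1}\bigr)$ together with a correction of the same shape as the $E$-term in the proof of Lemma \ref{lemma:deterministicapprox}, carrying the characteristic denominator $1-\frac{1}{N(1-\alpha)}\Tr(\Sigma^2(\Sigma+\kappa_1 Z_1)^{-2})$.

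To finish, I would simplify using the resolvent identities $\kappa_1(\Sigma+\kappa_1 Z_1)^{-1}Z_1 = I - (\Sigma+\kappa_1 Z_1)^{-1}\Sigma$ and $\kappa_1 Z_1(\Sigma+\kappa_1 Z_1)^{-1} = I - \Sigma(\Sigma+\kappa_1 Z_1)^{-1}$. Applied to the leading part of the quadratic term, the first identity collapses it into $\kappa_1\Tr\bigl((\Sigma+\kappa_1 Z_1)^{-1}\Sigma\Matrixmixed{1}\bigr) - \kappa_1\Tr\bigl((\Sigma+\kappa_1 Z_1)^{-1}\Sigma(\Sigma+\kappa_1 Z_1)^{-1}\Sigma\Matrixmixed{1}\bigr)$, the first piece of which exactly cancels the linear trace after subtraction, leaving the first summand of the target RHS. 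Applying the same identities to the correction term recasts it in terms of $\Tr\bigl(\Sigma(\Sigma+\kappa_1 Z_1)^{-1}\Sigma\Matrixmixed{1}\bigr)$, matching the second summand. Restoring the prefactor from step one and placing both pieces over the common denominator $1-c$ (with $c$ as above) recovers the stated equivalence.

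The main obstacle I expect is the noncommutativity of $\Sigma$ and $Z_1$: each application of the resolvent identity must be done on the correct side, and converting the correction term from its natural post-MP form (involving $(\Sigma+\kappa_1 Z_1)^{-1}Z_1$) into the target form (involving $\Sigma(\Sigma+\kappa_1 Z_1)^{-1}\Sigma$) takes several careful uses of trace cyclicity together with these identities. A secondary subtlety is pinning down which Marčenko--Pastur result in Appendix \ref{appendix:machinery} (Lemma \ref{lemma:bach} versus Lemma \ref{lemma:doublequadraticformMP}) delivers the quadratic-in-resolvent equivalent in exactly the form needed, and verifying that the sample size in the degrees-of-freedom correction is $N(1-\alpha)$ rather than $N$.
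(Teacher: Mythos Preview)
Your proposal is correct and follows essentially the same route as the paper. The paper simply packages your ``split $\hat{\Sigma}_2=(\hat{\Sigma}_2+Z_1)-Z_1$ and apply Mar\v{c}enko--Pastur to each piece'' step into the standalone Lemma \ref{lemma:singlequadraticformMP} (invoked with $A=\Sigma$, $B=\Matrixmixed{1}$, $Z=Z_1$, $\lambda=1$, $M=N(1-\alpha)$), after which it performs the same resolvent-identity simplification $\kappa_1(\Sigma+\kappa_1 Z_1)^{-1}Z_1=I-(\Sigma+\kappa_1 Z_1)^{-1}\Sigma$ that you outline; so your uncertainty about ``which result in Appendix \ref{appendix:machinery}'' is resolved by Lemma \ref{lemma:singlequadraticformMP} (equivalently, Lemma \ref{lemma:usefulmatrixproperties} for the linear trace and Lemma \ref{lemma:quadraticformbasic} for the quadratic one).
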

\begin{proof}
By Claim \ref{claim:z1rearrange} we have that:
\begin{align*}
2 \lambda (1-\alpha) \Tr\left( (\hat{\Sigma} + \reg I)^{-1} \Sigma (\hat{\Sigma} + \reg I)^{-1} \hat{\Sigma}_2 \Matrixmixed{1} \right) &= 2 \frac{\lambda}{(1-\alpha)} \Tr\left( \left(\hat{\Sigma}_2  + Z_1 \right)^{-1} \Sigma   \left(\hat{\Sigma}_2  + Z_1 \right)^{-1}  \hat{\Sigma}_2 \Matrixmixed{1} \right) \\
&\sim_{(A)} 2 \frac{\lambda }{(1-\alpha)}  \left(\kappa_1 \Tr\left( (\Sigma + \kappa_1 Z_1)^{-1}  \Sigma (\Sigma + \kappa_1 Z_1)^{-1} \Sigma \Matrixmixed{1} \right) - E \right)
\end{align*}
where (A) follows from Lemma \ref{lemma:singlequadraticformMP} and Claim \ref{claim:boundedoperatornorm}, and $E$ is defined such that 
\begin{align*}
E &:=\frac{\frac{1}{N (1-\alpha)} \Tr((\Sigma + \kappa_1 Z_1)^{-1} \Sigma (\Sigma + \kappa_1 Z_1)^{-1}  \Sigma )}{1 - \frac{1}{N (1-\alpha)} \Tr((\Sigma + \kappa_1 Z_1)^{-1} \Sigma (\Sigma + \kappa_1 Z_1)^{-1}  \Sigma )} \cdot (\kappa_1)^2 \Tr\left(\left( \Sigma  + \kappa_1 Z_1 \right)^{-1} \Sigma \left( \Sigma + \kappa_1 Z_1 \right)^{-1} Z_1  \Matrixmixed{1} \right).
\end{align*}
and $\kappa_1 = \kappa(\lambda, N(1-\alpha), Z_1^{-1/2} \Sigma Z_1^{-1/2})$. 

Note that:
\begin{align*}
 &(\kappa_1)^2 \Tr\left( \left( \Sigma  + \kappa_1 Z_1 \right)^{-1} \Sigma \left( \Sigma + \kappa_1 Z_1 \right)^{-1} Z_1 \Matrixmixed{1} \right) \\
 &= \kappa_1 \Tr\left(\left( \Sigma  + \kappa_1 Z_1 \right)^{-1} \Sigma \left( \Sigma + \kappa_1 Z_1 \right)^{-1} (\kappa_1 Z_1)  \Matrixmixed{1} \right) \\
 &=  \kappa_1 \Tr\left( (\Sigma + \kappa_1 Z_1)^{-1}  \Sigma \left( I - (\Sigma + \kappa_1 Z_1)^{-1}  \Sigma \right) \Matrixmixed{1} \right) \\
 &=  \kappa_1 \Tr\left(  (\Sigma + \kappa_1 Z_1)^{-1} \Sigma \Matrixmixed{1}\right) - \kappa_1 \Tr\left((\Sigma + \kappa_1Z_1)^{-1} \Sigma (\Sigma + \kappa_1Z_1)^{-1}  \Sigma \Matrixmixed{1} \right) \\
\end{align*}

Moreover, note that:
\begin{align*}
& 2 \frac{\lambda \kappa_1 }{(1-\alpha)} \Tr\left( (\Sigma + \kappa_1 Z_1)^{-1}  \Sigma (\Sigma + \kappa_1 Z_1)^{-1} \Sigma \Matrixmixed{1} \right) \\
&+ 2 \frac{\lambda }{(1-\alpha)} \cdot \frac{\frac{1}{N (1-\alpha)} \Tr((\Sigma + \kappa_1 Z_1)^{-1} \Sigma (\Sigma + \kappa_1 Z_1)^{-1}  \Sigma )}{1 - \frac{1}{N (1-\alpha)} \Tr((\Sigma + \kappa_1 Z_1)^{-1} \Sigma (\Sigma + \kappa_1 Z_1)^{-1}  \Sigma )} \cdot \kappa_1 \Tr\left((\Sigma + \kappa_1Z_1)^{-1} \Sigma (\Sigma + \kappa_1Z_1)^{-1}  \Sigma \Matrixmixed{1} \right) \\
&= 2 \frac{\lambda }{(1-\alpha)} \frac{\Tr\left((\Sigma + \kappa_1Z_1)^{-1} \Sigma (\Sigma + \kappa_1Z_1)^{-1}  \Sigma \Matrixmixed{1} \right)}{1 - \frac{1}{N (1-\alpha)} \Tr((\Sigma + \kappa_1 Z_1)^{-1} \Sigma (\Sigma + \kappa_1 Z_1)^{-1}  \Sigma )} \cdot \kappa_1.
\end{align*}

\end{proof}

Now we are ready to prove Lemma \ref{lemma:mainterm2}.
\begin{proof}[Proof of Lemma \ref{lemma:mainterm2}]

The statement follows trivially if $\alpha = 1$. 
By Lemma \ref{lemma:deterministicapprox}, it holds that:
 \begin{align*}
 &2 \lambda (1-\alpha)^2 \Tr\left( (\hat{\Sigma} + \reg I)^{-1} \Sigma (\hat{\Sigma} + \reg I)^{-1} \hat{\Sigma}_2 \Matrixmixed{1} \right) \\
 &\sim 2 \frac{\lambda \kappa_1}{(1-\alpha)} \frac{\Tr\left((\Sigma + \kappa_1 Z_1)^{-1}  \Sigma (\Sigma + \kappa_1 Z_1)^{-1} \Sigma \Matrixmixed{1} \right)}{1 - \frac{1}{N (1-\alpha)} \Tr((\Sigma + \kappa_1 Z_1)^{-1} \Sigma (\Sigma + \kappa_1 Z_1)^{-1}  \Sigma )}  \\
 &- \frac{\frac{1}{N (1-\alpha)} \Tr((\Sigma + \kappa_1 Z_1)^{-1} \Sigma (\Sigma + \kappa_1 Z_1)^{-1}  \Sigma )}{1 - \frac{1}{N (1-\alpha)} \Tr((\Sigma + \kappa_1 Z_1)^{-1} \Sigma (\Sigma + \kappa_1 Z_1)^{-1}  \Sigma )}  \cdot  2 \frac{\lambda \kappa_1}{(1-\alpha)} \Tr\left((\Sigma + \kappa_1 Z_1)^{-1} \Sigma \Matrixmixed{1}\right) \\
  &\sim_{(A)} 2 (1-\alpha) \kappa \Tr\left((\Sigma + \kappa I)^{-1}  \Sigma (\Sigma + \kappa I)^{-1} \Sigma \Matrixmixed{1} \right) \\
  &+ 2 (1-\alpha) \kappa \frac{\frac{1}{N} \Tr(\Sigma^2 (\Sigma + \kappa I)^{-2})}{1 - \frac{1}{N} \Tr(\Sigma^2 (\Sigma + \kappa I)^{-2})} \Tr\left((\Sigma + \kappa I)^{-1}  \Sigma (\Sigma + \kappa I)^{-1} \Sigma \Matrixmixed{1} \right) \\
 &- 2 \frac{\frac{1}{N (1-\alpha)} \Tr((\Sigma + \kappa_1 Z_1)^{-1} \Sigma (\Sigma + \kappa_1 Z_1)^{-1}  \Sigma )}{1 - \frac{1}{N (1-\alpha)} \Tr((\Sigma + \kappa_1 Z_1)^{-1} \Sigma (\Sigma + \kappa_1 Z_1)^{-1}  \Sigma )}  \cdot  \kappa \Tr\left(  (\Sigma + \kappa I)^{-1} \Sigma \Matrixmixed{1} \right) \\
   &= 2 \frac{(1-\alpha) \kappa}{1 - \frac{1}{N} \Tr(\Sigma^2 (\Sigma + \kappa I)^{-2})} \Tr\left((\Sigma + \kappa I)^{-1}  \Sigma (\Sigma + \kappa I)^{-1} \Sigma \Matrixmixed{1} \right) \\
 &- 2 \frac{\frac{1}{N (1-\alpha)} \Tr((\Sigma + \kappa_1 Z_1)^{-1} \Sigma (\Sigma + \kappa_1 Z_1)^{-1}  \Sigma )}{1 - \frac{1}{N (1-\alpha)} \Tr((\Sigma + \kappa_1 Z_1)^{-1} \Sigma (\Sigma + \kappa_1 Z_1)^{-1}  \Sigma )}  \cdot  \kappa \Tr\left(  (\Sigma + \kappa I)^{-1} \Sigma \Matrixmixed{1} \right) \\
   &\sim_{(B)} 2 \frac{(1-\alpha) \kappa}{1 - \frac{1}{N} \Tr(\Sigma^2 (\Sigma + \kappa I)^{-2})} \Tr\left((\Sigma + \kappa I)^{-1}  \Sigma (\Sigma + \kappa I)^{-1} \Sigma \Matrixmixed{1} \right) \\
 &- 2 (1-\alpha) \frac{\frac{1}{N} \Tr(\Sigma^2 (\Sigma + \kappa I)^{-2})}{1 - \frac{1}{N} \Tr(\Sigma^2 (\Sigma + \kappa I)^{-2})} \cdot  \kappa \Tr\left(  (\Sigma + \kappa I)^{-1} \Sigma \Matrixmixed{1} \right) \\
 \end{align*} 
 where (A) applies Lemma \ref{lemma:equivalencequadratic}, Lemma \ref{lemma:equivalencelinear}, and Lemma \ref{lemma:kappaequal}, and (B) uses Lemma \ref{lemma:equivalenceswitchtraceorder} and Lemma \ref{lemma:kappaequal}. 

\end{proof}

\subsection{Proof of Lemma \ref{lemma:sollichvariant}}\label{appendix:lemmasollichvariant}

Lemma \ref{lemma:sollichvariant} follows from the sublemmas in this section.
\begin{proof}
We apply Claim \ref{claim:finitesample} to decompose the error in terms (T1), (T2), and (T3). We replace these terms with deterministic equivalents using Lemma \ref{lemma:mainterm1}, Lemma \ref{lemma:mainterm2}, and Lemma \ref{lemma:mainterm3}. The statement follows from adding these terms. 
\end{proof}

\subsection{Reformulation of Lemma \ref{lemma:sollichvariant} using assumptions from Section \ref{subsec:assumptions}}\label{appendix:analysisterms}

Under the assumptions from Section \ref{subsec:assumptions}, we show the following: 
\begin{lemma}
\label{lemma:sollichmoreprecise}
Suppose that power scaling holds for the eigenvalues and alignment coefficients with scaling $\gamma, \delta > 0$ and correlation coefficient $\rho \in [0, 1)$, and suppose that $P = \infty$. Suppose that $\reg \in (0,1)$, and $N \ge 1$. Let $L_1^{\texttt{det}} := L_1^{\texttt{det}}(\beta_1, \beta_2, \DF, \reg, N, \alpha)$ be the deterministic equivalent from Lemma \ref{lemma:sollichvariant}. Let $\kappa = \kappa(\lambda, N, \Sigma)$ from Definition \ref{def:effectiveregularizer}. Let $L^*(\rho) = \mathbb{E}_{\DC}[(\beta_1 - \beta_2)^T \Sigma (\beta_1 - \beta_2)]$. Then it holds that:
  \begin{align*}
  Q \cdot \mathbb{E}_{\DC}[L_1^{\texttt{det}}] &= \kappa^2 (1 - 2(1-\alpha)^2 (1-\rho)) \sum_{i=1}^P \frac{i^{-\delta -1- \gamma}}{(i^{-1-\gamma} + \kappa)^2} + (1-\alpha)^2 L^*(\rho) \\
  &+ 2\kappa (1-\rho) (1-\alpha) (1 - 2 (1-\alpha)) \sum_{i=1}^P \frac{i^{-\delta - 2(1+\gamma)}}{(i^{-1-\gamma}+\kappa)^2} \\
  &+ 2 (1-\alpha) (1-\rho) \frac{1}{N} \left(\sum_{i=1}^P \frac{i^{-2-2\gamma}}{(i^{-1-\gamma} + \kappa)^2} \right) \cdot (1 - 2(1-\alpha))  \sum_{i=1}^P  \frac{i^{-\delta - 2-2\gamma}}{i^{-1-\gamma} + \kappa},
  \end{align*}
  where $Q = 1 - \frac{1}{N} \sum_{i=1}^P \frac{i^{-2-2\gamma}}{(i^{-1-\gamma} + \kappa)^2}$. 
\end{lemma}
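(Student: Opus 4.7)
The plan is to start directly from the closed form in Lemma \ref{lemma:sollichvariant} and take the expectation with respect to $\DC$ of each of $T_1,\dots,T_5$ under the power-law assumptions of Section \ref{subsec:assumptions}. Working in the eigenbasis of $\Sigma$, each trace unfolds into a sum over $i$ with weights that are rational functions of $\lambda_i = i^{-1-\gamma}$ and $\kappa$, times quadratic forms in the coefficients $\langle \beta_j, v_i\rangle$. The moments I need are $\mathbb{E}_{\DC}[\langle \beta_1, v_i\rangle^2] = i^{-\delta}$, $\mathbb{E}_{\DC}[\langle \beta_1 - \beta_2, v_i\rangle^2] = 2(1-\rho)i^{-\delta}$, and $\mathbb{E}_{\DC}[\langle \beta_1, v_i\rangle \langle \beta_1 - \beta_2, v_i\rangle] = (1-\rho)i^{-\delta}$. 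Substituting, $\mathbb{E}[T_1] = \kappa^2 A_1$, $\mathbb{E}[T_2] = 2(1-\alpha)^2(1-\rho) A_3$, $\mathbb{E}[T_3] = 2(1-\alpha)(1-\rho)\kappa A_2$, $\mathbb{E}[T_4] = -2(1-\alpha)(1-\rho)\kappa N^{-1} C \, B_1$, and $\mathbb{E}[T_5] = (1-\alpha)N^{-1} C \, (L^*(\rho) - 4(1-\alpha)(1-\rho) B_2)$, where $A_k := \sum_i i^{-\delta - k(1+\gamma)}/(i^{-1-\gamma}+\kappa)^2$, $B_k := \sum_i i^{-\delta - k(1+\gamma)}/(i^{-1-\gamma}+\kappa)$, and $C := \sum_i i^{-2-2\gamma}/(i^{-1-\gamma}+\kappa)^2$. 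The definition of $Q$ from Lemma \ref{lemma:sollichvariant} coincides with $1 - C/N$ in the lemma statement.

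The next step is to collapse $\mathbb{E}[T_1+T_2+T_3]$ using the elementary identity
\[\frac{\lambda_i^3}{(\lambda_i+\kappa)^2} \;=\; \lambda_i \;-\; \frac{2\kappa \lambda_i^2}{(\lambda_i+\kappa)^2} \;-\; \frac{\kappa^2 \lambda_i}{(\lambda_i+\kappa)^2},\]
which rewrites $A_3 = \sum_i i^{-\delta - 1 - \gamma} - 2\kappa A_2 - \kappa^2 A_1$. Combined with the observation that $2(1-\rho)\sum_i i^{-\delta-1-\gamma} = L^*(\rho)$, this splits $\mathbb{E}[T_2]$ into $(1-\alpha)^2 L^*(\rho) - 4(1-\alpha)^2(1-\rho)\kappa A_2 - 2(1-\alpha)^2(1-\rho)\kappa^2 A_1$. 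Summing with $\mathbb{E}[T_1] + \mathbb{E}[T_3]$ then yields exactly $\kappa^2 A_1 \bigl(1 - 2(1-\alpha)^2(1-\rho)\bigr) + (1-\alpha)^2 L^*(\rho) + 2(1-\alpha)(1-\rho)\kappa A_2\bigl(1 - 2(1-\alpha)\bigr)$, which matches the first two lines of the target expression.

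For the overfitting terms, apply the telescoping identity
\[\sum_i i^{-1-\gamma-\delta} - \kappa B_1 \;=\; \sum_i i^{-1-\gamma-\delta}\Bigl(1 - \tfrac{\kappa}{i^{-1-\gamma}+\kappa}\Bigr) \;=\; \sum_i \frac{i^{-2-2\gamma-\delta}}{i^{-1-\gamma}+\kappa} \;=\; B_2.\]
Writing $L^*(\rho) = 2(1-\rho)\sum_i i^{-1-\gamma-\delta}$ and combining with $\mathbb{E}[T_4]$ gives $(1-\alpha)L^*(\rho) - 2(1-\alpha)(1-\rho)\kappa B_1 = 2(1-\alpha)(1-\rho) B_2$, so $\mathbb{E}[T_4+T_5] = \tfrac{C}{N}\bigl(2(1-\alpha)(1-\rho) B_2 - 4(1-\alpha)^2(1-\rho) B_2\bigr) = \tfrac{2(1-\alpha)(1-\rho)}{N} C B_2 \bigl(1 - 2(1-\alpha)\bigr)$, which is exactly the third line of the target expression. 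Summing all five contributions and dividing by $Q$ finishes the proof.

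No step presents a genuine technical difficulty: the proof is purely algebraic, driven by the eigenbasis decomposition, the two Gaussian moments under $\DC$, and the two polynomial identities displayed above. The only place a naive calculation would get stuck is in handling $\mathbb{E}[T_4+T_5]$, where one must notice that the apparent mismatch between $B_1$ (from $T_4$) and $B_2$ (in the statement) is resolved only after combining $T_4$ with the $L^*(\rho)$ piece of $T_5$ via the telescoping identity; this is the sole place where care is required.
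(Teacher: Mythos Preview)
Your proposal is correct and follows essentially the same route as the paper's proof: compute each $\mathbb{E}[T_k]$ in the eigenbasis via the sublemmas in Appendix~\ref{appendix:analysisterms}, then collapse $T_1+T_2+T_3$ using the partial-fraction identity for $\lambda_i^3/(\lambda_i+\kappa)^2$ and collapse $T_4+T_5$ via the telescoping identity $\sum_i i^{-1-\gamma-\delta} - \kappa B_1 = B_2$. Your introduction of the $A_k,B_k,C$ notation makes the two key algebraic steps cleaner than the paper's more diffuse ``applying this and some other algebraic manipulations,'' but the content is the same.
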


Before proving Lemma \ref{lemma:sollichmoreprecise}, we prove a number of sublemmas where we analyze each of the terms in Lemma \ref{lemma:sollichvariant} using the assumptions from Section \ref{subsec:assumptions}. In the proofs in this section, we use the notation $F \approx F'$ to denote that $F = \Theta(F')$ where the $\Theta$ is allowed to hide dependence on the scaling exponents $\gamma$ and $\delta$. Moreover let $\Sigma = V \Lambda V^T$ be the eigendecomposition of $\Sigma$, where $\Lambda$ is a diagonal matrix consisting of the eigenvalues.

\begin{lemma}
\label{lemma:scalingt1}
Suppose that the power-law scaling holds for the eigenvalues and alignment coefficients with scaling exponents $\gamma, \delta > 0$ and correlation coefficient $\rho \in [0, 1)$, suppose that $P = \infty$. Assume the notation from Lemma \ref{lemma:sollichvariant}. Let $\nu = \min(2(1+\gamma), \gamma + \delta)$. Then it holds that:
\[\mathbb{E}_{\DC}[T_1] := \kappa^2 \cdot \Tr(\Sigma \Sigma_{\kappa}^{-2} \mathbb{E}_{\DC}[\Matrixhom{1}])= \kappa^2  \sum_{i=1}^P \frac{i^{-\delta-1-\gamma}}{(i^{-1-\gamma} + \kappa)^2} .\]
\end{lemma}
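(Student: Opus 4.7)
The plan is direct computation using the eigendecomposition of $\Sigma$ together with the power-law assumptions, so there is no real obstacle here. First I would note that since $\Matrixhom{1} = \beta_1 \beta_1^T$, we can rewrite the trace as a quadratic form: $\Tr(\Sigma \Sigma_\kappa^{-2} \beta_1 \beta_1^T) = \beta_1^T \Sigma \Sigma_\kappa^{-2} \beta_1$. Let $\Sigma = V \Lambda V^T$ be the eigendecomposition with $\Lambda_{ii} = \lambda_i = i^{-1-\gamma}$. Since $\Sigma_\kappa = \Sigma + \kappa I$, both $\Sigma$ and $\Sigma_\kappa^{-2}$ are diagonalized by $V$ with respective diagonal entries $\lambda_i$ and $(\lambda_i + \kappa)^{-2}$, so their product is diagonal with entries $\lambda_i(\lambda_i + \kappa)^{-2}$.

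Expanding $\beta_1$ in the eigenbasis $\{v_i\}$ gives
\[
\beta_1^T \Sigma \Sigma_\kappa^{-2} \beta_1 = \sum_{i=1}^P \frac{\lambda_i}{(\lambda_i + \kappa)^2} \langle \beta_1, v_i \rangle^2.
\]
Applying linearity of expectation over $\DC$ and then invoking the power-law alignment assumption $\mathbb{E}_{\DC}[\langle \beta_1, v_i\rangle^2] = i^{-\delta}$ from Section \ref{subsec:assumptions} yields
\[
\mathbb{E}_{\DC}[T_1] = \kappa^2 \sum_{i=1}^P \frac{\lambda_i}{(\lambda_i + \kappa)^2} i^{-\delta}.
\]
Substituting $\lambda_i = i^{-1-\gamma}$ then gives the stated expression. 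The quantity $\nu = \min(2(1+\gamma), \gamma + \delta)$ plays no role in the identity itself; presumably it is listed in the lemma statement because it will control the asymptotic size of this sum in the subsequent scaling-law derivation (Theorem \ref{thm:scalinglaw}), where bounds of the form $\sum_i i^{-\delta-1-\gamma}/(i^{-1-\gamma}+\kappa)^2 \asymp \kappa^{\nu/(1+\gamma)-2}$ will be needed.

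Since every step is a one-line identity (trace manipulation, diagonalization, linearity of expectation, and substitution of the power laws), there is no genuine technical obstacle; the only thing to be careful about is confirming that the passage to $P = \infty$ poses no convergence issue when $\kappa > 0$, which is immediate because each summand is bounded by $\kappa^{-2} i^{-\delta - 1 - \gamma}$ and the series $\sum i^{-\delta-1-\gamma}$ converges for $\delta, \gamma > 0$.
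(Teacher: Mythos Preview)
Your proof is correct and follows essentially the same route as the paper: diagonalize $\Sigma$ and $\Sigma_\kappa$ in the eigenbasis, reduce the trace to $\sum_i \lambda_i(\lambda_i+\kappa)^{-2}\,\mathbb{E}_{\DC}[\langle\beta_1,v_i\rangle^2]$, and substitute the power-law assumptions. Your additional remarks on convergence at $P=\infty$ and on the irrelevance of $\nu$ to the identity are accurate extras not present in the paper's terse version.
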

\begin{proof}
 Observe that:
\begin{align*}
\Tr(\Sigma \Sigma_{\kappa}^{-2} \mathbb{E}_{\DC}[\Matrixhom{1}]) &= \Tr(\Lambda (\Lambda + \kappa I)^{-2} \mathbb{E}_{\DC}[V^T\beta_1 \beta_1^T V])  \\
 &= \sum_{i=1}^P \frac{i^{-1-\gamma}}{(i^{-1-\gamma} + \kappa)^2} \cdot \mathbb{E}_{\DC}[\langle \beta_1, v_i\rangle^2]   \\
&= \sum_{i=1}^P \frac{i^{-\delta-1-\gamma}}{(i^{-1-\gamma} + \kappa)^2}
\end{align*}
\end{proof}

\begin{lemma}
\label{lemma:scalingt2}
Suppose that the power-law scaling holds for the eigenvalues and alignment coefficients with scaling exponents $\gamma, \delta > 0$ and correlation coefficient $\rho \in [0, 1)$, suppose that $P = \infty$. Assume the notation from Lemma \ref{lemma:sollichvariant}. Then it holds that:
\[\mathbb{E}_{\DC}[T_2] := (1-\alpha)^2 \left( \Tr\left(\Sigma_{\kappa}^{-2} \Sigma^3 \mathbb{E}_{\DC}[\Matrixdiff] \right)\right) = 2 (1-\alpha)^2 (1-\rho) \sum_{i=1}^P \frac{i^{-\delta - 3(1+\gamma)}}{(i^{-1-\gamma} + \kappa)^2}.\]
\end{lemma}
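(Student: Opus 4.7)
The plan is to mirror the strategy used for $\mathbb{E}_{\DC}[T_1]$ in Lemma \ref{lemma:scalingt1}, diagonalizing $\Sigma$ and then reducing the trace to a sum over eigen-indices via the power-scaling alignment assumptions.

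First I would write $\Sigma = V \Lambda V^T$ with $\Lambda = \mathrm{diag}(i^{-1-\gamma})$ and eigenvectors $v_i$, so that $\Sigma_{\kappa}^{-2} = V(\Lambda + \kappa I)^{-2} V^T$ and $\Sigma^3 = V \Lambda^3 V^T$. By cyclicity of the trace and the fact that expectation commutes with the linear trace operation,
\begin{align*}
\Tr\bigl(\Sigma_{\kappa}^{-2} \Sigma^3 \,\mathbb{E}_{\DC}[\Matrixdiff]\bigr)
&= \Tr\bigl(\Lambda^3 (\Lambda + \kappa I)^{-2} \,\mathbb{E}_{\DC}[V^T (\beta_1 - \beta_2)(\beta_1 - \beta_2)^T V]\bigr) \\
&= \sum_{i=1}^P \frac{i^{-3(1+\gamma)}}{(i^{-1-\gamma} + \kappa)^2} \cdot \mathbb{E}_{\DC}\bigl[\langle \beta_1 - \beta_2, v_i \rangle^2\bigr].
\end{align*}

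Next, I would expand the scalar expectation using the joint alignment assumptions from Section \ref{subsec:assumptions}:
\[
\mathbb{E}_{\DC}\bigl[\langle \beta_1 - \beta_2, v_i \rangle^2\bigr] = \mathbb{E}_{\DC}[\langle \beta_1, v_i\rangle^2] + \mathbb{E}_{\DC}[\langle \beta_2, v_i\rangle^2] - 2\,\mathbb{E}_{\DC}[\langle \beta_1, v_i\rangle \langle \beta_2, v_i \rangle] = 2(1-\rho)\, i^{-\delta},
\]
exactly as in Claim \ref{claim:boundLstar}. Substituting back and multiplying through by $(1-\alpha)^2$ yields
\[
\mathbb{E}_{\DC}[T_2] = 2(1-\alpha)^2(1-\rho) \sum_{i=1}^P \frac{i^{-\delta - 3(1+\gamma)}}{(i^{-1-\gamma} + \kappa)^2},
\]
which is the claimed identity.

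There is no real obstacle here: the assertion is an exact identity (not an asymptotic equivalence), so no random-matrix machinery or bounds on truncated tail sums are required. The only care needed is to track the factor of $2(1-\rho)$ that emerges from the cross term in $\mathbb{E}_{\DC}[\langle \beta_1 - \beta_2, v_i\rangle^2]$ and to keep the exponents straight when multiplying $\Lambda^3$ by $(\Lambda+\kappa I)^{-2}$ in the eigenbasis.
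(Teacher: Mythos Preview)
Your proposal is correct and follows essentially the same approach as the paper: diagonalize $\Sigma$, reduce the trace to a sum over eigen-indices, and use the alignment assumptions to compute $\mathbb{E}_{\DC}[\langle \beta_1-\beta_2,v_i\rangle^2]=2(1-\rho)i^{-\delta}$. The only cosmetic difference is the order of presentation (the paper computes the scalar expectation first and then the trace), but the argument is identical.
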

\begin{proof}
First, we observe that 
\[\mathbb{E}_{\DC}[\langle \beta_1 - \beta_2, v_i\rangle^2] = \mathbb{E}_{\DC}[\langle \beta_1 , v_i\rangle^2] + \mathbb{E}_{\DC}[\langle \beta_2, v_i\rangle^2] - 2 \mathbb{E}_{\DC}[\langle \beta_1 , v_i\rangle \langle \beta_2, v_i\rangle]  = i^{-\delta} + i^{-\delta} - 2 \rho i^{-\delta} = 2(1-\rho) i^{-\delta}.\] It is easy to see that:
\begin{align*}
  \Tr\left(\Sigma_{\kappa}^{-2} \Sigma^3 \mathbb{E}_{\DC}[\Matrixdiff] \right) &= \Tr(\Lambda^3 (\Lambda + \kappa I)^{-2} \mathbb{E}_{\DC}[V^T (\beta_1 - \beta_2) (\beta_1 - \beta_2)^T V])  \\
 &= \sum_{i=1}^P \frac{i^{-3(1+\gamma)}}{(i^{-1-\gamma} + \kappa)^2} \cdot \mathbb{E}_{\DC}[\langle \beta_1 - \beta_2, v_i\rangle^2]   \\
  &= 2 (1-\rho) \sum_{i=1}^P \frac{i^{-\delta - 3(1+\gamma)}}{(i^{-1-\gamma} + \kappa)^2}.
\end{align*} 
\end{proof}

\begin{lemma}
\label{lemma:scalingt3}
Suppose that the power-law scaling holds for the eigenvalues and alignment coefficients with scaling exponents $\gamma, \delta > 0$ and correlation coefficient $\rho \in [0, 1)$, suppose that $P = \infty$. Assume the notation from Lemma \ref{lemma:sollichvariant}. Then it holds that:
\[\mathbb{E}_{\DC}[T_3] :=  2 (1-\alpha) \kappa \cdot \Tr\left(\Sigma_{\kappa}^{-2}  \Sigma^2 \Matrixmixed{1} \right) =  2(1-\alpha) \kappa (1-\rho) \sum_{i=1}^P \frac{i^{-\delta-2-2\gamma}}{(i^{-1-\gamma} + \kappa)^2}.\]
\end{lemma}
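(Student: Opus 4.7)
The plan is to follow the same strategy used in Lemmas \ref{lemma:scalingt1} and \ref{lemma:scalingt2}: reduce the trace to a sum over the eigenbasis of $\Sigma$ and then take the expectation term-by-term using the power-law assumptions on the alignment coefficients. Writing the eigendecomposition $\Sigma = V \Lambda V^T$ with $\Lambda_{ii} = i^{-1-\gamma}$, the matrix $\Sigma_\kappa^{-2} \Sigma^2 = V (\Lambda + \kappa I)^{-2} \Lambda^2 V^T$ shares the eigenvectors of $\Sigma$. Cyclicity of the trace applied to $\Tr(\Sigma_\kappa^{-2} \Sigma^2 (\beta_1 - \beta_2) \beta_1^T)$ then gives the diagonal sum
\[ \sum_{i=1}^P \frac{i^{-2-2\gamma}}{(i^{-1-\gamma} + \kappa)^2} \, \langle \beta_1, v_i \rangle \langle \beta_1 - \beta_2, v_i \rangle. \]

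Next, I would exchange the expectation with the sum and compute the single remaining ingredient, $\mathbb{E}_{\DC}[\langle \beta_1, v_i\rangle \langle \beta_1 - \beta_2, v_i \rangle]$. Splitting as $\mathbb{E}_{\DC}[\langle \beta_1, v_i\rangle^2] - \mathbb{E}_{\DC}[\langle \beta_1, v_i\rangle \langle \beta_2, v_i \rangle]$ and applying the Section \ref{subsec:assumptions} identities $\mathbb{E}_{\DC}[\langle \beta_1, v_i \rangle^2] = i^{-\delta}$ and $\mathbb{E}_{\DC}[\langle \beta_1, v_i\rangle \langle \beta_2, v_i\rangle] = \rho \cdot i^{-\delta}$ yields $(1-\rho) \, i^{-\delta}$.

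Substituting back, $\mathbb{E}_{\DC}[\Tr(\Sigma_\kappa^{-2} \Sigma^2 \Matrixmixed{1})] = (1-\rho) \sum_{i=1}^P \frac{i^{-\delta - 2 - 2\gamma}}{(i^{-1-\gamma}+\kappa)^2}$, and multiplying through by the prefactor $2(1-\alpha)\kappa$ produces the claimed identity. There is no real obstacle; the computation is essentially identical in structure to the previous two lemmas. The one subtlety is the cross-product expectation, which extracts a factor $(1-\rho)$---rather than the $1$ appearing in Lemma \ref{lemma:scalingt1} or the $2(1-\rho)$ appearing in Lemma \ref{lemma:scalingt2}---because exactly one of the two factors in $\Matrixmixed{1}$ involves the difference $\beta_1 - \beta_2$.
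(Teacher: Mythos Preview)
Your proposal is correct and follows essentially the same approach as the paper: diagonalize $\Sigma$, reduce the trace to a sum over eigenvectors, and compute the cross-moment $\mathbb{E}_{\DC}[\langle \beta_1, v_i\rangle\langle \beta_1-\beta_2, v_i\rangle] = (1-\rho)i^{-\delta}$ using the power-law assumptions. The only cosmetic difference is that the paper pulls the expectation inside the trace first (writing $\Tr(\Lambda^2(\Lambda+\kappa I)^{-2}\,\mathbb{E}_{\DC}[V^T(\beta_1-\beta_2)\beta_1^T V])$) before expanding, whereas you expand the trace first and then take the expectation term-by-term; by linearity these are equivalent.
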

\begin{proof}
First, we observe that 
\[\mathbb{E}_{\DC}[\langle \beta_1 - \beta_2, v_i\rangle\langle \beta_1, v_i\rangle  ] = \mathbb{E}_{\DC}[\langle \beta_1 , v_i\rangle^2] - \mathbb{E}_{\DC}[\langle \beta_1 , v_i\rangle \langle \beta_2, v_i\rangle]  = i^{-\delta} - \rho i^{-\delta} = (1-\rho) i^{-\delta}.\]
Observe that:
\begin{align*}
\Tr\left(\Sigma_{\kappa}^{-2}  \Sigma^2 \Matrixmixed{1} \right) &= \Tr(\Lambda^2 (\Lambda + \kappa I)^{-2} \mathbb{E}_{\DC}[V^T (\beta_1 - \beta_2) \beta_1^T V])  \\
 &= \sum_{i=1}^P \frac{i^{-2(1+\gamma)}}{(i^{-1-\gamma} + \kappa)^2} \cdot \mathbb{E}_{\DC}[\langle \beta_1 - \beta_2, v_i\rangle\langle \beta_1, v_i\rangle ]   \\
&= (1-\rho) \sum_{i=1}^P \frac{i^{-\delta-2-2\gamma}}{(i^{-1-\gamma} + \kappa)^2}.
\end{align*}
This means that:
\begin{align*}
\mathbb{E}_{\DC}[T_3] &= 2(1-\alpha) \kappa (1-\rho) \sum_{i=1}^P \frac{i^{-\delta-2-2\gamma}}{(i^{-1-\gamma} + \kappa)^2}. 
\end{align*}
\end{proof}

\begin{lemma}
\label{lemma:scalingt4}
Suppose that the power-law scaling holds for the eigenvalues and alignment coefficients with scaling exponents $\gamma, \delta > 0$ and correlation coefficient $\rho \in [0, 1)$, suppose that $P = \infty$. Assume the notation from Lemma \ref{lemma:sollichvariant}. Then it holds that:
\begin{align*}
 |\mathbb{E}_{\DC}[T_4]| &:= 2 \kappa (1-\alpha) \frac{1}{N} \Tr(\Sigma^2 \Sigma_{\kappa} ^{-2}) \cdot  \Tr\left(\Sigma_{\kappa}^{-1} \Sigma \mathbb{E}_{\DC}[\Matrixmixed{1}] \right) \\
 &= 2 \kappa (1-\alpha) (1-\rho) \frac{1}{N} \left(\sum_{i=1}^P \frac{i^{-2-2\gamma}}{(i^{-1-\gamma} + \kappa)^2}\right) \left(\sum_{i=1}^P \frac{i^{-\delta-1-\gamma}}{i^{-1-\gamma} + \kappa}\right)   
\end{align*}
\end{lemma}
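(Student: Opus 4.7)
The plan is to factor $\mathbb{E}_{\DC}[T_4]$ into a product of two traces and evaluate each by working in the eigenbasis of $\Sigma$. Since
\[T_4 = -2(1-\alpha)\kappa \frac{1}{N} \Tr(\Sigma^2 \Sigma_\kappa^{-2}) \cdot \Tr\!\left(\Sigma_\kappa^{-1} \Sigma \Matrixmixed{1}\right)\]
depends on $(\beta_1,\beta_2)$ only through the linear factor $\Matrixmixed{1} = (\beta_1-\beta_2)\beta_1^T$, while $\Tr(\Sigma^2 \Sigma_\kappa^{-2})$ is purely deterministic, the expectation commutes through and gives
\[\mathbb{E}_{\DC}[T_4] = -2(1-\alpha)\kappa \frac{1}{N} \Tr(\Sigma^2 \Sigma_\kappa^{-2}) \cdot \Tr\!\left(\Sigma_\kappa^{-1} \Sigma\, \mathbb{E}_{\DC}[\Matrixmixed{1}]\right).\]

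For the first trace, I would diagonalize $\Sigma = V \Lambda V^T$ with eigenvalues $\lambda_i = i^{-1-\gamma}$; a direct computation in the eigenbasis yields $\Tr(\Sigma^2 \Sigma_\kappa^{-2}) = \sum_{i=1}^P \frac{i^{-2-2\gamma}}{(i^{-1-\gamma}+\kappa)^2}$. This is the familiar ``degrees of freedom''-type expression, and it does not involve the alignment coefficients.

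For the second trace, I would reuse the cross-term calculation already employed in the proof of Lemma~\ref{lemma:scalingt3}. The power-scaling assumptions give $\mathbb{E}_{\DC}[\langle \beta_1, v_i\rangle^2] = i^{-\delta}$ and $\mathbb{E}_{\DC}[\langle \beta_1, v_i\rangle \langle \beta_2, v_i\rangle] = \rho \cdot i^{-\delta}$, so $\mathbb{E}_{\DC}[\langle \beta_1 - \beta_2, v_i\rangle \langle \beta_1, v_i\rangle] = (1-\rho) i^{-\delta}$. Diagonalizing and substituting gives
\[\Tr\!\left(\Sigma_\kappa^{-1} \Sigma\, \mathbb{E}_{\DC}[\Matrixmixed{1}]\right) = \sum_{i=1}^P \frac{i^{-1-\gamma}}{i^{-1-\gamma}+\kappa}\, \mathbb{E}_{\DC}[\langle \beta_1-\beta_2, v_i\rangle \langle \beta_1, v_i\rangle] = (1-\rho)\sum_{i=1}^P \frac{i^{-\delta-1-\gamma}}{i^{-1-\gamma}+\kappa}.\]

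Multiplying the two diagonal-sum expressions together, reinstating the prefactor $-2(1-\alpha)\kappa/N$, and then taking absolute values (which strips the leading minus sign) yields the claimed identity. No obstacle is anticipated: the proof is a routine two-step eigenbasis computation that exactly parallels Lemmas~\ref{lemma:scalingt1}--\ref{lemma:scalingt3}, and the only nuance is pushing the expectation through the deterministic trace factor before diagonalizing.
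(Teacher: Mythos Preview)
Your proposal is correct and follows essentially the same approach as the paper: both factor $|\mathbb{E}_{\DC}[T_4]|$ into the deterministic trace $\Tr(\Sigma^2\Sigma_\kappa^{-2})$ and the random trace $\Tr(\Sigma_\kappa^{-1}\Sigma\,\mathbb{E}_{\DC}[\Matrixmixed{1}])$, evaluate each in the eigenbasis using the cross-moment identity $\mathbb{E}_{\DC}[\langle\beta_1-\beta_2,v_i\rangle\langle\beta_1,v_i\rangle]=(1-\rho)i^{-\delta}$, and multiply. The only cosmetic difference is that the paper invokes its Lemma~\ref{lemma:traceterm} for the first trace whereas you compute it directly.
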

\begin{proof}

First, we observe that 
\[\mathbb{E}_{\DC}[\langle \beta_1 - \beta_2, v_i\rangle\langle \beta_1, v_i\rangle ] = \mathbb{E}_{\DC}[\langle \beta_1 , v_i\rangle^2] - \mathbb{E}_{\DC}[\langle \beta_1 , v_i\rangle \langle \beta_2, v_i\rangle]  = i^{-\delta} + - \rho i^{-\delta} = (1-\rho) i^{-\delta}.\]
Observe that:
\begin{align*}
\Tr\left(\Sigma_{\kappa}^{-1}  \Sigma \mathbb{E}_{\DC}[\Matrixmixed{1}] \right) &= \Tr(\Lambda (\Lambda + \kappa I)^{-1} \mathbb{E}_{\DC}[V^T (\beta_1 - \beta_2) \beta_1^T V])  \\
 &= \sum_{i=1}^P \frac{i^{-1-\gamma}}{i^{-1-\gamma} + \kappa} \cdot \mathbb{E}_{\DC}[\langle \beta_1 - \beta_2, v_i\rangle\langle \beta_1, v_i\rangle]   \\
&= (1-\rho) \sum_{i=1}^P \frac{i^{-\delta-1-\gamma}}{i^{-1-\gamma} + \kappa}.
\end{align*}

Now, apply Lemma \ref{lemma:traceterm}, we see that:
\begin{align*}
 |\mathbb{E}_{\DC}[T_4]| &:= 2 \kappa (1-\alpha) \frac{1}{N} \Tr(\Sigma^2 \Sigma_{\kappa} ^{-2}) \cdot  \Tr\left(\Sigma_{\kappa}^{-1} \Sigma \Matrixmixed{1} \right) \\
 &=_{(A)} 2 \kappa (1-\alpha) (1-\rho) \frac{1}{N} \left(\sum_{i=1}^P \frac{i^{-2-2\gamma}}{(i^{-1-\gamma} + \kappa)^2}\right) \left(\sum_{i=1}^P \frac{i^{-\delta-1-\gamma}}{i^{-1-\gamma} + \kappa}\right)
\end{align*}
where (A) follows from Lemma \ref{lemma:traceterm}.
\end{proof}

\begin{lemma}
\label{lemma:scalingt5}
Suppose that the power-law scaling holds for the eigenvalues and alignment coefficients with scaling exponents $\gamma, \delta > 0$ and correlation coefficient $\rho \in [0, 1)$, suppose that $P = \infty$. Assume the notation from Lemma \ref{lemma:sollichvariant}, and similarly let
\begin{align*}
\mathbb{E}_{\DC}[T_5] &:= (1-\alpha) \frac{1}{N} \Tr(\Sigma^2 \Sigma_{\kappa} ^{-2}) \cdot  \left(\Tr\left(\Sigma \mathbb{E}_{\DC}[\Matrixdiff]  \right)- 2 (1-\alpha) \Tr\left( \Sigma_{\kappa}^{-1} \Sigma^2 \mathbb{E}_{\DC}[\Matrixdiff]   \right) \right) \\
    &= 2 (1-\alpha) (1-\rho) \frac{1}{N} \left( \sum_{i=1}^P \frac{i^{-2-2\gamma}}{(i^{-1-\gamma} + \kappa)^2}\right) \cdot  \left(\sum_{i=1}^P i^{-\delta - 1-\gamma} - 2 (1-\alpha) \cdot \sum_{i=1}^P \frac{i^{-\delta-2-2\gamma}}{(i^{-1-\gamma} + \kappa)} \right).
\end{align*}
\end{lemma}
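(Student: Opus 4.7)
The plan is a direct computation, following the exact same template as Lemmas \ref{lemma:scalingt1}--\ref{lemma:scalingt4}. The only substantive ingredient beyond arithmetic is the identity $\mathbb{E}_{\DC}[\langle \beta_1-\beta_2, v_i\rangle^2] = 2(1-\rho)\, i^{-\delta}$, which follows from the three power-scaling conditions on the alignment coefficients:
\[
\mathbb{E}_{\DC}[\langle\beta_1,v_i\rangle^2] + \mathbb{E}_{\DC}[\langle\beta_2,v_i\rangle^2] - 2\mathbb{E}_{\DC}[\langle\beta_1,v_i\rangle\langle\beta_2,v_i\rangle] = i^{-\delta} + i^{-\delta} - 2\rho \, i^{-\delta}.
\]

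First, I would diagonalize via $\Sigma = V\Lambda V^T$ with $\Lambda_{ii} = i^{-1-\gamma}$, noting that every matrix appearing in $T_5$ is a polynomial in $\Sigma$ and $\Sigma_\kappa^{-1}$, so each trace collapses to a sum over the eigenbasis. Concretely,
\[
\Tr(\Sigma^2 \Sigma_\kappa^{-2}) = \sum_{i=1}^{P} \frac{i^{-2-2\gamma}}{(i^{-1-\gamma} + \kappa)^2},
\]
and this factor is deterministic, so it passes straight through the expectation.

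Next I would handle the two $\Matrixdiff$-containing traces. For any matrix $A$ that is a polynomial in $\Sigma$, we have $\Tr(A \Matrixdiff) = \sum_i (V^T A V)_{ii} \langle \beta_1 - \beta_2, v_i\rangle^2$, and taking expectations and substituting the identity above gives
\[
\mathbb{E}_{\DC}\!\left[\Tr(\Sigma \Matrixdiff)\right] = 2(1-\rho)\sum_{i=1}^P i^{-\delta - 1 - \gamma}, \qquad
\mathbb{E}_{\DC}\!\left[\Tr(\Sigma_\kappa^{-1} \Sigma^2 \Matrixdiff)\right] = 2(1-\rho)\sum_{i=1}^P \frac{i^{-\delta - 2 - 2\gamma}}{i^{-1-\gamma}+\kappa}.
\]

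Finally I would assemble $\mathbb{E}_{\DC}[T_5]$ by multiplying the deterministic trace factor and the prefactor $(1-\alpha)/N$ against the linear combination of the two expectations computed above, and factor out the common $2(1-\rho)$ to match the stated form. There is no real obstacle here; the only point requiring slight care is verifying that the expectation commutes with the linear combination inside the parentheses (which it does, since $T_5$ is linear in $\Matrixdiff$), and confirming the sign and coefficient arithmetic so that the $-2(1-\alpha)$ factor lands correctly in the second summand.
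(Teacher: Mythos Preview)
Your proposal is correct and follows essentially the same approach as the paper: diagonalize, compute $\mathbb{E}_{\DC}[\langle \beta_1-\beta_2,v_i\rangle^2]=2(1-\rho)i^{-\delta}$, evaluate each trace as a sum over the eigenbasis, and assemble. The only cosmetic difference is that the paper cites Lemma~\ref{lemma:traceterm} for the $\Tr(\Sigma^2\Sigma_\kappa^{-2})$ factor rather than writing the sum directly, which amounts to the same computation.
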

\begin{proof}

First, we observe that 
\[\mathbb{E}_{\DC}[\langle \beta_1 - \beta_2, v_i\rangle^2] = \mathbb{E}_{\DC}[\langle \beta_1 , v_i\rangle^2] + \mathbb{E}_{\DC}[\langle \beta_2, v_i\rangle^2] - 2 \mathbb{E}_{\DC}[\langle \beta_1 , v_i\rangle \langle \beta_2, v_i\rangle]  = i^{-\delta} + i^{-\delta} - 2 \rho i^{-\delta} = 2(1-\rho) i^{-\delta}.\]

Now, observe that:
\begin{align*}
 \mathbb{E}_{\DC}[T_5] &:= (1-\alpha) \frac{1}{N} \Tr(\Sigma^2 \Sigma_{\kappa} ^{-2}) \cdot  \left(\Tr\left(\Sigma \mathbb{E}_{\DC}[\Matrixdiff]  \right)- 2 (1-\alpha) \Tr\left( \Sigma_{\kappa}^{-1} \Sigma^2 \mathbb{E}_{\DC}[\Matrixdiff]  \right) \right) \\
 &= (1-\alpha) \frac{1}{N} \Tr(\Sigma^2 \Sigma_{\kappa} ^{-2}) \cdot  \left(\Tr\left(\Lambda \mathbb{E}_{\DC}[V^T (\beta_1 - \beta_2) (\beta_1 - \beta_2)^T V]  \right) \right) \\
 &- (1-\alpha) \frac{1}{N} \Tr(\Sigma^2 \Sigma_{\kappa} ^{-2}) \cdot \left(2 (1-\alpha) \Tr\left( (\Lambda + \kappa I)^{-1} \Lambda^2 \mathbb{E}_{\DC}[V^T (\beta_1 - \beta_2) (\beta_1 - \beta_2)^T V]  \right) \right) \\
  &= (1-\alpha) \frac{1}{N} \Tr(\Sigma^2 \Sigma_{\kappa} ^{-2}) \cdot  \left(\sum_{i=1}^P i^{-1-\gamma} \langle \beta_1 - \beta_2, v_i\rangle^2 - 2 (1-\alpha) \cdot \sum_{i=1}^P \frac{i^{-2-2\gamma}}{(i^{-1-\gamma} + \kappa)} \langle \beta_1 - \beta_2, v_i\rangle^2 \right) \\
    &= 2 (1-\alpha) (1-\rho) \frac{1}{N} \Tr(\Sigma^2 \Sigma_{\kappa} ^{-2}) \cdot  \left(\sum_{i=1}^P i^{-\delta - 1-\gamma} - 2 (1-\alpha) \cdot \sum_{i=1}^P \frac{i^{-\delta-2-2\gamma}}{(i^{-1-\gamma} + \kappa)} \right) \\
    &=_{(A)} 2 (1-\alpha) (1-\rho) \frac{1}{N} \left( \sum_{i=1}^P \frac{i^{-2-2\gamma}}{(i^{-1-\gamma} + \kappa)^2}\right) \cdot  \left(\sum_{i=1}^P i^{-\delta - 1-\gamma} - 2 (1-\alpha) \cdot \sum_{i=1}^P \frac{i^{-\delta-2-2\gamma}}{(i^{-1-\gamma} + \kappa)} \right).
\end{align*}
where (A) uses Lemma \ref{lemma:traceterm}.
\end{proof}

The proofs of these sublemmas use the following fact.
\begin{lemma}
\label{lemma:traceterm}
  Suppose that the power-law scaling holds for the eigenvalues and alignment coefficients with scaling exponents $\gamma, \delta > 0$ and correlation coefficient $\rho \in [0, 1)$, suppose that $P = \infty$. Assume the notation from Lemma \ref{lemma:sollichvariant}. Then it holds that:
  \[ \Tr\left(\Sigma^2 (\Sigma + \kappa I)^{-2} \right) = \sum_{i=1}^P \frac{i^{-2-2\gamma}}{(i^{-1-\gamma} + \kappa)^2}. \]
\end{lemma}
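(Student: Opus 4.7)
The plan is to simply diagonalize and read off the trace from the eigenvalues. Let $\Sigma = V \Lambda V^T$ be the eigendecomposition, where $\Lambda$ is the diagonal matrix whose $i$th diagonal entry is $\lambda_i = i^{-1-\gamma}$ (using the power-law scaling assumption from Section \ref{subsec:assumptions}) and $V$ is orthogonal. Since $V V^T = I$, we have $\Sigma^2 = V \Lambda^2 V^T$ and $(\Sigma + \kappa I)^{-2} = V (\Lambda + \kappa I)^{-2} V^T$. Multiplying and using the cyclic property of trace gives
\[
\Tr\!\left(\Sigma^2 (\Sigma + \kappa I)^{-2}\right) = \Tr\!\left(V \Lambda^2 (\Lambda + \kappa I)^{-2} V^T\right) = \Tr\!\left(\Lambda^2 (\Lambda + \kappa I)^{-2}\right).
\]
Since the matrix inside the final trace is diagonal with $i$th diagonal entry $\lambda_i^2/(\lambda_i + \kappa)^2 = i^{-2-2\gamma}/(i^{-1-\gamma} + \kappa)^2$, summing over $1 \le i \le P$ yields the claimed identity. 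There is no real obstacle here: the statement is a pure linear-algebra identity that holds exactly (not just asymptotically), and the only inputs are the orthogonality of $V$, the cyclicity of trace, and the explicit form of the eigenvalues $\lambda_i = i^{-1-\gamma}$ from the power-law assumption.
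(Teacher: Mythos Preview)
Your proof is correct and is essentially identical to the paper's own argument: both diagonalize $\Sigma = V\Lambda V^T$, use cyclicity of the trace to reduce to $\Tr(\Lambda^2(\Lambda+\kappa I)^{-2})$, and then substitute $\lambda_i = i^{-1-\gamma}$.
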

\begin{proof}
We see that:
\begin{align*}
(1-\alpha) \frac{1}{N} \Tr(\Sigma^2 \Sigma_{\kappa} ^{-2}) &= (1-\alpha) \frac{1}{N} \Tr(V \Lambda^2 (\Lambda + \kappa I)^{-2} V^T) \\
&= (1-\alpha) \frac{1}{N} \Tr(\Lambda^2 (\Lambda + \kappa I)^{-2}) \\
&= \sum_{i=1}^P \frac{i^{-2-2\gamma}}{(i^{-1-\gamma} + \kappa)^2}.
\end{align*}
\end{proof}

Now, we are ready to prove Lemma \ref{lemma:sollichmoreprecise}.
\begin{proof}[Proof of Lemma \ref{lemma:sollichmoreprecise}]
By Lemma \ref{lemma:traceterm}, we know: 
\[Q = 1 - \frac{1}{N}\Tr(\Sigma^2 (\Sigma + \kappa I)^{-2})= 1 - \frac{1}{N} \sum_{i=1}^P \frac{i^{-2-2\gamma}}{(i^{-1-\gamma} + \kappa)^2}.\]
Moreover, we have that: 
     \begin{align*}
  Q \cdot \mathbb{E}_{\DC}[L_1^{\texttt{det}}] &=_{(A)} \mathbb{E}_{\DC}[T_1 + T_2 + T_3 + T_4 + T_5] \\
  &=_{(B)} \kappa^2  \sum_{i=1}^P \frac{i^{-\delta - 1- \gamma}}{(i^{-1-\gamma} + \kappa)^2} + 2 (1-\alpha)^2 (1-\rho) \sum_{i=1}^P \frac{i^{-\delta - 3 (1+\gamma)}}{(i^{-1-\gamma} + \kappa)^2} \\
  &+ 2\kappa (1-\rho) (1-\alpha) \sum_{i=1}^P \frac{i^{-\delta - 2(1+\gamma)}}{(i^{-1-\gamma}+\kappa)^2} \\
  &- 2\kappa (1-\rho)  (1-\alpha) \frac{1}{N} \left(\sum_{i=1}^P \frac{i^{-2-2\gamma}}{(i^{-1-\gamma} + \kappa)^2} \right)   \left(\sum_{i=1}^P \frac{i^{-\delta - 1-\gamma}}{i^{-1-\gamma} + \kappa} \right) \\
  &+ 2 (1-\alpha) (1-\rho) \frac{1}{N} \left( \sum_{i=1}^P \frac{i^{-2-2\gamma}}{(i^{-1-\gamma} + \kappa)^2}\right) \cdot  \left(\sum_{i=1}^P i^{-\delta - 1-\gamma} - 2 (1-\alpha) \cdot \sum_{i=1}^P \frac{i^{-\delta-2-2\gamma}}{(i^{-1-\gamma} + \kappa)} \right).
  \end{align*}
  where (A) follows from Lemma \ref{lemma:sollichvariant}, and (B) follows from Lemmas \ref{lemma:scalingt1}-\ref{lemma:scalingt5}. 

By Claim \ref{claim:boundLstar}, we know that:
\[L^*(\rho) = 2 (1-\rho) \sum_{i=1}^P i^{-\delta-1-\gamma}= 2 (1-\rho) \sum_{i=1}^P \frac{i^{-\delta-3(1+\gamma)}}{(i^{-1-\gamma)^2}}.   \]
This means that:
\begin{align*}
  &L^*(\rho) - 2 (1-\rho) \sum_{i=1}^P \frac{i^{-\delta - 3 (1+\gamma)}}{(i^{-1-\gamma} + \kappa)^2} \\
  &=  2 (1-\rho) \sum_{i=1}^P \left(\frac{i^{-\delta-3(1+\gamma)}}{(i^{-1-\gamma})^2} - \frac{i^{-\delta - 3 (1+\gamma)}}{(i^{-1-\gamma} + \kappa)^2} \right) \\
  &= 2 (1-\rho) \sum_{i=1}^P \left(\frac{i^{-\delta-3(1+\gamma)} \cdot ((i^{-1-\gamma} + \kappa)^2 - (i^{-1-\gamma})^2)}{(i^{-1-\gamma})^2 \cdot (i^{-1-\gamma} + \kappa)^2} \right)  \\
  &= 2 \kappa^2 (1-\rho) \sum_{i=1}^P \left(\frac{i^{-\delta-3(1+\gamma)}}{(i^{-1-\gamma})^2 \cdot (i^{-1-\gamma} + \kappa)^2} \right)  + 4 \kappa (1-\rho) \sum_{i=1}^P \left(\frac{i^{-\delta-3(1+\gamma)} \cdot i^{-1-\gamma}}{(i^{-1-\gamma})^2 \cdot (i^{-1-\gamma} + \kappa)^2} \right) \\
  &= 2 \kappa^2 (1-\rho) \sum_{i=1}^P \left(\frac{i^{-\delta-1-\gamma}}{(i^{-1-\gamma} + \kappa)^2} \right)  + 4 \kappa (1-\rho) \sum_{i=1}^P \left(\frac{i^{-\delta-2(1+\gamma)}}{(i^{-1-\gamma} + \kappa)^2} \right) \\
\end{align*}

Applying this and some other algebraic manipulations, we obtain that:     
  \begin{align*}
  Q \cdot L_1^{\texttt{det}} &= \kappa^2 (1 - 2(1-\alpha)^2 (1-\rho)) \sum_{i=1}^P \frac{i^{-\delta - 1-\gamma}}{(i^{-1-\gamma} + \kappa)^2} + (1-\alpha)^2 L^*(\rho) \\
  &+ 2\kappa (1-\rho) (1-\alpha) (1 - 2 (1-\alpha)) \sum_{i=1}^P \frac{i^{-\delta - 2(1+\gamma)}}{(i^{-1-\gamma}+\kappa)^2} \\
  &- 2 (1-\alpha) (1-\rho) \frac{1}{N} \left(\sum_{i=1}^P \frac{i^{-2-2\gamma}}{(i^{-1-\gamma} + \kappa)^2} \right)   \left(\sum_{i=1}^P i^{-\delta - 1-\gamma} - \sum_{i=1}^P  \frac{i^{-\delta - 2-2\gamma}}{i^{-1-\gamma} + \kappa} \right) \\
  &+ 2 (1-\alpha) (1-\rho) \frac{1}{N} \left( \sum_{i=1}^P \frac{i^{-2-2\gamma}}{(i^{-1-\gamma} + \kappa)^2}\right) \cdot  \left(\sum_{i=1}^P i^{-\delta - 1-\gamma} - 2 (1-\alpha) \cdot \sum_{i=1}^P \frac{i^{-\delta-2-2\gamma}}{(i^{-1-\gamma} + \kappa)} \right) \\
&= \kappa^2 (1 - 2(1-\alpha)^2 (1-\rho)) \sum_{i=1}^P \frac{i^{-\delta - \gamma}}{(i^{-1-\gamma} + \kappa)^2} + (1-\alpha)^2 L^*(\rho) \\
  &+ 2\kappa (1-\rho) (1-\alpha) (1 - 2 (1-\alpha)) \sum_{i=1}^P \frac{i^{-\delta - 2(1+\gamma)}}{(i^{-1-\gamma}+\kappa)^2} \\
  &+ 2 (1-\alpha) (1-\rho) \frac{1}{N} \left(\sum_{i=1}^P \frac{i^{-2-2\gamma}}{(i^{-1-\gamma} + \kappa)^2} \right) \cdot (1 - 2(1-\alpha))  \sum_{i=1}^P  \frac{i^{-\delta - 2-2\gamma}}{i^{-1-\gamma} + \kappa}.
  \end{align*}
\end{proof}

\subsection{Proof of Theorem \ref{thm:scalinglaw}}\label{appendix:proofscaling}

We now prove Theorem \ref{thm:scalinglaw}. In the proof, we again use the notation $F \approx F'$ to denote $F = \Theta(F')$. The main ingredient is Lemma \ref{lemma:sollichmoreprecise}, coupled with the auxiliary calculations in Appendix \ref{appendix:auxiliarycalculations}. 

\begin{proof}
The proof boils down to three steps: (1) obtaining an exact expression, (2) obtaining an up-to-constants asymptotic expression in terms of $\kappa$ and $Q$, and (3) substituting in $\kappa$ and $Q$.

\paragraph{Step 1: Exact expression.}
We apply Lemma \ref{lemma:sollichmoreprecise} to see that: 
  \begin{align*}
  Q \cdot L_1^{\texttt{det}} &= \kappa^2 (1 - 2(1-\alpha)^2 (1-\rho)) \sum_{i=1}^P \frac{i^{-\delta - 1-\gamma}}{(i^{-1-\gamma} + \kappa)^2} + (1-\alpha)^2 L^*(\rho) \\
  &+ 2\kappa (1-\rho) (1-\alpha) (1 - 2 (1-\alpha)) \sum_{i=1}^P \frac{i^{-\delta - 2(1+\gamma)}}{(i^{-1-\gamma}+\kappa)^2} \\
  &+ 2 (1-\alpha) (1-\rho) \frac{1}{N} \left(\sum_{i=1}^P \frac{i^{-2-2\gamma}}{(i^{-1-\gamma} + \kappa)^2} \right) \cdot (1 - 2(1-\alpha))  \sum_{i=1}^P  \frac{i^{-\delta - 2-2\gamma}}{i^{-1-\gamma} + \kappa},
  \end{align*}
where $Q = 1 - \frac{1}{N} \sum_{i=1}^P \frac{i^{-2-2\gamma}}{(i^{-1-\gamma} + \kappa)^2}$, where $L^*(\rho) = \mathbb{E}_{\DC}[(\beta_1 - \beta_2)^T \Sigma (\beta_1 - \beta_2)]$, and where $\kappa = \kappa(\Sigma, N, \lambda)$ as defined in Definition \ref{def:effectiveregularizer}.

\paragraph{Step 2: Asymptotic expression in terms of $\kappa$ and $Q$.} We show that \[ Q \cdot L_1^{\texttt{det}} \approx  \kappa^{\frac{\scalingexp}{1+\gamma}} + (1-\alpha)^2 (1-\rho) + 
(1-\alpha) (1-\rho) \frac{\kappa^{-\frac{1}{1+\gamma}}}{N}. \]

We analyze this expression term-by-term and repeatedly apply Lemma \ref{lemma:splitintegralbounds}. We see that:
\[\kappa^2 (1 - 2(1-\alpha)^2 (1-\rho)) \sum_{i=1}^P \frac{i^{-\delta - 1- \gamma}}{(i^{-1-\gamma} + \kappa)^2} \approx_{(A)} \kappa^{\frac{\scalingexp}{1+\gamma}} (1 - 2(1-\alpha)^2 (1-\rho)) \approx_{(B)} \kappa^{\frac{\scalingexp}{1+\gamma}}, \]
where (A) uses Lemma \ref{lemma:splitintegralbounds} and (B) uses that $\alpha \ge 0.5$. Moreover, we observe that:
\[ (1-\alpha)^2 L^*(\rho) \approx_{(C)} (1-\alpha)^2 (1-\rho),\]
where (C) uses Claim \ref{claim:boundLstar}. Moreover, we see that:
\begin{align*}
2\kappa (1-\rho) (1-\alpha) (1 - 2 (1-\alpha)) \sum_{i=1}^P \frac{i^{-\delta - 2(1+\gamma)}}{(i^{-1-\gamma}+\kappa)^2} 
&\approx_{(D)} (1-\alpha) (1-\rho) (1 - 2 (1-\alpha))  \max\left( \kappa, \kappa^{\frac{\delta + \gamma}{1+\gamma}} \right) \\
 &=_{(E)} O\left((1-\alpha) \sqrt{1-\rho} \max\left( \kappa, \kappa^{\frac{\delta + \gamma}{2(1+\gamma)}} \right)\right) \\
 &= O\left( \sqrt{(1-\alpha)^2 (1-\rho) \cdot  \kappa^{\frac{\min(2(1+\gamma), \gamma + \delta)}{1+\gamma}}} \right) \\ 
 &=_{(F)} O\left( \kappa^{\frac{\min(2(1+\gamma), \gamma + \delta)}{1+\gamma}} + (1-\alpha)^2 (1-\rho) \right) \\
 &= O\left( \kappa^{\frac{\scalingexp}{1+\gamma}} + (1-\alpha)^2 (1-\rho) \right)
\end{align*}
where (D) uses Lemma \ref{lemma:splitintegralbounds}, (E) uses that $1-\rho \le 1$ and that $\kappa = O(1)$ (which follows from Lemma \ref{lemma:kappabasic} and the assumption that $\lambda \in (0,1)$) and (F) follows from AM-GM. Finally, observe that: 
\begin{align*}
&2 (1-\alpha) (1-\rho) \frac{1}{N} \left(\sum_{i=1}^P \frac{i^{-2-2\gamma}}{(i^{-1-\gamma} + \kappa)^2} \right) \cdot (1 - 2(1-\alpha))  \sum_{i=1}^P  \frac{i^{-\delta - 2-2\gamma}}{i^{-1-\gamma} + \kappa}\\
&\approx (1 - 2(1-\alpha)) \cdot (1-\alpha) (1-\rho) \frac{1}{N} \left(\sum_{i=1}^P \frac{i^{-2-2\gamma}}{(i^{-1-\gamma} + \kappa)^2} \right) \sum_{i=1}^P  \frac{i^{-\delta - 2-2\gamma}}{i^{-1-\gamma} + \kappa} \\
&\approx_{(G)} (1 - 2(1-\alpha)) \cdot 
(1-\alpha) (1-\rho) \frac{\kappa^{-\frac{1}{1+\gamma}}}{N} 
\end{align*}
where (G) uses Lemma \ref{lemma:splitintegralbounds} twice. 

Putting this all together, we see that:
\[ Q \cdot L_1^{\texttt{det}} \approx  \kappa^{\frac{\scalingexp}{1+\gamma}} + (1-\alpha)^2 (1-\rho) + (1 - 2(1-\alpha)) \cdot 
(1-\alpha) (1-\rho) \frac{\kappa^{-\frac{1}{1+\gamma}}}{N}. \]

We split into two cases based on $\alpha$. When $\alpha \ge 0.75$, we observe that 
\[(1 - 2(1-\alpha)) \cdot 
(1-\alpha) (1-\rho) \frac{\kappa^{-\frac{1}{1+\gamma}}}{N} \approx (1-\alpha) (1-\rho) \frac{\kappa^{-\frac{1}{1+\gamma}}}{N},\] and when $\alpha \in [0.5, 0.75]$, we observe that 
\[(1 - 2(1-\alpha)) \cdot 
(1-\alpha) (1-\rho) \frac{\kappa^{-\frac{1}{1+\gamma}}}{N} = O\left( (1-\alpha) (1-\rho) \frac{\kappa^{-\frac{1}{1+\gamma}}}{N}\right)\] and 
\[(1-\alpha)^2 (1-\rho) \approx_{(H)} (1-\alpha) (1-\rho) \frac{\kappa^{-\frac{1}{1+\gamma}}}{N}\]
where (H) follows from the fact that $\kappa = \Omega(N^{-1-\gamma})$ by Lemma \ref{lemma:kappabasic}. Altogether, this implies that:
\[ Q \cdot L_1^{\texttt{det}} \approx  \kappa^{\frac{\scalingexp}{1+\gamma}} + (1-\alpha)^2 (1-\rho) + 
(1-\alpha) (1-\rho) \frac{\kappa^{-\frac{1}{1+\gamma}}}{N}, \]
as desired.

\paragraph{Step 2: Substitute in $\kappa$ and $Q$.} Finally, we apply Lemma \ref{lemma:degreesoffreedom} to see that:
\[Q^{-1} = \left(1 - \frac{1}{N} \sum_{i=1}^P \frac{i^{-2-2\gamma}}{(i^{-1-\gamma} + \kappa)^2}\right)^{-1} = \Theta(1).\]
We apply Lemma \ref{lemma:kappabasic} to see that 
\[\kappa = \kappa(\Sigma, N, \Sigma)  = \max(N^{-1-\gamma}, \lambda).\]
Plugging this into the expression derived in Step 2, we obtain the desired expression. 
\end{proof}

\subsection{ Proof of Corollary \ref{cor:scalinglawoptreg}}\label{appendix:proofcor}

We prove Corollary \ref{cor:scalinglawoptreg} using Theorem \ref{thm:scalinglaw}.
\begin{proof}
We apply Theorem \ref{thm:scalinglaw} to see that:
\[\mathbb{E}_{\DC}[L_1^{\texttt{det}}] = \Theta\left( \underbrace{\max(\lambda^{\frac{\scalingexp}{1+\gamma}}, N^{-\scalingexp})}_{\text{finite data error}} + \underbrace{(1-\alpha)^2 \cdot (1 - \rho)}_{\text{mixture error}} +   \underbrace{(1-\alpha) \left(
 \frac{\min(\lambda^{-\frac{1}{1+\gamma}}, N)}{N}
\right) 
(1-\rho)
}_{\text{overfitting error}}\right).\]
We split into three cases: $N \le (1-\alpha)^{-\frac{1}{\scalingexp}}(1-\rho)^{-\frac{1}{\scalingexp}}$, $(1-\alpha)^{-\frac{1}{\scalingexp}}(1-\rho)^{-\frac{1}{\scalingexp}} 
 \le N \le  (1-\alpha)^{-\frac{2+\scalingexp}{\scalingexp}} (1-\rho)^{-\frac{1}{\scalingexp}}$, and $N \ge (1-\alpha)^{-\frac{2+\scalingexp}{\scalingexp}} (1-\rho)^{-\frac{1}{\scalingexp}}$.

 \paragraph{Case 1: $N \le (1-\alpha)^{-\frac{1}{\scalingexp}}(1-\rho)^{-\frac{1}{\scalingexp}}$.} We observe that the finite data error dominates regardless of $\reg$. This is because the condition implies that 
 \[\max(\lambda^{\frac{\scalingexp}{1+\gamma}}, N^{-\scalingexp}) \ge (1-\alpha)(1-\rho),\]
 which dominates both the mixture error and the overfitting error. 

\paragraph{Case 2: $(1-\alpha)^{-\frac{1}{\scalingexp}}(1-\rho)^{-\frac{1}{\scalingexp}} 
 \le N \le  (1-\alpha)^{-\frac{2+\scalingexp}{\scalingexp}} (1-\rho)^{-\frac{1}{\scalingexp}}$.}  We show that the finite error term and overfitting error dominate. Let $\tilde{N} = \min(\lambda^{-\frac{1}{1+\gamma}}, N)$. We can bound the sum of the finite data error and the overfitting error as:
 \begin{align*}
 \max(\lambda^{\frac{\scalingexp}{1+\gamma}}, N^{-\scalingexp}) + (1-\alpha) \left(
 \frac{\min(\lambda^{-\frac{1}{1+\gamma}}, N)}{N}
\right) 
(1-\rho) &=  \tilde{N}^{-\scalingexp} + (1-\alpha) (1-\rho) \frac{\tilde{N}}{N}.
 \end{align*}
 Taking a derivative (and verifying the second order condition), we see that this expression is minimized when:
 \[ \scalingexp \cdot \tilde{N}^{-\scalingexp - 1} = \frac{(1-\alpha) (1-\rho)}{N} \]
 which solves to:
 \[ \tilde{N} = \Theta\left(\left(\frac{(1-\alpha) (1-\rho)}{N}\right)^{-\frac{1}{1+\scalingexp}}\right). \]
 The lower bound on $N$ guarantees that:
 \[ \tilde{N} = \Theta\left(\left(\frac{(1-\alpha) (1-\rho)}{N}\right)^{-\frac{1}{1+\scalingexp}}\right) = O \left(\left((1-\alpha)^{1 + \frac{1}{\scalingexp}} (1-\rho)^{1 + \frac{1}{\scalingexp}}\right)^{-\frac{1}{1+\scalingexp}} \right)= O \left((1-\alpha)^{-\frac{1}{\scalingexp}} (1-\rho)^{-\frac{1}{\scalingexp}} \right) = O(N) \]
 which ensures that $\tilde{N}$ can be achieved by some choice of $\lambda$. In particular, we can take $\reg = \Theta\left(\left(\frac{(1-\alpha) (1-\rho)}{N} \right)^{\frac{1+\gamma}{\scalingexp + 1}}\right)$. 

The resulting sum of the finite error and the overfitting error is:
 \[\max(\lambda^{\frac{\scalingexp}{1+\gamma}}, N^{-\scalingexp}) + (1-\alpha) \left(
 \frac{\min(\lambda^{-\frac{1}{1+\gamma}}, N)}{N}
\right) = \Theta\left(\left(\frac{(1-\alpha) (1-\rho)}{N} \right)^{\frac{\scalingexp}{\scalingexp + 1}} \right). \]

The upper bound on $N$ guarantees that this dominates the mixture error:
\[\Theta\left(\left(\frac{(1-\alpha) (1-\rho)}{N} \right)^{\frac{\scalingexp}{\scalingexp + 1}} \right) = \Omega\left(\left((1-\alpha)^{1 + \frac{2+\scalingexp}{\scalingexp}} (1-\rho)^{1 + \frac{1}{\scalingexp}} \right)^{\frac{\scalingexp}{\scalingexp + 1}} \right) = \Omega((1-\alpha)^2 (1-\rho)) \]
as desired. 

\paragraph{Case 3: $N \ge (1-\alpha)^{-\frac{2+\scalingexp}{\scalingexp}} (1-\rho)^{-\frac{1}{\scalingexp}}$.}
 We show that the mixture and the overfitting error terms dominate. First, we observe that the sum of the mixture error and the finite data error is:
 \[ (1-\alpha)^2 (1-\rho) + (1-\alpha) \left(
 \frac{\min(\lambda^{-\frac{1}{1+\gamma}}, N)}{N}
\right) 
(1-\rho) = \Theta\left( (1-\alpha)(1-\rho) \left(1 - \alpha + \frac{\min(\lambda^{-\frac{1}{1+\gamma}}, N)}{N} \right) \right).\]
This is minimized by taking $\reg = \Theta((N(1-\alpha))^{-1-\gamma})$, which yields $\Theta((1-\alpha)^2 (1-\rho))$.

The upper bound on $N$ and the setting of $\reg$ guarantees that this term dominates the finite data error: 
 \[\max(\lambda^{\frac{\scalingexp}{1+\gamma}}, N^{-\scalingexp}) = O((N(1-\alpha))^{-\scalingexp}) \le O\left((1-\alpha)^{-\scalingexp} (1-\alpha)^{2 + \scalingexp} (1-\rho) \right) = O((1-\alpha)^2 (1-\rho),\]
 as desired.

\end{proof}

\subsection{Proof of Theorem \ref{thm:scalinglawexcess}}\label{appendix:proofscalingexcess}

We prove Theorem \ref{thm:scalinglawexcess}. 

\begin{proof}[Proof of Theorem \ref{thm:scalinglawexcess}]
Like the proof of Theorem \ref{thm:scalinglaw}, the proof boils down to three steps: (1) obtaining an exact expression, (2) obtaining an up-to-constants asymptotic expression in terms of $\kappa$, and (3) substituting in $\kappa$.

\paragraph{Step 1: Exact expression.}
We first apply Lemma \ref{lemma:sollichmoreprecise} to obtain the precise loss:
\begin{align*}
 Q \cdot \mathbb{E}_{\DC}[L_1^*(\beta_1, \beta_2, \DF, \lambda_E, N, \alpha_E)]  &=\kappa^2 (1 - 2(1-\alpha)^2 (1-\rho)) \sum_{i=1}^P \frac{i^{-\delta -1- \gamma}}{(i^{-1-\gamma} + \kappa)^2} + (1-\alpha)^2 L^*(\rho) \\
  &+  2\kappa (1-\rho) (1-\alpha) (1 - 2 (1-\alpha)) \sum_{i=1}^P \frac{i^{-\delta - 2(1+\gamma)}}{(i^{-1-\gamma}+\kappa)^2} \\
  &+ 2 (1-\alpha) (1-\rho) \frac{1}{N} \left(\sum_{i=1}^P \frac{i^{-2-2\gamma}}{(i^{-1-\gamma} + \kappa)^2} \right) \cdot (1 - 2(1-\alpha))  \sum_{i=1}^P  \frac{i^{-\delta - 2-2\gamma}}{i^{-1-\gamma} + \kappa},
  \end{align*}
  where $Q = 1 - \frac{1}{N} \sum_{i=1}^P \frac{i^{-2-2\gamma}}{(i^{-1-\gamma} + \kappa)^2}$ and where $\kappa = \kappa(\Sigma, N, \lambda)$ as defined in Definition \ref{def:effectiveregularizer}. 
  This can be written as: 
\begin{align*}
 &\mathbb{E}_{\DC}[L_1^*(\beta_1, \beta_2, \DF, \lambda_E, N, \alpha_E)] - (1-\alpha)^2 L^*(\rho)   \\
 &=Q^{-1} \cdot \kappa^2 (1 - 2(1-\alpha)^2 (1-\rho))  \sum_{i=1}^P \frac{i^{-\delta - 1- \gamma}}{(i^{-1-\gamma}   + \kappa)^2}  \\
  &+ Q^{-1} \cdot 2\kappa (1-\rho) (1-\alpha) (1 - 2 (1-\alpha)) \sum_{i=1}^P \frac{i^{-\delta - 2(1+\gamma)}}{(i^{-1-\gamma}+\kappa)^2} \\
  &+ Q^{-1} \cdot 2 (1-\alpha) (1-\rho) \frac{1}{N} \left(\sum_{i=1}^P \frac{i^{-2-2\gamma}}{(i^{-1-\gamma} + \kappa)^2} \right) \cdot (1 - 2(1-\alpha))  \sum_{i=1}^P  \frac{i^{-\delta - 2-2\gamma}}{i^{-1-\gamma} + \kappa} \\
   &+ \frac{1-Q}{Q} (1-\alpha)^2 L^*(\rho).
  \end{align*}  

\paragraph{Step 2: Asymptotic expression in terms of $\kappa$.}
 We use the notation $F \approx F'$ to denote that $F = \Theta(F')$. We obtain:
 \begin{align*}
 &\mathbb{E}_{\DC}[L_1^*(\beta_1, \beta_2, \DF, \lambda_E, N, \alpha_E)] - (1-\alpha)^2 L^*(\rho)   \\
 &\approx_{(A)} \kappa^2 (1 - 2(1-\alpha)^2 (1-\rho))  \sum_{i=1}^P \frac{i^{-\delta -1- \gamma}}{(i^{-1-\gamma}   + \kappa)^2}  \\
  &+ \kappa (1-\rho) (1-\alpha) (1 - 2 (1-\alpha)) \sum_{i=1}^P \frac{i^{-\delta - 2(1+\gamma)}}{(i^{-1-\gamma}+\kappa)^2} \\
  &+ (1-\alpha) (1-\rho) \frac{1}{N} \left(\sum_{i=1}^P \frac{i^{-2-2\gamma}}{(i^{-1-\gamma} + \kappa)^2} \right) \cdot (1 - 2(1-\alpha))  \sum_{i=1}^P  \frac{i^{-\delta - 2-2\gamma}}{i^{-1-\gamma} + \kappa} \\
   &+ (1-Q) (1-\alpha)^2 L^*(\rho) \\
   &\approx_{(B)} \kappa^2 \sum_{i=1}^P \frac{i^{-\delta - 1- \gamma}}{(i^{-1-\gamma}   + \kappa)^2} + \kappa (1-\rho) (1-\alpha) \sum_{i=1}^P \frac{i^{-\delta - 2(1+\gamma)}}{(i^{-1-\gamma}+\kappa)^2} \\
  &+ (1-\alpha) (1-\rho) \frac{1}{N} \left(\sum_{i=1}^P \frac{i^{-2-2\gamma}}{(i^{-1-\gamma} + \kappa)^2} \right) \sum_{i=1}^P  \frac{i^{-\delta - 2-2\gamma}}{i^{-1-\gamma} + \kappa} \\
   &+ (1-\alpha)^2 L^*(\rho) \cdot \frac{1}{N} \left(\sum_{i=1}^P \frac{i^{-2-2\gamma}}{(i^{-1-\gamma} + \kappa)^2} \right).
  \end{align*}  
  where (A) uses that $Q^{-1}$ is a constant by Lemma \ref{lemma:degreesoffreedom} and (B) uses that $\alpha \ge 0.75$ and the definition of $Q$. Now, using the bounds from Lemma \ref{lemma:splitintegralbounds}, and the bound from Claim \ref{claim:boundLstar}, we obtain: 
\begin{align*}
&\mathbb{E}_{\DC}[L_1^*(\beta_1, \beta_2, \DF, \lambda_E, N, \alpha_E)] - (1-\alpha)^2 L^*(\rho)  \\
&\approx \kappa^{\frac{\min(2(1+\gamma), \gamma + \delta)}{1+\gamma}}  + (1-\rho) (1-\alpha) \max \left( \kappa, \kappa^{\frac{\gamma + \delta}{1+\gamma}} \right) + (1-\alpha) (1-\rho) \frac{\kappa^{-\frac{1}{1+\gamma}}}{N}  + \frac{\kappa^{-\frac{1}{1+\gamma}}}{N} (1-\alpha)^2 (1-\rho) \\
 &\approx \kappa^{\frac{\scalingexp}{1+\gamma}}  + (1-\rho) (1-\alpha) \kappa^{\frac{\scalingexp'}{1+\gamma}} + (1-\alpha) (1-\rho) \frac{\kappa^{-\frac{1}{1+\gamma}}}{N}.
  \end{align*}  

\paragraph{Step 3: Substituting in $\kappa$.} Finally, we apply Lemma \ref{lemma:kappabasic} to see that 
\[\kappa = \kappa(\Sigma, N, \Sigma)  = \max(N^{-1-\gamma}, \lambda).\] Plugging this into the expression derived in Step 2, we obtain the desired expression. 
\end{proof}

\subsection{Proof of Corollary \ref{cor:scalinglawoptregexcess}}\label{appendix:proofscalingoptregexcess}

We prove Corollary \ref{cor:scalinglawoptregexcess} using Theorem \ref{thm:scalinglawexcess}.
\begin{proof}
We apply Theorem \ref{thm:scalinglaw} to see that:
\begin{align*}
 &\mathbb{E}_{\DC}[L_1^{\texttt{det}} - L_1(\beta(\alpha, 0))]  \\
 &=  \Theta\left( \underbrace{\max(\lambda^{\frac{\scalingexp}{1+\gamma}}, N^{-\scalingexp})}_{\text{finite data error}} + \underbrace{(1-\rho) (1-\alpha) \max(\lambda^{\frac{\scalingexp'}{1+\gamma}}, N^{-\scalingexp'})}_{\text{mixture finite data error}} +   \underbrace{(1-\alpha) \left(
 \frac{\min(\lambda^{-\frac{1}{1+\gamma}}, N)}{N}
\right) 
(1-\rho)
}_{\text{overfitting error}}\right).
\end{align*}

We split into three cases: $N \le (1-\alpha)^{-\frac{1}{\scalingexp}}(1-\rho)^{-\frac{1}{\scalingexp}}$, $(1-\alpha)^{-\frac{1}{\scalingexp}}(1-\rho)^{-\frac{1}{\scalingexp}} \le N \le  (1-\alpha)^{-\frac{\scalingexp'+1}{\scalingexp - \scalingexp'}}(1-\rho)^{-\frac{\scalingexp'+1}{\scalingexp - \scalingexp'}}$, and $N \ge (1-\alpha)^{-\frac{\scalingexp'+1}{\scalingexp - \scalingexp'}}(1-\rho)^{-\frac{\scalingexp'+1}{\scalingexp - \scalingexp'}}$.

 \paragraph{Case 1: $N \le (1-\alpha)^{-\frac{1}{\scalingexp}}(1-\rho)^{-\frac{1}{\scalingexp}}$.} We observe that the finite data error dominates regardless of $\reg$. This is because the condition implies that 
 \[\max(\lambda^{\frac{\scalingexp}{1+\gamma}}, N^{-\scalingexp}) \ge (1-\alpha)(1-\rho),\]
 which dominates both the mixture finite data error and the overfitting error. 

\paragraph{Case 2: $(1-\alpha)^{-\frac{1}{\scalingexp}}(1-\rho)^{-\frac{1}{\scalingexp}} \le N \le  (1-\alpha)^{-\frac{\scalingexp'+1}{\scalingexp - \scalingexp'}}(1-\rho)^{-\frac{\scalingexp'+1}{\scalingexp - \scalingexp'}}$.}  We show that the finite error term and overfitting error dominate. Let $\tilde{N} = \min(\lambda^{-\frac{1}{1+\gamma}}, N)$. We can bound the sum of the finite data error and the overfitting error as:
 \begin{align*}
 \max(\lambda^{\frac{\scalingexp}{1+\gamma}}, N^{-\scalingexp}) + (1-\alpha) \left(
 \frac{\min(\lambda^{-\frac{1}{1+\gamma}}, N)}{N}
\right) 
(1-\rho) &=  \tilde{N}^{-\scalingexp} + (1-\alpha) (1-\rho) \frac{\tilde{N}}{N}.
 \end{align*}
 Taking a derivative (and verifying the second order condition), we see that this expression is minimized when:
 \[ \scalingexp \cdot \tilde{N}^{-\scalingexp - 1} = \frac{(1-\alpha) (1-\rho)}{N} \]
 which solves to:
 \[ \tilde{N} = \Theta\left(\left(\frac{(1-\alpha) (1-\rho)}{N}\right)^{-\frac{1}{1+\scalingexp}}\right). \]
 The lower bound on $N$ guarantees that:
 \[ \tilde{N} = \Theta\left(\left(\frac{(1-\alpha) (1-\rho)}{N}\right)^{-\frac{1}{1+\scalingexp}}\right) = O \left(\left((1-\alpha)^{1 + \frac{1}{\scalingexp}} (1-\rho)^{1 + \frac{1}{\scalingexp}}\right)^{-\frac{1}{1+\scalingexp}} \right)= O \left((1-\alpha)^{-\frac{1}{\scalingexp}} (1-\rho)^{-\frac{1}{\scalingexp}} \right) = O(N) \]
 which ensures that $\tilde{N}$ can be achieved by some choice of $\lambda$. In particular, we can take $\reg = \Theta\left(\left(\frac{(1-\alpha) (1-\rho)}{N} \right)^{\frac{1+\gamma}{\scalingexp + 1}}\right)$. 

The resulting sum of the finite error and the overfitting error is:
 \[\max(\lambda^{\frac{\scalingexp}{1+\gamma}}, N^{-\scalingexp}) + (1-\alpha) \left(
 \frac{\min(\lambda^{-\frac{1}{1+\gamma}}, N)}{N}
\right) = \Theta\left(\left(\frac{(1-\alpha) (1-\rho)}{N} \right)^{\frac{\scalingexp}{\scalingexp + 1}} \right). \]

The upper bound on $N$ and the choice of $\lambda$ guarantees that this dominates the mixture finite data error, as shown below: 
\begin{align*}
 &(1-\rho) (1-\alpha) \max(\lambda^{\frac{\scalingexp'}{1+\gamma}}, N^{-\scalingexp'}) \\
 &= \Theta\left((1-\rho) (1-\alpha) \left(\frac{(1-\alpha) (1-\rho)}{N} \right)^{\frac{\scalingexp'}{\scalingexp + 1}}\right)\\
 &= \Theta\left(\left(\frac{(1-\alpha) (1-\rho)}{N} \right)^{\frac{\scalingexp}{\scalingexp + 1}} (1-\alpha) (1-\rho) \left(\frac{(1-\alpha) (1-\rho)}{N} \right)^{\frac{\scalingexp' - \scalingexp}{\scalingexp + 1}} \right) \\
 &= \Theta\left(\left(\frac{(1-\alpha) (1-\rho)}{N} \right)^{\frac{\scalingexp}{\scalingexp + 1}} (1-\alpha)^{\frac{\scalingexp'+1}{\scalingexp + 1}} (1-\rho)^{\frac{\scalingexp'+1}{\scalingexp + 1}} N^{\frac{\scalingexp - \scalingexp'}{\scalingexp + 1}} \right) \\
 &= O\left(\left(\frac{(1-\alpha) (1-\rho)}{N} \right)^{\frac{\scalingexp}{\scalingexp + 1}} (1-\alpha)^{\frac{\scalingexp'+1}{\scalingexp + 1}} (1-\rho)^{\frac{\scalingexp'+1}{\scalingexp + 1}} (1-\alpha)^{-\frac{\scalingexp'+1}{\scalingexp + 1}} (1-\rho)^{-\frac{\scalingexp'+1}{\scalingexp + 1}} \right) \\
 &=  O\left(\left(\frac{(1-\alpha) (1-\rho)}{N} \right)^{\frac{\scalingexp}{\scalingexp + 1}} \right)
\end{align*}
as desired.

\paragraph{Case 3: $N \ge (1-\alpha)^{-\frac{\scalingexp'+1}{\scalingexp - \scalingexp'}}(1-\rho)^{-\frac{\scalingexp'+1}{\scalingexp - \scalingexp'}}$.}
 We show that the mixture finite data error and the overfitting error terms dominate. First, we observe that the sum of the mixture error and the finite data error is:
 \begin{align*}
 &(1-\rho) (1-\alpha) \max(\lambda^{\frac{\scalingexp'}{1+\gamma}}, N^{-\scalingexp'}) + (1-\alpha) \left(
 \frac{\min(\lambda^{-\frac{1}{1+\gamma}}, N)}{N}
\right) 
(1-\rho) \\
&= \Theta\left( (1-\alpha)(1-\rho) \left(\lambda^{\frac{\scalingexp'}{1+\gamma}} + \frac{\min(\lambda^{-\frac{1}{1+\gamma}}, N)}{N} \right) \right)    
 \end{align*}
This is minimized by taking $\reg = \Theta(N^{-\frac{1+\gamma}{\scalingexp'+1}})$, which yields $\Theta((1-\alpha) (1-\rho) N^{-\frac{\scalingexp'}{\scalingexp'+1}})$.

The upper bound on $N$ and the setting of $\reg$ guarantees that this term dominates the finite data error: 
\begin{align*}
\max(\lambda^{\frac{\scalingexp}{1+\gamma}}, N^{-\scalingexp}) &= \Theta(N^{-\frac{\scalingexp}{\scalingexp'+1}}) \\
&\le \Theta\left((1-\alpha) (1-\rho) N^{-\frac{\scalingexp'}{\scalingexp'+1}} (1-\alpha)^{-1} (1-\rho)^{-1} N^{-\frac{\scalingexp - \scalingexp'}{\scalingexp'+1}}\right) \\
&= O\left((1-\alpha) (1-\rho) N^{-\frac{\scalingexp'}{\scalingexp'+1}} (1-\alpha)^{-1} (1-\rho)^{-1} (1-\alpha) (1-\rho) \right) \\
&= O\left((1-\alpha) (1-\rho) N^{-\frac{\scalingexp'}{\scalingexp'+1}} \right) 
\end{align*}
as desired. 
  
\end{proof}

\subsection{Auxiliary calculations under power scaling assumptions}\label{appendix:auxiliarycalculations}

We show the following auxiliary calculations which we use when analyzing the terms in Lemma \ref{lemma:sollichvariant} under the power scaling assumptions. Throughout this section, we again use the notation $F \approx F'$ to denote that $F = \Theta(F')$.

\begin{lemma}
\label{lemma:splitintegralbounds}
Suppose that power-law scaling  holds for eigenvalues and alignment coefficients with scaling exponents $\gamma, \delta >0$ and correlation coefficient $\rho \in [0, 1)$, and suppose that $P = \infty$. Let $\kappa = \kappa(\lambda, N, \Sigma)$ be defined according to Definition \ref{def:effectiveregularizer}. Then the following holds:
\begin{align*}
\sum_{i=1}^P \frac{i^{-\delta -1 - \gamma}}{(i^{-1-\gamma} +  \kappa)^2} &\approx \kappa^{-2} \kappa^{\frac{\min(2(1+\gamma), \gamma + \delta)}{1+\gamma}} \\
\sum_{i=1}^P \frac{i^{-\delta - 3(1+\gamma)}}{(i^{-1-\gamma} +  \kappa)^2} &\approx 1 \\
\sum_{i=1}^P  \frac{i^{-\delta - 2-2\gamma}}{i^{-1-\gamma} + \kappa} &\approx 1 \\
\sum_{i=1}^P \frac{i^{-\delta - 2(1+\gamma)}}{(i^{-1-\gamma} + \kappa)^2} &\approx \max(1, \kappa^{\frac{\delta - 1}{1+\gamma}}) \\
\sum_{i=1}^P \frac{i^{-\delta - 1 - \gamma}}{i^{-1-\gamma } + \kappa} &\approx \max(1, \kappa^{\frac{\delta - 1}{1+\gamma}}) \\
\sum_{i=1}^P \frac{i^{- 2 - 2 \gamma}}{(i^{-1-\gamma} + \kappa)^2} &\approx \kappa^{-\frac{1}{1+\gamma}} \\
\sum_{i=1}^P \frac{i^{- 1 - \gamma}}{i^{-1-\gamma} + \kappa} &\approx \kappa^{-\frac{1}{1+\gamma}} \\
\sum_{i=1}^P \frac{i^{-1 - \gamma}}{(i^{-1-\gamma} +  \kappa)^2} &\approx \kappa^{-2} \kappa^{\frac{\gamma}{1+\gamma}} \\
\end{align*}
\end{lemma}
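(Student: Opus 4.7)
The plan is to prove each of the eight asymptotic equivalences by a unified ``split-at-threshold'' argument. Define the transition index
\[i^* := \left\lceil \kappa^{-1/(1+\gamma)} \right\rceil,\]
which is the value of $i$ at which the two competing terms in the denominator $i^{-1-\gamma} + \kappa$ are balanced. For $i \le i^*$ we have $i^{-1-\gamma} \ge \kappa$, hence $i^{-1-\gamma} + \kappa = \Theta(i^{-1-\gamma})$; for $i > i^*$ we have $i^{-1-\gamma} \le \kappa$, hence $i^{-1-\gamma} + \kappa = \Theta(\kappa)$. By Lemma~\ref{lemma:kappabasic}, $\kappa \in [N^{-1-\gamma}, 1]$, so $i^* \ge 1$ is well-defined and $i^*$ is only relevant in the range of summation when $P = \infty$ (which is assumed).

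Applying this dichotomy, every sum of the form $\sum_{i=1}^P i^{-a}/(i^{-1-\gamma} + \kappa)^{b}$ with $b \in \{1,2\}$ decomposes as
\[\sum_{i=1}^P \frac{i^{-a}}{(i^{-1-\gamma} + \kappa)^b} \;=\; \Theta\!\left(\underbrace{\sum_{i \le i^*} i^{-a + b(1+\gamma)}}_{\text{low-}i\text{ piece}} \;+\; \kappa^{-b}\underbrace{\sum_{i > i^*} i^{-a}}_{\text{high-}i\text{ piece}}\right).\]
Each of these is then estimated by the elementary power-series fact: $\sum_{i=1}^M i^{-s} = \Theta(M^{1-s})$ if $s < 1$, $\Theta(\log M)$ if $s = 1$, and $\Theta(1)$ if $s > 1$; the tail $\sum_{i > M} i^{-s}$ is $\Theta(M^{1-s})$ whenever $s > 1$. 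Substituting $M = i^* \asymp \kappa^{-1/(1+\gamma)}$ into each piece and then adding the two contributions yields each claimed bound.

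For the sums with exponents that depend only on $\gamma$ (the last three displayed equivalences), the analysis is immediate: both pieces scale identically, giving the stated $\kappa^{-1/(1+\gamma)}$ or $\kappa^{-(2+\gamma)/(1+\gamma)}$ behavior. For the sums that also involve $\delta$ (the other five), the convergence behavior of the low-$i$ piece depends on whether $\delta$ exceeds a critical threshold; this is what produces the $\max(1, \kappa^{(\delta-1)/(1+\gamma)})$ and $\min(2(1+\gamma), \delta + \gamma)$ structure, as the low-$i$ piece either saturates at a constant or grows as a power of $i^*$, and one then compares it to the high-$i$ piece.

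The main source of friction will not be any single calculation but rather the case bookkeeping: for each of the eight sums one must verify that the exponents $a - b(1+\gamma)$ (for the low-$i$ piece) and $a$ (for the high-$i$ piece) fall on the correct side of $1$ under the stated power-scaling assumptions, and that the two pieces combine to give the stated maximum. In particular, one must handle the boundary cases $\delta = 1$ and $\delta = 2 + \gamma$, where logarithmic factors appear but are absorbed into the $\Theta$ notation (which is allowed to hide dependence on $\gamma$ and $\delta$, as noted in the excerpt). A minor care point is that when $P = \infty$ and the high-$i$ tail has $a < 1$, the tail is formally divergent; however, in every sum listed the parameters satisfy $a > 1$ (since $\delta, \gamma > 0$ make even the worst case $a = 1+\gamma > 1$), so the tails are well-defined and finite.
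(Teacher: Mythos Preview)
Your proposal is correct and matches the paper's approach essentially line-for-line: the paper also splits each sum at the threshold $i^* = \kappa^{-1/(1+\gamma)}$, replaces $i^{-1-\gamma}+\kappa$ by $i^{-1-\gamma}$ or $\kappa$ on the two ranges, and then evaluates the resulting power sums. The only cosmetic difference is that for the two sums asserted to be $\Theta(1)$ the paper bypasses the split and instead sandwiches the sum directly using $\kappa = O(1)$ and the trivial bound $(i^{-1-\gamma}+\kappa)^{-b} \le i^{b(1+\gamma)}$, which is marginally quicker but equivalent to your method.
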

\begin{proof}
To prove the first statement, observe that:
\begin{align*}
\sum_{i=1}^P \frac{i^{-\delta-1-\gamma}}{(i^{-1-\gamma} + \kappa)^2} &=  \sum_{i \le \kappa^{-\frac{1}{1+\gamma}}} \frac{i^{-\delta-1-\gamma}}{(i^{-1-\gamma} + \kappa)^2} + \sum_{i \ge \kappa^{-\frac{1}{1+\gamma}}} \frac{i^{-\delta-1-\gamma}}{(i^{-1-\gamma} + \kappa)^2} \\
&\approx \sum_{i \le \kappa^{-\frac{1}{1+\gamma}}} i^{1+\gamma-\delta} + \kappa^{-2} \sum_{i \ge \kappa^{-\frac{1}{1+\gamma}}} i^{-\delta-1-\gamma} \\
&\approx \max(1, \kappa^{-\frac{2+\gamma-\delta}{1+\gamma}}) + \kappa^{-2} \kappa^{\frac{\delta+\gamma}{1+\gamma}} \\
&= \kappa^{-2} \max(\kappa^2, \kappa^{\frac{\gamma+\delta}{1+\gamma}}) + \kappa^{-2} \kappa^{\frac{\delta+\gamma}{1+\gamma}} \\
&\approx \kappa^{-2} \max(\kappa^2, \kappa^{\frac{\gamma+\delta}{1+\gamma}}) \\
&\approx \kappa^{-2}  \kappa^{\frac{\min(2(1+\gamma), \gamma + \delta)}{1+\gamma}}.
\end{align*}

To prove the second statement, we use Lemma \ref{lemma:kappabasic} and the assumption that $\lambda \in (0,1)$ to see $\kappa = \Theta(\max(\lambda, N^{-1-\gamma})) = O(1)$. This means that 
\[\sum_{i=1}^P \frac{i^{-\delta - 3(1+\gamma)}}{(i^{-1-\gamma} + \kappa)^2} = \Omega \left(\sum_{i=1}^P i^{-\delta - 3(1+\gamma)}  \right) = \Omega(1).\]
Moreover, we see that:
\[\sum_{i=1}^P \frac{i^{-\delta - 3(1+\gamma)}}{(i^{-1-\gamma} + \kappa)^2} = O \left(\sum_{i=1}^P \frac{i^{-\delta - 3(1+\gamma)}}{(i^{-1-\gamma})^2}  \right)= O \left(\sum_{i=1}^P i^{-\delta - 1 -\gamma)}  \right) = \Omega(1).\]

To prove the third statement, we use Lemma \ref{lemma:kappabasic} and the assumption that $\lambda \in (0,1)$ to see $\kappa = \Theta(\max(\lambda, N^{-1-\gamma})) = O(1)$. This means that 
\[\sum_{i=1}^P  \frac{i^{-\delta - 2-2\gamma}}{i^{-1-\gamma} + \kappa} = \Omega \left(\sum_{i=1}^P i^{-\delta - 2(1+\gamma)}  \right) = \Omega(1).\]
Moreover, we see that:
\[\sum_{i=1}^P \frac{i^{-\delta - 2(1+\gamma)}}{i^{-1-\gamma} + \kappa} = O \left(\sum_{i=1}^P \frac{i^{-\delta - 2(1+\gamma)}}{i^{-1-\gamma}}  \right) = O \left(\sum_{i=1}^P i^{-\delta - 1  - \gamma}  \right) = O(1).\]

To prove the fourth statement, observe that:
\begin{align*}
\sum_{i=1}^P \frac{i^{-\delta-2-2\gamma}}{(i^{-1-\gamma} + \kappa)^2} &\approx  \sum_{i \le \kappa^{-\frac{1}{1+\gamma}}} \frac{i^{-\delta-2-2\gamma}}{(i^{-1-\gamma} + \kappa)^2} + \sum_{i \ge \kappa^{-\frac{1}{1+\gamma}}} \frac{i^{-\delta-2-2\gamma}}{(i^{-1-\gamma} + \kappa)^2} \\
&\approx \sum_{i \le \kappa^{-\frac{1}{1+\gamma}}} i^{-\delta} +  \kappa^{-2} \sum_{i \ge \kappa^{-\frac{1}{1+\gamma}}} i^{-\delta-2-2\gamma} \\
&\approx \max(1, \kappa^{-\frac{1-\delta}{1+\gamma}}) + \kappa^{-2} \kappa^{\frac{\delta+1+2\gamma}{1+\gamma}} \\
&\approx \max(1, \kappa^{\frac{\delta-1}{1+\gamma}}).
\end{align*}

To prove the fifth statement, observe that:
\begin{align*}
 \sum_{i=1}^P \frac{i^{-\delta-1-\gamma}}{i^{-1-\gamma} + \kappa} &= \sum_{i \le \kappa^{-\frac{1}{1+\gamma}}} \frac{i^{-\delta-1-\gamma}}{i^{-1-\gamma} + \kappa} + \sum_{i \ge \kappa^{-\frac{1}{1+\gamma}}} \frac{i^{-\delta-1-\gamma}}{i^{-1-\gamma} + \kappa} \\
&\approx  \sum_{i \le \kappa^{-\frac{1}{1+\gamma}}} i^{-\delta} + \kappa^{-1} \sum_{i \ge \kappa^{-\frac{1}{1+\gamma}}} i^{-\delta-1-\gamma} \\
&\approx \max(1, \kappa^{-\frac{1-\delta}{1+\gamma}}) + \kappa^{-1} \kappa^{\frac{\delta+\gamma}{1+\gamma}} \\
&\approx \max(1, \kappa^{\frac{\delta-1}{1+\gamma}}). 
\end{align*}

To prove the sixth statement, observe that:

\begin{align*}
\sum_{i=1}^P \frac{i^{-2-2\gamma}}{(i^{-1-\gamma} + \kappa)^2} &=  \sum_{i \le \kappa^{-\frac{1}{1+\gamma}}} \frac{i^{-2-2\gamma}}{(i^{-1-\gamma} + \kappa)^2} + \sum_{i \ge \kappa^{-\frac{1}{1+\gamma}}} \frac{i^{-2-2\gamma}}{(i^{-1-\gamma} + \kappa)^2} \\
&\approx \sum_{i \le \kappa^{-\frac{1}{1+\gamma}}} 1 + \kappa^{-2} \sum_{i \ge \kappa^{-\frac{1}{1+\gamma}}} i^{-2-2\gamma} \\
&\approx  \kappa^{-\frac{1}{1+\gamma}} + \kappa^{-2} \kappa^{\frac{1 + 2\gamma}{1+\gamma}} \\
&\approx  \kappa^{-\frac{1}{1+\gamma}}.
\end{align*}

To prove the seventh statement, observe that:

\begin{align*}
\sum_{i=1}^P \frac{i^{-1-\gamma}}{i^{-1-\gamma} + \kappa} &=  \sum_{i \le \kappa^{-\frac{1}{1+\gamma}}} \frac{i^{-1-\gamma}}{i^{-1-\gamma} + \kappa} + \sum_{i \ge \kappa^{-\frac{1}{1+\gamma}}} \frac{i^{-1-\gamma}}{i^{-1-\gamma} + \kappa} \\
&\approx \sum_{i \le \kappa^{-\frac{1}{1+\gamma}}} 1 + \kappa^{-1} \sum_{i \ge \kappa^{-\frac{1}{1+\gamma}}} i^{-1-\gamma} \\
&\approx  \kappa^{-\frac{1}{1+\gamma}} + \kappa^{-1} \kappa^{\frac{\gamma}{1+\gamma}} \\
&\approx  \kappa^{-\frac{1}{1+\gamma}}.
\end{align*}

To prove the eighth statement, observe that:
\begin{align*}
\sum_{i=1}^P \frac{i^{-1-\gamma}}{(i^{-1-\gamma} + \kappa)^2} &=  \sum_{i \le \kappa^{-\frac{1}{1+\gamma}}} \frac{i^{-1-\gamma}}{(i^{-1-\gamma} + \kappa)^2} + \sum_{i \ge \kappa^{-\frac{1}{1+\gamma}}} \frac{i^{-\delta-1-\gamma}}{(i^{-1-\gamma} + \kappa)^2} \\
&\approx \sum_{i \le \kappa^{-\frac{1}{1+\gamma}}} i^{1+\gamma} + \kappa^{-2} \sum_{i \ge \kappa^{-\frac{1}{1+\gamma}}} i^{-1-\gamma} \\
&\approx \max(1, \kappa^{-\frac{2+\gamma}{1+\gamma}}) + \kappa^{-2} \kappa^{\frac{\gamma}{1+\gamma}} \\
&= \kappa^{-2} \max(\kappa^2, \kappa^{\frac{\gamma}{1+\gamma}}) + \kappa^{-2} \kappa^{\frac{\gamma}{1+\gamma}} \\
&\approx \kappa^{-2} \max(\kappa^2, \kappa^{\frac{\gamma}{1+\gamma}}) \\
&\approx \kappa^{-2} \kappa^{\frac{\gamma}{1+\gamma}}
\end{align*}
\end{proof}

\begin{lemma}
\label{lemma:degreesoffreedom}
Suppose that the power-law scaling holds for the eigenvalues and alignment coefficients with scaling exponents $\gamma, \delta > 0$ and correlation coefficient $\rho \in [0, 1)$, suppose that $P = \infty$. Assume the notation from Lemma \ref{lemma:sollichvariant}, and similarly let
\[Q := 1 - \frac{1}{N} \Tr(\Sigma^2 \Sigma_{\kappa} ^{-2}). \]
Then it holds that $Q^{-1} = \Theta(1)$. 
\end{lemma}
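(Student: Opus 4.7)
The plan is to prove both $Q \le 1$ (which gives $Q^{-1} \ge 1$) and $Q \ge c$ for some positive constant $c$ (which gives $Q^{-1} = O(1)$). The upper bound is immediate: by definition $Q = 1 - \tfrac{1}{N}\Tr(\Sigma^2 \Sigma_\kappa^{-2})$ with the subtracted trace non-negative. The content is therefore the lower bound, and the main step will be to rewrite $Q$ in a form that is manifestly a sum of two non-negative terms, at least one of which is always $\Omega(1)$.

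The key algebraic identity I will establish is
\[
Q \;=\; \frac{\lambda}{\kappa} \;+\; \frac{\kappa}{N}\, \Tr\bigl(\Sigma\, \Sigma_\kappa^{-2}\bigr).
\]
To derive this, I will use the defining equation of the effective regularizer (Definition \ref{def:effectiveregularizer}, as used in the proof of Lemma \ref{lemma:kappaequal}), namely
\[
\frac{\lambda}{\kappa} + \frac{1}{N}\,\Tr\bigl(\Sigma\,\Sigma_\kappa^{-1}\bigr) = 1,
\]
together with the simple matrix identity $\Sigma\,\Sigma_\kappa^{-1} = I - \kappa\,\Sigma_\kappa^{-1}$. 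Multiplying the latter on the right by $\Sigma\,\Sigma_\kappa^{-1}$ gives $\Sigma\,\Sigma_\kappa^{-1} - \Sigma^2\,\Sigma_\kappa^{-2} = \kappa\,\Sigma\,\Sigma_\kappa^{-2}$, and substituting into $Q = 1 - \tfrac{1}{N}\Tr(\Sigma^2\Sigma_\kappa^{-2})$ yields the claimed identity.

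Since both summands in the identity are non-negative, the lower bound on $Q$ will follow by a short case split based on whether $\lambda$ dominates $N^{-1-\gamma}$ or vice versa. In the regime $\lambda \ge N^{-1-\gamma}$, Lemma \ref{lemma:kappabasic} gives $\kappa = \Theta(\lambda)$, so the first term $\lambda/\kappa = \Theta(1)$ directly. In the complementary regime $\lambda \le N^{-1-\gamma}$, Lemma \ref{lemma:kappabasic} gives $\kappa = \Theta(N^{-1-\gamma})$, and the last bound of Lemma \ref{lemma:splitintegralbounds} gives $\Tr(\Sigma\,\Sigma_\kappa^{-2}) = \sum_i i^{-1-\gamma}/(i^{-1-\gamma}+\kappa)^2 = \Theta(\kappa^{-2}\kappa^{\gamma/(1+\gamma)}) = \Theta(\kappa^{-(2+\gamma)/(1+\gamma)})$; hence the second term equals $\Theta\bigl(\kappa^{-1/(1+\gamma)}/N\bigr) = \Theta(1)$. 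Combining both cases gives $Q = \Omega(1)$, which together with $Q \le 1$ proves $Q^{-1} = \Theta(1)$.

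The main (mild) obstacle is recognizing that the direct application of Lemma \ref{lemma:splitintegralbounds} to $\tfrac{1}{N}\Tr(\Sigma^2\Sigma_\kappa^{-2})$ only gives a $\Theta(\cdot)$ estimate, which does not a priori rule out the possibility that the $\Theta$-constant equals $1$ and hence $Q$ vanishes. Exploiting the defining equation of $\kappa$ to convert $Q$ into a sum of two manifestly non-negative pieces is what bypasses this difficulty; the remaining estimates are routine power-law calculations already packaged in Lemmas \ref{lemma:kappabasic} and \ref{lemma:splitintegralbounds}.
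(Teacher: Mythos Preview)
Your proposal is correct and follows essentially the same argument as the paper: both derive the identity $Q = \lambda/\kappa + (\kappa/N)\Tr(\Sigma\,\Sigma_\kappa^{-2})$ from the defining equation of $\kappa$, then split into the two cases $\kappa = \Theta(\lambda)$ and $\kappa = \Theta(N^{-1-\gamma})$ using Lemmas \ref{lemma:kappabasic} and \ref{lemma:splitintegralbounds}. Your closing remark about why a direct $\Theta$-estimate on $\tfrac{1}{N}\Tr(\Sigma^2\Sigma_\kappa^{-2})$ would not suffice is a helpful clarification that the paper leaves implicit.
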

\begin{proof}
Let $\Sigma = V \Lambda V^T$ be the eigendecomposition of $\Sigma$, where $\Lambda$ is a diagonal matrix consisting of the eigenvalues. By Definition \ref{def:effectiveregularizer}, we see that:
\[ \frac{\lambda}{\kappa} + \frac{1}{N} \Tr(\Sigma \Sigma_{\kappa} ^{-1}) = 1.\]
This implies that:
\begin{align*}
 Q &= 1- \frac{1}{N} \Tr(\Sigma \Sigma_{\kappa} ^{-1}) + \frac{1}{N} \left( \Tr(\Sigma \Sigma_{\kappa} ^{-1}) - \Tr(\Sigma^2 \Sigma_{\kappa}^{-2}) \right) \\
 &= \frac{\lambda}{\kappa} + \frac{1}{N} \left( \Tr(\Sigma \Sigma_{\kappa} ^{-1}) - \Tr(\Sigma^2 \Sigma_{\kappa}^{-2}) \right).  
\end{align*}
Observe that:
\begin{align*}
 \Tr(\Sigma \Sigma_{\kappa} ^{-1}) - \Tr(\Sigma^2 \Sigma_{\kappa} ^{-2}) &=  \Tr(\Lambda (\Lambda + \kappa I)^{-1}) - \Tr(\Lambda^2 (\Lambda + \kappa I)^{-2}) \\
 &= \sum_{i=1}^P \left(\frac{i^{-1-\gamma}}{i^{-1-\gamma} + \kappa} - \frac{i^{-2-\gamma}}{(i^{-1-\gamma} + \kappa)^2} \right) \\
 &= \kappa \sum_{i=1}^P \frac{i^{-1-\gamma}}{(i^{-1-\gamma} + \kappa)^2}.
\end{align*}
This means that:
\begin{align*}
Q &= \frac{\lambda}{\kappa} + \frac{\kappa}{N} \sum_{i=1}^P \frac{i^{-1-\gamma}}{(i^{-1-\gamma} + \kappa)^2} \\
&\approx_{(A)} \frac{\lambda}{\kappa} + \Theta(\left(\frac{\kappa}{N} \kappa^{-2} \kappa^{\frac{\gamma}{1+\gamma}}\right)) \\
&=\frac{\lambda}{\kappa} +  \Theta\left(\frac{\kappa^{-\frac{1}{1+\gamma}}}{N}\right).
\end{align*}
where (A) uses Lemma \ref{lemma:splitintegralbounds}. 

\paragraph{Case 1: $\kappa = \Theta(\lambda)$.} In this case, we see that 
\[Q = \frac{\lambda}{\kappa} +  \Theta\left(\frac{\kappa^{-\frac{1}{1+\gamma}}}{N}\right) = \Theta(1).\]
This means that  $Q^{-1} = \Theta(1)$. 

\paragraph{Case 2: $\kappa = \Theta(N^{-1-\gamma})$.} In this case, we see that 
\[Q = \frac{\lambda}{\kappa} +  \Theta\left(\frac{\kappa^{-\frac{1}{1+\gamma}}}{N}\right) = \Omega\left(\frac{\kappa^{-\frac{1}{1+\gamma}}}{N}\right) = \Omega(1).\]
This means that  $Q^{-1} = \Theta(1)$. 

\end{proof}

\begin{lemma}
\label{lemma:kappabasic}
Suppose that power-law scaling holds for the eigenvalues with scaling exponent $\gamma$, and suppose that $P = \infty$. Then it holds that $\kappa(\lambda, M, \Sigma) = \Theta(\max(\lambda, M^{-1-\gamma}))$. 
\end{lemma}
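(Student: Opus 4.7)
Recall that $\kappa = \kappa(\lambda, M, \Sigma)$ is defined (Definition \ref{def:effectiveregularizer}) as the unique positive solution to
\[ F(\kappa) := \frac{\lambda}{\kappa} + \frac{1}{M} \sum_{i=1}^{\infty} \frac{i^{-1-\gamma}}{i^{-1-\gamma} + \kappa} \;=\; 1. \]
The function $F$ is continuous and strictly decreasing on $(0,\infty)$, tending to $+\infty$ as $\kappa \to 0^+$ (the second term is bounded, and the first blows up if $\lambda>0$; if $\lambda=0$ the second term diverges since $\sum_i 1 = \infty$) and to $0$ as $\kappa \to \infty$. So it suffices to exhibit constants $0 < c < C$ (depending only on $\gamma$) such that $F(C \cdot m) \le 1 \le F(c \cdot m)$, where $m := \max(\lambda, M^{-1-\gamma})$; monotonicity then forces $\kappa \in [cm, Cm]$.

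The main ingredient is the asymptotic $\sum_{i=1}^\infty \frac{i^{-1-\gamma}}{i^{-1-\gamma}+\kappa} = \Theta(\kappa^{-1/(1+\gamma)})$, valid for all $\kappa \in (0, O(1)]$. This is proved by the standard split at the threshold $i^\star := \lceil \kappa^{-1/(1+\gamma)} \rceil$: for $i \le i^\star$ the summands are $\Theta(1)$, contributing $\Theta(i^\star) = \Theta(\kappa^{-1/(1+\gamma)})$; for $i > i^\star$ the summands are $\Theta(i^{-1-\gamma}/\kappa)$, contributing $\Theta(\kappa^{-1}\cdot (i^\star)^{-\gamma}) = \Theta(\kappa^{-1/(1+\gamma)})$ as well. (This is essentially the last bound proved in Lemma \ref{lemma:splitintegralbounds}, and I would simply cite it.)

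Plugging this estimate into $F$ gives, for $\kappa \le 1$,
\[ F(\kappa) \;=\; \frac{\lambda}{\kappa} \;+\; \Theta\!\Big(\tfrac{1}{M\,\kappa^{1/(1+\gamma)}}\Big) \;=\; \frac{\lambda}{\kappa} + \Theta\!\Big(\big(M^{-1-\gamma}/\kappa\big)^{1/(1+\gamma)}\Big). \]
Setting $\kappa = t \cdot m$ with $m = \max(\lambda, M^{-1-\gamma})$, each of the two terms is $\Theta(1/t)$ or smaller; in particular $F(Cm) \le 2 \Theta(1/C^{\,\min(1,1/(1+\gamma))})$, which is $\le 1$ once $C$ is a large enough constant (depending on $\gamma$). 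Conversely, at least one of $\lambda \ge m/2$ or $M^{-1-\gamma} \ge m/2$ must hold, so at least one of the two terms of $F(cm)$ is $\Omega(1/c^{\,\min(1,1/(1+\gamma))})$, which exceeds $1$ once $c$ is a small enough constant.

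The only mild subtlety is ensuring the sum estimate holds in the regime $\kappa = C m$ when $m$ itself is not small; but $\lambda \in (0,1)$ and $M \ge 1$ imply $m \le 1$, and one can verify directly that for $\kappa \ge 1$ the sum is $O(1)$ so the same upper bound on $F(Cm)$ goes through. This is the only step that requires any care; everything else is a bracketing argument using monotonicity of $F$.
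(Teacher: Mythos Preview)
Your proposal is correct and follows essentially the same approach as the paper: both estimate the trace sum $\sum_i \frac{i^{-1-\gamma}}{i^{-1-\gamma}+\kappa} = \Theta(\kappa^{-1/(1+\gamma)})$ via the split at $i^\star \approx \kappa^{-1/(1+\gamma)}$ (the paper cites Lemma~\ref{lemma:splitintegralbounds}) and then read off $\kappa = \Theta(\max(\lambda, M^{-1-\gamma}))$ from the defining equation. Your bracketing argument using monotonicity of $F$ is more explicit than the paper's terse ``This implies that $\kappa = \Theta(\max(\lambda, M^{-1-\gamma}))$,'' but the underlying idea is identical.
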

\begin{proof}
Let $\Sigma = V \Lambda V^T$ be the eigendecomposition of $\Sigma$, where $\Lambda$ is a diagonal matrix consisting of the eigenvalues. 
Observe that:
\begin{align*}
 \Tr((\Sigma + \kappa I)^{-1} \Sigma) &= \Tr(\Lambda (\Lambda + \kappa I)^{-1}) \\
 &= \sum_{i=1}^P \frac{i^{-1-\gamma}}{i^{-1-\gamma} + \kappa} \\
 &\approx_{(A)} \kappa^{-\frac{1}{1+\gamma}}.
\end{align*}
where (A) follows from Lemma \ref{lemma:splitintegralbounds}. Using Definition \ref{def:effectiveregularizer}, we see that for $\kappa = \kappa(\lambda, M, \Sigma)$, it holds that: 
\[ \frac{\lambda}{\kappa} + \frac{1}{M} \Theta(\kappa^{-1-\gamma}) = 1.\]
This implies that $\kappa = \Theta(\max(\lambda, M^{-1-\gamma}))$ as desired. 
\end{proof}

\section{Machinery from random matrix theory}\label{appendix:machinery}

In this section, we introduce machinery from random matrix theory that serves as the backbone for our analysis of multi-objective scaling laws in Appendix \ref{appendix:proofsmultiobjective}. In Appendix \ref{appendix:MP}, we give a recap of known Marčenko-Pastur properties. In  Appendix \ref{appendix:RMTextended}, we use these known properties to derive random matrix theory results which are tailored to our analysis.  

\subsection{Recap of Marčenko-Pastur properties}\label{appendix:MP}

We introduce Marčenko-Pastur properties, following the treatment in \citet{bach}. Informally speaking, Marčenko-Pastur laws show that a random matrix  $(\hat{\Sigma} + \lambda I)^{-1}$ (where $\hat{\Sigma}$ is a sample covariance) behaves similarly to a deterministic matrix of the form $(\hat{\Sigma} + \kappa I)^{-1}$, where $\kappa = \kappa(\lambda, M, \Sigma)$ is an \textit{effective regularizer}. 

Deriving this formally requires placing several structural assumptions on number of data points $N \ge 1$, the number of parameters $P \ge 1$, the distribution $\DF$, and the vectors $\beta_1$ and $\beta_2$. We adopt assumptions from \citet{bach} which guarantee that a Marčenko-Pastur law holds for $\Sigma$, and we further introduce a boundedness assumption for technical reasons. 
\begin{assumption}
\label{assumption:MP}
We assume that: (1) $X \sim \DF$ takes the form $X = Z \Sigma^{1/2} $ where $Z$ has bounded subgaussian i.i.d components with mean zero and unit variance, (2) $N$ and $P$ approach $\infty$ with $\frac{P}{N}$ tending to $\gamma > 0$, (3) the spectral measure $\frac{1}{P} \sum_{i=1}^P \delta_{\lambda_i}$ of $\Sigma$ converges to a probability measure with compact support, and $\Sigma$ is invertible and bounded in operator norm, and (4) for $j \in \left\{1,2\right\}$, the measure $\sum_{i=1}^P \langle v_i, \beta_j \rangle^2$ converges to a measure with bounded mass, and $\beta_j$ has bounded $\ell_2$ norm. 
\end{assumption}

The effective regularizer $\kappa(\lambda, M, \Sigma)$ is defined as follows. 
\begin{definition}[Effective regularizer]
\label{def:effectiveregularizer}
For $\lambda \ge 0$, $M \ge 1$, and a $P$-dimensional positive semidefinite matrix $\Sigma$ with eigenvalues $\lambda_i$ for $1 \le i \le P$, the value $\kappa(\lambda, M, \Sigma)$ is the unique value $\kappa \ge 0$ such that:
\[\frac{\lambda}{\kappa} + \frac{1}{N} \sum_{i=1}^P \frac{\lambda_i}{\lambda_i + \kappa} = 1. \]
\end{definition}

We are now ready to state the key random matrix theory results proven in \citet{bach}. Following \citet{bach}, the asymptotic equivalence notation $u \sim v$ means that $u/v$ tends to $1$ as $N$ and $P$ go to $\infty$.
\begin{lemma}[Restatement of Proposition 1 in \citet{bach}]
\label{lemma:bach}
Let $\hat{\Sigma} = \frac{1}{M} \sum_{i=1}^M X_i X_i^T$ be the sample covariance matrix from $M$ i.i.d. samples from $X_1, \ldots, X_M \sim \DF$. Let $\kappa = \kappa(\lambda, N, \Sigma)$. Suppose that $A$ and $B$ have bounded operator norm. 
Then it holds that:
\begin{align*}
\Tr\left((\hat{\Sigma} + \lambda I)^{-1} A \right) &\sim \kappa \Tr\left((\Sigma + \kappa I)^{-1} A \right) \\
\kappa^2  \Tr\left((\hat{\Sigma} + \lambda I)^{-1} A (\hat{\Sigma} + \lambda I)^{-1} B \right) &\sim \Tr\left((\Sigma + \kappa I)^{-1} A (\Sigma + \kappa I)^{-1} B \right) \\
&+ \kappa^2 \frac{\frac{1}{N} \Tr\left(A \Sigma (\Sigma + \kappa I)^{-2} \right)}{1 - \frac{1}{N} \Tr\left(\Sigma^2 (\Sigma + \kappa I)^{-2} \right)} \Tr\left((\Sigma + \kappa I)^{-1} \Sigma (\Sigma + \kappa I)^{-1} B \right).
\end{align*}
\end{lemma}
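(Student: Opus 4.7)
}
Since the lemma is a restatement of a known proposition in \citet{bach}, the plan is to sketch the resolvent/Stieltjes-transform strategy that underlies the proof. Write $R(\lambda) := (\hat{\Sigma}+\lambda I)^{-1}$ for the sample resolvent and $T(\kappa) := (\Sigma+\kappa I)^{-1}$ for the population resolvent. The overall strategy is to show that, in an asymptotic sense, $R(\lambda)$ behaves like $\frac{\kappa}{\lambda} T(\kappa)$, where $\kappa = \kappa(\lambda,N,\Sigma)$ is defined by the fixed-point equation in Definition \ref{def:effectiveregularizer}. Both displayed identities then follow: the first is the linearized (first-order) version of this equivalence, and the second is its quadratic (second-order) analogue, in which the extra ratio term accounts for the variance of the fluctuations.

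First I would prove the first identity by a leave-one-out / Sherman--Morrison argument. Write $\hat{\Sigma} = \frac{1}{N}\sum_{i=1}^N X_i X_i^T$ under Assumption \ref{assumption:MP}, and let $R_{-i}(\lambda) = (\hat{\Sigma}-\frac{1}{N}X_iX_i^T+\lambda I)^{-1}$. Sherman--Morrison gives
\[
R(\lambda)X_i \;=\; \frac{R_{-i}(\lambda) X_i}{1 + \tfrac{1}{N} X_i^T R_{-i}(\lambda) X_i}.
\]
Taking the trace against $A$, using Hanson--Wright to replace $\frac{1}{N}X_i^T R_{-i}(\lambda) X_i$ by $\frac{1}{N}\Tr(\Sigma R_{-i}(\lambda))$ up to a vanishing error, and then replacing $R_{-i}(\lambda)$ by $R(\lambda)$ (rank-one perturbation), yields a self-consistent equation of the form $\lambda\,\Tr(R(\lambda)A) \sim \kappa\,\Tr(T(\kappa)A)$ whenever $\kappa$ satisfies the defining relation $\lambda/\kappa + \frac{1}{N}\Tr(\Sigma T(\kappa)) = 1$. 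Uniqueness of the positive solution of this fixed-point equation identifies $\kappa$ with $\kappa(\lambda,N,\Sigma)$, delivering the first display.

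For the second identity I would use a differentiation-in-$\lambda$ trick rather than redoing a leave-one-out estimate from scratch. Observe that $\frac{d}{d\lambda} R(\lambda) = -R(\lambda)^2$, more generally
\[
\tfrac{d}{d\lambda}\,\Tr(R(\lambda)A\,R(\lambda)B) \;=\; -2\,\Tr(R(\lambda)A\,R(\lambda)^2 B),
\]
so quadratic traces can be extracted by differentiating the first-order equivalence with respect to an auxiliary parameter. Concretely, perturb by considering $(\hat{\Sigma}+\lambda I + tB)^{-1}$, apply the first-order identity with perturbed effective regularizer $\kappa(\lambda,N,\Sigma+t\tilde{B})$, and differentiate at $t=0$ (using the implicit-function theorem on the fixed-point relation to obtain $d\kappa/dt$). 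The implicit derivative contributes exactly the factor $\bigl(1-\frac{1}{N}\Tr(\Sigma^2 T(\kappa)^2)\bigr)^{-1}$, which is where the denominator in the stated identity arises; the numerator $\frac{1}{N}\Tr(A\Sigma T(\kappa)^2)$ appears as the derivative of the trace through the perturbation.

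The main obstacle is controlling the error terms uniformly. Two issues require care: (i) the leave-one-out step needs concentration of $\frac{1}{N}X_i^TR_{-i}(\lambda)X_i$ around its trace version, which is where the subgaussian and bounded-operator-norm hypotheses in Assumption \ref{assumption:MP} enter via Hanson--Wright; and (ii) the differentiation step requires showing that the first-order equivalence is uniform enough in the perturbation parameter $t$ to permit differentiation, i.e.\ that the $o(1)$ term remains $o(1)$ after a derivative. The term $1-\frac{1}{N}\Tr(\Sigma^2T(\kappa)^2)$ must also be bounded away from zero, which follows from the fixed-point equation for $\kappa$ together with Assumption \ref{assumption:MP}. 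Once both points are handled, the two equivalences fall out together, and the formula extends to matrices $A,B$ of bounded operator norm by linearity and density.
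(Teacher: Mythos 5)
There is nothing in the paper to compare your proposal against: Lemma \ref{lemma:bach} is imported verbatim as ``Restatement of Proposition 1 in \citet{bach}'' and the authors give no proof, so the lemma functions as an external black box. Your sketch is, however, a correct outline of the standard argument for such deterministic equivalents and is essentially the route taken in the cited source: a leave-one-out/Sherman--Morrison identity plus Hanson--Wright concentration of the quadratic form $\tfrac{1}{N}X_i^T R_{-i}(\lambda)X_i$ yields the self-consistent equation identifying $\kappa(\lambda,N,\Sigma)$ and hence the first-order equivalence, and the bilinear (second-order) equivalence is obtained by perturbing the resolvent and differentiating the first-order identity, with the implicit-function theorem applied to the fixed-point relation producing exactly the $\bigl(1-\tfrac{1}{N}\Tr(\Sigma^2(\Sigma+\kappa I)^{-2})\bigr)^{-1}$ factor. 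Two remarks. First, your normalization $\lambda\,\Tr(R(\lambda)A)\sim\kappa\,\Tr(T(\kappa)A)$ and $\lambda^2\,\Tr(R(\lambda)AR(\lambda)B)\sim\kappa^2(\cdots)$ is the correct one and is the form the paper actually invokes downstream (e.g.\ in Lemmas \ref{lemma:equivalencelinear} and \ref{lemma:equivalencequadratic}); the prefactors in the printed statement of Lemma \ref{lemma:bach} are garbled, so you are proving the right statement. Second, the one step you gloss over that genuinely needs care is the second-order perturbation: adding $tB$ inside the sample resolvent does not correspond to replacing $\Sigma$ by $\Sigma+t\tilde B$ in the ordinary deterministic equivalent but rather to a generalized (anisotropic) resolvent, and justifying that the $o(1)$ error survives differentiation in $t$ requires uniformity of the first-order law in the perturbation parameter---this is where most of the technical work in the source lives, and your sketch correctly flags it but does not discharge it.
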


We note that the requirement that $B$ has bounded operator norm in Lemma \ref{lemma:bach} is what forces us to require that $\|\beta_1\|$ and $\|\beta_2\|$ are bounded. However, \citet{WHS22} showed that the norm can be unbounded in several real-world settings, and thus instead opt to assume a local Marčenko-Pastur law and derive scaling laws based on this assumption. We suspect it may be possible to derive our scaling law with an appropriate analogue of the local Marčenko-Pastur law, which would also have the added benefit of allowing one to relax other requirements in Assumption \ref{assumption:MP} such as gaussianity. We view such an extension as an interesting direction for future work. 

\subsection{Useful random matrix theory facts}\label{appendix:RMTextended}

We derive several corollaries of Lemma \ref{lemma:bach} tailored to random matrices that arise in our analysis of multi-objective scaling laws. 

\begin{lemma}
\label{lemma:usefulmatrixproperties}
Assume that $\DF$ satisfies the Marčenko-Pastur property (Assumption \ref{assumption:MP}). Let $Z$ be a positive definite matrix such that $Z^{-1}$ has bounded operator norm, and let $A$ be a matrix with bounded operator norm. Let $\hat{\Sigma} = \frac{1}{M} \sum_{i=1}^M X_i X_i^T$ be the sample covariance matrix from $M$ i.i.d. samples from $X_1, \ldots, X_M \sim \DF$. Then it holds that: 
\begin{equation}
\label{eq:MPregular}
\lambda \cdot \Tr((\hat{\Sigma}  + \lambda Z)^{-1} A) \sim \kappa \cdot \Tr((\Sigma + \kappa Z)^{-1} A). 
\end{equation}
If $A$ also has bounded trace and $Z$ has bounded operator norm, then it holds that: 
\begin{equation}
\label{eq:MPreformulated}
\Tr(\hat{\Sigma} (\hat{\Sigma} + \lambda Z)^{-1} A) \sim \Tr(\Sigma \cdot (\Sigma + \kappa Z)^{-1} A) 
\end{equation}
where $\kappa = \kappa(\lambda, M, Z^{-1/2} \Sigma Z^{-1/2})$. 
\end{lemma}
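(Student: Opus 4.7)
The plan is to reduce both identities to the standard Marčenko-Pastur result of Lemma~\ref{lemma:bach} via a $Z^{-1/2}$-whitening. Since $Z$ is positive definite with $Z^{-1}$ of bounded operator norm, the matrix $\tilde{\Sigma} := Z^{-1/2} \hat{\Sigma} Z^{-1/2}$ is precisely the sample covariance of the $M$ i.i.d.\ draws $\tilde{X}_i := Z^{-1/2} X_i$, whose population covariance is $\tilde{\Sigma}_{\mathrm{pop}} := Z^{-1/2} \Sigma Z^{-1/2}$. One checks that this modified system inherits Assumption~\ref{assumption:MP}: subgaussianity and mean-zero are preserved by the linear map $X \mapsto Z^{-1/2} X$, and the spectrum of $\tilde{\Sigma}_{\mathrm{pop}}$ is sandwiched between bounded quantities and hence remains compactly supported, invertible, and bounded in operator norm.

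For the first identity, I would use the factorization
\[
(\hat{\Sigma} + \lambda Z)^{-1} = Z^{-1/2}\bigl(\tilde{\Sigma} + \lambda I\bigr)^{-1} Z^{-1/2}
\]
to rewrite $\lambda\Tr((\hat{\Sigma} + \lambda Z)^{-1} A) = \lambda\Tr((\tilde{\Sigma} + \lambda I)^{-1} \tilde{A})$ with $\tilde{A} := Z^{-1/2} A Z^{-1/2}$. The test matrix $\tilde{A}$ has bounded operator norm since $A$ does and $Z^{-1/2}$ does. Applying Lemma~\ref{lemma:bach} to the whitened sample covariance with test matrix $\tilde{A}$ produces the effective regularizer $\kappa = \kappa(\lambda, M, \tilde{\Sigma}_{\mathrm{pop}})$ and the asymptotic equivalence with $\kappa\Tr((\tilde{\Sigma}_{\mathrm{pop}}+\kappa I)^{-1} \tilde{A})$; conjugating back through $Z^{-1/2}$ returns this to $\kappa\Tr((\Sigma+\kappa Z)^{-1} A)$, as claimed.

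For the second identity, I would use the algebraic identity $\hat{\Sigma}(\hat{\Sigma}+\lambda Z)^{-1} = I - \lambda Z(\hat{\Sigma}+\lambda Z)^{-1}$ and its deterministic counterpart $\Sigma(\Sigma+\kappa Z)^{-1} = I - \kappa Z(\Sigma+\kappa Z)^{-1}$, together with the cyclic property of trace, to obtain
\[
\Tr\bigl(\hat{\Sigma}(\hat{\Sigma}+\lambda Z)^{-1} A\bigr) - \Tr\bigl(\Sigma(\Sigma+\kappa Z)^{-1} A\bigr) = \kappa\Tr\bigl((\Sigma+\kappa Z)^{-1} AZ\bigr) - \lambda\Tr\bigl((\hat{\Sigma}+\lambda Z)^{-1} AZ\bigr).
\]
Now the first identity applied with test matrix $AZ$---which has bounded operator norm because $A$ and $Z$ both do---shows this right-hand side is $o(1)$. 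The bounded-trace hypothesis on $A$ together with bounded operator norm on $Z$ keeps both sides bounded and away from pathological cancellation, so the $o(1)$ difference upgrades to the ratio statement $u/v \to 1$.

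The main technical obstacle is verifying that $(\tilde{X}_i, \tilde{\Sigma}_{\mathrm{pop}})$ genuinely fits the generative form required by Assumption~\ref{assumption:MP}, since after whitening one has $\tilde{X}_i = Z^{-1/2}\Sigma^{1/2} W_i$ for the original i.i.d.\ subgaussian $W_i$, which is not literally of the form ``isotropic times $\tilde{\Sigma}_{\mathrm{pop}}^{1/2}$.'' The cleanest resolution is to invoke the sandwich form of Marčenko-Pastur underlying Lemma~\ref{lemma:bach}, which applies to sample covariances $B \hat{W} B^T$ with $\hat{W}$ the isotropic empirical covariance: taking $B = Z^{-1/2}\Sigma^{1/2}$ makes $BB^T = \tilde{\Sigma}_{\mathrm{pop}}$ the effective population covariance and the law applies as stated, without requiring a separate Gaussian assumption.
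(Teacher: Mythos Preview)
Your proposal is correct and follows essentially the same approach as the paper: both prove \eqref{eq:MPregular} by the $Z^{-1/2}$-whitening $(\hat{\Sigma}+\lambda Z)^{-1} = Z^{-1/2}(Z^{-1/2}\hat{\Sigma}Z^{-1/2}+\lambda I)^{-1}Z^{-1/2}$ and an application of Lemma~\ref{lemma:bach} with test matrix $Z^{-1/2}AZ^{-1/2}$, and both prove \eqref{eq:MPreformulated} via the subtract-from-identity trick $\hat{\Sigma}(\hat{\Sigma}+\lambda Z)^{-1} = I - \lambda Z(\hat{\Sigma}+\lambda Z)^{-1}$ (the paper states this as Claim~\ref{claim:difference}). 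The only organizational difference is that for the second part you reduce to the first identity with test matrix $AZ$, whereas the paper re-applies Lemma~\ref{lemma:bach} directly in whitened coordinates with test matrix $Z^{-1/2}AZ^{1/2}$; these are equivalent, and your discussion of why the whitened system still satisfies the Mar\v{c}enko--Pastur hypotheses is in fact more explicit than the paper's.
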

\begin{proof}

For \eqref{eq:MPregular}, observe that:
\begin{align*}
 \lambda \cdot \Tr((\hat{\Sigma} + \lambda Z)^{-1} A) 
 &=  \lambda \cdot \Tr(Z^{-1/2} (Z^{-1/2} \hat{\Sigma} Z^{-1/2}  + \lambda I)^{-1} Z^{-1/2} A) \\
 &=  \lambda \cdot \Tr((Z^{-1/2} \hat{\Sigma} Z^{-1/2}  + \lambda I)^{-1} Z^{-1/2} A Z^{-1/2}) \\
 &\sim_{(A)} \kappa \cdot \Tr((Z^{-1/2} \Sigma Z^{-1/2}  + \kappa I)^{-1} Z^{-1/2} A Z^{-1/2}) \\
 &= \kappa \cdot \Tr(Z^{-1/2} (Z^{-1/2}  \Sigma Z^{-1/2}  + \kappa I)^{-1} Z^{-1/2} A) \\
&= \kappa \cdot \Tr((\Sigma+ \kappa Z)^{-1} A). 
\end{align*}
where (A) applies Lemma \ref{lemma:bach} (using the fact that since $A$ and $Z^{-1}$ have bounded operator norm, it holds that $Z^{-1/2} A Z^{-1/2}$ has bounded operator norm). 

For \eqref{eq:MPreformulated}, observe that:
\begin{align*}
 \Tr(\hat{\Sigma} (\hat{\Sigma} + \lambda Z)^{-1} A) &=_{(A)}  \Tr\left(\left(I - \lambda  Z^{1/2} \left( Z^{-1/2} \hat{\Sigma} Z^{-1/2} + \lambda I\right)^{-1} Z^{-1/2}\right) A \right) \\
  &=_{(B)} \Tr(A) - \lambda \cdot \Tr\left(\left( Z^{-1/2} \hat{\Sigma} Z^{-1/2} + \lambda I\right)^{-1} Z^{-1/2} A Z^{1/2} \right) \\
  &\sim_{(C)} \Tr(A) - \kappa \cdot \Tr\left(\left( Z^{-1/2} \Sigma Z^{-1/2} + \kappa I\right)^{-1} Z^{-1/2} A Z^{1/2} \right) \\
 &=_{(D)} \Tr\left(\left(I - \kappa  Z^{1/2} \left( Z^{-1/2} \Sigma Z^{-1/2} + \kappa I\right)^{-1} Z^{-1/2}\right) A \right) \\
&=_{(E)} \Tr(\Sigma (\Sigma + \kappa Z)^{-1} A)
\end{align*}
where (A) and (E) follows from Claim \ref{claim:difference}, (B) and (D) use the fact that $\Tr(A)$ is bounded, and (C) follows from Lemma \ref{lemma:bach} (using the fact that since $A$, $Z$, and $Z^{-1}$ have bounded operator norm, it holds that $Z^{-1/2} A Z^{1/2}$ has bounded operator norm).

\end{proof}

\begin{lemma}
\label{lemma:quadraticformbasic}
Assume that $\DF$ satisfies the Marčenko-Pastur property (Assumption \ref{assumption:MP}). Let $Z$ be any positive definite matrix such that $Z$ and $Z^{-1}$ have bounded operator norm, and let $A$ and $B$ have bounded operator norm. Let $\hat{\Sigma} = \frac{1}{M} \sum_{i=1}^M X_i X_i^T$ be the sample covariance matrix from $M$ i.i.d. samples from $X_1, \ldots, X_M \sim \DF$. Then it holds that:
\begin{align*}
 &\lambda^2 \Tr((\hat{\Sigma}  + \lambda Z)^{-1} A (\hat{\Sigma}  + \lambda Z)^{-1} B) \\
 &= \lambda^2  \Tr(Z^{-1/2} (Z^{-1/2}\hat{\Sigma} Z^{-1/2} + \lambda I)^{-1} Z^{-1/2} A Z^{-1/2}  (Z^{-1/2}\hat{\Sigma} Z^{-1/2} + \lambda I)^{-1} B)  \\
 &\sim \kappa^2 \Tr((\Sigma  + \kappa Z)^{-1} A (\Sigma  + \kappa Z)^{-1} B) \\
 &+ \kappa^2\frac{\frac{1}{M} \Tr((\Sigma + \kappa Z)^{-1} \Sigma (\Sigma + \kappa Z)^{-1}  A)}{1 - \frac{1}{M} \Tr((\Sigma + \kappa Z)^{-1} \Sigma (\Sigma + \kappa Z)^{-1}  \Sigma)} \Tr((\Sigma  + \kappa Z)^{-1} \Sigma (\Sigma  + \kappa Z)^{-1} B) 
\end{align*}
where $\kappa = \kappa(\lambda, M, Z^{-1/2} \Sigma Z^{-1/2})$. 
\end{lemma}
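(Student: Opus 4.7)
The plan is to reduce the statement to the standard Marčenko--Pastur deterministic equivalent of Lemma~\ref{lemma:bach} by conjugating through $Z^{1/2}$. The first equality is just the algebraic identity $(\hat\Sigma + \lambda Z)^{-1} = Z^{-1/2}(Z^{-1/2}\hat\Sigma Z^{-1/2} + \lambda I)^{-1} Z^{-1/2}$, so the real content is the asymptotic equivalence.

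First I would introduce the transformed samples $\tilde X_i := Z^{-1/2} X_i$, with population covariance $\tilde\Sigma := Z^{-1/2}\Sigma Z^{-1/2}$ and sample covariance $\tilde{\hat\Sigma} = Z^{-1/2}\hat\Sigma Z^{-1/2}$. Substituting the inversion identity into the LHS and using cyclicity of the trace, the expression becomes
\[
\lambda^2 \Tr\!\bigl((\tilde{\hat\Sigma} + \lambda I)^{-1} A'\, (\tilde{\hat\Sigma} + \lambda I)^{-1} B'\bigr),
\qquad A' := Z^{-1/2} A Z^{-1/2},\ B' := Z^{-1/2} B Z^{-1/2}.
\]
This is now a standard ridge-trace of the form covered by Lemma~\ref{lemma:bach}, but applied to the transformed problem. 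Before invoking that lemma, I would verify that the transformed data satisfy Assumption~\ref{assumption:MP}: since $Z^{1/2}$ and $Z^{-1/2}$ are bounded in operator norm, the representation $\tilde X_i = Z_i\, \tilde\Sigma^{1/2}$ with $Z_i$ bounded subgaussian is preserved, $\tilde\Sigma$ stays bounded and invertible, and the spectral measures and alignment conditions transfer. I would also note that $A'$ and $B'$ have bounded operator norm, since $A$, $B$, $Z^{-1/2}$ all do.

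Next I would apply Lemma~\ref{lemma:bach} with $\Sigma, A, B$ replaced by $\tilde\Sigma, A', B'$, producing a deterministic equivalent in terms of $\tilde\Sigma$ and the effective regularizer $\kappa(\lambda, M, \tilde\Sigma) = \kappa(\lambda, M, Z^{-1/2}\Sigma Z^{-1/2})$, which is exactly the $\kappa$ in the lemma statement. Finally, I would translate each trace back to the original variables using the key identity
\[
(\tilde\Sigma + \kappa I)^{-1} = Z^{1/2}(\Sigma + \kappa Z)^{-1} Z^{1/2},
\]
from which $\tilde\Sigma(\tilde\Sigma + \kappa I)^{-1} = Z^{-1/2}\Sigma(\Sigma + \kappa Z)^{-1} Z^{1/2}$ and analogously for double products. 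After conjugating and applying cyclicity, each of the four trace terms produced by Lemma~\ref{lemma:bach} collapses to one of the four trace terms on the RHS of the lemma (for instance, $\Tr(\tilde\Sigma^2 (\tilde\Sigma + \kappa I)^{-2})$ becomes $\Tr((\Sigma + \kappa Z)^{-1}\Sigma(\Sigma + \kappa Z)^{-1}\Sigma)$, and $\Tr(A'\tilde\Sigma(\tilde\Sigma + \kappa I)^{-2})$ becomes $\Tr((\Sigma + \kappa Z)^{-1}\Sigma(\Sigma + \kappa Z)^{-1} A)$).

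The main obstacle I anticipate is not the algebra of the translation step (which is a routine exercise with the identity above and cyclicity) but rather confirming that the transformed data $\tilde X_i$ satisfy the slightly involved Assumption~\ref{assumption:MP} — in particular, preserving the structural requirement that $X = Z\Sigma^{1/2}$ with $Z$ having bounded subgaussian i.i.d.\ entries, and verifying that the operator-norm bounds needed in Lemma~\ref{lemma:bach} for $A'$ and $B'$ are genuinely implied by the hypotheses on $A$, $B$, and $Z$. Once this bookkeeping is done, the conclusion follows directly from Lemma~\ref{lemma:bach} applied to the transformed problem.
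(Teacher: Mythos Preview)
Your proposal is correct and follows essentially the same approach as the paper: conjugate by $Z^{-1/2}$ to reduce to the standard form, apply Lemma~\ref{lemma:bach} to the transformed problem with $\tilde\Sigma = Z^{-1/2}\Sigma Z^{-1/2}$, $A' = Z^{-1/2}AZ^{-1/2}$, $B' = Z^{-1/2}BZ^{-1/2}$, and then translate each trace back via $(\tilde\Sigma + \kappa I)^{-1} = Z^{1/2}(\Sigma + \kappa Z)^{-1}Z^{1/2}$. The paper carries out exactly these steps inline without naming the intermediate quantities, and simply notes that $Z^{-1/2}AZ^{-1/2}$ and $Z^{-1/2}BZ^{-1/2}$ have bounded operator norm before invoking Lemma~\ref{lemma:bach}; it does not separately verify the i.i.d.\ subgaussian structure of the transformed samples that you flag as the main obstacle, so your caution there is if anything more scrupulous than the paper's own treatment.
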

\begin{proof}

Let $q = \frac{\frac{1}{M} \Tr(Z^{-1/2} \Sigma Z^{-1/2} (Z^{-1/2} \Sigma Z^{-1/2} + \kappa I)^{-2} Z^{-1/2} A Z^{-1/2})}{1 - \frac{1}{M} \Tr(Z^{-1/2} \Sigma Z^{-1/2} (Z^{-1/2} \Sigma Z^{-1/2} + \kappa I)^{-2} Z^{-1/2} \Sigma Z^{-1/2})}$. 

Observe that: 
\begin{align*}
&\lambda^2 \Tr((\hat{\Sigma}  + \lambda Z)^{-1} A (\hat{\Sigma}  + \lambda Z)^{-1} B) \\
&\lambda^2 \Tr\left(Z^{-1/2} \left( Z^{-1/2} \hat{\Sigma} Z^{-1/2} + \lambda I\right)^{-1} Z^{-1/2} A Z^{-1/2} \left( Z^{-1/2} \hat{\Sigma} Z^{-1/2} + \lambda I\right)^{-1} Z^{-1/2} B\right)  \\
&= \lambda^2 \Tr\left(\left( Z^{-1/2} \hat{\Sigma} Z^{-1/2} + \lambda I\right)^{-1} Z^{-1/2} A Z^{-1/2} \left( Z^{-1/2} \hat{\Sigma} Z^{-1/2} + \lambda I\right)^{-1} Z^{-1/2} B Z^{-1/2}\right)  \\
&\sim_{(A)} \kappa^2 \Tr\left(\left( Z^{-1/2} \Sigma Z^{-1/2} + \kappa I\right)^{-1} Z^{-1/2} A Z^{-1/2} \left( Z^{-1/2} \Sigma Z^{-1/2} + \kappa I\right)^{-1} Z^{-1/2} B Z^{-1/2}\right)  \\
&+ \kappa^2 q  \Tr\left(\left( Z^{-1/2} \Sigma  Z^{-1/2} + \kappa I\right)^{-1} Z^{-1/2} \Sigma  Z^{-1/2} \left( Z^{-1/2} \Sigma Z^{-1/2} + \kappa I\right)^{-1} Z^{-1/2} B Z^{-1/2}\right) \\
&= \kappa^2 \Tr\left(Z^{-1/2} \left( Z^{-1/2} \Sigma Z^{-1/2} + \kappa I\right)^{-1} Z^{-1/2} A Z^{-1/2} \left( Z^{-1/2} \Sigma Z^{-1/2} + \kappa I\right)^{-1} Z^{-1/2} B\right)  \\
&+ \kappa^2 q  \Tr\left(Z^{-1/2} \left( Z^{-1/2} \Sigma  Z^{-1/2} + \kappa I\right)^{-1} Z^{-1/2} \Sigma  Z^{-1/2} \left( Z^{-1/2} \Sigma Z^{-1/2} + \kappa I\right)^{-1} Z^{-1/2} B \right) \\
&= \kappa^2 \Tr\left(\left(\Sigma + \kappa Z \right)^{-1} A \left(\Sigma + \kappa Z \right)^{-1} B\right)  + q \kappa^2 \Tr\left(\left(\Sigma + \kappa Z \right)^{-1} \Sigma \left(\Sigma + \kappa Z \right)^{-1} B \right),  
\end{align*}
where (A) follows from Lemma \ref{lemma:bach} (using the fact that since $A$, $B$, $Z$, and $Z^{-1}$ have bounded operator norm, it holds that $Z^{-1/2} A Z^{1/2}$, $\Sigma$, and $Z^{-1/2} B Z^{1/2}$ have bounded operator norm).

We can simplify $q$ as follows: 
\begin{align*}
q &= \frac{\frac{1}{M} \Tr(Z^{-1/2} \Sigma Z^{-1/2} (Z^{-1/2} \Sigma Z^{-1/2} + \kappa I)^{-2} Z^{-1/2} A Z^{-1/2})}{1 - \frac{1}{M} \Tr(Z^{-1/2} \Sigma Z^{-1/2} (Z^{-1/2} \Sigma Z^{-1/2} + \kappa I)^{-2} Z^{-1/2} \Sigma Z^{-1/2})}  \\
&= \frac{\frac{1}{M} \Tr((Z^{-1/2} \Sigma Z^{-1/2} + \kappa I)^{-1} 
 Z^{-1/2} \Sigma Z^{-1/2} (Z^{-1/2} \Sigma Z^{-1/2} + \kappa I)^{-1} Z^{-1/2} A Z^{-1/2})}{1 - \frac{1}{M} \Tr((Z^{-1/2} \Sigma Z^{-1/2} + \kappa I)^{-1} 
 Z^{-1/2} \Sigma Z^{-1/2} (Z^{-1/2} \Sigma Z^{-1/2} + \kappa I)^{-1} Z^{-1/2} \Sigma Z^{-1/2})}  \\
 &= \frac{\frac{1}{M} \Tr(Z^{-1/2}(Z^{-1/2} \Sigma Z^{-1/2} + \kappa I)^{-1} 
 Z^{-1/2} \Sigma Z^{-1/2} (Z^{-1/2} \Sigma Z^{-1/2} + \kappa I)^{-1} Z^{-1/2} A)}{1 - \frac{1}{M} \Tr(Z^{-1/2}(Z^{-1/2} \Sigma Z^{-1/2} + \kappa I)^{-1} 
 Z^{-1/2} \Sigma Z^{-1/2} (Z^{-1/2} \Sigma Z^{-1/2} + \kappa I)^{-1} Z^{-1/2} \Sigma)}  \\
&= \frac{\frac{1}{M} \Tr((\Sigma + \kappa Z)^{-1} \Sigma (\Sigma + \kappa Z)^{-1}  A)}{1 - \frac{1}{M} \Tr((\Sigma + \kappa Z)^{-1} \Sigma (\Sigma + \kappa Z)^{-1}  \Sigma)}.
\end{align*}
\end{proof}

\begin{lemma}
\label{lemma:doublequadraticformMP}
Assume that $\DF$ satisfies the Marčenko-Pastur property (Assumption \ref{assumption:MP}). Let $Z$ be any positive definite matrix such that $Z$ and $Z^{-1}$ have bounded operator norm. Let $A$ and $B$ have bounded operator norm, and suppose also that $\Tr(A B)$ is bounded. Let $\hat{\Sigma} = \frac{1}{M} \sum_{i=1}^M X_i X_i^T$ be the sample covariance matrix from $M$ i.i.d. samples from $X_1, \ldots, X_M \sim \DF$.  Then it holds that: 
\begin{equation}
\label{eq:MPquadratic}
\Tr(\hat{\Sigma} (\hat{\Sigma} + \lambda Z)^{-1} A (\hat{\Sigma} +  \lambda Z)^{-1} \hat{\Sigma} B) \sim \Tr(\Sigma (\Sigma + \kappa Z)^{-1}  A (\Sigma + \kappa Z)^{-1} \Sigma B) + E, 
\end{equation}
where:
\begin{align*}
E &:= \frac{\frac{1}{M} \Tr((\Sigma + \kappa Z)^{-1} \Sigma (\Sigma + \kappa Z)^{-1}  A)}{1 - \frac{1}{M} \Tr((\Sigma + \kappa Z)^{-1} \Sigma (\Sigma + \kappa Z)^{-1}  \Sigma)} \cdot \kappa^2 \Tr \left(\left( \Sigma  + \kappa Z \right)^{-1} \Sigma \left( \Sigma + \kappa Z \right)^{-1} Z B Z \right),    
\end{align*}
and $\kappa = \kappa(\lambda, M, Z^{-1/2} \Sigma Z^{-1/2})$. 
\end{lemma}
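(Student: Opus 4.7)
The plan is to reduce the claim to Lemma \ref{lemma:quadraticformbasic} combined with the resolvent identity. Write $R_{\hat\Sigma} := (\hat\Sigma + \lambda Z)^{-1}$ and $R_\Sigma := (\Sigma + \kappa Z)^{-1}$. The key algebraic observation is the resolvent identity $\hat\Sigma R_{\hat\Sigma} = I - \lambda Z R_{\hat\Sigma}$ (from $(\hat\Sigma + \lambda Z)R_{\hat\Sigma} = I$) and symmetrically $R_{\hat\Sigma} \hat\Sigma = I - \lambda R_{\hat\Sigma} Z$. Applying both identities on the two sides of the target trace gives the exact decomposition
\begin{align*}
\Tr(\hat\Sigma R_{\hat\Sigma} A R_{\hat\Sigma} \hat\Sigma B)
&= \Tr\bigl((I - \lambda Z R_{\hat\Sigma}) A (I - \lambda R_{\hat\Sigma} Z) B\bigr) \\
&= \Tr(AB) - \lambda\Tr(Z R_{\hat\Sigma} A B) - \lambda\Tr(A R_{\hat\Sigma} Z B) + \lambda^2 \Tr(R_{\hat\Sigma} A R_{\hat\Sigma} Z B Z).
\end{align*}

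Next, I would apply the existing deterministic equivalents term by term. For the two linear terms, \eqref{eq:MPregular} of Lemma \ref{lemma:usefulmatrixproperties} applied to $ABZ$ and $BAZ$ (both bounded in operator norm since $A,B,Z$ are) yields
\[
\lambda\Tr(Z R_{\hat\Sigma} A B) \sim \kappa \Tr(Z R_\Sigma A B), \qquad
\lambda\Tr(A R_{\hat\Sigma} Z B) \sim \kappa \Tr(A R_\Sigma Z B).
\]
For the quadratic term, I apply Lemma \ref{lemma:quadraticformbasic} with $A' = A$ and $B' = ZBZ$ (again bounded by hypothesis), giving
\[
\lambda^2 \Tr(R_{\hat\Sigma} A R_{\hat\Sigma} Z B Z) \sim \kappa^2 \Tr(R_\Sigma A R_\Sigma Z B Z) + E,
\]
where $E$ coincides exactly with the correction term in the lemma statement.

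The final step is to recognize that the ``main parts'' collapse. Using the resolvent identity in the deterministic form $I - \kappa Z R_\Sigma = \Sigma R_\Sigma$ and $I - \kappa R_\Sigma Z = R_\Sigma \Sigma$, the sum
\[
\Tr(AB) - \kappa\Tr(Z R_\Sigma A B) - \kappa\Tr(A R_\Sigma Z B) + \kappa^2 \Tr(R_\Sigma A R_\Sigma Z B Z)
\]
factors as $\Tr\bigl((I - \kappa Z R_\Sigma) A (I - \kappa R_\Sigma Z) B\bigr) = \Tr(\Sigma R_\Sigma A R_\Sigma \Sigma B)$, which is precisely the claimed main term. The boundedness of $\Tr(AB)$ is used here to cleanly separate the four deterministic pieces before reassembling them via the resolvent identity.

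The main conceptual obstacle is ensuring that no additional residual correction arises from the two linear-in-$\lambda$ terms: this is fine because \eqref{eq:MPregular} is a first-order deterministic equivalent that produces no Marčenko-Pastur residual, so only the quadratic-in-$\lambda$ term contributes $E$. The remaining work is bookkeeping: checking at each invocation that the relevant products of $A$, $B$, $Z$, $Z^{-1}$ satisfy the boundedness hypotheses of Lemmas \ref{lemma:usefulmatrixproperties} and \ref{lemma:quadraticformbasic}, which follows directly from the assumed bounds on the operator norms of $A$, $B$, $Z$, and $Z^{-1}$.
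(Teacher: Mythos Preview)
Your proposal is correct and follows essentially the same approach as the paper: expand via the resolvent identity, apply Lemma~\ref{lemma:usefulmatrixproperties} to the two linear terms and Lemma~\ref{lemma:quadraticformbasic} to the quadratic term, then reassemble. The only cosmetic difference is that the paper writes the resolvent identity in the $Z^{1/2}$-conjugated form of Claim~\ref{claim:difference}, whereas you use the equivalent direct form $\hat\Sigma R_{\hat\Sigma} = I - \lambda Z R_{\hat\Sigma}$.
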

\begin{proof}

Observe that: 
\begin{align*}
& \Tr(\hat{\Sigma} (\hat{\Sigma} + \lambda Z)^{-1}  A (\hat{\Sigma} + \lambda Z)^{-1} \hat{\Sigma} B) \\
&= \Tr(\hat{\Sigma} (\hat{\Sigma} + \lambda Z)^{-1}  A  \left(\hat{\Sigma} (\hat{\Sigma} + \lambda Z)^{-1}\right)^T B) \\
&=_{(A)} \Tr\left(\left(I - \lambda  Z^{1/2} \left( Z^{-1/2} \hat{\Sigma} Z^{-1/2} + \lambda I\right)^{-1} Z^{-1/2} \right) A \left(I - \lambda  Z^{1/2} \left( Z^{-1/2} \hat{\Sigma} Z^{-1/2} + \lambda I\right)^{-1} Z^{-1/2}\right)^T B \right) \\
&=_{(B)} \Tr(A B) - \lambda   \Tr\left(A \left(Z^{1/2} \left( Z^{-1/2} \hat{\Sigma} Z^{-1/2} + \lambda I\right)^{-1} Z^{-1/2}\right)^T B \right) \\
&- \lambda \Tr\left(Z^{1/2}\left( Z^{-1/2} \hat{\Sigma} Z^{-1/2} + \lambda I\right)^{-1} Z^{-1/2} A B \right) \\
&+ \lambda^2 \Tr\left(Z^{1/2}\left( Z^{-1/2} \hat{\Sigma} Z^{-1/2} + \lambda I\right)^{-1} Z^{-1/2} A \left( Z^{1/2} \left( Z^{-1/2} \hat{\Sigma} Z^{-1/2} + \lambda I\right)^{-1} Z^{-1/2}\right)^T B \right)  \\
&= \Tr(A B) - \lambda   \Tr\left(A Z^{-1/2} \left( Z^{-1/2} \hat{\Sigma} Z^{-1/2} + \lambda I\right)^{-1} Z^{1/2} B \right) \\
&- \lambda \Tr\left(Z^{1/2}\left( Z^{-1/2} \hat{\Sigma} Z^{-1/2} + \lambda I\right)^{-1} Z^{-1/2} A B \right) \\
&+ \lambda^2 \Tr\left(Z^{1/2}\left( Z^{-1/2} \hat{\Sigma} Z^{-1/2} + \lambda I\right)^{-1} Z^{-1/2} A Z^{-1/2} \left( Z^{-1/2} \hat{\Sigma} Z^{-1/2} + \lambda I\right)^{-1} Z^{1/2}  B \right)  \\
&= \Tr(A B) -  \underbrace{\lambda  \Tr\left(\left( \hat{\Sigma}  + \lambda Z \right)^{-1}  Z B A \right)}_{(1)} - \underbrace{\lambda \Tr\left(\left( \hat{\Sigma} + \lambda Z \right)^{-1} A B Z \right)}_{(2)} \\
&+ \underbrace{\lambda^2 \Tr\left(\left(\hat{\Sigma}  + \lambda Z \right)^{-1} A \left(\hat{\Sigma}  + \lambda Z \right)^{-1}  Z B Z \right)}_{(3)}  \\
\end{align*}
where (A) follows from Claim \ref{claim:difference}, (B) uses that $\Tr(AB)$ is bounded,  

For term (1) and term (2), we apply Lemma \ref{lemma:usefulmatrixproperties} to see that: 
\begin{align*}
\lambda  \Tr\left(\left( \hat{\Sigma}  + \lambda Z \right)^{-1}  Z B A \right)  &\sim \kappa   \lambda  \Tr\left(\left( \Sigma + \kappa Z \right)^{-1}  Z B A \right) \\
\lambda \Tr\left(\left( \hat{\Sigma} + \lambda Z \right)^{-1} A B Z \right)&\sim \kappa \Tr\left(\left( \Sigma + \kappa Z \right)^{-1} A B Z \right).
\end{align*}

For term (3), we apply Lemma \ref{lemma:quadraticformbasic} to see that 
\begin{align*}
&\lambda^2 \Tr\left(\left(\hat{\Sigma}  + \lambda Z \right)^{-1} A \left(\hat{\Sigma}  + \lambda Z \right)^{-1}  Z B Z \right)  \\
&\sim \kappa^2 \Tr\left(\left( \Sigma + \kappa Z \right)^{-1} A \left( \Sigma + \kappa Z \right)^{-1} Z B Z \right)  \\
&+ \kappa^2 \frac{\frac{1}{M} \Tr((\Sigma + \kappa Z)^{-1} \Sigma (\Sigma + \kappa Z)^{-1}  A)}{1 - \frac{1}{M} \Tr((\Sigma + \kappa Z)^{-1} \Sigma (\Sigma + \kappa Z)^{-1}  \Sigma)}\Tr\left((\Sigma  + \kappa Z)^{-1} \Sigma (\Sigma  + \kappa Z)^{-1} Z B Z \right) \\
&\sim \kappa^2 \Tr\left(\left( \Sigma + \kappa Z \right)^{-1} A \left( \Sigma + \kappa Z \right)^{-1} Z B Z \right)  + E
\end{align*}

This means that: 
\begin{align*}
\Tr\left(\hat{\Sigma} (\hat{\Sigma} + \lambda Z)^{-1}  A (\hat{\Sigma} + \lambda Z)^{-1} \hat{\Sigma}\right) &\sim \Tr(A B) -  \kappa  \Tr\left(\left(\Sigma + \kappa Z \right)^{-1}  Z B A \right) - \kappa \Tr\left(\left( \Sigma + \kappa Z \right)^{-1} A B Z \right) \\
&+ \kappa^2 \Tr\left(\left(\Sigma  + \kappa Z \right)^{-1} A \left(\Sigma + \kappa Z \right)^{-1}  Z B Z \right) + E \\
&=_{(C)} \Tr(\Sigma(\Sigma + \kappa Z)^{-1}  A  (\Sigma + \kappa Z)^{-1} \Sigma B) + E,
\end{align*}
where (C) uses an analogous analysis to the beginning of the proof. 
\end{proof}

\begin{lemma}
\label{lemma:singlequadraticformMP}
Assume that $\DF$ satisfies the Marčenko-Pastur property (Assumption \ref{assumption:MP}). Let $Z$ be any positive definite matrix such that $Z$ and $Z^{-1}$ have bounded operator norm, and let $A$ and $B$ have bounded operator norm. Let $\hat{\Sigma} = \frac{1}{M} \sum_{i=1}^M X_i X_i^T$ be the sample covariance matrix from $M$ i.i.d. samples from $X_1, \ldots, X_M \sim \DF$.  Then it holds that: 
\begin{equation}
\label{eq:MPquadraticsingle}
\lambda \Tr\left( (\hat{\Sigma} + \lambda Z)^{-1} A (\hat{\Sigma} +  \lambda Z)^{-1} \hat{\Sigma}  B \right) \sim  \kappa \Tr\left((\Sigma + \kappa Z)^{-1}  A (\Sigma + \kappa Z)^{-1} \Sigma  B \right)  - E, 
\end{equation}
where:
\begin{align*}
 E &:= \frac{\frac{1}{M} \Tr((\Sigma + \kappa Z)^{-1} \Sigma (\Sigma + \kappa Z)^{-1}  A)}{1 - \frac{1}{M} \Tr((\Sigma + \kappa Z)^{-1} \Sigma (\Sigma + \kappa Z)^{-1}  \Sigma)} \cdot \kappa^2 \Tr\left(\left( \Sigma  + \kappa Z \right)^{-1} \Sigma \left( \Sigma + \kappa Z \right)^{-1} Z B \right) 
\end{align*}
and $\kappa = \kappa(\lambda, N, Z^{-1/2} \Sigma Z^{-1/2})$. 
\end{lemma}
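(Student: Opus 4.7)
The plan is to reduce Lemma~\ref{lemma:singlequadraticformMP} to the already-established Lemmas~\ref{lemma:usefulmatrixproperties} and~\ref{lemma:quadraticformbasic} via a simple algebraic resolvent identity. The key observation is that $\hat{\Sigma} = (\hat{\Sigma} + \lambda Z) - \lambda Z$, which gives
\[
(\hat{\Sigma}+\lambda Z)^{-1} A \,(\hat{\Sigma}+\lambda Z)^{-1}\hat{\Sigma}\,B
= (\hat{\Sigma}+\lambda Z)^{-1} A B \;-\; \lambda\,(\hat{\Sigma}+\lambda Z)^{-1} A\,(\hat{\Sigma}+\lambda Z)^{-1} Z\,B .
\]
Multiplying by $\lambda$ and taking the trace splits the quantity of interest into one ``linear'' resolvent trace and one ``quadratic'' resolvent trace.

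First I would apply Lemma~\ref{lemma:usefulmatrixproperties} (specifically the equivalence \eqref{eq:MPregular}) to the linear piece, yielding $\lambda \Tr((\hat{\Sigma}+\lambda Z)^{-1} AB)\sim \kappa \Tr((\Sigma+\kappa Z)^{-1} AB)$. Next, I would apply Lemma~\ref{lemma:quadraticformbasic} to the quadratic piece, with the second matrix taken to be $ZB$ (which has bounded operator norm because both $Z$ and $B$ do, so the hypotheses are satisfied). This produces
\[
\lambda^2 \Tr\!\left((\hat{\Sigma}+\lambda Z)^{-1} A (\hat{\Sigma}+\lambda Z)^{-1} Z B\right)
\;\sim\; \kappa^2 \Tr\!\left((\Sigma+\kappa Z)^{-1} A (\Sigma+\kappa Z)^{-1} Z B\right) + E,
\]
where $E$ is exactly the correction term in the statement of the lemma.

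Finally, I would collapse the two deterministic-side terms using the identity $\kappa(\Sigma+\kappa Z)^{-1} Z = I - (\Sigma+\kappa Z)^{-1}\Sigma$, which gives
\[
\kappa\,(\Sigma+\kappa Z)^{-1} A B - \kappa^2 (\Sigma+\kappa Z)^{-1} A (\Sigma+\kappa Z)^{-1} Z B
= \kappa\,(\Sigma+\kappa Z)^{-1} A (\Sigma+\kappa Z)^{-1} \Sigma\,B .
\]
Taking the trace of both sides and combining with the asymptotic equivalences above yields the claimed
\[
\lambda \Tr\!\left((\hat{\Sigma}+\lambda Z)^{-1} A (\hat{\Sigma}+\lambda Z)^{-1}\hat{\Sigma} B\right)
\sim \kappa \Tr\!\left((\Sigma+\kappa Z)^{-1} A (\Sigma+\kappa Z)^{-1} \Sigma B\right) - E .
\]

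There is no real conceptual obstacle: the proof is essentially a resolvent trick plus bookkeeping. The main thing to be careful about is verifying the hypotheses of Lemma~\ref{lemma:quadraticformbasic}, in particular that substituting $B \mapsto ZB$ still satisfies the bounded operator norm assumption (which it does, by submultiplicativity). The sign in front of $E$ flips naturally because the quadratic term enters with a minus sign in the resolvent decomposition, matching the $-E$ in the statement.
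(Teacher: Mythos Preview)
Your proposal is correct and follows essentially the same approach as the paper: decompose via the resolvent identity $(\hat{\Sigma}+\lambda Z)^{-1}\hat{\Sigma}=I-\lambda(\hat{\Sigma}+\lambda Z)^{-1}Z$, apply Lemma~\ref{lemma:usefulmatrixproperties} to the linear piece and Lemma~\ref{lemma:quadraticformbasic} to the quadratic piece, and recombine on the deterministic side using the analogous identity for $\Sigma$. The only cosmetic difference is that the paper routes the resolvent identity through Claim~\ref{claim:difference} with explicit $Z^{\pm 1/2}$ conjugations, whereas you write it directly; the substance is identical.
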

\begin{proof}
Observe that: 
\begin{align*}
&\lambda \Tr\left((\hat{\Sigma} + \lambda Z)^{-1}  A (\hat{\Sigma} + \lambda Z)^{-1} \hat{\Sigma}  B \right) \\
&=_{(A)}  \lambda \Tr\left(Z^{-1/2}(Z^{-1/2} \hat{\Sigma} Z^{-1/2} + \lambda I)^{-1} Z^{-1/2} A  \left(I - \lambda  Z^{-1/2} \left( Z^{-1/2} \hat{\Sigma} Z^{-1/2} + \lambda I\right)^{-1} Z^{1/2} \right) B \right) \\
&= \lambda \Tr\left(Z^{-1/2} (Z^{-1/2} \hat{\Sigma} Z^{-1/2} + \lambda I)^{-1} Z^{-1/2} A B \right)  \\
&- \lambda^2 \Tr\left( Z^{-1/2}\left( Z^{-1/2} \hat{\Sigma} Z^{-1/2} + \lambda I\right)^{-1} Z^{-1/2} A Z^{-1/2} (Z^{-1/2} \hat{\Sigma} Z^{-1/2} + \lambda I)^{-1} Z^{1/2} B \right) \\
&= \underbrace{\lambda \Tr\left((\hat{\Sigma} + \lambda Z)^{-1} A B  \right)}_{(1)} - \underbrace{\lambda^2  \Tr\left(\left( \hat{\Sigma}  + \lambda  Z \right)^{-1} A  (\hat{\Sigma} + \lambda Z)^{-1} Z B \right)}_{(2)} \\
\end{align*}
where (A) follows from Claim \ref{claim:difference}. 

For term (1), we apply Lemma \ref{lemma:usefulmatrixproperties} see that:
\begin{align*}
\lambda \Tr\left((\hat{\Sigma} + \lambda Z)^{-1} A B \right) &\sim \kappa \Tr\left((\Sigma + \kappa Z)^{-1} A B \right).
\end{align*}

For term (2), we apply Lemma \ref{lemma:quadraticformbasic} to see that 
\begin{align*}
&\lambda^2  \Tr\left(\left( \hat{\Sigma}  + \lambda  Z \right)^{-1} A  (\hat{\Sigma} + \lambda Z)^{-1} Z B  \right) \\
&\sim \kappa^2 \Tr\left(\left( \Sigma + \kappa Z \right)^{-1} A \left( \Sigma + \kappa Z \right)^{-1} Z B \right)  \\
&+ \kappa^2 \frac{\frac{1}{M} \Tr((\Sigma + \kappa Z)^{-1} \Sigma (\Sigma + \kappa Z)^{-1}  A)}{1 - \frac{1}{M} \Tr((\Sigma + \kappa Z)^{-1} \Sigma (\Sigma + \kappa Z)^{-1}  \Sigma)}\Tr\left((\Sigma  + \kappa Z)^{-1} \Sigma (\Sigma + \kappa Z)^{-1} Z B  \right) \\
&\sim \kappa^2 \Tr\left(\left( \Sigma + \kappa Z \right)^{-1} A \left( \Sigma + \kappa Z \right)^{-1} Z B \right)  + E.
\end{align*}

This means that: 
\begin{align*}
&\lambda \Tr\left((\hat{\Sigma} + \lambda Z)^{-1}  A (\hat{\Sigma} + \lambda Z)^{-1} \hat{\Sigma}  B \right) \\
&\sim \kappa \Tr\left((\Sigma + \kappa Z)^{-1} A B \right)  + \kappa^2 \Tr\left(\left( \Sigma + \kappa Z \right)^{-1} A \left( \Sigma + \kappa Z \right)^{-1} Z B \right)  - E \\
&= \kappa \Tr\left(Z^{-1/2} (Z^{-1/2} \Sigma Z^{-1/2} + \kappa I)^{-1} Z^{-1/2}  A  \left(I - \kappa  Z^{1/2} \left( Z^{-1/2} \Sigma Z^{-1/2} + \kappa I\right)^{-1} Z^{-1/2} \right) B \right) - E \\
&=_{(A)} \kappa \Tr\left((\Sigma + \kappa Z)^{-1}  A (\Sigma + \kappa Z)^{-1} \Sigma  B \right)  - E, 
\end{align*}
where (A) uses an analogous analysis to the beginning of the proof. 
\end{proof}

The proofs of these results relied on the following basic matrix fact.
\begin{claim}
\label{claim:difference} 
Let $A$ be any matrix and let $B$ be any symmetric positive definite matrix. Then it holds that:
\[ A (A + \lambda B)^{-1} = I - \lambda B^{1/2} \left( B^{-1/2} A B^{-1/2} + \lambda I \right)^{-1} B^{-1/2}. \]
\end{claim}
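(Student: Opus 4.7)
The plan is to prove this by rewriting the left-hand side through a simple subtract-and-add trick, and then recognizing $(A+\lambda B)^{-1}$ as a congruence transform of $(B^{-1/2}AB^{-1/2}+\lambda I)^{-1}$. Since $B$ is symmetric positive definite, $B^{1/2}$ and $B^{-1/2}$ are well-defined and commute with $B^{\pm 1}$, which is all we need.

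First I would write $A = (A+\lambda B) - \lambda B$, so that
\[
A(A+\lambda B)^{-1} = I - \lambda B(A+\lambda B)^{-1}.
\]
This reduces the claim to showing $B(A+\lambda B)^{-1} = B^{1/2}(B^{-1/2} A B^{-1/2} + \lambda I)^{-1} B^{-1/2}$.

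Next, using that $B = B^{1/2} B^{1/2}$ and $I = B^{1/2} B^{-1/2}$, I would factor
\[
A + \lambda B = B^{1/2}\bigl(B^{-1/2} A B^{-1/2} + \lambda I\bigr) B^{1/2}.
\]
Inverting both sides (all factors are invertible since $B$ is positive definite and $A+\lambda B$ is assumed invertible wherever the identity is applied) gives
\[
(A+\lambda B)^{-1} = B^{-1/2}\bigl(B^{-1/2} A B^{-1/2} + \lambda I\bigr)^{-1} B^{-1/2}.
\]
Multiplying on the left by $B = B^{1/2} B^{1/2}$ yields
\[
B(A+\lambda B)^{-1} = B^{1/2}\bigl(B^{-1/2} A B^{-1/2} + \lambda I\bigr)^{-1} B^{-1/2},
\]
and substituting back into the first display completes the proof.

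There is no real obstacle here; the only mild subtlety is that the identity implicitly assumes $A+\lambda B$ is invertible (so that the left-hand side is defined), which is automatic in the applications in Appendix \ref{appendix:RMTextended} where $\lambda > 0$ and the relevant matrix is a shifted positive (semi)definite operator. I would state this as a one-line proof using the two displays above.
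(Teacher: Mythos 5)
Your proof is correct and uses the same two ingredients as the paper's proof---the congruence factorization $A+\lambda B = B^{1/2}(B^{-1/2}AB^{-1/2}+\lambda I)B^{1/2}$ and the subtract-and-add trick---merely applied in the opposite order (you split $A = (A+\lambda B)-\lambda B$ first, while the paper substitutes the factored inverse first and splits $B^{-1/2}AB^{-1/2}$ at the end). The two computations are trivially equivalent, and your remark about invertibility of $A+\lambda B$ is a fair (if implicit in the paper) caveat.
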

\begin{proof}
 Observe that:
\begin{align*}
&A (A + \lambda B)^{-1}  \\
 &= A B^{-1/2} \left( B^{-1/2} A B^{-1/2} + \lambda I\right)^{-1} B^{-1/2} \\
 &= B^{1/2} \left(B^{-1/2} A B^{-1/2} \right) \left( B^{-1/2} A B^{-1/2} + \lambda I\right)^{-1} B^{-1/2} \\
 &=  B^{1/2} \left(B^{-1/2} A B^{-1/2} + \lambda I \right) \left( B^{-1/2} A B^{-1/2} + \lambda I\right)^{-1} B^{-1/2} - B^{1/2} \lambda \left( B^{-1/2} A B^{-1/2} + \lambda I\right)^{-1} B^{-1/2} \\
  &=  I - \lambda  B^{1/2} \left( B^{-1/2} A B^{-1/2} + \lambda I\right)^{-1} B^{-1/2}.
\end{align*}
\end{proof}

\section{Extension: Market-entry threshold with richer form for $L_2^*$}\label{appendix:extension}

In this section, we modify the safety requirement to take into account the impact of dataset size $N$ and regularization parameter $\lambda$, and we extend our model and analysis of the market-entry threshold accordingly. We show that the characterization in Theorem \ref{thm:tradeoffwarmup} directly applies to this setting, and we also show relaxed versions of Theorem \ref{thm:finitedata} and Theorem \ref{thm:alignment}. Altogether, these extended results illustrate that our qualitative insights from Sections \ref{sec:warmup}-\ref{sec:general} hold more generally. 

We define a modified approximation of the safety violation $\tilde{L}_2(\beta_1, \beta_2, \DF, \reg, N, \alpha)$. This modified approximation is defined analogously to $\LossPerf^*(\beta_1, \beta_2, \DF, \reg, N, \alpha)$. To formalize this, we define a deterministic equivalent $L_2^{\texttt{det}}$ for the safety violation to be  
\begin{equation}
\label{eq:deterministicequivalentsafety}
  L_2^{\texttt{det}}(\beta_1, \beta_2, \DF, \reg, N, \alpha) := L_1^{\texttt{det}}(\beta_2, \beta_1, \DF, \reg, N, 1-\alpha). 
\end{equation}
It follows from Lemma \ref{lemma:sollichvariant} that $L_2(\hat{\beta}(\alpha, \lambda, X)) \sim L_2^{\texttt{det}}(\beta_1, \beta_2, \DF, \reg, N, \alpha)$: here, we use the fact that $L_2(\hat{\beta}(\alpha, \lambda, X))$ is distributed identically to $L_1(\hat{\beta}(1-\alpha, \lambda, X))$. Now, using this deterministic equivalent, we define $\tilde{L}_2(\beta_1, \beta_2, \DF, \reg, N, \alpha) = L_2^{\texttt{det}}(\beta_1, \beta_2, \DF, \reg, N, \alpha)$.  

Using this formulation of $\tilde{L}_2$, we define a modified market entry threshold where we replace all instances of original approximation $L_2^*$ with the modified approximation $\tilde{L}_2$. In particular, a company $C$ faces reputational damage if: 
\[\mathbb{E}_{(\beta_1, \beta_2) \sim \DC} \tilde{L}_2(\beta_1, \beta_2, \DF, \alpha_C) \ge \tau_C.\]
The company selects $\mix \in [0.5,1]$ and $\reg \in (0,1)$ to maximize their performance subject to their safety constraint, as formalized by the following optimization program:\footnote{Unlike in Section \ref{sec:model}, there might not exist $\mix \in [0.5,1]$ and $\reg \in (0,1)$ which satisfy the safety constraint, if $N_C$ is too small.}
\begin{equation*}
(\tilde{\alpha}_C, \tilde{\lambda}_C) = \argmin_{\mix \in [0.5, 1], \reg \in (0,1)} \mathbb{E}_{\DC}[\LossPerf^*(\beta_1, \beta_2, \DF, \reg, N_C, \alpha)] \text{ s.t. } \mathbb{E}_{\DC}[\tilde{L}_2(\beta_1, \beta_2, \DF, \alpha)] \le \tau_C.
\end{equation*}
We define the modified market-entry threshold as follows. 
\begin{definition}
\label{def:modifiedmarketentrythreshold}
The \textit{modified market-entry threshold} $\Nentroptmodified(\Nlead, \constrlead, \constrentr, \DC, \DF)$
is the \textit{minimum value} of $\Nentr \in \mathbb{Z}_{\ge 1}$ such that $\mathbb{E}_{\DC}[\LossPerf^*(\beta_1, \beta_2, \DF, \tilde{\lambda}_E, \Nentr, \tilde{\alpha}_E)] \le \mathbb{E}_{\DC}[\LossPerf^*(\beta_1, \beta_2, \DF, \tilde{\lambda}_I, \Nlead, \tilde{\alpha}_I)]$.  
\end{definition}

In this section, we analyze the modified market entry threshold $\Nentroptmodified(\Nlead,\constrlead, \constrentr, \DC,  \DF)$. We show an extension of Theorem \ref{thm:tradeoffwarmup} (Appendix \ref{appendix:extensionwarmup}). We then derive a simplified version of the deterministic equivalent $L_2^{\texttt{det}}$ (Appendix \ref{appendix:simplifieddeterministic}). Finally, we show a weakened extension of Theorem \ref{thm:finitedata} (Appendix \ref{appendix:extensionfinitedata}) and a weakened extension of Theorem \ref{thm:alignment} (Appendix \ref{appendix:extensionalignment}). These weakened extensions derive upper bounds (rather than tight bounds) on the modified market entry threshold, and also assume that $\delta \le 1$. 

\subsection{Extension of Theorem \ref{thm:tradeoffwarmup}}\label{appendix:extensionwarmup}
We study the market entry $\Nentroptmodified$ threshold in the environment of Theorem \ref{thm:tradeoffwarmup} where the incumbent has infinite data and the new company faces no safety constraint. We show that the modified  market entry threshold takes the same form as the market entry threshold in Theorem \ref{thm:tradeoffwarmup}.  
\begin{theorem}[Extension of Theorem \ref{thm:tradeoffwarmup}]
\label{thm:extensiontradeoffwarmup}
Suppose that power-law scaling holds for the eigenvalues and alignment coefficients, with scaling exponents $\gamma, \delta > 0$ and correlation coefficient $\rho \in [0, 1)$, and suppose that $P = \infty$. 
Suppose that the incumbent company has infinite data (i.e., $\Nlead= \infty$), and that the entrant faces no constraint on their safety (i.e., $\constrentr = \infty$). Suppose that the safety constraint $\constrlead$ satisfies \eqref{eq:safetythreshold}. Then, it holds that:
\[\Nentroptmodified(\infty, \constrlead, \infty, \DC, \DF) = \Theta\left(\left(\sqrt{L^*(\rho)} - \sqrt{\min(\constrlead, L^*(\rho))} \right)^{-2/\scalingexp} \right),\]
where $L^*(\rho) = \mathbb{E}_{\DC}[(\beta_1 - \beta_2)^T \Sigma (\beta_1 - \beta_2)] = \Theta(1 - \rho)$, and where $\scalingexp := \min(2(1+\gamma), \delta + \gamma)$. 
\end{theorem}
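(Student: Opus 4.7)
The plan is to mirror the structure of the proof of Theorem \ref{thm:tradeoffwarmup} (Appendix \ref{appendix:mainproof}), showing that the modified safety specification $\tilde{L}_2$ collapses back to $L_2^*$ in the two relevant regimes: infinite data for the incumbent, and vacuous safety constraint for the entrant. Because these are exactly the two regimes that govern this warmup setting, the extension should go through essentially unchanged.

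First, I would handle the incumbent. Since $\Nlead = \infty$ and the incumbent is free to pick $\reg_I = 0$, I want to argue that $\mathbb{E}_{\DC}[\tilde{L}_2(\beta_1,\beta_2,\DF,0,\infty,\alpha)] = \mathbb{E}_{\DC}[L_2^*(\beta_1,\beta_2,\DF,\alpha)] = \alpha^2 L^*(\rho)$. This follows by inspecting $L_2^{\texttt{det}}$ in \eqref{eq:deterministicequivalentsafety}: the effective regularizer $\kappa(\lambda,N,\Sigma) \to 0$ as $N \to \infty$ with $\lambda = 0$ (by Lemma \ref{lemma:kappabasic}), and in this limit the deterministic equivalent from Lemma \ref{lemma:sollichvariant} reduces to the error of the infinite-data ridgeless predictor $\beta(1-\alpha,0)$, which is precisely $\alpha^2 L^*(\rho)$ (see Appendix \ref{appendix:facts}). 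Given this, I can then reuse Lemma \ref{lemma:ridgelessoptimalinfinitedata} verbatim: the safety constraint forces $\tilde{\alpha}_I \le \sqrt{\min(\constrlead,L^*(\rho))/L^*(\rho)}$, and the monotonicity analysis of performance in $\alpha$ and $\lambda$ (Lemma \ref{lemma:popparetofrontier}) forces $\tilde{\alpha}_I = \sqrt{\min(\constrlead,L^*(\rho))/L^*(\rho)}$ and $\tilde{\lambda}_I = 0$, yielding performance loss $\bigl(\sqrt{L^*(\rho)} - \sqrt{\min(\constrlead,L^*(\rho))}\bigr)^2$.

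Next, I would handle the entrant. Since $\constrentr = \infty$, the safety constraint is vacuous, regardless of whether one uses $L_2^*$ or $\tilde{L}_2$. Therefore $(\tilde{\alpha}_E,\tilde{\lambda}_E)$ is chosen purely to minimize the expected performance $\mathbb{E}_{\DC}[L_1^*(\beta_1,\beta_2,\DF,\lambda,N,\alpha)]$, and the existing single-objective ridge analysis applies unchanged: by Corollary \ref{cor:scalinglawoptreg}, $\inf_{\alpha\in[0.5,1],\lambda\in(0,1)} \mathbb{E}_{\DC}[L_1^*] = \Theta(N^{-\scalingexp})$ with $\scalingexp = \min(2(1+\gamma),\delta+\gamma)$, attained at $\alpha = 1$. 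Setting the entrant's performance loss equal to the incumbent's performance loss and inverting then gives $\Nentroptmodified = \Theta\bigl(\bigl(\sqrt{L^*(\rho)} - \sqrt{\min(\constrlead,L^*(\rho))}\bigr)^{-2/\scalingexp}\bigr)$, with $L^*(\rho) = \Theta(1-\rho)$ by Claim \ref{claim:boundLstar}.

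The only non-routine step is verifying that the deterministic equivalent $L_2^{\texttt{det}}$ collapses to the infinite-data ridgeless safety violation when $N = \infty$ and $\lambda = 0$; this is the one place the modified specification could potentially deviate from the original. The other steps are routine restatements of Lemma \ref{lemma:ridgelessoptimalinfinitedata} and Corollary \ref{cor:scalinglawoptreg} in the two degenerate regimes considered here. I expect this collapse to be immediate from the explicit form of the five terms in Lemma \ref{lemma:sollichvariant}: the $T_4$ and $T_5$ error terms carry an explicit $1/N$ factor and vanish as $N \to \infty$, while $T_1$, $T_2$, $T_3$ with $\kappa \to 0$ reproduce the closed-form infinite-data ridgeless loss.
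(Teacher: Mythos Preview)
Your handling of the entrant is fine and matches the paper: with $\constrentr = \infty$ the constraint is vacuous regardless of which safety proxy is used, so Corollary \ref{cor:scalinglawoptreg} applies unchanged.

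The gap is in your analysis of the incumbent. You write that once $\tilde{L}_2$ collapses to $L_2^*$ at $(\lambda,N)=(0,\infty)$, you can ``reuse Lemma \ref{lemma:ridgelessoptimalinfinitedata} verbatim: the safety constraint forces $\tilde{\alpha}_I \le \sqrt{\min(\constrlead,L^*(\rho))/L^*(\rho)}$.'' But that inference is exactly what fails in the modified setting. In the original model the safety proxy $L_2^*(\alpha)=\alpha^2 L^*(\rho)$ does not depend on $\lambda$, so the constraint pins down $\alpha$ alone. In the modified model the incumbent's constraint is $\mathbb{E}_{\DC}[L_2(\beta(\alpha,\lambda))]\le\constrlead$, which \emph{does} depend on $\lambda$, and in fact one can have $\mathbb{E}_{\DC}[L_2(\beta(\alpha,\lambda))]<\alpha^2 L^*(\rho)$ for $\lambda>0$ (e.g.\ take $\rho=0$, $\alpha=1$: shrinking toward zero strictly decreases the safety loss below $L^*(\rho)$). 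Hence the incumbent could in principle pick some $\alpha>\alpha^*$ together with $\lambda>0$, still meet the safety constraint, and you have not ruled out that this yields performance below $(1-\alpha^*)^2 L^*(\rho)$. Lemma \ref{lemma:popparetofrontier} only gives $L_1\ge(1-\alpha)^2 L^*(\rho)$, which is too weak once $\alpha>\alpha^*$.

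The paper closes this gap with a new Pareto-type bound, Lemma \ref{lemma:paretofrontiernew}: for every $(\alpha,\lambda)$,
\[
\sqrt{\mathbb{E}_{\DC}[L_1(\beta(\alpha,\lambda))]}+\sqrt{\mathbb{E}_{\DC}[L_2(\beta(\alpha,\lambda))]}\ \ge\ \sqrt{L^*(\rho)},
\]
proved via the triangle inequality in the $\Sigma$-seminorm. Given the constraint $\mathbb{E}_{\DC}[L_2]\le(\alpha^*)^2 L^*(\rho)$, this immediately yields $\mathbb{E}_{\DC}[L_1]\ge(1-\alpha^*)^2 L^*(\rho)$, with equality at $(\alpha^*,0)$. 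This is the extra ingredient your argument needs; the rest of your outline is correct.
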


Theorem \ref{thm:extensiontradeoffwarmup} shows that the qualitative insights from Theorem \ref{thm:tradeoffwarmup}---including that the new company can enter with finite data---readily extend to this setting.   

To prove Theorem \ref{thm:extensiontradeoffwarmup}, we build on the notation and analysis from Appendix \ref{appendix:proofssec3}. It suffices to show that each company $C$ will select $\mix_C = \tilde{\mix}_C$ and $\lambda_C = \tilde{\lambda}_C$. This follows trivially for the entrant $C = E$ since they face no safety constraint, and there is no different between the two settings. The key ingredient of the proof is to compute $\tilde{\mix}_I$ and $\tilde{\reg}_I$ for the incumbent (i.e., an analogue of Lemma \ref{lemma:ridgelessoptimalinfinitedata} in Appendix \ref{appendix:proofssec3}). 

To do this, we first upper bound the following function of the safety loss and performance loss for general parameters $\reg$ and $\mix$.
\begin{lemma}
\label{lemma:paretofrontiernew}
For any $\alpha$ and $\reg$, it holds that:
\[ \sqrt{\mathbb{E}_{\DC}[L_1(\beta(\alpha, \lambda))]} + \sqrt{\mathbb{E}_{\DC}[L_2(\beta(\alpha, \lambda))]} \ge \sqrt{\mathbb{E}_{\DC}[(\beta_1 - \beta_2)^T \Sigma (\beta_1 - \beta_2)^T]}.  \]
\end{lemma}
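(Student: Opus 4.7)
The plan is to view everything in terms of the $\Sigma$-seminorm $\|v\|_\Sigma := \sqrt{v^T \Sigma v}$. With this notation, $\sqrt{L_1(\beta(\alpha,\lambda))} = \|\beta(\alpha,\lambda) - \beta_1\|_\Sigma$, $\sqrt{L_2(\beta(\alpha,\lambda))} = \|\beta(\alpha,\lambda) - \beta_2\|_\Sigma$, and the right-hand side is $\sqrt{\mathbb{E}_{\DC}[\|\beta_1 - \beta_2\|_\Sigma^2]}$. The starting point, which holds deterministically for any fixed draw of $(\beta_1, \beta_2)$, is the triangle inequality for the seminorm:
\[
\|\beta(\alpha,\lambda) - \beta_1\|_\Sigma + \|\beta(\alpha,\lambda) - \beta_2\|_\Sigma \;\ge\; \|\beta_1 - \beta_2\|_\Sigma.
\]

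The only subtlety is that we have a sum of \emph{root-mean-squared} quantities on the left, but the direct inequality from Jensen ($\sqrt{\mathbb{E}[\cdot]} \ge \mathbb{E}[\sqrt{\cdot}]$) goes the wrong way after squaring the triangle inequality. So the plan is instead to square first: setting $A = L_1(\beta(\alpha,\lambda))$, $B = L_2(\beta(\alpha,\lambda))$, $C = (\beta_1 - \beta_2)^T \Sigma (\beta_1 - \beta_2)$, the triangle inequality gives $A + B + 2\sqrt{AB} \ge C$ pointwise. Taking $\mathbb{E}_{\DC}$ and then applying Cauchy-Schwarz in the form $\mathbb{E}[\sqrt{AB}] \le \sqrt{\mathbb{E}[A]\, \mathbb{E}[B]}$ yields
\[
\mathbb{E}_{\DC}[A] + \mathbb{E}_{\DC}[B] + 2\sqrt{\mathbb{E}_{\DC}[A]\, \mathbb{E}_{\DC}[B]} \;\ge\; \mathbb{E}_{\DC}[C],
\]
the left-hand side of which is exactly $\bigl(\sqrt{\mathbb{E}_{\DC}[A]} + \sqrt{\mathbb{E}_{\DC}[B]}\bigr)^2$. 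Taking the square root of both sides gives the claim.

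No step requires significant computation; the only conceptual point that needs care is recognizing why the Jensen direction is wrong and that the remedy is to square the triangle inequality first and use Cauchy-Schwarz to control the cross term. Everything else follows from elementary manipulations.
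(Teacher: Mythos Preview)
Your proof is correct. Both your argument and the paper's reduce to the triangle inequality for the seminorm $\|X\| := \sqrt{\mathbb{E}_{\DC}[X^T \Sigma' X]}$ (Minkowski, proved via Cauchy--Schwarz), but the executions differ. The paper first expands $\beta(\alpha,\lambda) = \Sigma(\Sigma+\lambda I)^{-1}(\alpha\beta_1 + (1-\alpha)\beta_2)$ explicitly, rewrites $\beta_j - \beta(\alpha,\lambda) = (\Sigma+\lambda I)^{-1} X_j$ for specific vectors $X_1, X_2$ whose sum is $(\Sigma+\lambda I)(\beta_1-\beta_2)$, and then applies Minkowski with $\Sigma' = \Sigma(\Sigma+\lambda I)^{-2}$. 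Your route instead applies the pointwise triangle inequality for $\|\cdot\|_\Sigma$ directly to $\beta(\alpha,\lambda)$, squares, takes expectation, and bounds the cross term by Cauchy--Schwarz; this never touches the explicit ridge formula and therefore proves the stronger fact that the inequality holds with $\beta(\alpha,\lambda)$ replaced by \emph{any} (possibly random) predictor. The paper's computation buys nothing extra here, so your version is the more economical of the two.
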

\begin{proof}
Note that:
\begin{align*}
T &:= \sqrt{\mathbb{E}_{\DC}[L_1(\beta(\alpha, \lambda))]} + \sqrt{\mathbb{E}_{\DC}[L_2(\beta(\alpha, \lambda))]} \\
&= \sqrt{(\beta_1 - \beta(\alpha, \lambda))^T \Sigma (\beta_1 - \beta(\alpha, \lambda))} + \sqrt{(\beta_2 - \beta(\alpha, \lambda))^T \Sigma (\beta_2 - \beta(\alpha, \lambda))} \\
&= \sqrt{(\lambda \beta_1 + (1-\alpha) \Sigma (\beta_1 - \beta_2))^T \Sigma (\Sigma + \lambda I)^{-2} (\lambda \beta_1 + (1-\alpha) \Sigma (\beta_1 - \beta_2))} \\
&+ \sqrt{(\lambda \beta_2 +  \alpha \Sigma (\beta_2 - \beta_1))^T \Sigma (\Sigma + \lambda I)^{-2} (\lambda \beta_2 +  \alpha \Sigma (\beta_2 - \beta_1))} \\
&= \sqrt{(\lambda \beta_1 + (1-\alpha) \Sigma (\beta_1 - \beta_2))^T \Sigma (\Sigma + \lambda I)^{-2} (\lambda \beta_1 + (1-\alpha) \Sigma (\beta_1 - \beta_2)) } \\
&+ \sqrt{(-\lambda \beta_2 +  \alpha\Sigma (\beta_1 - \beta_2))^T \Sigma (\Sigma + \lambda I)^{-2} (-\lambda \beta_2 + \alpha \Sigma (\beta_1 - \beta_2))}.
\end{align*}
Now note that for any PSD matrix $\Sigma'$ and any distribution, note that the following triangle inequality holds: 
\[\sqrt{\mathbb{E}[(X_1 + X_2)^T \Sigma' (X_1 + X_2)]} \le \sqrt{\mathbb{E}[X_1^T \Sigma' X_1]} + \sqrt{\mathbb{E}[X_2^T \Sigma' X_2]}.  \] We apply this for $X_1 = \lambda \beta_1 + (1-\alpha) \Sigma (\beta_1 - \beta_2)$, $X_2 = -\lambda \beta_2 +  \alpha\Sigma (\beta_1 - \beta_2)$, and distribution $\DC$. This means that we can lower bound: 
\begin{align*}
  T &\ge \sqrt{\mathbb{E}_{\DC}[((\Sigma + \lambda I) (\beta_1 - \beta_2))^T \Sigma (\Sigma + \lambda I)^{-2} ((\Sigma + \lambda I) (\beta_1 - \beta_2))]} \\
  &= \sqrt{\mathbb{E}_{\DC}[(\beta_1 - \beta_2))^T \Sigma (\beta_1 - \beta_2))]}
\end{align*}
as desired. 
\end{proof} 
Now, we are ready to compute $\tilde{\mix}_I$ and $\tilde{\reg}_I$ for the incumbent. 
\begin{lemma}
\label{lemma:extensionridgelessoptimalinfinitedata}
Let $L^*(\rho) = \mathbb{E}_{\DC}[(\beta_1 - \beta_2)^T \Sigma (\beta_1 - \beta_2)^T]$. Suppose that $\Nlead = \infty$, and suppose that the safety constraint $\constrlead$ satisfies \eqref{eq:safetythreshold}. Then it holds that $\alpha_I =  \sqrt{\frac{\min(\constrlead, L^*(\rho))}{L^*(\rho)}}$, and $\reg_I = 0$ is optimal for the incumbent. Moreover, it holds that:
\[\mathbb{E}_{\DC}[L^*_1(\beta_1, \beta_2, \DF, \tilde{\reg}_I, \infty, \tilde{\alpha}_I)] = \left(\sqrt{L^*(\rho)} - \sqrt{\min(L^*(\rho), \constrlead)}\right)^2. \]
\end{lemma}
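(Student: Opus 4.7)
The plan is to reduce everything to the already-proven Lemma \ref{lemma:paretofrontiernew} and the closed-form expressions for the ridgeless infinite-data estimator.

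First, I would verify that when $N \to \infty$, the deterministic equivalent $L_2^{\texttt{det}}(\beta_1, \beta_2, \DF, \lambda, N, \alpha)$ defined in \eqref{eq:deterministicequivalentsafety} collapses to the population safety loss $\mathbb{E}_{\DC}[L_2(\beta(\alpha, \lambda))]$ of the infinite-data ridge predictor, and similarly for $L_1^{\texttt{det}}$. This is immediate from Lemma \ref{lemma:sollichvariant}: as $N \to \infty$, the effective regularizer $\kappa(\lambda, N, \Sigma) \to \lambda$ (Definition \ref{def:effectiveregularizer}), the correction factor $Q \to 1$, and the terms $T_4, T_5$ that carry the $1/N$ prefactor vanish. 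The remaining terms $T_1, T_2, T_3$ reproduce the infinite-data ridge loss exactly. Thus under $N_I = \infty$, the incumbent's problem reduces to minimizing $\mathbb{E}_{\DC}[L_1(\beta(\alpha, \lambda))]$ subject to $\mathbb{E}_{\DC}[L_2(\beta(\alpha, \lambda))] \le \tau_I$ over $\alpha \in [0.5, 1]$ and $\lambda \in (0,1)$ (taking the infimum, which will be attained as $\lambda \to 0$).

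Next, I would apply Lemma \ref{lemma:paretofrontiernew}, which gives the Pareto-style inequality
\[
\sqrt{\mathbb{E}_{\DC}[L_1(\beta(\alpha, \lambda))]} + \sqrt{\mathbb{E}_{\DC}[L_2(\beta(\alpha, \lambda))]} \;\ge\; \sqrt{L^*(\rho)}
\]
for every admissible pair $(\alpha, \lambda)$. Combining this with the safety constraint $\mathbb{E}_{\DC}[L_2(\beta(\alpha, \lambda))] \le \tau_I$ and the trivial inequality $\mathbb{E}_{\DC}[L_2(\beta(\alpha, \lambda))] \le L^*(\rho)$ (which holds because $\beta = 0$ already achieves safety loss $\alpha^2 L^*(\rho) \le L^*(\rho)$ via the facts in Appendix \ref{appendix:facts}) yields the lower bound $\mathbb{E}_{\DC}[L_1(\beta(\alpha, \lambda))] \ge (\sqrt{L^*(\rho)} - \sqrt{\min(\tau_I, L^*(\rho))})^2$ for every admissible pair.

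Finally, I would show this lower bound is attained in the limit $\lambda \to 0$ with $\alpha^* = \sqrt{\min(\tau_I, L^*(\rho))/L^*(\rho)}$. Using the facts in Appendix \ref{appendix:facts}, $\mathbb{E}_{\DC}[L_2(\beta(\alpha^*, 0))] = (\alpha^*)^2 L^*(\rho) = \min(\tau_I, L^*(\rho)) \le \tau_I$, so the safety constraint is satisfied; and $\mathbb{E}_{\DC}[L_1(\beta(\alpha^*, 0))] = (1-\alpha^*)^2 L^*(\rho) = (\sqrt{L^*(\rho)} - \sqrt{\min(\tau_I, L^*(\rho))})^2$, matching the lower bound. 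The hypothesis that $\tau_I \ge \mathbb{E}_{\DC}[L_2^*(\beta_1, \beta_2, \DF, 0.5)] = 0.25\, L^*(\rho)$ from \eqref{eq:safetythreshold} ensures $\alpha^* \ge 0.5$, so this choice is feasible.

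The main obstacle is the limit step: the optimization is stated over $\lambda \in (0,1)$ rather than $\lambda \in [0,1)$, so strictly speaking the infimum may not be attained on the feasible set, and one must argue (as the footnote to the optimization program in Section \ref{subsec:linearregression} allows) that the infimum value agrees with the value achieved in the $\lambda \to 0$ limit. This requires checking that $(\alpha, \lambda) \mapsto (\mathbb{E}_{\DC}[L_1(\beta(\alpha,\lambda))], \mathbb{E}_{\DC}[L_2(\beta(\alpha,\lambda))])$ is continuous at $\lambda = 0$, which follows from the explicit closed-form expressions for $\beta(\alpha, \lambda)$ and standard dominated convergence using the power-law decay of the eigenvalues.
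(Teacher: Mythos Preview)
Your proposal is correct and follows essentially the same route as the paper: reduce the deterministic equivalents at $N=\infty$ to the population ridge losses, invoke Lemma~\ref{lemma:paretofrontiernew} for the Pareto lower bound, and verify that the ridgeless choice $(\alpha^*,0)$ attains it (the paper phrases the last step as a contradiction argument, but the content is the same). One small correction: the inequality $\mathbb{E}_{\DC}[L_2(\beta(\alpha,\lambda))]\le L^*(\rho)$ is neither true for all $(\alpha,\lambda)$ nor needed---when $\tau_I>L^*(\rho)$ the claimed lower bound is $0$, which holds trivially, so you only need feasibility $L_2\le\tau_I$ together with $L_1\ge 0$.
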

\begin{proof}
First, we apply Lemma \ref{lemma:extensionsollichmoreprecise} with $N = \infty$ to see that:
\[\mathbb{E}_{\DC}[L^*_1(\beta_1, \beta_2, \DF, \reg, \infty, \alpha)] = \mathbb{E}_{\DC}[L_1(\beta(\alpha, \lambda))] \]
and 
\[\mathbb{E}_{\DC}[L^*_2(\beta_1, \beta_2, \DF, \reg, \infty, \alpha)] = \mathbb{E}_{\DC}[L_2(\beta(\alpha, \lambda))]. \]

Let $\alpha^* =\sqrt{\frac{\min(\constrlead, L^*(\rho))}{L^*(\rho)}}$. By the assumption in the lemma statement, we know that:
\[\alpha^* \ge \sqrt{\frac{\mathbb{E}_{\DC}[\LossAlign^*(\beta_1, \beta_2, \DF, 0.5)]}{L^*(\rho)}} = 0.5. \]
Observe that:
\begin{align*}
&\sqrt{\mathbb{E}_{\DC}[ L_1(\beta(\alpha^*, 0))]} + \sqrt{\min(\constrlead, L^*(\rho))} \\
&=\sqrt{\mathbb{E}_{\DC}[ L_1(\beta(\alpha^*, 0))]} + \sqrt{\mathbb{E}_{\DC}[ L_2(\beta(\alpha^*, 0))]} \\
&= \sqrt{(1-\alpha^*)^2 \mathbb{E}_{\DC}[(\beta_1 - \beta_2)^T \Sigma (\beta_1 - \beta_2)^T]} + \sqrt{ (\alpha^*)^2 \mathbb{E}_{\DC}[(\beta_1 - \beta_2)^T \Sigma (\beta_1 - \beta_2)^T]}  \\
&= \sqrt{\mathbb{E}_{\DC}[(\beta_1 - \beta_2)^T \Sigma (\beta_1 - \beta_2)^T]} 
\end{align*}

We show that $(\tilde{\alpha}_I, \tilde{\reg}_I) = (\alpha^*, 0)$. Assume for sake of contradiction that $(\alpha, \reg) \neq (\alpha^*, 0)$ satisfies the safety constraint $\mathbb{E}_{\DC}[\tilde{L}_2(\beta_1, \beta_2, \DF, \alpha)] \le \min(\constrlead, L^*(\rho))$ and achieves strictly better performance loss:
\[\mathbb{E}_{\DC}[\LossPerf^*(\beta_1, \beta_2, \DF, \reg, \infty, \mix)] <  \mathbb{E}_{\DC}[\LossPerf^*(\beta_1, \beta_2, \DF, 0, \infty, \mix^*)].\] Then it would hold that:
\begin{align*}
    \sqrt{\mathbb{E}_{\DC}[ L_1(\beta(\mix, \reg))]} + \sqrt{\mathbb{E}_{\DC}[ L_2(\beta(\mix, \reg))]} &< \sqrt{\mathbb{E}_{\DC}[ L_1(\beta(\alpha^*, 0))]} + \sqrt{\min(\constrlead, L^*(\rho))}  \\
    &= \sqrt{\mathbb{E}_{\DC}[(\beta_1 - \beta_2)^T \Sigma (\beta_1 - \beta_2)^T]},
\end{align*}
which contradicts Lemma \ref{lemma:paretofrontiernew}.

To analyze the loss, note that:
\begin{align*}
\mathbb{E}_{\DC}[L^*_1(\beta_1, \beta_2, \DF, \tilde{\reg}_I, \infty, \tilde{\alpha}_I)] \\
&= \mathbb{E}_{\DC}[L_1(\beta(\tilde{\alpha}_I, \tilde{\lambda}_I))] \\
&= (1-\tilde{\alpha}_I)^2 L^*(\rho) \\ 
&= (\sqrt{L^*(\rho)} - \sqrt{\min(L^*(\rho), \constrlead)})^2 
\end{align*}
    
\end{proof}

We now prove Theorem \ref{thm:extensiontradeoffwarmup}.  
\begin{proof}[Proof of Theorem \ref{thm:extensiontradeoffwarmup}]

We analyze $(\tilde{\alpha}_C, \tilde{\reg}_C)$ first for the incumbent $C = I$ and then for the entrant $C = E$. 

\paragraph{Analysis of the incumbent $C = I$.}
By Lemma \ref{lemma:extensionridgelessoptimalinfinitedata}, we see that:
\[\mathbb{E}_{\DC}[L^*_1(\beta_1, \beta_2, \DF, \tilde{\reg}_I, \infty, \tilde{\alpha}_I)] =  \left(\sqrt{L^*(\rho)} - \sqrt{\min(\constrlead, L^*(\rho))}\right)^2. \]

\paragraph{Analysis of the entrant $C = E$.} This analysis follows identically to the analogous case in the proof of Theorem \ref{thm:tradeoffwarmup}, and we repeat the proof for completeness. Since the entrant faces no safety constraint, the entrant can choose any $\alpha \in [0.5, 1]$. We apply Corollary \ref{cor:scalinglawoptreg} to see that:
\[ \mathbb{E}_{\DC}[L^*_1(\beta_1, \beta_2, \DF, \reg_E, N, \alpha_E)] = \inf_{\alpha \in [0.5, 1]} \inf_{\reg > 0} \mathbb{E}_{\DC}[L^*_1(\beta_1, \beta_2, \DF, \reg, N, \alpha)] = \Theta\left(
N^{-\scalingexp} \right), \]
which means that:
\[ \Nentr^*(\infty, \constrlead, \infty, \DC, \DF) = \Theta \left(\left(\sqrt{L^*(\rho)} - \sqrt{\min(\constrlead, L^*(\rho)}\right)^{-2/\scalingexp} \right)\]
as desired. 
We can further apply Claim \ref{claim:boundLstar} to see that $L^*(\rho) = \Theta(1-\rho)$. 
\end{proof}

\subsection{Bounds on the excess loss for safety}\label{appendix:simplifieddeterministic}

We bound the excess loss $\alpha^2 L^*(\rho)  - \mathbb{E}_{\DC}[L_2^{\texttt{det}}]$. We assume that $\alpha \ge 0.5$ and we further assume that $\delta \le 1$. 
\begin{lemma}
\label{lemma:boundsextensions}
Suppose that power scaling holds for the eigenvalues and alignment coefficients with scaling $\gamma > 0$ and $\delta \in (0,1]$, and correlation coefficient $\rho \in [0, 1)$, and suppose that $P = \infty$. Suppose that $\alpha \ge 0.5$, $\reg \in (0,1)$, and $N \ge 1$. Let $L_2^{\texttt{det}} :=  L_2^{\texttt{det}}(\beta_1, \beta_2, \DF, \reg, N, \alpha) $ be defined according to \eqref{eq:deterministicequivalentsafety}. Let $L^*(\rho) = \mathbb{E}_{\DC}[(\beta_1 - \beta_2)^T \Sigma (\beta_1 - \beta_2)]$. Then it holds that:
\[ \alpha^2 L^*(\rho)  - \mathbb{E}_{\DC}[L_2^{\texttt{det}}] = O\left( \max(\lambda^{\frac{\scalingexp}{1 + \gamma}}, N^{-\scalingexp}) 
\right)  \]
and 
\[ \mathbb{E}_{\DC}[L_2^{\texttt{det}}] - \alpha^2 L^*(\rho) = O\left( \max(\lambda^{\frac{\scalingexp}{1 + \gamma}}, N^{-\scalingexp}) + (1-\alpha) (1-\rho) \frac{\min(\lambda^{-\frac{1}{1 + \gamma}}, N)}{N} \right), \]
where $\scalingexp = \min(2(1+\gamma), \delta + \gamma) = \delta + \gamma$. 
\end{lemma}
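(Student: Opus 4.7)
The plan is to reduce the statement to the exact formula from Lemma \ref{lemma:sollichmoreprecise} (applied with $\alpha$ replaced by $1-\alpha$), and then extract an algebraic cancellation that produces the $(1-\alpha)$ factor in the one-sided bound.

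First I would use that $L_2^{\texttt{det}}(\beta_1,\beta_2,\DF,\lambda,N,\alpha) = L_1^{\texttt{det}}(\beta_2,\beta_1,\DF,\lambda,N,1-\alpha)$ and that the moments of $\DC$ entering Lemma \ref{lemma:sollichmoreprecise} are symmetric in $(\beta_1,\beta_2)$, so that lemma applied with $\alpha\to 1-\alpha$ gives
\[ Q\cdot \mathbb{E}_{\DC}[L_2^{\texttt{det}}] = T_1' + \alpha^2 L^*(\rho) + T_3' + T_4', \]
where $T_1',T_3',T_4'$ are the analogues of the three error terms (each $(1-\alpha)$ replaced by $\alpha$, and each $1-2(1-\alpha)$ by $1-2\alpha$). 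Rearranging,
\[ \mathbb{E}_{\DC}[L_2^{\texttt{det}}] - \alpha^2 L^*(\rho) = \frac{1}{Q}\bigl(T_1' + T_3' + (\ast)\bigr), \qquad (\ast) := (1-Q)\alpha^2 L^*(\rho) + T_4'. \]
Throughout, I will use $Q = \Theta(1)$ (Lemma \ref{lemma:degreesoffreedom}), the sign facts $T_3'\le 0$ and $T_4'\le 0$ valid for $\alpha\ge 1/2$, and the asymptotic identities of Lemma \ref{lemma:splitintegralbounds}. Using these, $|T_1'| = O(\kappa^{\scalingexp/(1+\gamma)})$ since $|1 - 2\alpha^2(1-\rho)| \le 1$, and $|T_3'| = O(\kappa^{\scalingexp/(1+\gamma)})$ because under $\delta\le 1$ one has $\scalingexp = \delta+\gamma$ and $\sum i^{-\delta-2(1+\gamma)}/(i^{-1-\gamma}+\kappa)^2 \approx \kappa^{(\delta-1)/(1+\gamma)}$.

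The main obstacle is the term $(\ast)$: a naive triangle-inequality bound would yield $O(\alpha^2(1-\rho)\kappa^{-1/(1+\gamma)}/N)$, which lacks the required $(1-\alpha)$ factor. To recover it I would combine the two summands inside $(\ast)$ algebraically. Writing $A := 1-Q$, $B := \sum i^{-\delta-2-2\gamma}/(i^{-1-\gamma}+\kappa)$, and $C := \sum i^{-\delta-1-\gamma}$ so that $L^*(\rho) = 2(1-\rho)C$ by Claim \ref{claim:boundLstar}, a direct computation gives
\[ (\ast) = 2A(1-\rho)\alpha\bigl[\alpha C + (1-2\alpha)B\bigr] = 2A(1-\rho)\alpha\bigl[(1-\alpha)B + \alpha(C-B)\bigr]. \]
Both summands are nonnegative since $C\ge B$ termwise, so $(\ast)\ge 0$. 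Moreover, $C-B = \kappa\sum i^{-\delta-1-\gamma}/(i^{-1-\gamma}+\kappa) \approx \kappa^{\scalingexp/(1+\gamma)}$ under $\delta\le 1$, while $B = \Theta(1)$ and $A \approx \kappa^{-1/(1+\gamma)}/N$ by Lemma \ref{lemma:splitintegralbounds}. Plugging in,
\[ (\ast) = O\bigl((1-\alpha)(1-\rho)\kappa^{-1/(1+\gamma)}/N\bigr) + O\bigl((1-\rho)\alpha^2 \kappa^{(\scalingexp-1)/(1+\gamma)}/N\bigr), \]
and the second error absorbs into $O(\kappa^{\scalingexp/(1+\gamma)})$ because $\kappa \ge \Omega(N^{-1-\gamma})$ forces $N\kappa^{1/(1+\gamma)} \ge \Omega(1)$.

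Finally I would assemble the two bounds. For $\alpha^2 L^*(\rho) - \mathbb{E}_{\DC}[L_2^{\texttt{det}}]$, since $(\ast)\ge 0$ only $T_1'$ and $T_3'$ can drive $\mathbb{E}_{\DC}[L_2^{\texttt{det}}]$ below $\alpha^2 L^*(\rho)$, and both are $O(\kappa^{\scalingexp/(1+\gamma)})$, yielding the first inequality. For the other direction, I drop $T_3'\le 0$ and combine $|T_1'|$ with the bound on $(\ast)$ from the previous paragraph to obtain the second inequality. Converting via $\kappa = \Theta(\max(\lambda,N^{-1-\gamma}))$ (Lemma \ref{lemma:kappabasic}) replaces $\kappa^{\scalingexp/(1+\gamma)}$ by $\max(\lambda^{\scalingexp/(1+\gamma)},N^{-\scalingexp})$ and $\kappa^{-1/(1+\gamma)}/N$ by $\min(\lambda^{-1/(1+\gamma)},N)/N$, matching the form stated in the lemma.
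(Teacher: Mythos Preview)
Your proposal is correct and follows essentially the same route as the paper: start from Lemma~\ref{lemma:sollichmoreprecise} with $\alpha\mapsto 1-\alpha$, perform an algebraic cancellation to extract the $(1-\alpha)$ factor from the ``overfitting'' term, and then bound each piece using Lemmas~\ref{lemma:splitintegralbounds}, \ref{lemma:degreesoffreedom}, and \ref{lemma:kappabasic} together with $\delta\le 1$. The paper packages the cancellation as a separate intermediate identity (Lemma~\ref{lemma:extensionsollichmoreprecise}), where it simultaneously recombines pieces of $T_1'$, $T_3'$, and $T_4'$ to isolate a clean negative term $-2\alpha^2\kappa(1-\rho)\sum i^{-\delta-1-\gamma}/(i^{-1-\gamma}+\kappa)$; your decomposition $(\ast)=2A(1-\rho)\alpha[(1-\alpha)B+\alpha(C-B)]$ achieves the same effect more directly by working only inside $(1-Q)\alpha^2 L^*(\rho)+T_4'$ and using the termwise inequality $C\ge B$. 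Both routes hinge on the same observation that $C-B=\kappa\sum i^{-\delta-1-\gamma}/(i^{-1-\gamma}+\kappa)\approx\kappa^{\nu/(1+\gamma)}$ under $\delta\le 1$, and both conclude identically.
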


To prove Lemma \ref{lemma:boundsextensions}, we first simplify the deterministic equivalent $L_2^{\texttt{det}}(\beta_1, \beta_2, \DF, \reg, N, \alpha)$ using the assumptions from Section \ref{subsec:assumptions}. 
\begin{lemma}
\label{lemma:extensionsollichmoreprecise}
Suppose that power scaling holds for the eigenvalues and alignment coefficients with scaling $\gamma, \delta > 0$ and correlation coefficient $\rho \in [0, 1)$, and suppose that $P = \infty$. Suppose that $\reg \in (0,1)$, and $N \ge 1$. Let $L_2^{\texttt{det}} :=  L_2^{\texttt{det}}(\beta_1, \beta_2, \DF, \reg, N, \alpha) $ be defined according to \eqref{eq:deterministicequivalentsafety}. Let $\kappa = \kappa(\lambda, N, \Sigma)$ from Definition \ref{def:effectiveregularizer}. Let $L^*(\rho) = \mathbb{E}_{\DC}[(\beta_1 - \beta_2)^T \Sigma (\beta_1 - \beta_2)]$. Then it holds that:
  \begin{align*}
  \mathbb{E}_{\DC}[L_2^{\texttt{det}}] - L^*(\rho) &= 
Q^{-1} \cdot \kappa^2 \sum_{i=1}^P \frac{i^{-\delta -1- \gamma}}{(i^{-1-\gamma} + \kappa)^2}  + Q^{-1} 2\kappa \alpha (1-\alpha) (1-\rho) \sum_{i=1}^P \frac{i^{-\delta - 2(1+\gamma)}}{(i^{-1-\gamma}+\kappa)^2} \\
&+ Q^{-1} 2 \alpha (1-\alpha) (1-\rho) \frac{1}{N} \left(\sum_{i=1}^P \frac{i^{-2-2\gamma}}{(i^{-1-\gamma} + \kappa)^2} \right) \cdot \sum_{i=1}^P  \frac{i^{-\delta - 2-2\gamma}}{i^{-1-\gamma} + \kappa}  \\
&- 2 \alpha^2 \kappa (1-\rho) \sum_{i=1}^P  \frac{i^{-\delta - 1 -\gamma}}{i^{-1-\gamma} + \kappa},
\end{align*}
  where $Q = 1 - \frac{1}{N} \sum_{i=1}^P \frac{i^{-2-2\gamma}}{(i^{-1-\gamma} + \kappa)^2}$. 
\end{lemma}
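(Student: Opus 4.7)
The plan is to exploit the symmetry that by definition \eqref{eq:deterministicequivalentsafety}, $L_2^{\texttt{det}}(\beta_1, \beta_2, \DF, \reg, N, \alpha) = L_1^{\texttt{det}}(\beta_2, \beta_1, \DF, \reg, N, 1-\alpha)$. The power-scaling hypotheses from Section \ref{subsec:assumptions} are invariant under swapping $\beta_1 \leftrightarrow \beta_2$: the diagonal conditions $\mathbb{E}_{\DC}[\langle \beta_j, v_i\rangle^2] = i^{-\delta}$ hold symmetrically for $j=1,2$, and the off-diagonal condition $\mathbb{E}_{\DC}[\langle \beta_1, v_i\rangle \langle \beta_2, v_i\rangle] = \rho\, i^{-\delta}$ is symmetric in the two indices. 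Consequently, I can apply Lemma \ref{lemma:sollichmoreprecise} verbatim with $\alpha$ replaced by $1-\alpha$, obtaining a closed-form expression for $Q \cdot \mathbb{E}_{\DC}[L_2^{\texttt{det}}]$ in which every $(1-\alpha)$ in the original formula is replaced by $\alpha$ and every $(1 - 2(1-\alpha))$ is replaced by $(1-2\alpha)$.

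With this expression in hand, I would subtract an appropriate multiple of $L^*(\rho)$ and divide by $Q$. The residual algebraic task is to rewrite the mixed-factor terms using $\alpha(1-2\alpha) = \alpha(1-\alpha) - \alpha^2$, which splits the $S_2$-type and $(S_4 S_3/N)$-type contributions into the claimed $\alpha(1-\alpha)$ pieces plus leftover pieces proportional to $-\alpha^2$. These leftover pieces must then combine with the $-2\alpha^2(1-\rho)\kappa^2 S_1$ contribution (from the $1 - 2\alpha^2(1-\rho)$ coefficient on $S_1$) and with the mixture error contribution into the single compact term $-2\alpha^2 \kappa(1-\rho) \sum_i \frac{i^{-\delta-1-\gamma}}{i^{-1-\gamma} + \kappa}$ that sits outside the $Q^{-1}$ factor.

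Two elementary identities drive this consolidation. The first is a ``telescoping'' identity obtained by combining numerators over the common denominator $(i^{-1-\gamma} + \kappa)^2$:
\[
\kappa \sum_i \frac{i^{-\delta - 1 - \gamma}}{(i^{-1-\gamma}+\kappa)^2} + \sum_i \frac{i^{-\delta - 2(1+\gamma)}}{(i^{-1-\gamma}+\kappa)^2} = \sum_i \frac{i^{-\delta - 1 - \gamma}}{i^{-1-\gamma} + \kappa}.
\]
The second uses Claim \ref{claim:boundLstar}:
\[
\sum_i \frac{i^{-\delta - 2 - 2\gamma}}{i^{-1-\gamma} + \kappa} + \kappa \sum_i \frac{i^{-\delta - 1 - \gamma}}{i^{-1-\gamma} + \kappa} = \sum_i i^{-\delta - 1 - \gamma} = \frac{L^*(\rho)}{2(1-\rho)}.
\]
Together, these identities cause the $\alpha^2$-proportional terms to telescope, leaving exactly the outside-$Q^{-1}$ term in the claimed formula while the $\alpha(1-\alpha)$ pieces assemble into the three $Q^{-1}$-multiplied terms.

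The main obstacle is not conceptual but purely bookkeeping: one must track roughly seven distinct power-law sums through the substitution and algebraic regrouping, and verify that after applying the two identities above, all $\alpha^2$-proportional pieces consolidate exactly. No new random-matrix machinery is invoked---the entire proof is downstream of Lemma \ref{lemma:sollichmoreprecise} applied through the $(\beta_1 \leftrightarrow \beta_2, \alpha \mapsto 1-\alpha)$ symmetry.
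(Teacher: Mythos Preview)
Your proposal is correct and follows essentially the same route as the paper: apply Lemma \ref{lemma:sollichmoreprecise} through the symmetry $L_2^{\texttt{det}}(\beta_1,\beta_2,\DF,\lambda,N,\alpha)=L_1^{\texttt{det}}(\beta_2,\beta_1,\DF,\lambda,N,1-\alpha)$, subtract $\alpha^2 L^*(\rho)$, and regroup using exactly the two telescoping identities you wrote down. The paper's proof carries out precisely this computation, expanding $\alpha(1-2\alpha)=\alpha(1-\alpha)-\alpha^2$ and then combining the $\alpha^2$ pieces via your identities (the final line ``Combining the last two terms gives us the desired statement'' is your second identity).
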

\begin{proof}
First, we apply  Lemma \ref{lemma:sollichmoreprecise}, coupled with the fact that $ L_2^{\texttt{det}}(\beta_1, \beta_2, \DF, \reg, N, \alpha) := L_1^{\texttt{det}}(\beta_2, \beta_1, \DF, \reg, N, 1-\alpha)$, to see that:
\begin{align*}
  Q \cdot \mathbb{E}_{\DC}[L_2^{\texttt{det}}] &= \kappa^2 (1 - 2 \alpha^2 (1-\rho)) \sum_{i=1}^P \frac{i^{-\delta - 1- \gamma}}{(i^{-1-\gamma} + \kappa)^2} + \alpha^2 L^*(\rho) \\
  &+ 2\kappa (1-\rho) \alpha (1 - 2  \alpha) \sum_{i=1}^P \frac{i^{-\delta - 2(1+\gamma)}}{(i^{-1-\gamma}+\kappa)^2} \\
  &+ 2 \alpha (1-\rho) \frac{1}{N} \left(\sum_{i=1}^P \frac{i^{-2-2\gamma}}{(i^{-1-\gamma} + \kappa)^2} \right) \cdot (1 - 2\alpha)  \sum_{i=1}^P  \frac{i^{-\delta - 2-2\gamma}}{i^{-1-\gamma} + \kappa},
  \end{align*}
   where $Q = 1 - \frac{1}{N} \sum_{i=1}^P \frac{i^{-2-2\gamma}}{(i^{-1-\gamma} + \kappa)^2}$. Using that $(Q^{-1} - 1) \alpha^2 L^*(\rho) = Q^{-1} \frac{1}{N} \left(\sum_{i=1}^P \frac{i^{-2-2\gamma}}{(i^{-1-\gamma} + \kappa)^2} \right) 2 \alpha^2 (1-\rho)  \left( \sum_{i=1}^P i^{-\delta-1-\gamma} \right)$, this means that:
\begin{align*}
\mathbb{E}_{\DC}[L_2^{\texttt{det}}] -  \alpha^2 L^*(\rho) &= Q^{-1} \frac{1}{N} \left(\sum_{i=1}^P \frac{i^{-2-2\gamma}}{(i^{-1-\gamma} + \kappa)^2} \right) 2 \alpha^2 (1-\rho)  \left( \sum_{i=1}^P i^{-\delta-1-\gamma} \right) \\
&+ 
Q^{-1} \cdot \kappa^2 (1 - 2 \alpha^2 (1-\rho)) \sum_{i=1}^P \frac{i^{-\delta - 1- \gamma}}{(i^{-1-\gamma} + \kappa)^2} \\
&+ Q^{-1} 2\kappa (1-\rho) \alpha (1 - 2  \alpha) \sum_{i=1}^P \frac{i^{-\delta - 2(1+\gamma)}}{(i^{-1-\gamma}+\kappa)^2} \\
&+ Q^{-1} 2 \alpha (1-\rho) \frac{1}{N} \left(\sum_{i=1}^P \frac{i^{-2-2\gamma}}{(i^{-1-\gamma} + \kappa)^2} \right) \cdot (1 - 2\alpha)  \sum_{i=1}^P  \frac{i^{-\delta - 2-2\gamma}}{i^{-1-\gamma} + \kappa}
\end{align*}
By expanding some of these terms, we see that:
\begin{align*}
\mathbb{E}_{\DC}[L_2^{\texttt{det}}] -  \alpha^2 L^*(\rho) &= 
Q^{-1} 2 \alpha^2 (1-\rho) \frac{1}{N} \left(\sum_{i=1}^P \frac{i^{-2-2\gamma}}{(i^{-1-\gamma} + \kappa)^2} \right) \cdot \sum_{i=1}^P i^{-\delta-1-\gamma}  \\
&+ Q^{-1} \cdot \kappa^2 \sum_{i=1}^P \frac{i^{-\delta - 1-\gamma}}{(i^{-1-\gamma} + \kappa)^2} - Q^{-1}2 \alpha^2 (1-\rho) \cdot \kappa^2 \sum_{i=1}^P \frac{i^{-\delta -1- \gamma}}{(i^{-1-\gamma} + \kappa)^2} \\
&+ Q^{-1} 2\kappa (1-\rho) \alpha (1 -  \alpha) \sum_{i=1}^P \frac{i^{-\delta - 2(1+\gamma)}}{(i^{-1-\gamma}+\kappa)^2} - Q^{-1} 2\kappa (1-\rho) \alpha^2 \sum_{i=1}^P \frac{i^{-\delta - 2(1+\gamma)}}{(i^{-1-\gamma}+\kappa)^2}  \\
&+ Q^{-1} 2 \alpha (1 - \alpha)(1-\rho) \frac{1}{N} \left(\sum_{i=1}^P \frac{i^{-2-2\gamma}}{(i^{-1-\gamma} + \kappa)^2} \right) \cdot \sum_{i=1}^P  \frac{i^{-\delta - 2-2\gamma}}{i^{-1-\gamma} + \kappa}\\
&- Q^{-1} 2 \alpha^2 (1-\rho) \frac{1}{N} \left(\sum_{i=1}^P \frac{i^{-2-2\gamma}}{(i^{-1-\gamma} + \kappa)^2} \right) \cdot \sum_{i=1}^P  \frac{i^{-\delta - 2-2\gamma}}{i^{-1-\gamma} + \kappa} .
\end{align*}

When we collect terms, we obtain: 
\begin{align*}
\mathbb{E}_{\DC}[L_2^{\texttt{det}}] -  \alpha^2 L^*(\rho) 
&= Q^{-1} \cdot \kappa^2 \sum_{i=1}^P \frac{i^{-\delta - 1-\gamma}}{(i^{-1-\gamma} + \kappa)^2} + Q^{-1} 2\kappa (1-\rho) \alpha (1 -  \alpha) \sum_{i=1}^P \frac{i^{-\delta - 2(1+\gamma)}}{(i^{-1-\gamma}+\kappa)^2} \\
&+ Q^{-1} 2 \alpha (1 - \alpha)(1-\rho) \frac{1}{N} \left(\sum_{i=1}^P \frac{i^{-2-2\gamma}}{(i^{-1-\gamma} + \kappa)^2} \right) \cdot \sum_{i=1}^P  \frac{i^{-\delta - 2-2\gamma}}{i^{-1-\gamma} + \kappa}\\
&- Q^{-1} 2\kappa (1-\rho) \alpha^2 \left(\sum_{i=1}^P \frac{i^{-\delta - 2(1+\gamma)}}{(i^{-1-\gamma}+\kappa)^2}  + \sum_{i=1}^P \frac{\kappa \cdot i^{-\delta -1- \gamma}}{(i^{-1-\gamma} + \kappa)^2} \right) \\
&+ 
Q^{-1} 2 \alpha^2 (1-\rho) \frac{1}{N} \left(\sum_{i=1}^P \frac{i^{-2-2\gamma}}{(i^{-1-\gamma} + \kappa)^2} \right) \cdot \left(\sum_{i=1}^P i^{-\delta-1-\gamma} - \sum_{i=1}^P  \frac{i^{-\delta - 2-2\gamma}}{i^{-1-\gamma} + \kappa}\right) \\
&= Q^{-1} \cdot \kappa^2 \sum_{i=1}^P \frac{i^{-\delta - 1-\gamma}}{(i^{-1-\gamma} + \kappa)^2} + Q^{-1} 2\kappa (1-\rho) \alpha (1 -  \alpha) \sum_{i=1}^P \frac{i^{-\delta - 2(1+\gamma)}}{(i^{-1-\gamma}+\kappa)^2} \\
&+ Q^{-1} 2 \alpha (1 - \alpha)(1-\rho) \frac{1}{N} \left(\sum_{i=1}^P \frac{i^{-2-2\gamma}}{(i^{-1-\gamma} + \kappa)^2} \right) \cdot \sum_{i=1}^P  \frac{i^{-\delta - 2-2\gamma}}{i^{-1-\gamma} + \kappa}\\
&- Q^{-1} 2\kappa (1-\rho) \alpha^2 \left(\sum_{i=1}^P \frac{i^{-\delta - 1 - \gamma}}{(i^{-1-\gamma}+\kappa)} \right) \\
&+ 
Q^{-1} 2 \kappa \alpha^2 (1-\rho) \frac{1}{N} \left(\sum_{i=1}^P \frac{i^{-2-2\gamma}}{(i^{-1-\gamma} + \kappa)^2} \right) \cdot \frac{i^{-\delta - 1-\gamma}}{i^{-1-\gamma} + \kappa}.
\end{align*}
Combining the last two terms gives us the desired statement. 
\end{proof}

Now, we are ready to prove Lemma \ref{lemma:boundsextensions}.
\begin{proof}

For the first bound, we observe that:
\begin{align*}
   &\alpha^2 L^*(\rho) - \mathbb{E}_{\DC}[L_2^{\texttt{det}}]  \\
  &\le_{(A)} 2 \alpha^2 \kappa (1-\rho) \sum_{i=1}^P  \frac{i^{-\delta - 1 -\gamma}}{i^{-1-\gamma} + \kappa} \\
  &=_{(B)} O\left(\alpha^2 (1-\rho) \kappa^{\frac{\min(1+\gamma, \delta+\gamma)}{1+\gamma}} \right) \\
  &=_{(C)} O\left(\kappa^{\frac{\scalingexp}{1+\gamma}} \right) \\
  &=_{(D)} O\left(\max(\lambda^{\frac{\scalingexp}{1+\gamma}}, N^{-\scalingexp}) \right)
\end{align*}
where (A) uses Lemma \ref{lemma:extensionsollichmoreprecise}, (B) uses Lemma \ref{lemma:splitintegralbounds}, (C) uses that $\delta \le 1$ and $\rho \in [0, 1)$, and (D) uses Lemma \ref{lemma:kappabasic}.

For the second bound, we observe that:
\begin{align*}
  &\mathbb{E}_{\DC}[L_2^{\texttt{det}}] -  \alpha^2 L^*(\rho) \\
  &\le_{(A)}  
Q^{-1} \cdot \kappa^2 \sum_{i=1}^P \frac{i^{-\delta -1- \gamma}}{(i^{-1-\gamma} + \kappa)^2}  \\
&+ Q^{-1} 2\kappa \alpha (1-\alpha) (1-\rho) \sum_{i=1}^P \frac{i^{-\delta - 2(1+\gamma)}}{(i^{-1-\gamma}+\kappa)^2} \\
&+ Q^{-1} 2 \alpha (1-\alpha) (1-\rho) \frac{1}{N} \left(\sum_{i=1}^P \frac{i^{-2-2\gamma}}{(i^{-1-\gamma} + \kappa)^2} \right) \cdot \sum_{i=1}^P  \frac{i^{-\delta - 2-2\gamma}}{i^{-1-\gamma} + \kappa}  \\ 
&=_{(B)}  
O\left( \kappa^{\frac{\min(2(1+\gamma), \gamma + \delta)}{1+\gamma}} + \alpha (1-\alpha) (1-\rho) \kappa^{\frac{\min(1+\gamma, \gamma + \delta)}{1+\gamma}} + \alpha (1-\alpha) (1-\rho) \frac{\kappa^{-\frac{1}{1+\gamma}}}{N} \right) \\
&=_{(C)}  
O\left( \kappa^{\frac{\gamma + \delta}{1+\gamma}} + (1-\alpha) (1-\rho) \kappa^{\frac{\gamma + \delta}{1+\gamma}} + (1-\alpha) (1-\rho) \frac{\kappa^{-\frac{1}{1+\gamma}}}{N} \right) \\
&= O\left( \kappa^{\frac{\gamma + \delta}{1+\gamma}} + (1-\alpha) (1-\rho) \frac{\kappa^{-\frac{1}{1+\gamma}}}{N} \right) \\
&=_{(D)} O\left( \max(\lambda^{\frac{\scalingexp}{1+\gamma}}, N^{-\scalingexp}) + (1-\alpha) (1-\rho) \frac{\min(\lambda^{-\frac{1}{1+\gamma}}, N)}{N} \right) \\
\end{align*}
where (A) uses Lemma \ref{lemma:extensionsollichmoreprecise}, (B) uses Lemma \ref{lemma:splitintegralbounds} and Lemma \ref{lemma:degreesoffreedom}, (C) uses that $\delta \le 1$ and $\alpha \ge 0.5$, and (D) uses Lemma \ref{lemma:kappabasic}.
\end{proof}

\subsection{Extension of Theorem \ref{thm:finitedata}}
\label{appendix:extensionfinitedata}

We next study the market entry $\Nentroptmodified$ threshold in the environment of Theorem \ref{thm:finitedata} where the incumbent has \textit{finite data} and the new company faces no safety constraint. We place the further assumption that $\delta \le 1$. We compute the following upper bound on the modified market entry threshold.

\begin{theorem}[Extension of Theorem \ref{thm:finitedata}]
\label{thm:extensionfinitedata}
Suppose that the power-law scaling holds for the eigenvalues and alignment coefficients with scaling exponents $\gamma > 0, \delta \in (0, 1]$ and correlation coefficient $\rho \in [0, 1)$, and suppose that $P = \infty$. Assume that $\constrentr = \infty$. Suppose that the safety constraint $\constrlead$ satisfies \eqref{eq:safetythreshold}. Then we have that $\Nentroptmodified = \Nentroptmodified(\Nlead, \constrlead, \infty, \DC, \DF)$ satisfies: 
\[
\Nentroptmodified := 
\begin{cases}
O\left(\Nlead\right) &\text{ if }  \Nlead \le \tilde{G}_I^{-\frac{1}{2 \scalingexp}} (1-\rho)^{-\frac{1}{2\scalingexp}}  \\
O\left(\Nlead^{\frac{1}{\scalingexp+1}} \cdot \tilde{G}_I^{-\frac{1}{2(\scalingexp+1)}} (1-\rho)^{-\frac{1}{2(\scalingexp+1)}}\right)  &\text{ if } \tilde{G}_I^{-\frac{1}{2 \scalingexp}} (1-\rho)^{-\frac{1}{2\scalingexp}} \le \Nlead  \le \tilde{G}_I^{-\frac{1}{2} - \frac{1}{\scalingexp}}(1-\rho)^{\frac{1}{2}}  \\
O\left(\tilde{G}_I^{-\frac{1}{\scalingexp}}\right)  &\text{ if }  \Nlead \ge \tilde{G}_I^{-\frac{1}{2} - \frac{1}{\scalingexp}}(1-\rho)^{\frac{1}{2}},
\end{cases}
\]
where $L^*(\rho) = \mathbb{E}_{\DC}[(\beta_1 - \beta_2)^T \Sigma (\beta_1 - \beta_2)] = \Theta(1 - \rho)$, where $\alpha^* = \sqrt{\frac{\min(\constrlead, L^*(\rho))}{L^*(\rho)}}$, where $\tilde{\alpha} := \sqrt{(1-\alpha^*) + (\alpha^*)^2}$, where $\tilde{G}_I = (1-\tilde{\alpha})^2 (1-\rho)$, and where $\scalingexp = \min(2(1+\gamma), \gamma + \delta) = \gamma + \delta$.  
\end{theorem}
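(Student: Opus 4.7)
The plan is to adapt the proof of Theorem \ref{thm:finitedata} by establishing a lower bound on the incumbent's performance loss that mirrors Corollary \ref{cor:scalinglawoptreg} with $\alpha^*$ replaced by $\tilde\alpha$, then matching it with the entrant's single-objective scaling. Throughout, $L^*(\rho) = \Theta(1-\rho)$ by Claim \ref{claim:boundLstar}, and the safety threshold condition \eqref{eq:safetythreshold} implies $\alpha^* \in [1/2, 1]$, so $1 - \tilde\alpha = \Theta(1 - \alpha^*)$ and $\tilde G_I = \Theta((1-\alpha^*)^2 (1-\rho))$.

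First, I would dispose of the small-$\Nlead$ regime trivially: since the entrant faces no safety constraint, they can replicate the incumbent's chosen hyperparameters $(\tilde\alpha_I, \tilde\lambda_I)$ with $\Nentr = \Nlead$ data points, achieving identical expected loss and hence $\Nentroptmodified \le \Nlead$.

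For the remaining two regimes, the core step is to lower bound the incumbent's loss by case analysis on $\tilde\alpha_I$ versus $\tilde\alpha$. In Case A ($\tilde\alpha_I \le \tilde\alpha$), Theorem \ref{thm:scalinglaw} yields
\[
\mathbb{E}_{\DC}[L_1^*(\tilde\lambda_I, \Nlead, \tilde\alpha_I)] = \Omega\bigl(\tilde G_I + \max(\tilde\lambda_I^{\scalingexp/(1+\gamma)}, \Nlead^{-\scalingexp}) + (1-\tilde\alpha)(1-\rho)\tfrac{\min(\tilde\lambda_I^{-1/(1+\gamma)}, \Nlead)}{\Nlead}\bigr)
\]
directly, since $(1-\tilde\alpha_I) \ge (1-\tilde\alpha)$. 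In Case B ($\tilde\alpha_I > \tilde\alpha$), applying Lemma \ref{lemma:boundsextensions} to the safety constraint $\mathbb{E}[\tilde L_2(\tilde\alpha_I, \tilde\lambda_I)] \le \constrlead$ and using the identity $\tilde\alpha^2 - (\alpha^*)^2 = 1 - \alpha^*$ forces $\max(\tilde\lambda_I^{\scalingexp/(1+\gamma)}, \Nlead^{-\scalingexp}) = \Omega((1-\alpha^*)(1-\rho))$, so Theorem \ref{thm:scalinglaw} gives $\mathbb{E}[L_1^*] = \Omega((1-\alpha^*)(1-\rho))$; a direct computation shows this dominates the Case A lower bound throughout Regime 2 and Regime 3. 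Minimizing the Case A expression over $\tilde\lambda_I$ then reproduces the three-regime structure of Corollary \ref{cor:scalinglawoptreg} with $\alpha^* \to \tilde\alpha$.

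The entrant's loss is $\Theta(\Nentr^{-\scalingexp})$ by Theorem \ref{thm:scalinglaw} with $\tilde\alpha_E = 1$, so setting $\Nentr^{-\scalingexp}$ equal to the incumbent's lower bound yields the claimed upper bounds on $\Nentroptmodified$ in each regime. The main obstacle is verifying that Case B's bound consistently dominates Case A's bound throughout Regimes 2 and 3, so that the overall lower bound from the case split matches the Case A expression. This requires the assumption $\delta \le 1$, which ensures $\scalingexp = \gamma + \delta$ is the exponent appearing in both the safety excess-loss bound of Lemma \ref{lemma:boundsextensions} and the finite-data term of the performance scaling law, so that the Case B bound aligns with the Case A bound at regime boundaries rather than being off by a polynomial factor.
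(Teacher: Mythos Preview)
Your proposal is correct and takes essentially the same approach as the paper: both arguments use Lemma \ref{lemma:boundsextensions} to convert the safety constraint into a bound relating $\tilde\alpha_I$ and the finite-data error, then invoke the scaling-law structure of Corollary \ref{cor:scalinglawoptreg} with $\tilde\alpha$ in place of $\alpha^*$. The only cosmetic difference is that the paper splits into cases based on whether the incumbent's \emph{loss} exceeds $\Theta((1-\alpha^*)L^*(\rho))$ (and in the small-loss case derives $\tilde\alpha_I \le \tilde\alpha$), whereas you split directly on $\tilde\alpha_I$ versus $\tilde\alpha$; these case splits are contrapositives of one another, and your Regime-1 shortcut of copying the incumbent's hyperparameters is a valid simplification.
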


Theorem \ref{thm:extensionfinitedata} shows that the key qualitative finding from Theorem \ref{thm:finitedata}---that the new company can enter with $\Nentr = o(\Nlead)$ data as long as the incumbent's dataset size is sufficiently large---readily extends to this setting. We note that the bound in Theorem \ref{thm:extensionfinitedata} and the bound in Theorem \ref{thm:finitedata} take slightly different forms: the term $G_I = (\sqrt{L^*(\rho)}- \sqrt{\min(L^*(\rho), \constrlead)})^2 = \Theta((1-\alpha^*)^2 (1-\rho))$ is replaced by $\tilde{G}_I = (1-\tilde{\alpha})^2(1-\rho)$. We expect some of these differences arise because the bound in Theorem \ref{thm:extensionfinitedata} is not tight, rather than fundamental distinctions between the two settings. Proving a tight bound on the modified market entry threshold is an interesting direction for future work. 

To prove this, we compute a lower bound on the incumbent's loss $\mathbb{E}_{\DC}[\LossPerf^*(\beta_1, \beta_2, \DF, \tilde{\lambda}_I, \Nlead, \tilde{\alpha}_I)]$. 
\begin{lemma}
\label{lemma:incumbentfinitedata}
Suppose that the power-law scaling holds for the eigenvalues and alignment coefficients with scaling exponents $\gamma > 0, \delta \in (0, 1]$ and correlation coefficient $\rho \in [0, 1)$, and suppose that $P = \infty$. Assume that $\constrentr = \infty$. Suppose that the safety constraint $\constrlead$ satisfies \eqref{eq:safetythreshold}. 
Then we have that: 
\begin{align*}
  &\mathbb{E}_{\DC}[L^*_1(\beta_1, \beta_2, \DF, \tilde{\reg}_I, \Nlead, \tilde{\alpha}_I)] \\
  &=  \begin{cases}
\Omega\left( \Nlead^{-\scalingexp} \right) &\text{ if }  \Nlead \le \tilde{G}_I^{-\frac{1}{2\scalingexp}} (1-\rho)^{-\frac{1}{2\scalingexp}}  \\
\Omega\left(\Nlead^{-\frac{\scalingexp}{\scalingexp+1}} \cdot \tilde{G}_I^{\frac{\scalingexp}{2(\scalingexp+1)}} (1-\rho)^{\frac{\scalingexp}{2(\scalingexp+1)}}\right)  &\text{ if  } \tilde{G}_I^{-\frac{1}{2\scalingexp}} (1-\rho)^{-\frac{1}{2\scalingexp}} \le \Nlead  \le \tilde{G}_I^{-\frac{1}{2} - \frac{1}{\scalingexp}}(1-\rho)^{\frac{1}{2}}  \\
\Omega\left(\tilde{G}_I\right)  &\text{ if  }  \Nlead \ge \tilde{G}_I^{-\frac{1}{2} - \frac{1}{\scalingexp}}(1-\rho)^{\frac{1}{2}}.
\end{cases}
\end{align*}
where $L^*(\rho) = \mathbb{E}_{\DC}[(\beta_1 - \beta_2)^T \Sigma (\beta_1 - \beta_2)] = \Theta(1 - \rho)$, where $\alpha^* = \sqrt{\frac{\min(\constrlead, L^*(\rho))}{L^*(\rho)}}$, where $\tilde{\alpha} := \sqrt{(1-\alpha^*) + (\alpha^*)^2}$, where $\tilde{G}_I = (1-\tilde{\alpha})^2 (1-\rho)$
and where $\scalingexp = \min(2(1+\gamma), \gamma + \delta) = \gamma + \delta$.  
\end{lemma}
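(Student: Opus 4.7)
The plan is to lower bound the incumbent's loss by case-splitting on the mixture parameter $\alpha$ relative to the threshold $\tilde{\alpha}$, and then use the scaling law machinery from Theorem \ref{thm:scalinglaw} and Corollary \ref{cor:scalinglawoptreg} to assemble the three-regime bound.

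First, I would dispose of the trivial case $\alpha^* = 1$ (which gives $\tilde{G}_I = 0$), so that $\constrlead = (\alpha^*)^2 L^*(\rho)$. For any feasible $(\alpha, \lambda)$, I would combine the safety constraint $\mathbb{E}_{\DC}[L_2^{\texttt{det}}] \le \constrlead$ with the first bound of Lemma \ref{lemma:boundsextensions} to derive the key link
\[
(\alpha^2 - (\alpha^*)^2)\, L^*(\rho) \;\le\; O\!\left(\max\!\left(\lambda^{\scalingexp/(1+\gamma)},\ \Nlead^{-\scalingexp}\right)\right),
\]
which controls how much $\alpha$ can exceed $\alpha^*$ in terms of the finite-data error.

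Next, I split into cases. When $\alpha \le \tilde{\alpha}$, I use $(1-\alpha) \ge (1-\tilde{\alpha})$ and Theorem \ref{thm:scalinglaw} to lower bound $\mathbb{E}_{\DC}[L_1^{\texttt{det}}]$ by the scaling-law expression evaluated at $\alpha = \tilde{\alpha}$. Infimizing this lower bound over $\lambda \in (0,1)$ gives exactly the quantity computed by Corollary \ref{cor:scalinglawoptreg} at $\alpha = \tilde{\alpha}$; rewriting $(1-\tilde{\alpha})(1-\rho) = \tilde{G}_I^{1/2}(1-\rho)^{1/2}$ and $(1-\tilde{\alpha})^2(1-\rho) = \tilde{G}_I$ translates both the regime boundaries and the three-piece values into the forms claimed in the lemma. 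When $\alpha > \tilde{\alpha}$, the defining identity $\tilde{\alpha}^2 = (1-\alpha^*) + (\alpha^*)^2$ yields $\alpha^2 - (\alpha^*)^2 > 1 - \alpha^*$, so the key link above forces $\max(\lambda^{\scalingexp/(1+\gamma)}, \Nlead^{-\scalingexp}) \ge \Omega((1-\alpha^*)(1-\rho))$, and hence $\mathbb{E}_{\DC}[L_1^{\texttt{det}}] \ge \Omega((1-\alpha^*)(1-\rho))$. I would then verify that this quantity dominates the claimed three-regime bound, using the algebraic identity $1 - \tilde{\alpha}^2 = \alpha^*(1-\alpha^*)$ to conclude $(1-\tilde{\alpha})^2 \le 1-\alpha^*$ and $(1-\tilde{\alpha}) \le \sqrt{1-\alpha^*}$, which hand the regime-1 and regime-3 comparisons immediately and reduce the regime-2 comparison to the monotonicity of that bound in $\Nlead$.

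The main technical hurdle I anticipate is the bookkeeping in the second case: confirming that $(1-\alpha^*)(1-\rho)$ dominates the middle-regime quantity $\Nlead^{-\scalingexp/(\scalingexp+1)}\, \tilde{G}_I^{\scalingexp/(2(\scalingexp+1))}(1-\rho)^{\scalingexp/(2(\scalingexp+1))}$ uniformly over the stated range of $\Nlead$. Evaluating the middle-regime expression at the two endpoints produces $(1-\tilde{\alpha})(1-\rho)$ at the lower end and $\tilde{G}_I$ at the upper end, so monotonicity plus the two inequalities from $1-\tilde{\alpha}^2 = \alpha^*(1-\alpha^*)$ should close the argument, but care is needed to re-express the regime boundaries (which the lemma states in terms of $\tilde{G}_I$) in terms of $1-\tilde{\alpha}$ to align them with the output of Corollary \ref{cor:scalinglawoptreg}.
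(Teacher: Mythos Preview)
Your proposal is correct and takes essentially the same approach as the paper: both link the safety constraint to the finite-data error via the first bound of Lemma \ref{lemma:boundsextensions}, deduce $\tilde{\alpha}_I \le \tilde{\alpha}$ in the main case, and then invoke Corollary \ref{cor:scalinglawoptreg} at $\alpha = \tilde{\alpha}$ to produce the three-regime bound. The only cosmetic difference is the case split---the paper splits on whether the performance loss exceeds a multiple of $(1-\alpha^*)L^*(\rho)$ (arguing in the affirmative that $N_I$ lands in regime~1), whereas you split on $\alpha \lessgtr \tilde{\alpha}$ and in the second case verify directly that $(1-\alpha^*)(1-\rho)$ dominates each regime bound; these are contrapositive framings of the same dichotomy, and your regime-by-regime verification is if anything a bit more explicit than the paper's corresponding step.
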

\begin{proof}
By Corollary \ref{cor:scalinglawoptreg} and Lemma \ref{lemma:kappabasic}, we know that:
\[
  \mathbb{E}_{\DC}[\LossPerf^*(\beta_1, \beta_2, \DF, \tilde{\lambda}_I, \Nlead, \tilde{\alpha}_I)]  = \Omega(\kappa^{\frac{\scalingexp}{1+\gamma}}) = \Omega(\max(\lambda^{\frac{\scalingexp}{1+\gamma}}, \Nlead^{-\scalingexp})). 
\]
Let $C_{\delta, \gamma}$ be an implicit constant\footnote{We need to introduce an implicit constant because of $O()$ is permitted to hide constants that depend on $\delta$ and $\gamma$.} such that:
\begin{equation}
\label{eq:lowerbound}
\mathbb{E}_{\DC}[\LossPerf^*(\beta_1, \beta_2, \DF, \tilde{\lambda}_I, \Nlead, \tilde{\alpha}_I)]  \ge C_{\delta, \gamma} \max(\lambda^{\frac{\scalingexp}{1+\gamma}}, \Nlead^{-\scalingexp})
\end{equation}
By Lemma \ref{lemma:boundsextensions}, there also exists an implicit constant $C'_{\delta, \gamma}$  such that:
\begin{equation}
\label{eq:boundsextensions}
   \alpha^2 L^*(\rho) - \mathbb{E}_{\DC}[L_2^{\texttt{det}}(\beta_1, \beta_2, \DF, \lambda, \Nlead, \alpha)] \le C'_{\delta, \gamma} \max(\lambda^{\frac{\scalingexp}{1+\gamma}}, \Nlead^{-\scalingexp}).
\end{equation}

We now split into two cases: (1) $\frac{C'_{\delta, \gamma}}{C_{\delta, \gamma}} \mathbb{E}_{\DC}[\LossPerf^*(\beta_1, \beta_2, \DF, \tilde{\lambda}_I, \Nlead, \tilde{\alpha}_I)] \ge (1-\alpha^*)L^*(\rho) $, and (2)  $\frac{C'_{\delta, \gamma}}{C_{\delta, \gamma}} \mathbb{E}_{\DC}[\LossPerf^*(\beta_1, \beta_2, \DF, \tilde{\lambda}_I, \Nlead, \tilde{\alpha}_I)] \le (1-\alpha^*)L^*(\rho) $. 

\paragraph{Case 1: $\frac{C'_{\delta, \gamma}}{C_{\delta, \gamma}}  \mathbb{E}_{\DC}[\LossPerf^*(\beta_1, \beta_2, \DF, \tilde{\lambda}_I, \Nlead, \tilde{\alpha}_I)] \ge (1-\alpha^*)L^*(\rho)$.} It follows from \eqref{eq:lowerbound} that: 
\[ \mathbb{E}_{\DC}[\LossPerf^*(\beta_1, \beta_2, \DF, \tilde{\lambda}_I, \Nlead, \tilde{\alpha}_I)]  \ge C_{\delta, \gamma} \max(\lambda^{\frac{\scalingexp}{1+\gamma}}, \Nlead^{-\scalingexp}) \ge C_{\delta, \gamma} \Nlead^{-\scalingexp}. \] 
Using the condition for this case, this implies that:
\begin{align*}
 \Nlead &\le \left(\frac{1}{C_{\delta, \gamma}}  \mathbb{E}_{\DC}[\LossPerf^*(\beta_1, \beta_2, \DF, \tilde{\lambda}_I, \Nlead, \tilde{\alpha}_I)]\right)^{-\frac{1}{\scalingexp}} \\
 &\le  \left(\frac{1}{C'_{\delta, \gamma}} (1-\alpha^*) L^*(\rho) \right)^{-\frac{1}{\scalingexp}} \\
 &= O\left(\left((1-\tilde{\alpha}) (1-\rho) \right)^{-\frac{1}{\scalingexp}}\right) \\
 &= O\left(\tilde{G}_I^{-\frac{1}{2\scalingexp}} (1-\rho)^{-\frac{1}{2 \scalingexp}} \right).    
\end{align*}
This proves that $\Nlead$ is up to constants within the first branch of the expression in the lemma statement. Since the bound in the lemma statement only changes by constants (that depend on $\delta$ and $\gamma$) between the first branch and second branch, this proves the desired expression for this case.

\paragraph{Case 2: $\frac{C'_{\delta, \gamma}}{C_{\delta, \gamma}} \mathbb{E}_{\DC}[\LossPerf^*(\beta_1, \beta_2, \DF, \tilde{\lambda}_I, \Nlead, \tilde{\alpha}_I)]  \le (1-\alpha^*)L^*(\rho)$.} Note that $\alpha^* = \sqrt{\frac{\min(\constrlead, L^*(\rho))}{L^*(\rho)}}$ is the mixture parameter that achieves the safety constraint in the infinite-data ridgeless setting. The incumbent's safety constraint means that: 
\[\mathbb{E}_{\DC}[L_2^{\texttt{det}}(\beta_1, \beta_2, \DF, \tilde{\lambda}_I, \Nlead, \tilde{\alpha}_I)] \le (\alpha^*)^2 L^*(\rho).\]
By \eqref{eq:boundsextensions}, this implies that 
\[(\tilde{\alpha}_I)^2 L^*(\rho) \le C'_{\delta, \gamma} \cdot \max(\lambda^{\frac{\delta+ \gamma}{1+\gamma}}, \Nlead^{-\delta-\gamma}) + (\alpha^*)^2 L^*(\rho).\]
Now, applying \eqref{eq:lowerbound} and the assumption for this case, we see that:
\begin{align*}
 (\tilde{\alpha}_I)^2 L^*(\rho) &\le \frac{C'_{\delta, \gamma}}{C_{\delta, \gamma}} \cdot \mathbb{E}_{\DC}[\LossPerf^*(\beta_1, \beta_2, \DF, \tilde{\lambda}_I, \Nlead, \tilde{\alpha}_I)]  + (\alpha^*)^2 L^*(\rho)   \\
 &\le (1-\alpha^*)L^*(\rho) + (\alpha^*)^2 L^*(\rho). 
\end{align*}
This implies that:
\[\tilde{\alpha}_I \le \sqrt{(1-\alpha^*) + (\alpha^*)^2}. \] Let $\tilde{\alpha} := \sqrt{(1-\alpha^*) + (\alpha^*)^2}$. Plugging this into Corollary \ref{cor:scalinglawoptreg}, we see that:
\begin{align*}
  &\mathbb{E}_{\DC}[L^*_1(\beta_1, \beta_2, \DF, \tilde{\reg}_I, \Nlead, \tilde{\alpha}_I)] \\
  &\ge \inf_{\alpha \in \left[0.5, \tilde{\alpha} \right]} \inf_{\reg > 0} \mathbb{E}_{\DC}[L^*_1(\beta_1, \beta_2, \DF, \reg,  \Nlead, \alpha)]  \\
  &= \Theta\left(
\inf_{\reg > 0} \mathbb{E}_{\DC}\left[L^*_1\left(\beta_1, \beta_2, \Sigma, \reg,  \Nlead, \tilde{\alpha} \right)\right] \right) \\
&= \begin{cases}
 \Theta\left(\Nlead^{-\scalingexp}\right) &\text{ if } \Nlead \le (1-\tilde{\alpha} )^{-\frac{1}{\scalingexp}}(1-\rho)^{-\frac{1}{\scalingexp}} \\
 \Theta\left(\left(\frac{\Nlead}{(1-\tilde{\alpha} )(1-\rho)}\right)^{-\frac{\scalingexp}{\scalingexp + 1}}\right) &\text{ if } (1-\tilde{\alpha} )^{-\frac{1}{\scalingexp}}(1-\rho)^{-\frac{1}{\scalingexp}} 
 \le \Nlead \le  (1-\tilde{\alpha} )^{-\frac{2+\scalingexp}{\scalingexp}} (1-\rho)^{-\frac{1}{\scalingexp}}
\\
\Theta((1-\tilde{\alpha} )^2(1-\rho)) &\text{ if } \Nlead \ge (1-\tilde{\alpha} )^{-\frac{2+\scalingexp}{\scalingexp}} (1-\rho)^{-\frac{1}{\scalingexp}},
\end{cases}\\
&= \begin{cases}
\Theta\left( \Nlead^{-\scalingexp} \right) &\text{ if }  \Nlead \le \tilde{G}_I^{-\frac{1}{2\scalingexp}} (1-\rho)^{-\frac{1}{2\scalingexp}}  \\
\Theta\left(\Nlead^{-\frac{\scalingexp}{\scalingexp+1}} \cdot \tilde{G}_I^{\frac{\scalingexp}{2(\scalingexp+1)}} (1-\rho)^{\frac{\scalingexp}{2(\scalingexp+1)}}\right)  &\text{ if } \tilde{G}_I^{-\frac{1}{2\scalingexp}} (1-\rho)^{-\frac{1}{2\scalingexp}} \le \Nlead  \le \tilde{G}_I^{-\frac{1}{2} - \frac{1}{\scalingexp}}(1-\rho)^{\frac{1}{2}}  \\
\Theta\left(\tilde{G}_I\right)  &\text{ if }  \Nlead \ge \tilde{G}_I^{-\frac{1}{2} - \frac{1}{\scalingexp}}(1-\rho)^{\frac{1}{2}}.
\end{cases}
\end{align*}
The statement follows in this case. 
\end{proof}

We are now ready to prove Theorem \ref{thm:extensionfinitedata}.
\begin{proof}[Proof of Theorem \ref{thm:extensionfinitedata}]

We analyze $(\tilde{\alpha}_C, \tilde{\reg}_C)$ first for the incumbent $C = I$ and then for the entrant $C = E$. Like in the theorem statement, let $L^*(\rho) = \mathbb{E}_{\DC}[(\beta_1 - \beta_2)^T \Sigma (\beta_1 - \beta_2)] = \Theta(1 - \rho)$ (Claim \ref{claim:boundLstar}) and $G_I := (\sqrt{L^*(\rho)} - \sqrt{\min(\constrlead, L^*(\rho))})^2$, and $\scalingexp = \min(2(1+\gamma), \delta + \gamma)$.

\paragraph{Analysis of the incumbent $C = I$.} 
We apply Lemma \ref{lemma:incumbentfinitedata} to see that: 
\begin{align*}
  &\mathbb{E}_{\DC}[L^*_1(\beta_1, \beta_2, \DF, \tilde{\reg}_I, \Nlead, \tilde{\alpha}_I)] \\
  &= \begin{cases}
\Omega\left( \Nlead^{-\scalingexp} \right) &\text{ if }  \Nlead \le \tilde{G}_I^{-\frac{1}{2\scalingexp}} (1-\rho)^{-\frac{1}{2\scalingexp}}  \\
\Omega\left(\Nlead^{-\frac{\scalingexp}{\scalingexp+1}} \cdot \tilde{G}_I^{\frac{\scalingexp}{2(\scalingexp+1)}} (1-\rho)^{\frac{\scalingexp}{2(\scalingexp+1)}}\right)  &\text{ if } \tilde{G}_I^{-\frac{1}{2\scalingexp}} (1-\rho)^{-\frac{1}{2\scalingexp}} \le \Nlead  \le \tilde{G}_I^{-\frac{1}{2} - \frac{1}{\scalingexp}}(1-\rho)^{\frac{1}{2}}  \\
\Omega\left(\tilde{G}_I\right)  &\text{ if }  \Nlead \ge \tilde{G}_I^{-\frac{1}{2} - \frac{1}{\scalingexp}}(1-\rho)^{\frac{1}{2}}.
\end{cases}.
\end{align*}

\paragraph{Analysis of the entrant $C = E$.} Since the entrant faces no safety constraint, the entrant can choose any $\alpha \in [0.5, 1]$.  We apply Corollary \ref{thm:scalinglaw} to see that:
\[\mathbb{E}_{\DC}[L^*_1(\beta_1, \beta_2, \DF, \tilde{\reg}_E, N, \tilde{\alpha}_E)] = \inf_{\alpha \in [0.5, 1]} \inf_{\reg > 0} \mathbb{E}_{\DC}[L^*_1(\beta_1, \beta_2, \DF, \reg, N, \alpha)] = \Theta\left(
N^{-\scalingexp} \right), \]
which means that:
\[ \Nentr^*(\Nlead, \constrlead, \infty, \DC, \DF) = \begin{cases}
O\left(\Nlead\right) &\text{ if }  \Nlead \le \tilde{G}_I^{-\frac{1}{2\scalingexp}} (1-\rho)^{-\frac{1}{2\scalingexp}}  \\
O\left(\Nlead^{\frac{1}{\scalingexp+1}} \cdot \tilde{G}_I^{-\frac{1}{2(\scalingexp+1)}} (1-\rho)^{-\frac{1}{2(\scalingexp+1)}}\right)  &\text{ if } \tilde{G}_I^{-\frac{1}{2\scalingexp}} (1-\rho)^{-\frac{1}{2\scalingexp}} \le \Nlead  \le \tilde{G}_I^{-\frac{1}{2} - \frac{1}{\scalingexp}}(1-\rho)^{\frac{1}{2}}  \\
O\left(\tilde{G}_I^{-\frac{1}{\scalingexp}}\right)  &\text{ if }  \Nlead \ge \tilde{G}_I^{-\frac{1}{2} - \frac{1}{\scalingexp}}(1-\rho)^{\frac{1}{2}}
\end{cases}\]
as desired.

\end{proof}

\subsection{Extension of Theorem \ref{thm:alignment}}\label{appendix:extensionalignment}

We next study the market entry $\Nentroptmodified$ threshold in the environment of Theorem \ref{thm:alignment} where the incumbent has infinite data and the new company faces a \textit{nontrivial safety constraint}. We place the further assumption that $\delta \le 1$. We compute the following upper bound on the modified market entry threshold.

\begin{theorem}[Extension of Theorem \ref{thm:alignment}]
\label{thm:extensionalignment}
Suppose that the power-law scaling holds for the eigenvalues and alignment coefficients with scaling exponents $\gamma > 0$, $\delta \in (0,1]$, and correlation coefficient $\rho \in [0, 1)$, and suppose that $P = \infty$. Suppose that the safety constraints $\constrlead$ and $\constrentr$ satisfy \eqref{eq:safetythresholdnew}. 
Then it holds that $\Nentroptmodified = \Nentroptmodified(\infty, \constrlead, \constrentr, \DC, \DF)$ satisfies: 
\[
\Nentroptmodified := 
O\left(\max\left(\tilde{D}^{-\frac{1}{\scalingexp}}, \tilde{D}^{-\frac{\scalingexp + 1}{\scalingexp}}\left( G_E^{\frac{1}{2}} (1-\rho)^{\frac{1}{2}} + \frac{1}{2} G_I - \frac{1}{2} G_E \right)  \right) \right),
\]
where $L^*(\rho) = \mathbb{E}_{\DC}[(\beta_1 - \beta_2)^T \Sigma (\beta_1 - \beta)] = \Theta(1 - \rho)$, where $\scalingexp = \min(2(1+\gamma), \delta + \gamma) = \delta + \gamma$, where 
$G_I := \left(\sqrt{L^*(\rho)} - \sqrt{\min(\constrlead, L^*(\rho))}\right)^2$ and $G_E := \left(\sqrt{L^*(\rho)} - \sqrt{\min(\constrentr, L^*(\rho))}\right)^2$, and where:  
\[\tilde{D} := \alpha^*_E \cdot (G_I - G_E) - \frac{(G_I - G_E)^2}{4 \cdot L^*(\rho)} .\] 
\end{theorem}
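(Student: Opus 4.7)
The plan is to mirror the proof of Theorem \ref{thm:alignment} in Appendix \ref{appendix:proofssubsec42}, but to account for the fact that the modified safety loss $\tilde{L}_2$ differs from its ridgeless infinite-data value $\alpha^2 L^*(\rho)$ by a quantity of the same order as the finite-data error terms from Lemma \ref{lemma:boundsextensions}. First, for the incumbent with infinite data, Lemma \ref{lemma:extensionridgelessoptimalinfinitedata} immediately gives $\mathbb{E}_{\DC}[L^*_1(\beta_1,\beta_2,\DF,\tilde{\reg}_I,\infty,\tilde{\alpha}_I)] = G_I$. The whole work is to provide, for the entrant, a feasible pair $(\alpha_E,\lambda_E)$ at a dataset size $N_E = O(\cdot)$ whose performance loss is at most $G_I$.

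The key observation is that, by the upper bound in Lemma \ref{lemma:boundsextensions}, the safety constraint $\mathbb{E}_{\DC}[\tilde{L}_2]\le\constrentr$ is guaranteed whenever $\alpha^2 L^*(\rho)\le\constrentr - C_1(F+O_f)$, where $F = \max(\lambda^{\scalingexp/(1+\gamma)},N^{-\scalingexp})$, $O_f = (1-\alpha)(1-\rho)\min(\lambda^{-1/(1+\gamma)},N)/N$, and $C_1$ is the implicit constant in Lemma \ref{lemma:boundsextensions}. The entrant must therefore take $\alpha_E$ strictly \emph{below} $\alpha^*_E := \sqrt{\min(\constrentr,L^*(\rho))/L^*(\rho)}$, and the natural choice is $\alpha_E = \sqrt{(\constrentr - C_1(F+O_f))/L^*(\rho)}$. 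A Taylor expansion of $(1-\alpha_E)^2 L^*(\rho)$ around $C_1(F+O_f)=0$ then yields
\[
(1-\alpha_E)^2 L^*(\rho) \;=\; G_E \;+\; \frac{1-\alpha^*_E}{\alpha^*_E}\,C_1(F+O_f) \;+\; \frac{(C_1(F+O_f))^2}{4\,\constrentr\,\alpha^*_E}\,\big(1+o(1)\big),
\]
using $\alpha^*_E = \sqrt{\constrentr/L^*(\rho)}$ and $1-\alpha^*_E = \sqrt{G_E/L^*(\rho)}$.

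Combined with the deterministic-equivalent performance scaling from Theorem \ref{thm:scalinglaw}, which gives $L^*_1 \le (1-\alpha_E)^2 L^*(\rho) + C_2(F+O_f)$, the requirement $L^*_1 \le G_I = G_E + D$ becomes
\[
\frac{C_1+C_2\alpha^*_E-C_1\alpha^*_E}{\alpha^*_E}\,(F+O_f) \;+\; \frac{C_1^2(F+O_f)^2}{4\,\constrentr\,\alpha^*_E} \;\le\; D.
\]
Multiplying through by $\alpha^*_E$ and using $(\alpha^*_E)^2/\constrentr = 1/L^*(\rho)$, this rearranges (up to constants absorbed into the $O(\cdot)$) to $F + O_f \lesssim \alpha^*_E D - D^2/(4L^*(\rho)) = \tilde{D}$. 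So the entrant's feasibility reduces to driving the sum $F + O_f$ down to order $\tilde{D}$, exactly as in the proof of Theorem \ref{thm:alignment} with $D$ replaced by $\tilde{D}$.

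Finally, I would optimize over $\lambda$ in the two relevant regimes, just as in the proof of Corollary \ref{cor:scalinglawoptregexcess}. In the finite-data-error-dominated regime one takes $F = N^{-\scalingexp}$, yielding $N \lesssim \tilde{D}^{-1/\scalingexp}$; in the overfitting-dominated regime one balances $\lambda^{\scalingexp/(1+\gamma)}$ against $(1-\alpha_E)(1-\rho)\lambda^{-1/(1+\gamma)}/N$ to obtain $N \lesssim \tilde{D}^{-(\scalingexp+1)/\scalingexp}\cdot(1-\alpha_E)(1-\rho)$. Observing that $(1-\alpha_E)(1-\rho) \le \sqrt{G_E(1-\rho)} + \tfrac{1}{2}(G_I-G_E)$ (the first term is $(1-\alpha^*_E)(1-\rho)$ and the correction comes from the Taylor expansion above) yields the claimed bound. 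The main obstacle is bookkeeping: one must simultaneously control both directions of the bound in Lemma \ref{lemma:boundsextensions}, handle the $\alpha$-dependence of $O_f$ self-consistently, and verify that the quadratic-in-$F$ correction truly delivers the $-D^2/(4L^*(\rho))$ term in $\tilde{D}$ rather than being absorbed by a coarser $O(\cdot)$ elsewhere. Because the bound is only an upper bound and $\delta\le 1$ simplifies $\scalingexp = \scalingexp' = \delta+\gamma$ so that Theorem \ref{thm:scalinglawexcess} effectively collapses to Theorem \ref{thm:scalinglaw}, the third regime of Theorem \ref{thm:alignment} does not reappear here.
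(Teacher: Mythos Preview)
Your overall strategy—handle the incumbent via Lemma \ref{lemma:extensionridgelessoptimalinfinitedata} and then exhibit a feasible $(\alpha_E,\lambda_E)$ for the entrant—matches the paper, but your construction of $\alpha_E$ and your account of where $\tilde{D}$ comes from differ substantially from the paper's argument and contain a gap.

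The paper does \emph{not} let $\alpha_E$ depend on the error terms $F+O_f$. In Lemma \ref{lemma:newcompanyalignment} it fixes a single explicit value
\[
\tilde{\alpha}\;=\;\alpha^*_E-\Delta,\qquad \Delta=\frac{G_I-G_E}{2L^*(\rho)},
\]
and then shows by direct algebra that $\tilde{\alpha}^2 L^*(\rho)=\constrentr-\tilde{D}$ and $(1-\tilde{\alpha})^2 L^*(\rho)=G_I-\tilde{D}$. Thus the $-D^2/(4L^*(\rho))$ term in $\tilde{D}$ is precisely the $\Delta^2 L^*(\rho)$ contribution from squaring $\alpha^*_E-\Delta$; it is an artifact of this specific choice of $\tilde{\alpha}$, not something that emerges from a quadratic inequality in $F+O_f$. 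With $\tilde{\alpha}$ fixed, both constraints reduce to $f(N_E,\tilde{\alpha},\lambda)\lesssim\tilde{D}$ via Lemma \ref{lemma:boundsextensions} and Theorem \ref{thm:scalinglawexcess}, after which one minimizes $f$ over $\lambda$. This sidesteps both the circularity you flag (your $\alpha_E$ depends on $O_f$, which depends on $\alpha_E$) and the Taylor-expansion bookkeeping entirely.

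Your claim that the quadratic inequality in $x:=F+O_f$ ``rearranges to $F+O_f\lesssim\tilde{D}$'' is not correct as written: an inequality of the form $a x+b x^2\le c$ gives $x\lesssim c/a$ to leading order, which is $\Theta(D)$ but does not produce the combination $\alpha^*_E D-D^2/(4L^*(\rho))$; the quadratic here is in $x$, whereas the correction in $\tilde{D}$ is quadratic in $D$. That said, under \eqref{eq:safetythresholdnew} one has $\alpha^*_E\ge 0.75$ and $D\le G_I\le L^*(\rho)/16$, so in fact $\tilde{D}=\Theta(D)$ and any $O(\cdot)$ bound phrased in $D$ is equivalent to one phrased in $\tilde{D}$. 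Hence your route could in principle recover the theorem, but you would still need to break the circular definition of $\alpha_E$ (e.g.\ by the a priori bound $1-\alpha_E\le 1-\alpha^*_I$), and you should drop the claim that the $D^2$ correction comes from the quadratic-in-$x$ term: it does not, and it need not, since it is absorbed by the $O(\cdot)$.
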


Theorem \ref{thm:extensionalignment} shows that the key qualitative finding from Theorem \ref{thm:alignment}---that the new company can enter with finite data, as long as they face a strictly weaker safety constraint than the incumbent company---readily extends to this setting. We note that the bound in Theorem \ref{thm:extensionalignment} and the bound in Theorem \ref{thm:alignment} take slightly different forms. Some of these differences are superficial: while the bound in Theorem \ref{thm:extensionalignment} contains two---rather than three---regimes, the third regime in Theorem \ref{thm:alignment} does not exist in the case where $\delta \le 1$. Other differences are more substantial: for example, the bound in Theorem \ref{thm:extensionalignment} scales with $\tilde{D}$ while the bound in Theorem \ref{thm:alignment} scales with $D$. However, we expect some of this difference arises because the bound in Theorem \ref{thm:extensionalignment} is not tight, rather than fundamental distinctions between the two settings. Proving a tight bound on the modified market entry threshold is an interesting direction for future work.

We compute an upper bound on the number of data points $\Nentr$ that the new company needs to achieve at most loss $\left(\sqrt{L^*(\rho)} - \sqrt{\min(\constrlead, L^*(\rho))}\right)^2$ on performance.
\begin{lemma}
\label{lemma:newcompanyalignment}
Suppose that the power-law scaling holds for the eigenvalues and alignment coefficients with scaling exponents $\gamma > 0, \delta \in (0, 1]$ and correlation coefficient $\rho \in [0, 1)$, and suppose that $P = \infty$. Suppose that the safety constraints $\constrlead$ and $\constrentr$ satisfy \eqref{eq:safetythreshold}. For sufficiently large constant $C_{\delta, \gamma}$, if 
\[ \Nentr \ge 
C_{\delta, \gamma} \cdot \max\left(\tilde{D}^{-\frac{1}{\scalingexp}}, \tilde{D}^{-\frac{\scalingexp + 1}{\scalingexp}}\left( G_E^{\frac{1}{2}} (1-\rho)^{\frac{1}{2}} + \frac{1}{2} G_I - \frac{1}{2} G_E \right)  \right),\]
then it holds that: 
\[\mathbb{E}_{\DC}[\LossPerf^*(\beta_1, \beta_2, \DF, \tilde{\reg}_E, \Nentr, \tilde{\alpha}_E)] \le G_I, \]
where $L^*(\rho) = \mathbb{E}_{\DC}[(\beta_1 - \beta_2)^T \Sigma (\beta_1 - \beta)] = \Theta(1 - \rho)$, where $\scalingexp = \min(2(1+\gamma), \delta + \gamma) = \delta + \gamma$, where 
$G_I := \left(\sqrt{L^*(\rho)} - \sqrt{\min(\constrlead, L^*(\rho))}\right)^2$ and $G_E := \left(\sqrt{L^*(\rho)} - \sqrt{\min(\constrentr, L^*(\rho))}\right)^2$, and where:  
\[\tilde{D} := \alpha^*_E \cdot (G_I - G_E) - \frac{(G_I - G_E)^2}{4 \cdot L^*(\rho)} .\] 
\end{lemma}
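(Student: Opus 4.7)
The plan is to exhibit a specific feasible pair $(\alpha^\dagger, \lambda^\dagger)$ for the entrant's optimization and show that it already achieves performance at most $G_I$; since $(\tilde{\alpha}_E, \tilde{\lambda}_E)$ are optimal, this yields the lemma. Let $\alpha^*_E := \sqrt{\min(\constrentr, L^*(\rho))/L^*(\rho)}$, so that $(1-\alpha^*_E)^2 L^*(\rho) = G_E$ and $(\alpha^*_E)^2 L^*(\rho) = \min(\constrentr, L^*(\rho))$. The core algebraic idea is to displace $\alpha$ slightly below $\alpha^*_E$, creating matched slacks for both the safety and performance finite-data errors.

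\textbf{Step 1 (choice of $\alpha^\dagger$).} Put $u := D/(2 L^*(\rho))$ and $\alpha^\dagger := \alpha^*_E - u$. A direct expansion gives
\[
L^*(\rho)\bigl((\alpha^*_E)^2 - (\alpha^\dagger)^2\bigr) \;=\; u(2\alpha^*_E - u)\,L^*(\rho) \;=\; \alpha^*_E D - \tfrac{D^2}{4L^*(\rho)} \;=\; \tilde{D},
\]
and symmetrically $G_I - (1-\alpha^\dagger)^2 L^*(\rho) = D - L^*(\rho)\bigl(2u(1-\alpha^*_E) + u^2\bigr) = \tilde{D}$. Thus at $\alpha^\dagger$, the infinite-data safety slack and the infinite-data performance slack both equal $\tilde{D}$, and this balancing is exactly what makes $\tilde{D}$ (rather than $D$) the relevant budget.

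\textbf{Step 2 (excess-loss bounds).} Holding $\alpha = \alpha^\dagger$ and leaving $\lambda$ free, invoke the excess-loss scaling law (Theorem \ref{thm:scalinglawexcess}) for performance and Lemma \ref{lemma:boundsextensions} for safety. Since $\delta \le 1$ forces $\scalingexp = \scalingexp' = \delta + \gamma$, both excess-loss bounds collapse to the common form
\[
O\!\left(\max\bigl(\lambda^{\scalingexp/(1+\gamma)}, N^{-\scalingexp}\bigr) \;+\; (1-\alpha^\dagger)(1-\rho)\cdot \frac{\min(\lambda^{-1/(1+\gamma)}, N)}{N}\right).
\]
Balancing the two summands by choosing $\lambda^\dagger = \Theta\!\bigl(((1-\alpha^\dagger)(1-\rho)/N)^{(1+\gamma)/(\scalingexp+1)}\bigr)$ yields a common upper bound of order $\max\!\bigl(N^{-\scalingexp},\, ((1-\alpha^\dagger)(1-\rho)/N)^{\scalingexp/(\scalingexp+1)}\bigr)$. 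Using $(1-\alpha^*_E) = \sqrt{G_E/L^*(\rho)}$ and $L^*(\rho) = \Theta(1-\rho)$ (Claim \ref{claim:boundLstar}), one checks $(1-\alpha^\dagger)(1-\rho) = \Theta\!\bigl(\sqrt{G_E(1-\rho)} + D/2\bigr)$, matching the form appearing in the lemma.

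\textbf{Step 3 (combine).} The hypothesis on $\Nentr$ is precisely what forces both $N^{-\scalingexp} \le \tilde{D}/C_{\delta,\gamma}$ and $((1-\alpha^\dagger)(1-\rho)/N)^{\scalingexp/(\scalingexp+1)} \le \tilde{D}/C_{\delta,\gamma}$ for the absolute constant implicit in Step 2. Combined with Step 1 this gives the performance bound
\[
\mathbb{E}_{\DC}[\LossPerf^*(\beta_1,\beta_2,\DF,\lambda^\dagger, \Nentr, \alpha^\dagger)] \;\le\; (1-\alpha^\dagger)^2 L^*(\rho) + \tilde{D} \;=\; G_I,
\]
and the safety bound
\[
\mathbb{E}_{\DC}[\tilde{L}_2(\beta_1,\beta_2,\DF,\lambda^\dagger, \Nentr, \alpha^\dagger)] \;\le\; (\alpha^\dagger)^2 L^*(\rho) + \tilde{D} \;=\; (\alpha^*_E)^2 L^*(\rho) \;\le\; \constrentr.
\]
Hence $(\alpha^\dagger, \lambda^\dagger)$ is feasible for the entrant and attains the target $G_I$, so the optimal $(\tilde{\alpha}_E,\tilde{\lambda}_E)$ does at least as well.

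The main obstacle will be the hypothesis $\alpha \ge 0.75$ in Theorem \ref{thm:scalinglawexcess}: although \eqref{eq:safetythresholdnew} gives $\alpha^*_E \ge 0.75$, the displacement $u = D/(2L^*(\rho))$ can push $\alpha^\dagger$ down to as low as $0.25$ when $D$ is comparable to $L^*(\rho)$. I intend to handle this either by (i) re-examining Lemma \ref{lemma:sollichmoreprecise} to extend the excess-loss scaling to $\alpha \ge 0.5$ in the regime $\delta \le 1$ (where the dominant terms retain the same asymptotics), or (ii) in the corner case $\alpha^\dagger < 0.75$, replacing $\alpha^\dagger$ by $\max(\alpha^\dagger, 0.75)$ and re-verifying Step 1 with correspondingly adjusted slacks; the discrepancies are absorbed into $C_{\delta,\gamma}$.
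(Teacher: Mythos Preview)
Your approach is essentially identical to the paper's: the displacement $\alpha^\dagger = \alpha^*_E - D/(2L^*(\rho))$ is exactly the paper's choice $\tilde{\alpha} = \alpha^*_E + \tfrac12(1-\alpha^*_E)^2 - \tfrac12(1-\alpha^*_I)^2$ rewritten, and the subsequent verification via Theorem~\ref{thm:scalinglawexcess} and Lemma~\ref{lemma:boundsextensions} mirrors the paper step for step (your Step~1 slack computation is in fact cleaner than the paper's in-line algebra). Your flagged concern about the hypothesis $\alpha \ge 0.75$ in Theorem~\ref{thm:scalinglawexcess} is real but is equally unaddressed in the paper's own proof, so it is not a gap relative to the paper.
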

\begin{proof}
It suffices to construct $\tilde{\alpha}$ and $\tilde{\lambda}$ such that 
\[\mathbb{E}_{\DC}[\tilde{L}_2(\beta_1, \beta_2, \DF, \tilde{\reg}, \Nentr, \tilde{\alpha})] \le \constrentr\] and 
\[\mathbb{E}_{\DC}[\LossPerf^*(\beta_1, \beta_2, \DF, \tilde{\reg}, \Nentr, \tilde{\alpha})] \le G_I\]
for $\Nentr = \Omega\left(\max\left(\tilde{D}^{-\frac{1}{\scalingexp}}, \tilde{D}^{-\frac{\scalingexp + 1}{\scalingexp}}\left( G_E^{\frac{1}{2}} (1-\rho)^{\frac{1}{2}} + \frac{1}{2} G_I - \frac{1}{2} G_E \right)  \right)\right)$. 

To define $\tilde{\alpha}$ and $\tilde{\reg}$, it is inconvenient to work with the following intermediate quantities. Let $\alpha^*_E = \left(\sqrt{L^*(\rho)} - \sqrt{\min(\constrentr, L^*(\rho))}\right)^2$ and let $\alpha^*_I = \left(\sqrt{L^*(\rho)} - \sqrt{\min(\constrlead, L^*(\rho))}\right)^2$. We define an error function: 
\[f(\Nentr, \alpha, \lambda) := \max(\lambda^{\frac{\scalingexp}{1 + \gamma}}, \Nentr^{-\scalingexp}) + (1-\alpha) (1-\rho) \frac{\min(\lambda^{-\frac{1}{1 + \gamma}}, \Nentr)}{\Nentr}\]
We define:
\[ \tilde{\alpha} := \alpha^*_E + \frac{1}{2} (1 - \alpha^*_E)^2  - \frac{1}{2} (1 - \alpha^*_I)^2 = \alpha^*_I + \frac{\alpha^*_E - \alpha^*_I}{2}. \]
and 
\[\tilde{\reg} := \inf_{\reg \in (0,1)} f(\Nentr, \tilde{\alpha}, \lambda).\]
At these values of $\tilde{\alpha}$ and $\tilde{\reg}$ and under the condition on $\Nentr$, observe that:
\begin{align*}
 f(\Nentr, \tilde{\alpha}, \tilde{\reg}) &= \Theta\left(\max\left(\Nentr^{-\scalingexp}, \left(\frac{\Nentr}{(1-\tilde{\alpha})(1-\rho)} \right)^{-\frac{\scalingexp}{\scalingexp+1}} \right) \right) \\
 &= \Theta\left(\max\left(\Nentr^{-\scalingexp}, \left(\frac{\Nentr}{G_E^{\frac{1}{2}} (1-\rho)^{\frac{1}{2}} + \frac{1}{2}G_I + \frac{1}{2} G_E} \right)^{-\frac{\scalingexp}{\scalingexp+1}} \right) \right) \\
 &= O\left(\tilde{D}\right), 
\end{align*}
where the implicit constant can be reduced by increasing the implicit constant on $\Nentr$. 

The remainder of the analysis boils down to showing that $\mathbb{E}_{\DC}[\tilde{L}_2(\beta_1, \beta_2, \DF, \tilde{\reg}, \Nentr, \tilde{\alpha})] \le \constrentr$ and $\mathbb{E}_{\DC}[\LossPerf^*(\beta_1, \beta_2, \DF, \tilde{\reg}, \Nentr, \tilde{\alpha})] \le G_I$. To show this, we first derive an error function and bound these losses in terms of the error function.

\paragraph{Bounding $\mathbb{E}_{\DC}[\tilde{L}_2(\beta_1, \beta_2, \DF, \tilde{\reg}, \Nentr, \tilde{\alpha})] \le \constrentr$.} Observe that:
\begin{align*}
 &\mathbb{E}_{\DC}[\tilde{L}_2(\beta_1, \beta_2, \DF, \tilde{\reg}, \Nentr, \tilde{\alpha})] \\
 &=_{(A)} \tilde{\alpha}^2 L^*(\rho) + O\left( \max(\lambda^{\frac{\scalingexp}{1 + \gamma}}, \Nentr^{-\scalingexp}) + (1-\alpha) (1-\rho) \frac{\min(\lambda^{-\frac{1}{1 + \gamma}}, \Nentr)}{\Nentr} \right) \\
 &= (\alpha^*_E + \frac{1}{2} (1-\alpha^*_E)^2 - \frac{1}{2} (1-\alpha^*_I)^2) L^*(\rho) + O\left(f(\Nentr, \tilde{\alpha})\right) \\
 &\le \left((\alpha^*_E)^2 L^*(\rho) + \frac{((1-\alpha^*_I)^2 - (1-\alpha^*_E)^2)^2}{4} - \alpha^*_E ((1-\alpha^*_I)^2 - (1-\alpha^*_E)^2)  \right) L^*(\rho) + \tilde{D} \\
 &= \constrentr + \frac{(G_I - G_E)^2}{4 \cdot L^*(\rho)} - \alpha^*_E(G_I - G_E) \alpha^*_E \cdot (G_I - G_E) - \frac{(G_I - G_E)^2}{4 \cdot L^*(\rho)} \\
 &= \constrentr
\end{align*}
where (A) follows from Lemma \ref{lemma:boundsextensions}. This gives us the desired bound.  

\paragraph{Bounding $\mathbb{E}_{\DC}[\LossPerf^*(\beta_1, \beta_2, \DF, \tilde{\reg}, \Nentr, \tilde{\alpha})]$.}
Observe that: 
\begin{align*}
 &\mathbb{E}_{\DC}[\LossPerf^*(\beta_1, \beta_2, \DF, \tilde{\reg}, \Nentr, \tilde{\alpha})] \\
 &=_{(A)} (1-\tilde{\alpha})^2 L^*(\rho) + O\left( \max(\lambda^{\frac{\scalingexp}{1 + \gamma}}, \Nentr^{-\scalingexp}) + (1-\alpha) (1-\rho) \frac{\min(\lambda^{-\frac{1}{1 + \gamma}}, \Nentr)}{\Nentr} \right) \\
 &\le (1- \alpha^*_E - \frac{1}{2} (1-\alpha^*_E)^2 + \frac{1}{2} (1-\alpha^*_I)^2)^2 L^*(\rho) + O\left(f(\Nentr, \tilde{\alpha})\right) \\
  &\le \left((1-\alpha^*_E)^2 + \frac{((1-\alpha^*_I)^2 - (1-\alpha^*_E)^2)^2}{4} - (1-\alpha^*_E) ((1-\alpha^*_I)^2 - (1-\alpha^*_E)^2) \right) L^*(\rho) + \tilde{D} \\
 &\le 
G_E + (G_I - G_E) (1-\alpha^*_E) + \frac{(G_I - G_E)^2}{4 L^*(\rho)}  + \alpha^*_E \cdot (G_I - G_E) - \frac{(G_I - G_E)^2}{4 \cdot L^*(\rho)} \\
&= G_I.
\end{align*}
where (A) uses Theorem \ref{thm:scalinglawexcess}, coupled with the fact that $\delta \le 1$ (which means that $\scalingexp' = \scalingexp$, so the mixture finite data error is subsumed by the finite data error) and coupled with Lemma \ref{lemma:kappabasic}. This gives us the desired bound. 

\end{proof}

We are now ready to prove Theorem \ref{thm:extensionalignment}.

\begin{proof}[Proof of Theorem \ref{thm:extensionalignment}]

We analyze $(\tilde{\alpha}_C, \tilde{\reg}_C)$ first for the incumbent $C = I$ and then for the entrant $C = E$. Like in the theorem statement, let $L^*(\rho) = \mathbb{E}_{\DC}[(\beta_1 - \beta_2)^T \Sigma (\beta_1 - \beta)] = \Theta(1 - \rho)$, let $\scalingexp = \min(2(1+\gamma), \delta + \gamma) = \delta + \gamma$, let 
$G_I := \left(\sqrt{L^*(\rho)} - \sqrt{\min(\constrlead, L^*(\rho))}\right)^2$ and $G_E := \left(\sqrt{L^*(\rho)} - \sqrt{\min(\constrentr, L^*(\rho))}\right)^2$, and let:  
\[\tilde{D} := \alpha^*_E \cdot (G_I - G_E) - \frac{(G_I - G_E)^2}{4 \cdot L^*(\rho)}.\] 

\paragraph{Analysis of the incumbent $C = I$.}
To compute $\tilde{\alpha}_I$ and $\tilde{\reg}_I$, we apply Lemma \ref{lemma:extensionridgelessoptimalinfinitedata}. 
The assumption $\constrlead \ge \mathbb{E}_{\DC}[\LossAlign(\beta_1, \beta_2, \Sigma, 0.5)]$ in the lemma statement can be rewritten as $\constrlead \ge 0.25 L^*(\rho)$, which guarantees the assumptions in Lemma \ref{lemma:extensionridgelessoptimalinfinitedata} are satisfied. By Lemma \ref{lemma:extensionridgelessoptimalinfinitedata}, we see that:
\[\mathbb{E}_{\DC}[L^*_1(\beta_1, \beta_2, \DF, \tilde{\reg}_I, \infty, \tilde{\alpha}_I)] =  \left(\sqrt{L^*(\rho)} - \sqrt{\min(\constrlead, L^*(\rho)}\right)^2 = G_I. \]

\paragraph{Analysis of the entrant $C = E$.} 
We apply Lemma \ref{lemma:newcompanyalignment} to see 
for sufficiently large constant $C_{\delta, \gamma}$, if 
\[ \Nentr \ge 
C_{\delta, \gamma} \cdot \max\left(\tilde{D}^{-\frac{1}{\scalingexp}}, \tilde{D}^{-\frac{\scalingexp + 1}{\scalingexp}}\left( G_E^{\frac{1}{2}} (1-\rho)^{\frac{1}{2}} + \frac{1}{2} G_I - \frac{1}{2} G_E \right)  \right),\]
then it holds that: 
\[\mathbb{E}_{\DC}[\LossPerf^*(\beta_1, \beta_2, \DF, \tilde{\reg}_E, \Nentr, \tilde{\alpha}_E)] \le G_I =  \mathbb{E}_{\DC}[L^*_1(\beta_1, \beta_2, \DF, \tilde{\reg}_I, \infty, \tilde{\alpha}_I)]. \]
This means that:
\[ \Nentroptmodified = O\left( \max\left(\tilde{D}^{-\frac{1}{\scalingexp}}, \tilde{D}^{-\frac{\scalingexp + 1}{\scalingexp}}\left( G_E^{\frac{1}{2}} (1-\rho)^{\frac{1}{2}} + \frac{1}{2} G_I - \frac{1}{2} G_E \right)  \right)\right)\]
as desired. 
\end{proof}
\end{document}